\newif\iffull \fulltrue
\icmltitlerunning{Orthogonal Random Forest}
\definecolor{DarkGreen}{rgb}{0.1,0.5,0.1}
\definecolor{DarkRed}{rgb}{0.5,0.1,0.1}
\definecolor{DarkBlue}{rgb}{0.1,0.1,0.5}
\newcommand{\kibitz}[2]{\ifnum\Comments=1{\textcolor{#1}{\textsf{\footnotesize #2}}}\fi}
\newcommand\NN{\mathbb{N}}
\newcommand\RR{\mathbb{R}}
\newcommand\cL{\mathcal{L}}
\newcommand\cE{\mathcal{E}}
\newcommand\cX{\mathcal{X}}
\newcommand\cD{\mathcal{D}}
\newcommand\cN{\mathcal{N}}
\newcommand\cV{\mathcal{V}}
\newcommand\cZ{\mathcal{Z}}
\newcommand\cO{\mathcal{O}}
\newcommand{\mcH}{\mathcal{H}}
\newcommand\lres{\ell_{\textrm{res}}}
\newcommand{\bW}{\mathbf{W}}
\newcommand{\psimax}{\psi_{\max}}
\newcommand{\ldot}[2]{\left\langle #1, #2 \right\rangle}
\DeclareMathOperator{\polylog}{polylog}
\DeclareMathOperator*{\Expectation}{\mathbb{E}}
\newcommand{\Ex}[2]{\Expectation_{#1}\left[#2\right]}
\newcommand{\eps}{\varepsilon}
\def\epsilon{\varepsilon}
\DeclareMathOperator*{\argmin}{\mathrm{argmin}}
\newcommand{\INDSTATE}[1][1]{\STATE\hspace{#1\algorithmicindent}}
\newtheorem{theorem}{Theorem}[section]
\newtheorem{lemma}[theorem]{Lemma}
\newtheorem{claim}[theorem]{Claim}
\newtheorem{corollary}[theorem]{Corollary}
\newtheorem{assumpt}[theorem]{Assumption}
\newtheorem{example}[theorem]{Example}
\newtheorem{specification}{Specification}
\theoremstyle{definition}
\newtheorem{definition}[theorem]{Definition}
\newcommand{\reals}{\mathbb{R}}
\newcommand{\bb}[1]{\left[ #1 \right]}
\newcommand{\bp}[1]{\left( #1 \right)}
\newcommand{\ba}[1]{\left| #1 \right|}
\newcommand{\bn}[1]{\left\| #1 \right\|}
\newcommand{\Var}{\text{Var}}
\newcommand{\E}{\mathbb{E}}
\newcommand{\normal}{{\cal N}}
\newcommand{\diam}{\textnormal{diam}}
\newcommand{\mG}{\mathbb{G}}
\newcommand*{\defeq}{\mathrel{\vcenter{\baselineskip0.5ex \lineskiplimit0pt
                     \hbox{\scriptsize.}\hbox{\scriptsize.}}}%
                     =}
\newcommand{\figurescale}{.28}
\newcommand{\omitcm}[1]{}
\newcommand{\omitarxiv}[1]{#1}
\begin{document}
\twocolumn[ \icmltitle{Orthogonal Random Forest for Causal Inference}

\begin{icmlauthorlist}
\icmlauthor{Miruna Oprescu}{msr}
\icmlauthor{Vasilis Syrgkanis}{msr}
\icmlauthor{Zhiwei Steven Wu}{umn}
\end{icmlauthorlist}

\icmlaffiliation{msr}{Microsoft Research--New England}
\icmlaffiliation{umn}{University of Minnesota--Twin Cities}

\icmlcorrespondingauthor{Miruna Oprescu}{moprescu@microsoft.com }
\icmlcorrespondingauthor{Vasilis Syrgkanis}{vasy@microsoft.com}
\icmlcorrespondingauthor{Zhiwei Steven Wu}{zsw@umn.edu}


\vskip 0.3in
]



\printAffiliationsAndNotice{}  

\begin{abstract}
 
We propose the \emph{orthogonal random forest}, an algorithm that
combines 
\emph{Neyman-orthogonality} to reduce sensitivity with respect to
estimation error of nuisance parameters 
with generalized random forests \citep{ATW17}---a flexible non-parametric
method for statistical estimation of conditional moment models using random forests. We provide a
consistency rate and establish asymptotic normality for our
estimator. We show that under mild assumptions on the consistency rate
of the nuisance estimator, we can achieve the same error rate as an
oracle with a priori knowledge of these nuisance parameters. We show that when the nuisance functions have a locally sparse parametrization, 
then a local $\ell_1$-penalized regression achieves the required rate.
We apply our method
to estimate heterogeneous treatment effects from observational
data with discrete treatments or continuous treatments, and 
we show that, unlike prior work, our method provably allows to control for a high-dimensional set
of variables under standard sparsity conditions. 
We also provide a comprehensive empirical evaluation of our
algorithm on both synthetic and real data. 
\looseness=-1

\end{abstract}

\section{Introduction}

 Many problems that arise in causal inference
 can be formulated in the language of conditional moment models: given a target
 feature $x$ find a solution $\theta_0(x)$ to a system of conditional moment equations
 \begin{equation}
 \textstyle{\Ex{}{\psi(Z;
    \theta, h_0(x, W)) \mid X=x} = 0,}
\end{equation}
given access to $n$ i.i.d. samples from the data generating distribution, where $\psi$ is a known score function and $h_0$ is an unknown nuisance function that also needs to be estimated from data. Examples include non-parametric regression, heterogeneous treatment effect estimation, instrumental variable regression, local maximum likelihood estimation and estimation of structural econometric models.\footnote{See e.g. \citet{Reiss2007} and examples in \citet{Chernozhukov2016locally,Chernozhukov2018plugin}} The study of such conditional moment restriction problems has a long history in econometrics (see e.g. \citet{Newey1993,Ai2003,Chen2009,Chernozhukov2015}). 

\swedit{In this general estimation problem, the main} goal is to
estimate the target parameter at a rate that is robust to the
estimation error of the nuisance component. This allows \swedit{the
  use of} flexible models to fit the nuisance functions and enables
asymptotically valid inference. Almost all prior work on the topic has
focused on two settings: i) they either assume the target function
$\theta_0(x)$ takes a parametric form and allow for a potentially
high-dimensional parametric nuisance function,
e.g. \cite{Chernozhukov2016locally,Chernozhukov2017,Chernozhukov2018plugin},
ii) \swedit{or} take a non-parametric stance at estimating
$\theta_0(x)$ but do not allow for high-dimensional \swedit{nuisance functions}~\cite{WA,ATW17}.

We propose \emph{Orthogonal Random Forest} (ORF), a random forest-based
estimation algorithm, which performs non-parametric
estimation of the target parameter while permitting
more complex nuisance functions with high-dimensional
parameterizations. Our estimator is also asymptotically normal and hence
allows for the construction of asymptotically valid confidence
intervals via plug-in or bootstrap approaches. Our approach combines
the notion of \emph{Neyman orthogonality} of the moment equations with
a two-stage random forest based algorithm, which generalizes prior
work on \emph{Generalized Random Forests}~\cite{ATW17} and the double
machine learning (double ML) approach proposed in
\cite{Chernozhukov2017}. \swedit{To support our general algorithm, we
  also provide a novel nuisance estimation algorithm---\emph{Forest
    Lasso}---that effectively recovers high-dimensional nuisance
  parameters provided they have locally sparse structure.} This result combines techniques from Lasso theory~\cite{Lasso-Book}
with concentration inequalities for $U$-statistics~\cite{Hoeffding}.

{As a concrete example and as a main application of our approach, we
  consider the problem of \emph{heterogeneous treatment effect}
  estimation. This problem is at the heart of many decision-making
  processes, including clinical trial assignment to patients, price
  adjustments of products, and ad placement by a search engine.  In
  many situations, we would like to take the heterogeneity of the
  population into account and estimate the \emph{heterogeneous
    treatment effect} (\emph{HTE})---the effect of a treatment $T$
  (e.g. drug treatment, price discount, and ad position), on the
  outcome $Y$ of interest (e.g. clinical response, demand, and
  click-through-rate), as a function of observable characteristics
  $x$ of the treated subject (e.g. individual patient, product, and
  ad). HTE estimation is a fundamental problem in causal inference
  from observational data \citep{imbens_rubin_2015,WA,ATW17}, and is
  intimately related to many areas of machine learning, including
  contextual bandits, off-policy evaluation and optimization
  \citep{Swaminathan16,Wang17,NW17}, and
  counterfactual prediction \citep{SwaminathanJ15,Hartford}. }

{The key challenge in HTE estimation is that the observations are
  typically collected by a policy that depends on confounders or
  control variables $W$, which also directly influence the outcome.
  Performing a direct regression of the outcome $Y$ on the treatment
  $T$ and features $x$, without controlling for a multitude of other
  potential confounders, will produce biased estimation. This leads to
  a regression problem that in the 
  language of conditional moments takes the form:
  \begin{equation}
  \textstyle{\E\bb{Y - \theta_0(x) T - f_0(x, W) \mid X=x} = 0 }
  \end{equation}  
  where $\theta_0(x)$ is the heterogeneous effect of the treatment $T$ (discrete or continuous) on the 
  outcome $Y$ as a function of the features $x$ and $f_0(x,W)$ is an unknown nuisance function that captures the
  direct effect of the control
  variables on the outcome.
  Moreover, 
  unlike active experimentation settings such as contextual bandits, when
  dealing with observational data, the actual treatment or logging policy
  $\E\bb{T | x, W} = g_0(x, W)$ that could potentially be used to de-bias
  the estimation of $\theta_0(x)$ is also unknown.

There is a surge of recent work at the interplay of machine learning
and causal inference that studies efficient estimation and inference of treatment
effects. \citet{doubleML} propose a two-stage estimation method called
\emph{double machine learning} that first orthogonalizes
out the effect of high-dimensional confounding factors using
sophisticated machine learning algorithms, including Lasso, deep
neural nets and random forests, and then estimates the effect of the
lower dimensional treatment variables, by running a low-dimensional
linear regression between the residualized treatments and residualized
outcomes. They show that \swedit{even}
if the estimation error of the first stage is \swedit{not particularly accurate}, the second-stage estimate can still be
${n^{-1/2}}$-asymptotically normal. However, their approach requires a parametric specification of $\theta_0(x)$. In contrast, another line of work that brings machine learning to causal inference
provides fully flexible non-parametric HTE estimation based
on random forest techniques \citep{WA,ATW17,Powers2017}. However,
these methods heavily rely on low-dimensional assumptions. 

Our algorithm ORF, when applied to the HTE problem (see
Section~\ref{sec:hte}) allows for the non-parametric estimation of
$\theta_0(x)$ via forest based approaches while simultaneously
allowing for a high-dimensional set of control variables $W$. 
This estimation problem is of practical importance when
a decision maker (DM) wants to optimize a policy that depends only on a small
set of variables, e.g. due to data collection or regulatory constraints or
due to interpretability of the resulting policy, while at the same time
controlling for many potential confounders in the existing data that could lead to biased estimates. Such settings naturally arise in contextual pricing or personalized medicine. In such settings the DM is faced with the problem of estimating a conditional average treatment effect
conditional on a small set of variables while controlling for a much larger set.
Our estimator provably offers a significant statistical advantage for this task over prior approaches.

In the HTE setting, the ORF algorithm follows the
residual-on-residual regression approach analyzed by
\cite{Chernozhukov2016locally} to formulate a locally Neyman
orthogonal moment and then applies our orthogonal forest algorithm to
this orthogonal moment.
Notably, \cite{ATW17} also recommend such a residual on residual regression approach
in their empirical evaluation, which they refer to as ``local centering'', albeit with no
theoretical analysis. Our results provide a theoretical foundation of the local centering
approach through the lens of Neyman orthogonality. Moreover, our theoretical
results give rise to a slightly different overall estimation approach than the one 
in \cite{ATW17}: namely we residualize locally around the target estimation point $x$, as opposed
to performing an overall residualization step and then calling the Generalized Random Forest algorithm on the residuals. The latter stems from the fact that our results require that 
the nuisance estimator achieve a good estimation rate \emph{only} around the target point $x$. Hence, residualizing locally seems more appropriate than running a global nuisance estimation, which would typically minimize a non-local mean squared error. Our experimental findings reinforce this intuition (see e.g. comparison between ORF and the GRF-Res benchmark).
Another notable work that combines the residualization idea with flexible heterogeneous effect estimation is that of \cite{NW17}, who formulate the problem as an appropriate residual-based square loss minimization over an arbitrary hypothesis space for the heterogeneous effect function $\theta(x)$. Formally, they show robustness, with respect to nuisance estimation errors, of the mean squared error (MSE) of the resulting estimate in expectation over the distribution $X$ and for the case where the hypothesis space is a reproducing kernel Hilbert space (RKHS). Our work differs primarily by: i) focusing on sup-norm estimation error at any target point $x$ as opposed to MSE, ii) using forest based estimation as opposed to finding a function in an RKHS, iii) working with the general orthogonal conditional moment problems, and iv) providing asymptotic normality results and hence valid inference.

We provide a comprehensive empirical comparison of ORF with several benchmarks, including three
variants of GRF. We show that by setting the parameters according to
what our theory suggests, ORF consistently outperforms all of the
benchmarks. Moreover, we show that bootstrap based confidence intervals
provide good finite sample coverage.

Finally, to motivate the usage of the ORF, we applied our technique to Dominick's dataset, a popular historical dataset of store-level orange juice prices and sales provided by University of Chicago Booth School of Business. The dataset is comprised of a large number of covariates $W$, but economics researchers might only be interested in learning the elasticity of demand as a function of a few variables $x$ such as income or education. We applied our method (see Appendix~\ref{sec:oj} for details) to estimate orange juice price elasticity as a function of income, and our results, depicted in Figure~\ref{fig:oj}, unveil the natural phenomenon that lower income consumers are more price-sensitive.

\begin{figure}
\centering
\includegraphics[scale=0.4]{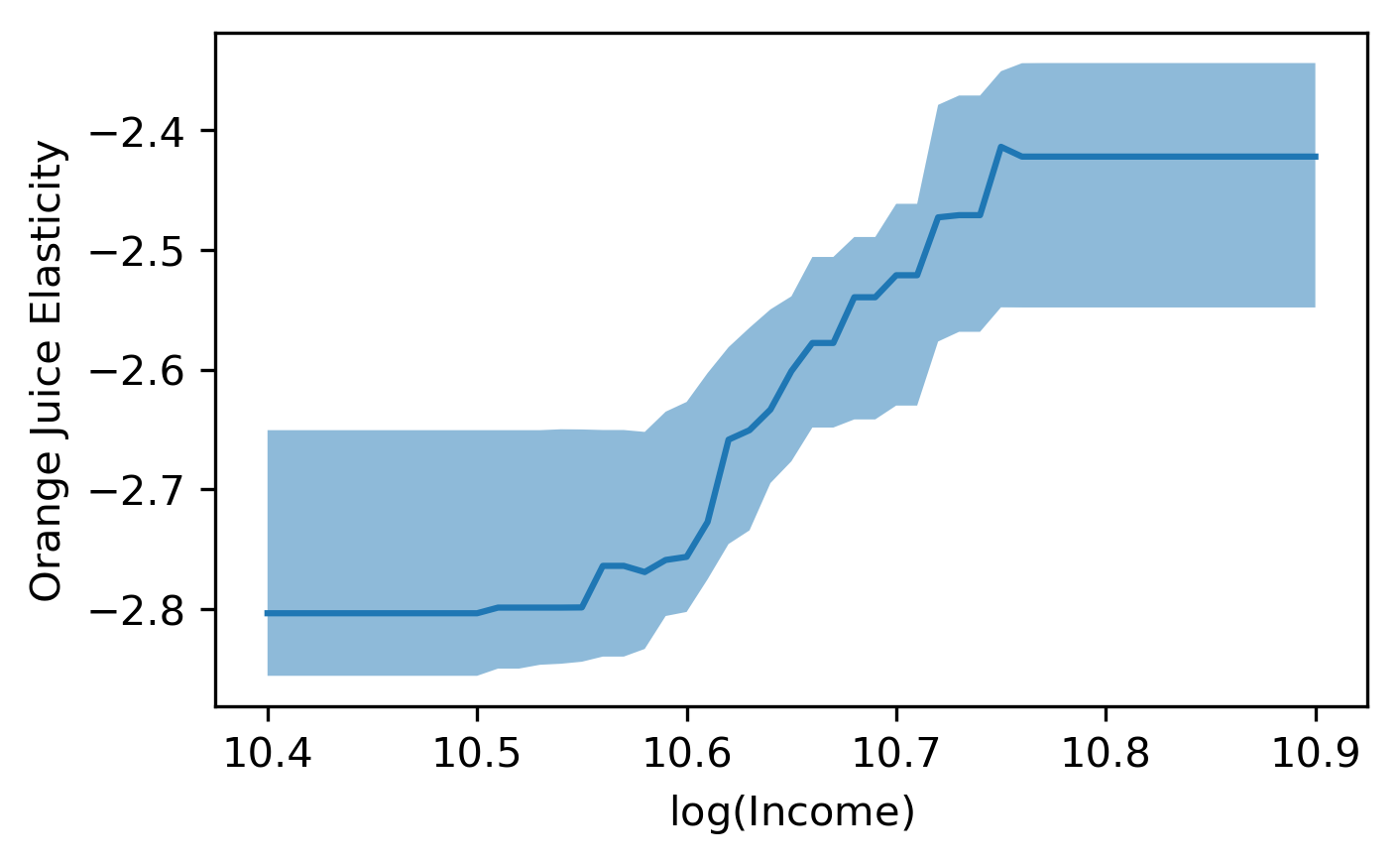}
\caption{ORF estimates for the effect of orange juice price on demand from a high-dimensional dataset. We depict the estimated heterogeneity in elasticity by income level. The shaded region depicts the 1\%-99\% confidence interval obtained via bootstrap.}\label{fig:oj}
\end{figure}

\vscomment{\subsection{Related Work}
we need to add an extensive comparison to many relevant literatures.}


\section{Estimation via Local Orthogonal Moments}\label{GMM}
\vsedit{
We study non-parametric estimation of models defined via conditional moment restrictions, in the presence of nuisance functions.  Suppose we have a set of $2n$ observations
$Z_1, \ldots , Z_{2n}$ drawn independently from some underlying
distribution $\cD$ over the observation domain $\cZ$.  Each observation $Z_i$
contains a feature vector $X_i\in \cX \defeq [0, 1]^d$.
} 

\vsedit{
Given a target feature $x\in \cX$, our goal is to estimate a
parameter vector $\theta_0(x)\in \RR^p$ that is defined via a local moment condition, i.e. for all $x\in \cX$, $\theta_0(x)$ is the unique solution with respect to $\theta$ of:
\begin{equation}
 \textstyle{\Ex{}{\psi(Z;
    \theta, h_0(x, W)) \mid X=x} = 0,}
\end{equation}
where 
$\psi\colon \cZ\times \RR^p \times \RR^\ell \rightarrow \RR^p$ is a
score function that maps an observation $Z$, parameter vector
  $\theta(x)\in \Theta \subset \RR^p$, and nuisance vector $h(x, w)$ to a
vector-valued score $\psi(z; \theta(x), h(x, w))$ and $h_0 \in H \subseteq \bp{\RR^d\times \RR^L\rightarrow \RR^\ell}$ is an
unknown nuisance function that takes as input $X$ and a subvector
$W$ of $Z$, and outputs a
nuisance vector in $\RR^\ell$. 
For any feature $x \in \cX$, parameter 
  $\theta\in \Theta$, and nuisance function
$h \in H$, we define the \emph{moment} function as:
\begin{equation}
\textstyle{m(x; \theta, h) = \E\bb{\psi(Z; \theta, h(X, W))\mid X=x}}
\end{equation}
We assume that the
dimensions $p, \ell, d$ are constants, while the dimension $L$ of $W$ can be growing with
$n$.
}

\vsedit{
We will analyze the following two-stage estimation process.
\begin{enumerate}[topsep=0pt,itemsep=-1ex,partopsep=1ex,parsep=1ex]
\item \textit{First stage.} Compute a nuisance estimate $\hat h$ for
  $h_0$ using data $\{Z_{n+1}, \ldots, Z_{2n}\}$ 
  with some guarantee on the conditional root mean squared error:\footnote{Throughout the paper we denote with $\|\cdot\|$ the euclidean norm and with $\|\cdot\|_p$ the $p$-norm.}
  \begin{equation*}
  \textstyle{\cE(\hat{h}) = \sqrt{\E\bb{\|\hat{h}(x, W) - h_0(x, W)\|^2 \mid X=x}}}
  \end{equation*}
\item \textit{Second stage.} Compute a set of similarity weights
  $\{a_i\}$ over the data $\{Z_1, \ldots , Z_{n}\}$
  that measure the similarity between their feature vectors $X_i$ and the
  target $x$.
  \vsdelete{\footnote{We obtain the similarity weights using random forests, but they
can also be computed through other methods such as
$k$-NN.}} 
Compute the estimate $\hat \theta(x)$ using the nuisance
  estimate $\hat h$ via the plug-in weighted moment condition:
 \begin{equation}\label{weighted_moment}
   \textstyle{\hat \theta(x) \text{ solves: } \sum_{i=1}^{n}
   a_i\psi(Z_i; \theta, \hat h(X_i, W_i)) = 0}
\end{equation}
\end{enumerate}
}

\vsedit{
In practice, our framework permits the use of any method to estimate the nuisance function in the first stage.
However, since our description is a bit too abstract let us give a special case,
which we will also need to assume for our normality result. Consider the case when the nuisance function
$h$ takes the form $h(x, w) = g(w; \nu(x))$, for some known function $g$
but unknown function $\nu: \cX\rightarrow \reals^{d_{\nu}}$ (with $d_{\nu}$ potentially growing with $n$), i.e. locally around each $x$ the function
$h$ is a parametric function of $w$.
Moreover, the
parameter $\nu_0(x)$ of the true nuisance function
$h_0$ is identified as the minimizer of a local
loss:
\begin{equation}\label{eqn:local-loss}
\textstyle{\nu_0(x) = \argmin_{\nu \in \cV} \E\bb{\ell(Z; \nu) \mid X=x}}
\end{equation}
Then we can estimate $\nu_0(x)$ via a locally weighted and penalized 
empirical loss minimization algorithm. In particular in Section~\ref{sec:kernel-lasso}
we will consider the case of local $\ell_1$-penalized estimation that we 
will refer to as \emph{forest lasso} and which provides
formal guarantees in the
case where $\nu_0(x)$ is sparse.
}

The key technical condition that allows us to reliably perform the
two-stage estimation is the following \emph{local orthogonality}
condition, which can be viewed as a localized version of the
\emph{Neyman orthogonality} condtion~\cite{Neyman,doubleML} around the
neighborhood of the target feature $x$. 
Intuitively, the condition says that the score function $\psi$ is
insensitive to local perturbations in the nuisance parameters around their true values. 

\begin{definition}[Local Orthogonality]
  Fix any estimator $\hat h$ for the nuisance function. Then the
  Gateaux derivative with respect to $h$, denoted
  $D_\psi[\hat{h} - h_0 \mid x]$, is defined as:
  \begin{align*}
\textstyle{\Ex{}{\nabla_{h} \psi(Z, \theta_0(x), h_0(x, W)) (\hat{h}(x, W) - h_0(x, W)) \mid x}}
  \end{align*} 
  where $\nabla_{h}$ denotes the gradient of $\psi$ with respect to
  the final $\ell$ arguments. We say that the moment conditions are
  \emph{locally orthogonal} if for all $x$:
  \omitarxiv{$D_\psi[\hat{h} - h_0 \mid x] = 0$.}
  \omitcm{\begin{equation}
  D_\psi[\hat{h} - h_0 \mid x] = 0.
  \end{equation}}
\end{definition}


\section{Orthogonal Random Forest}
\label{sec:plr}
\vsedit{We describe our main algorithm \emph{orthogonal random forest}
(ORF) for calculating the similarity weights in the second stage of the 
two stage estimation. In the next section we will see that we will be
using this algorithm for the estimation of the nuisance functions, so
as to perform a local nuisance estimation.}
At a high level, ORF
can be viewed as an orthogonalized version of GRF that is more robust
to the nuisance estimation error. 
Similar to GRF, the
algorithm runs a tree learner over $B$ random \emph{subsamples $S_b$
(without replacement) of size $s< n$}, to build $B$ trees such that each tree indexed
by $b$ provides a tree-based weight $a_{ib}$ for each observation
$Z_i$ in the input sample. \vsedit{Then the ORF weight $a_i$ for each sample $i$ is
the average over the tree-weights $a_{ib}$.}

The tree learner starts with a root node that contains the entire
$\cX$ and recursively grows the tree to split $\cX$ into a set of
leaves until the number of observations in each leaf is not too small.
The set of neighboods defined by the leaves naturally gives a
simlarity measure between each observation and the target $x$.
Following the same approach of \cite{GRF,WA}, we maintain the
following tree properties in the process of building a tree.
\begin{specification}[Forest Regularity]\label{forest_spec}
The tree satisfies
\begin{itemize}[topsep=0pt,itemsep=-1ex,partopsep=1ex,parsep=1ex]
\item \textbf{Honesty}: we randomly partition the input sample $S$
  into two subsets $S^1$, $S^2$, then uses $S^1$ to place splits in
  the tree, and uses $S^2$ for estimation.
\item \textbf{$\rho$-balanced}: each split leaves at least a fraction
  $\rho$ of the observations in $S^2$ on each side of the split for
  some parameter of $\rho\leq 0.2$.
\item \textbf{Minimum leaf size $r$}: there are between $r$ and
  $2r - 1$ observations from $S^2$ in each leaf of the tree. 
\item \textbf{$\pi$-random-split}: at every step, marginalizing over the
  internal randomness of the learner, the probability that the next
  split occurs along the $j$-th feature is at least $\pi/d$
  for some $0 < \pi \leq 1$, for all $j = 1, ..., d.$\footnote{For example, this can
  be achieved by uniformly randomizing the splitting variable with
  probability $\pi$ or via a Poisson sampling scheme where a random subset
  of the variables of size $m$ is chosen to consider for candidate splits, with $m\sim \text{Poisson}(\lambda)$.}
\end{itemize}
\end{specification}

The key modification to GRF's tree learner is our incorporation of
orthogonal nuisance estimation in the splitting criterion.
\vsdelete{In order to accurately capture
the heterogeneity in the target parameter $\theta$ in the presence of
high-dimensional nuisance, 
we perform a two-stage estimation locally at
each internal node to decide the next split. }
While the splitting
criterion does not factor into our theoretical analysis (similar to
\cite{GRF}), we find it to be an effective practical heuristic.

\paragraph{Splitting criterion with orthogonalization.}
\vsedit{At each internal node $P$ we perform a two-stage estimation over
$(P\cap S^1)$, i.e.~the set of examples in $S^1$ that reach node $P$:
1) compute a nuisance estimate $\hat h_P$ using only data $P\cap S^1$ (e.g. by estimating
a parameter $\hat \nu_P$ that minimizes $\sum_{i\in (P\cap S^1)} \ell(Z_i; \nu) + \lambda \|\nu\|_1$ and setting $\hat{h}_P(\cdot) = g(\cdot; \hat{\nu}_P)$),
and then 2) form estimate $\hat \theta_P$ using $\hat h_P$:\footnote{In our implementation we actually use a cross-fitting approach, where we use 
half of $P\cap S^1$ to compute a nuisance function to apply to the other half and vice versa.}
\[
  \textstyle{\hat \theta_P \in\argmin_{\theta\in \Theta} \left\|\sum_{i\in (P\cap
      S^1)} \psi(Z_i; \theta, \hat h_P(W_i)) \right\|}
\]
We now generate a large random set of candidate axis-aligned splits (satisfying Specification~\ref{forest_spec} and
we want to find 
the split into two children $C_1$ and $C_2$
such that if we perform the same two-stage estimation separately at
each child, the new estimates $\hat \theta_{C_1}$ and
$\hat \theta_{C_2}$ take on very different values, so that the
heterogeneity of the two children nodes is maximized. Performing the two-stage estimation of $\hat \theta_{C_1}$ and
$\hat \theta_{C_2}$ for all candidate splits is too computationally
expensive. Instead, we will approximate these estimates by taking a
Newton step from the parent node estimate $\hat \theta_P$: for any
child node $C$ given by a candidate split, our proxy estimate is: 
\[
  \textstyle{\tilde \theta_{C} = \hat \theta_P - \frac{1}{|C\cap S^1|}\sum_{i\in
    C_j\cap S^1}} A_P^{-1} \psi(Z_i; \hat \theta_P, \hat h_P(X_i, W_i))}
\] 
where
$A_P = \frac{1}{|P\cap S^1|} \sum_{i\in P\cap S_b^1} \nabla_\theta
\psi(Z_i; \hat \theta_P, \hat h_P(X_i, W_i))$.}
\vsedit{We select the candidate split
that maximizes the following proxy heterogeneity score: for each coordinate $t\in [p]$ let 
\begin{equation}
 \textstyle{\tilde \Delta_t(C_1, C_2) = \sum_{j=1}^2 \frac{1}{|C_j\cap S^1|} \left(\sum_{i\in C_j\cap S^1}\rho_{t,i}\right)^2}
\end{equation}
where 
$\rho_{t,i} = A_P^{-1} \psi_t(Z_i; \hat \theta_P, \hat h_P(X_i, W_i))$. 
\omitcm{$\tilde \Delta_t$ is a heterogeneity score of the candidate split for the $t$-th coordinate of parameter $\theta$.}
We then create a single heterogeneity score per split as a convex combination that puts weight $\eta$ on the mean and $(1-\eta)$ on the maximum score across coordinates. $\eta$ is chosen uniformly at random in $[0,1]$ at each iteration of splitting. Hence, some splits focus on heterogeneity on average\omitcm{(i.e. capturing latent factors of global heterogeneity in all coordinates of $\theta$)}, while others focus on creating heterogeneity on individual coordinates\omitcm{(i.e. capturing coordinate specific sources of heterogeneity)}.
}

\paragraph{ORF weights and estimator.}
\vsedit{For each tree indexed $b\in [B]$ based on subsample $S_b$, let
$L_b(x)\subseteq \cX$ be the leaf that contains the target feature $x$.
We assign \emph{tree weight} and \emph{ORF weight} to each observation
$i$:
\begin{align*}
  \textstyle{a_{ib} = \frac{\mathbf{1}[(X_i\in L_b(x)) \wedge (Z_i\in S_b^2)]}{|L_b(x) \cap S_b^2|},} \quad \textstyle{a_i = \frac{1}{B}\sum_{b=1}^B a_{ib}}
\end{align*}
}

\citet{WA} show that under the structural specification of the trees,
the tree weights are non-zero only around a small neighborhood of $x$; 
a property that we will leverage in our analysis.

\begin{theorem}[Kernel shrinkage \cite{WA}]\label{kernel-bound}
  Suppose the minimum leaf size
  parameter $r = O(1)$, the tree is $\rho$-balanced and
  $\pi$-random-split and the distribution of $X$ 
  admits a density in $[0,1]^d$ that is bounded away from zero and infinity. Then
  the tree weights satisfy
  $$\Ex{}{\sup\{\|x- x_i\|\colon a_{ib} > 0 \}} =
  O(s^{-\frac{1}{2\alpha d}}),$$ where
  $$\alpha = \frac{\log(\rho^{-1})}{\pi\log((1 - \rho)^{-1}) }$$ and $s$ is size of the subsamples.
\end{theorem}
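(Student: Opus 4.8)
The plan is to bound the target quantity by the diameter of the leaf $L_b(x)$ and then control that diameter one coordinate at a time. First, since $a_{ib}>0$ forces $X_i\in L_b(x)$ and $x\in L_b(x)$, and $L_b(x)$ is an axis-aligned box with side lengths $s_1,\dots,s_d$, we have $\sup\{\|x-x_i\|\colon a_{ib}>0\}\le \diam(L_b(x))\le \sqrt d\,\max_j s_j$, so it suffices to prove $\Ex{}{\max_j s_j(L_b(x))}=O(s^{-1/(2\alpha d)})$. Next I would obtain a lower bound on the depth of $L_b(x)$ from $\rho$-balancedness and the minimum-leaf-size condition: along the root-to-$L_b(x)$ path each split keeps at least a $\rho$-fraction of the parent's $S^2$-points on the side containing $x$, so a node at depth $k$ on this path holds at least $\rho^k|S^2|$ points of $S^2$; since $L_b(x)$ is a leaf it holds at most $2r-1$ such points, forcing its depth to be at least $k_{\min}\defeq \log(|S^2|/(2r-1))/\log(\rho^{-1})=\Theta(\log s)$ (using $r=O(1)$ and $|S^2|=\Theta(s)$).

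The second ingredient is that every coordinate is split many times on this path. Conditioning on the path to $L_b(x)$, the $\pi$-random-split property says that at each of its $k\ge k_{\min}$ internal nodes the split is along coordinate $j$ with probability at least $\pi/d$, so the number $k_j$ of $j$-splits on the path stochastically dominates a $\mathrm{Binomial}(k_{\min},\pi/d)$ variable. A multiplicative Chernoff bound gives $\prob{k_j<\tfrac12\tfrac{\pi}{d}k_{\min}}\le \exp(-\tfrac{1}{8}\tfrac{\pi}{d}k_{\min})=s^{-\Omega(\pi/d)}$, and a union bound over the $d=O(1)$ coordinates shows that with probability $1-s^{-\Omega(\pi/d)}$ every coordinate is split at least $K\defeq\tfrac12\tfrac{\pi}{d}k_{\min}$ times on the path.

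The heart of the argument is a per-split geometric contraction. Consider a split of a node $P$ along coordinate $j$, let $C$ be the child containing $x$ and $C'$ its sibling; all three boxes agree in every coordinate except $j$. Since $C'$ gets at least a $\rho$-fraction of $P$'s $S^2$-points, a uniform deviation bound over the (bounded-VC-dimension) class of axis-aligned sub-boxes, together with the density lying in $[c_1,c_2]$, converts this into $\Pr[X\in C']\ge(\rho-o(1))\Pr[X\in P]$, hence $s_j(C')\ge \tfrac{c_1}{c_2}(\rho-o(1))\,s_j(P)$ and therefore $s_j(C)\le\big(1-\tfrac{c_1}{c_2}(\rho-o(1))\big)s_j(P)$; carrying out the bookkeeping of \cite{WA} (which in particular exploits honesty to keep the split-placement data $S^1$ separate from the counting data $S^2$) sharpens the contraction factor to $1-\rho$ up to lower-order terms. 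Iterating over the $k_j$ $j$-splits on the path yields $s_j(L_b(x))\le (1-\rho)^{k_j(1-o(1))}$, so on the good event of the previous paragraph $\max_j s_j(L_b(x))\le(1-\rho)^{K(1-o(1))}=\exp\!\big(-(1-o(1))\tfrac{\pi\log((1-\rho)^{-1})}{2d\log(\rho^{-1})}\log s\big)=s^{-1/(2\alpha d)-o(1)}$ with $\alpha$ as in the statement. On the complementary event I would use the trivial bound $\max_j s_j\le 1$; its probability $s^{-\Omega(\pi/d)}$ is lower order than the target rate (for a suitable choice of the Chernoff slack), so it contributes only $O(s^{-1/(2\alpha d)})$ to the expectation, and combining with the first reduction gives the claimed bound.

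The main obstacle is the per-split contraction step: translating the purely combinatorial "$\rho$-balanced-in-$S^2$-counts" property into an honest geometric contraction of a side length. This requires both the density being bounded away from $0$ and $\infty$ (to pass between Lebesgue measure and the sampling distribution) and a uniform deviation bound over the data-dependent family of candidate split boxes (so that empirical split proportions faithfully reflect population measure), and it is where honesty is genuinely used; matching the clean exponent $1/(2\alpha d)$ rather than one degraded by the density ratio $c_1/c_2$ is precisely the delicate bookkeeping carried out in \cite{WA}. All the other steps — the reduction to leaf diameter, the depth bound, and the coordinate-split count — are routine.
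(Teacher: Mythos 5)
The paper does not prove this theorem: it is quoted verbatim as a cited result from Wager and Athey~\cite{WA} (see also Lemma~1 of~\cite{ATW17}), so there is no in-paper proof to compare against. Your sketch is nonetheless a faithful reconstruction of the cited argument's structure --- reduce to the leaf box's per-coordinate side lengths, lower-bound the leaf depth by $\Theta(\log s)$ via $\rho$-balancedness and the minimum-leaf-size rule, lower-bound the per-coordinate split count via the $\pi$-random-split property and a Chernoff/union bound, and turn each balanced split into a $(1-\rho)$ geometric contraction of the relevant side length using honesty and the bounded-density assumption, with the factor $1/2$ in the exponent absorbing the Chernoff slack so that the bad-event probability $s^{-\Omega(\pi/d)}$ is dominated (this requires $\rho\le 0.2$, which Specification~\ref{forest_spec} imposes). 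The difficulty you flag --- converting $S^2$-count balancedness over data-dependent boxes into a population-measure contraction, which is where honesty and the density bounds are genuinely used --- is indeed the technical heart of the Wager--Athey proof, and is precisely why their clean statement absorbs all slack into the constant $1/2$ rather than degrading the exponent by the density ratio.
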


\section{Convergence and Asymptotic Analysis}\label{sec:conv-rate}

\vsedit{The \emph{ORF estimate} $\hat{\theta}$ is computed by solving the
weighted moment condition in \Cref{weighted_moment}, using the \emph{ORF weights}
as described in the previous section. We now provide theoretical guarantees for $\hat{\theta}$
under the following
assumption on the moment, score fuction and the data generating process.}
\begin{assumpt}\label{tech-conditions}
  The moment condition and the score function satisfy the following:
\begin{enumerate}[topsep=0pt,itemsep=-1ex,partopsep=1ex,parsep=1ex]
\item \textbf{Local Orthogonality.} The moment condition satisfies
  local orthogonality.
\item \textbf{Identifiability.}  The moments $m(x; \theta, h_0)=0$ has
  a unique solution $\theta_0(x)$.
\item \textbf{Smooth Signal.} The moments $m(x; \theta, h)$ are
  $O(1)$-Lipschitz in $x$ for any $\theta \in \Theta, h\in H$.
\item \textbf{Curvature.} The Jacobian
  $\nabla_\theta m(x; \theta_0(x), h_0)$ has minimum eigenvalue
  bounded away from zero.
\item \textbf{Smoothness of scores.} For every $j\in [p]$ and for all
  $\theta$ and $h$, the eigenvalues of the expected Hessian
  $\Ex{}{\nabla^2_{(\theta, h)} \psi_j(Z; \theta, h(W))\mid x, W}$ are
  bounded above by a constant $O(1)$.  For any $Z$, the score
  $\psi(Z; \theta, \xi)$ is $O(1)$-Lipschitz in
  $\theta$ for any $\xi$ and $O(1)$-Lipschitz in $\xi$
  for any $\theta$. The gradient of the score with respect
  to $\theta$ is $O(1)$-Lipschitz in $\xi$.

\item \textbf{Boundedness.} The parameter set $\Theta$ has constant
  diameter.  There exists a bound $\psimax$ such that for any
  observation $Z$, the first-stage nuisance estimate $\hat h$
  satisfies {$\|\psi(Z; \theta, \hat h)\|_{\infty}\leq \psimax$} for
  any $\theta\in \Theta$.

\vsedit{\item \textbf{Full Support $X$.} The distribution of $X$ admits a density that is bounded away from zero and infinity.}
\end{enumerate}
\end{assumpt}

 All the results presented in the remainder
of the paper will assume these conditions and we omit stating so in each of the theorems. Any extra conditions required for each theorem will be explicitly provided. Note that except for the local orthogonality condition, all of the
assumptions are imposing standard boundedness and regularity
conditions of the moments. 

\omitcm{We now present estimation error and asymptotic 
normality guarantees for the ORF estimate $\hat{\theta}$.}

\begin{theorem}[$L^q$-Error Bound]\label{thm:lq_error}
Suppose that: 
$\E\bb{\cE(\hat{h})^{2q}}^{1/2q} \leq \chi_{n,2q}$. Then:
\begin{equation*}
\textstyle{\E\bb{\|\hat{\theta} - \theta_0\|^q}^{1/q} = O\bp{ \frac{1}{s^{\frac{1}{2\alpha d}}} + \sqrt{\frac{s\log(\frac{n}{s})}{n} }  + \chi_{n, 2q}^2}}
\end{equation*}
\end{theorem}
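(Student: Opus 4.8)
The plan is to run the standard $Z$-estimation argument adapted to forest weights: first convert the parameter error into the size of the empirical moment $\hat m(x;\theta_0):=\sum_i a_i\,\psi(Z_i;\theta_0(x),\hat h(X_i,W_i))$ evaluated at the truth, and then bound $\hat m(x;\theta_0)$ by three pieces matching the three summands in the statement. For the conversion step: since $\hat\theta$ solves the weighted moment in \eqref{weighted_moment}, a first-order expansion in $\theta$ gives $0=\hat m(x;\theta_0)+\hat J\,(\hat\theta-\theta_0)$ with $\hat J$ a Jacobian averaged along the segment. Using the Lipschitzness of $\psi$ and of $\nabla_\theta\psi$ (Assumption~\ref{tech-conditions}, ``Smoothness of scores''), a uniform-in-$\theta$ concentration of the forest-weighted Jacobian around $\nabla_\theta m(x;\theta_0,h_0)$, together with the ``Curvature'' assumption and a preliminary consistency argument (uniform convergence of $\hat m(x;\cdot)$ to $m(x;\cdot,h_0)$ on the compact $\Theta$ plus identifiability), shows $\hat J$ is invertible with $\|\hat J^{-1}\|=O(1)$ on a high-probability event, so $\|\hat\theta-\theta_0\|\lesssim\|\hat m(x;\theta_0)\|$ up to lower-order terms. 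All the forest concentration we need, including this one, comes from viewing the (infinite-)forest-weighted average as a $U$-statistic of order $s$ with a bounded honest-tree kernel and applying $U$-statistic concentration~\cite{Hoeffding} together with a covering argument over $\Theta$; this is the source of the $\sqrt{s\log(n/s)/n}$ rate (the log also absorbing the uniform-in-$\theta$ overhead).

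Next I would decompose $\hat m(x;\theta_0)=B_n+V_n+N_n$: the bias $B_n=\sum_i a_i\,m(X_i;\theta_0(x),h_0)$ (using $m(x;\theta_0(x),h_0)=0$); the centered sampling fluctuation $V_n=\sum_i a_i\big[\psi(Z_i;\theta_0(x),h_0(X_i,W_i))-m(X_i;\theta_0(x),h_0)\big]$; and the nuisance perturbation $N_n=\sum_i a_i\big[\psi(Z_i;\theta_0(x),\hat h(X_i,W_i))-\psi(Z_i;\theta_0(x),h_0(X_i,W_i))\big]$. For $B_n$: the weights are nonnegative, sum to one, and are supported on $\{X_i:\|X_i-x\|\le R\}$ where $R$ is the forest kernel radius; by ``Smooth Signal'', $\|B_n\|\le O(1)\cdot R$, and Theorem~\ref{kernel-bound} (together with its high-probability/moment refinement) gives $\E[R^q]^{1/q}=O(s^{-1/(2\alpha d)})$, the first summand. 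For $V_n$: this is a mean-zero forest $U$-statistic with summands bounded by $\psimax$, so $U$-statistic concentration yields $\E[\|V_n\|^q]^{1/q}=O(\sqrt{s\log(n/s)/n})$, the second summand.

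The heart of the argument is controlling $N_n$ via local orthogonality so that the nuisance error enters \emph{quadratically}. Since $\hat h$ is fit on the independent fold $\{Z_{n+1},\dots,Z_{2n}\}$, I condition on it throughout. A second-order Taylor expansion of $\psi$ in its nuisance argument gives $N_n=N_n^{(1)}+N_n^{(2)}$ with $N_n^{(1)}=\sum_i a_i\,\nabla_h\psi(Z_i;\theta_0(x),h_0(X_i,W_i))\,\Delta_i$, $\Delta_i:=\hat h(X_i,W_i)-h_0(X_i,W_i)$, and $\|N_n^{(2)}\|\lesssim\sum_i a_i\|\Delta_i\|^2$ by the bounded-Hessian/Lipschitz-gradient conditions. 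For $N_n^{(2)}$: taking conditional expectations over the (tiny) forest leaf around $x$, $\E[\sum_i a_i\|\Delta_i\|^2\mid\hat h]=O(\cE(\hat h)^2)$ using that the first-stage error is locally stable near $x$, and a bounded-kernel concentration plus Minkowski gives $\E[\|N_n^{(2)}\|^q]^{1/q}\lesssim\E[\cE(\hat h)^{2q}]^{1/q}+\sqrt{s\log(n/s)/n}\le\chi_{n,2q}^2+\sqrt{s\log(n/s)/n}$, the third summand (plus a harmless copy of the second). For $N_n^{(1)}$, I split into its conditional mean and a centered part. The conditional mean of term $i$ is $\E[\nabla_h\psi(Z;\theta_0(x),h_0(X_i,W))(\hat h(X_i,W)-h_0(X_i,W))\mid X=X_i]$, which is exactly $D_\psi[\hat h-h_0\mid x]=0$ when $X_i=x$ by local orthogonality, and for $X_i$ in the leaf differs from $0$ by only $O(\|X_i-x\|\cdot\|\hat h-h_0\|)$ by the smoothness of $\psi$ and (via the implicit function theorem) of $\theta_0(\cdot)$; a Cauchy--Schwarz over the weights bounds this by $O(R\cdot\cE(\hat h))$, and since $R\perp\hat h$ its $L^q$ norm is $O(s^{-1/(2\alpha d)}\,\chi_{n,2q})\le O(s^{-1/(\alpha d)}+\chi_{n,2q}^2)$ by AM--GM, hence dominated by the first and third summands. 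The centered part of $N_n^{(1)}$ is again a mean-zero forest $U$-statistic with summands of size $O(\psimax)$, hence $O(\sqrt{s\log(n/s)/n})$. Combining $B_n,V_n,N_n$ and feeding them through the inversion step yields the claimed bound.

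I expect the main obstacle to be the forest $U$-statistic concentration that delivers the precise $\sqrt{s\log(n/s)/n}$ rate \emph{uniformly} over $\theta$ (so the Jacobian inversion is valid), while simultaneously doing the bookkeeping that forces the nuisance error all the way into the quadratic term $\chi_{n,2q}^2$ --- in particular, ensuring that the localization mismatch between the target $x$ and the in-leaf points $X_i$ (so that $\theta_0(X_i)\neq\theta_0(x)$ and exact orthogonality holds only at $x$) produces only cross terms dominated by the three stated rates, rather than a first-order $\chi_{n,2q}$ contribution.
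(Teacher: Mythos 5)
Your proposal is essentially correct and would, with careful bookkeeping, yield the stated rate, but it is organized quite differently from the paper's argument and carries some extra machinery. The paper's proof expands the \emph{population} moment $m(x;\hat\theta,\hat h)$ to second order around $(\theta_0,h_0)$ (\Cref{lem:taylor}): the first-order term in $h$ dies exactly by local orthogonality because the expansion is conditional on $X=x$, leaving a remainder of order $\cE(\hat h)^2+\|\hat\theta-\theta_0\|^2$; plugging in $\Psi(\hat\theta,\hat h)=0$ then gives $\hat\theta-\theta_0=M^{-1}\Lambda(\hat\theta,\hat h)+\xi$ with $M$ the \emph{population} Jacobian. The residual $\Lambda=m-\Psi$ is decomposed (\Cref{lem:mae_decomposition}) into kernel error $\Gamma$, sampling error $\Delta$, and a finite-$B$ subsampling error $E$, each bounded by the kernel-shrinkage theorem, the $U$-statistic bracketing lemma, and a Hoeffding bound over trees respectively; the quadratic $\|\hat\theta-\theta_0\|^2$ piece is absorbed using the separate consistency theorem (\Cref{thm:consistency}). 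You instead expand the \emph{empirical} moment $\hat m(x;\cdot)$ at $\theta_0$ by the mean-value theorem and decompose $\hat m(x;\theta_0)$ into $B_n+V_n+N_n$. This buys you an exact (no quadratic-in-$\theta$ remainder) linearization and so you do not need the separate consistency step, but it costs you two things the paper avoids: (i) you must prove uniform-in-$\theta$ concentration and invertibility of the empirical forest-weighted Jacobian $\hat J$ and then carry the high-probability invertibility event into an $L^q$ bound, whereas the paper uses the deterministic $M$ directly; and (ii) because you evaluate $\psi$ at in-leaf points $X_i\neq x$, orthogonality holds only approximately and you must control a localization cross-term $O(R\cdot\cE(\hat h))$ (your AM--GM step), and similarly your bound $\sum_i a_i\|\Delta_i\|^2=O(\cE(\hat h)^2)$ needs a ``local stability'' of the nuisance error near $x$ that is not literally among the paper's stated assumptions --- the paper sidesteps both by doing the Taylor expansion in $h$ inside the conditional expectation at $X=x$, so that $\cE(\hat h)$ appears with no spatial mismatch at all. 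Your identification of the localization mismatch as the main obstacle is exactly right; the paper's choice to expand the population rather than the empirical moment is precisely what removes that obstacle.
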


\begin{theorem}[High Probability Error Bound]\label{thm:high-prob-error}
Suppose that the score is the gradient of a convex loss\omitcm{, i.e.:
\begin{equation}
\psi(z; \theta, h) = \nabla_{\theta} \ell(z; \theta, h)
\end{equation}}
and let $\sigma>0$ denote the minimum eigenvalue of the jacobian $M$. Moreover, suppose that the nuisance estimate satisfies that w.p. $1-\delta$: $\cE(\hat{h}) \leq \chi_{n,\delta}$.
Then w.p. $1-2\delta$:
\begin{equation}
\textstyle{\|\hat{\theta} - \theta_0\| = \frac{O\bp{ s^{-\frac{1}{2\alpha d}} + \sqrt{\frac{s\, \log(\frac{n}{s\, \delta})}{n} } + \chi_{n,\delta}^2 }}{\sigma - O(\chi_{n,\delta})}}
\end{equation}
\end{theorem}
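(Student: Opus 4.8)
\emph{Proof approach.} The plan is to follow the same route as the proof of \Cref{thm:lq_error}, but replacing the $L^q$ moment bounds by high-probability concentration and using the convexity hypothesis to pass from a bound on the empirical moment at $\theta_0$ to a bound on $\|\hat\theta-\theta_0\|$ without ever needing uniform control of the \emph{empirical} Jacobian. Write $\hat m(x;\theta)=\sum_{i=1}^n a_i\,\psi(Z_i;\theta,\hat h(X_i,W_i))$ for the ORF moment from \Cref{weighted_moment}, so $\hat m(x;\hat\theta)=0$. Since $\psi=\nabla_\theta\ell$ with $\ell$ convex in $\theta$ and the ORF weights satisfy $a_i\ge 0$, the map $\hat m(x;\cdot)$ is the gradient of the convex function $\hat L(\theta)=\sum_i a_i\,\ell(Z_i;\theta,\hat h(X_i,W_i))$, and $\hat\theta$ minimizes $\hat L$. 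By sample splitting ($\hat h$ is built from $\{Z_{n+1},\dots,Z_{2n}\}$, the forest from $\{Z_1,\dots,Z_n\}$) we condition on the first-stage data, treating $\hat h$ as deterministic, and set $\bar m(x;\theta)=\Ex{}{\psi(Z;\theta,\hat h(X,W))\mid X=x}$.

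First I would bound $\bn{\bar m(x;\theta_0)}$. Conditioning on $X=x$ makes $\hat h(X,W)=\hat h(x,W)$, so $\bar m(x;\theta_0)=\Ex{}{\psi(Z;\theta_0,\hat h(x,W))-\psi(Z;\theta_0,h_0(x,W))\mid x}$; Taylor-expanding in the nuisance argument, the first-order term is exactly $D_\psi[\hat h-h_0\mid x]=0$ by local orthogonality, and the remainder is $O(\cE(\hat h)^2)$ by the bounded-Hessian part of \emph{Smoothness of scores}. On the $1-\delta$ event $\cE(\hat h)\le\chi_{n,\delta}$ this is $O(\chi_{n,\delta}^2)$. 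Next I would control the fluctuation $\hat m(x;\theta)-\bar m(x;\theta)$ uniformly in $\theta$. Because the ORF weights average tree weights over $B$ subsamples of size $s$, $\hat m(x;\theta)$ is (exactly so in the infinite-forest limit, up to a Monte-Carlo term controlled by large $B$) a $U$-statistic of order $s$; combining a Hoeffding-type concentration inequality for $U$-statistics with the honest-splitting structure (to handle the data-dependent leaf $L(x)$), the kernel-shrinkage bound of \Cref{kernel-bound} (the leaf diameter $O(s^{-1/(2\alpha d)})$ turned into a bias via the Lipschitz \emph{Smooth Signal} assumption applied to $\bar m$, since $\hat h\in H$), and a covering/union-bound argument over $\Theta$ (constant diameter, $\psi$ Lipschitz in $\theta$), I get, with probability $1-\delta$,
\[
\textstyle{\sup_{\theta}\bn{\hat m(x;\theta)-\bar m(x;\theta)} \;=\; O\!\bp{s^{-\frac{1}{2\alpha d}}+\sqrt{\tfrac{s\log(n/(s\delta))}{n}}}\;=:\;\epsilon_n,}
\]
so in particular $\bn{\hat m(x;\theta_0)}\le r_n:=\epsilon_n+O(\chi_{n,\delta}^2)$.

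Then I would convert this into a parameter bound via convexity. Since $\psi=\nabla_\theta\ell$, the Jacobian $M=\nabla_\theta m(x;\theta_0,h_0)=\Ex{}{\nabla_\theta^2\ell\mid x}$ is symmetric with $\lambda_{\min}(M)\ge\sigma$; the Hessian bound keeps $\nabla_\theta m(x;\theta,h_0)\succeq(\sigma-O(\|\theta-\theta_0\|))I$ nearby, and replacing $h_0$ by $\hat h$ perturbs it by $O(\cE(\hat h))=O(\chi_{n,\delta})$ (the gradient of the score in $\theta$ is Lipschitz in the nuisance). Hence on a ball $B(\theta_0,R)$, $\bar m(x;\cdot)=\nabla\bar L$ is $\mu$-strongly monotone with $\mu=\sigma-O(\chi_{n,\delta})-O(R)$. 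Suppose $\|\hat\theta-\theta_0\|>R$ and put $\theta'=\theta_0+R(\hat\theta-\theta_0)/\|\hat\theta-\theta_0\|$. Monotonicity of $\nabla\hat L$ together with $\hat m(x;\hat\theta)=0$ gives $\langle\hat m(x;\theta'),\theta'-\theta_0\rangle\le 0$, while strong monotonicity of $\bar m$, the bound on $\bn{\bar m(x;\theta_0)}$, and the uniform bound $\epsilon_n$ give
\[
\textstyle{\langle\hat m(x;\theta'),\theta'-\theta_0\rangle\;\ge\;\mu R^2-\bn{\bar m(x;\theta_0)}R-\epsilon_n R\;\ge\;\mu R^2-r_n R,}
\]
a contradiction once $R>r_n/\mu$. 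Taking $R$ just above $r_n/\mu$ (which is $\le c\sigma$ exactly when the claimed bound is meaningful, i.e.\ $\sigma-O(\chi_{n,\delta})>0$ and the rates are small, making $\mu=\sigma-O(\chi_{n,\delta})$ self-consistently) yields $\|\hat\theta-\theta_0\|\le R=O(r_n)/(\sigma-O(\chi_{n,\delta}))$, which is the statement; the failure probability $2\delta$ is one $\delta$ for $\cE(\hat h)\le\chi_{n,\delta}$ and one $\delta$ for the $U$-statistic concentration.

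\emph{Main obstacle.} I expect the hard part to be the uniform $U$-statistic concentration: obtaining the sharp $\sqrt{s\log(n/(s\delta))/n}$ rate requires carefully combining a Hoeffding/Bernstein inequality for $U$-statistics with the honest-forest leaf structure and a chaining argument over $\Theta$, and the kernel-shrinkage bound of \Cref{kernel-bound} must either be upgraded to a high-probability statement or fed into that concentration argument rather than used only in expectation. A secondary delicate point is the bookkeeping that lets the nuisance error enter the curvature only to first order (the $\sigma-O(\chi_{n,\delta})$ denominator) but the bias only to second order (the $O(\chi_{n,\delta}^2)$ numerator) — precisely where local orthogonality is essential.
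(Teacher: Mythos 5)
Your proposal is correct and follows essentially the same route as the paper's proof: identical error decomposition (kernel shrinkage into a bias via Lipschitz moments, $U$-statistic concentration over $\Theta$ for the sampling fluctuation, and a second-order Taylor expansion with local orthogonality to get the $\chi_{n,\delta}^2$ term), and the same use of convexity with a Lipschitz-perturbed Jacobian to place $\sigma - O(\chi_{n,\delta})$ in the denominator. The only difference is mechanical: the paper combines a strong-convexity lower bound for $L(\hat\theta)-L(\theta_0)$ with a first-order convexity lower bound for $L(\theta_0)-L(\hat\theta)$, then bounds $\|m(x;\hat\theta,\hat h)\|$ by $\|\Lambda(\hat\theta,\hat h)\|$ using $\Psi(\hat\theta,\hat h)=0$, whereas you use monotonicity of the empirical gradient plus strong monotonicity of $\bar m$ and a radius/contradiction argument — two equivalent ways of translating the same moment bounds into a parameter bound.
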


For asymptotic normality we will restrict our framework to the case of parametric nuisance functions, i.e. $h(X, W) = g(W; \nu(X))$ for some
known function $g$ and to a particular type of nuisance estimators that recover the true parameter $\nu_0(x)$. Albeit we note that the parameter $\nu(X)$ can be an arbitrary non-parametric function of $X$ and can also be high-dimensional.
We will further assume that the moments also have a smooth co-variance structure in $X$, i.e. if we let 
$$V=\psi(Z; \theta_0(x), g(W; \nu_0(x)))$$ 
then $\Var(V \mid X=x')$ is Lipschitz in $x'$ for any $x'\in [0,1]^d$.

\begin{theorem}[Asymptotic Normality]\label{thm:asym_norm}
Suppose that $h_0(X, W)$ takes a locally parametric form $g(W; \nu_0(X))$, for some known function $g(\cdot; \nu)$ that is $O(1)$-Lipschitz in $\nu$ w.r.t. the $\ell_r$ norm for some $r\geq 1$ and
the nuisance estimate is of the form $\hat{h}(X, W) = g(W; \hat{\nu}(x))$ and satisfies:
\begin{equation*}
\textstyle{\E\bb{ \bn{\hat{\nu}(x) - \nu_0(x)}_r ^4}^{1/4} \leq \chi_{n,4} = o\bp{\bp{s/n}^{1/4}}}
\end{equation*}
Suppose that $s$ is chosen such that: $$s^{-1/(2\alpha d)} = o((s/n)^{1/2-\epsilon}),$$ for any $\epsilon>0$, and $s=o(n)$. Moreover, $\Var(V \mid X=x')$ is Lipschitz in $x'$ for any $x'\in [0,1]^d$.
Then
for any coefficient $\beta\in \RR^p$, with $\|\beta\|\leq 1$, assuming $\Var(\beta^{\intercal} M^{-1} V | X=x') > 0$ for any $x'\in [0,1]^d$, there exists a sequence $\sigma_n = \Theta(\sqrt{\polylog(n/s) s/n})$,
such that:
\begin{equation}
\textstyle{\sigma_n^{-1} \ldot{\beta}{\hat{\theta}-\theta_0} \rightarrow_d \normal(0,1)}
\end{equation}
\end{theorem}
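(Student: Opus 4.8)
The plan is to express $\hat\theta - \theta_0$ as an approximately linear statistic in the data, show that this linear part is a $U$-statistic to which a Hoeffding-type projection / Hajek decomposition and CLT apply, and control all remainder terms using the $L^q$ bounds already established. First I would write the first-order condition $\sum_i a_i \psi(Z_i;\hat\theta,\hat h(X_i,W_i))=0$ and Taylor-expand in $\theta$ around $\theta_0(x)$, using the curvature condition (minimum eigenvalue of $M = \nabla_\theta m(x;\theta_0(x),h_0)$ bounded away from zero) to invert the Jacobian. This gives
\begin{equation*}
\hat\theta - \theta_0(x) = -M^{-1}\sum_{i=1}^n a_i\,\psi(Z_i;\theta_0(x),h_0(X_i,W_i)) + (\text{remainder}),
\end{equation*}
where the remainder collects (i) the difference between the empirical weighted Jacobian and $M$, (ii) the second-order Taylor term in $\theta$, (iii) the bias from using $\hat h$ instead of $h_0$, and (iv) the Lipschitz-in-$x$ error from evaluating the moment at $X_i$ rather than at $x$ (controlled via the kernel-shrinkage Theorem~\ref{kernel-bound}, which forces all observations with $a_{ib}>0$ to lie within $O(s^{-1/(2\alpha d)})$ of $x$). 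The crucial point is that local orthogonality kills the first-order nuisance term, so (iii) is second order in $\cE(\hat h)$: it is $O(\chi_{n,4}^2) = o((s/n)^{1/2})$ by the assumed rate. The kernel bias (iv) contributes $O(s^{-1/(2\alpha d)}) = o((s/n)^{1/2-\epsilon})$ by the choice of $s$, and terms (i)–(ii) are of order the square of the estimation error, which by Theorem~\ref{thm:lq_error} is $o((s/n)^{1/2})$ as well. Hence all remainders are $o(\sigma_n)$.

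Next I would analyze the leading term $T_n := -M^{-1}\sum_i a_i \psi(Z_i;\theta_0(x),h_0(X_i,W_i))$. Because $a_i = \frac1B\sum_b a_{ib}$ and each $a_{ib}$ depends only on the subsample $S_b$, averaging over $B\to\infty$ subsamples turns $T_n$ (in the infinite-forest / subsample-expectation limit, after bounding the Monte-Carlo error in $B$) into a complete $U$-statistic of order $s$ over the $n$ samples with kernel given by a single honest tree. I would invoke the Hajek projection: write $T_n = \E[T_n] + \frac{s}{n}\sum_{i=1}^n g_1(Z_i) + (\text{degenerate part})$, where $g_1$ is the first-order projection. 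The expectation $\E[T_n]$ is exactly the kernel bias already absorbed into the remainder, modulo the Lipschitz-in-$x$ bound on the conditional mean of $\psi$ (which is $0$ at $x$). The degenerate (higher-order) part of the $U$-statistic has variance of smaller order than that of the linear part when $s = o(n)$, so it is asymptotically negligible after normalization. For the linear part $\frac{s}{n}\sum_i g_1(Z_i)$, standard Lindeberg CLT applies once we verify a variance lower bound: here I would use the assumption $\Var(\beta^\intercal M^{-1} V \mid X=x')>0$ together with the Lipschitz continuity of $\Var(V\mid X=x')$ to show $\Var(\beta^\intercal T_n) = \Theta(\mathrm{polylog}(n/s)\cdot s/n)$, giving the stated $\sigma_n$. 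The $\mathrm{polylog}$ factor arises because a single honest tree's leaf weights $a_{ib}$ concentrate mass on roughly $r = O(1)$ points but the effective neighborhood size, and hence the variance inflation over the i.i.d. ideal, carries logarithmic corrections from the tree depth $\sim \log(s)$.

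Finally I would assemble the pieces: $\sigma_n^{-1}\langle\beta,\hat\theta-\theta_0\rangle = \sigma_n^{-1}\beta^\intercal T_n + \sigma_n^{-1}(\text{remainder})$, the second term is $o_p(1)$ by the rate calculations above, and the first term is $\beta^\intercal(\text{linear }U\text{-projection})/\sigma_n + o_p(1) \to_d \normal(0,1)$ by the CLT. The main obstacle, I expect, is step two: rigorously controlling the $U$-statistic, specifically (a) making the passage from finite $B$ to the infinite-forest limit and bounding the resulting Monte-Carlo variance, (b) showing the degenerate components of the Hoeffding decomposition are genuinely lower order when the kernel (a random honest tree) is itself data-dependent and highly non-smooth, and (c) pinning down the exact $\mathrm{polylog}$ order of $\sigma_n$ and the variance lower bound — this requires careful use of the honesty and $\rho$-balance / $\pi$-random-split specifications so that the leaf containing $x$ has well-controlled size and the conditional variance inside it does not degenerate. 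The orthogonality-driven cancellation of the first-order nuisance term is conceptually the heart of the argument but is, by design, the easy part once the Gateaux-derivative bookkeeping is set up correctly.
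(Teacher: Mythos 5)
Your plan and the paper's proof share the same skeleton (Taylor expansion of the first-order condition, local orthogonality to suppress the first-order nuisance bias, kernel-shrinkage for the Lipschitz-in-$x$ error, and treating the forest-weighted score as a $U$-statistic to which a CLT applies), but they diverge in two places that matter.

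First, the paper does not re-derive the Hajek/Hoeffding decomposition for the forest $U$-statistic, which you flag as the main obstacle. Instead it identifies the leading term $V = \ldot{\beta}{M^{-1}\Delta(\theta_0,\tilde h_0)}$ (with $\tilde h_0(\cdot,W)=g(W;\nu_0(x))$ a \emph{fixed} nuisance function) as exactly the centered random-forest regression estimator of \citet{WA} for a synthetic response $Y_i = \ldot{\beta}{M^{-1}(m(X_i;\theta_0,\tilde h_0) - \psi(Z_i;\theta_0,\tilde h_0(X_i,W_i)))}$, and then invokes their Theorem~1 as a black box. What makes this work is precisely that the reference parameters $(\theta_0, \tilde h_0)$ are non-random, so the WA machinery on honest, balanced, randomized trees applies directly. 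All the hard $U$-statistic work you are worried about is outsourced to that citation; the real work in this paper lies elsewhere.

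Second, and this is a genuine gap, your item (iii) conflates two distinct error contributions. Local orthogonality controls only the \emph{deterministic} bias $m(x;\theta_0,\hat h) - m(x;\theta_0,h_0) = O(\cE(\hat h)^2) = O(\chi_{n,4}^2)$. It does not control the \emph{stochastic fluctuation} in the empirical score that arises because the plug-in nuisance $\hat h$ and the parameter $\hat\theta$ are themselves random and data-dependent. Concretely, after the Lemma~\ref{lem:mae_decomposition} decomposition, you must show that $\Delta(\hat\theta,\hat h) - \Delta(\theta_0,\tilde h_0) = o_p(\sigma_n)$; this quantity has zero conditional mean, so orthogonality gives you nothing, and naively bounding it by the sampling-error rate $\sqrt{s/n}$ is exactly $\Theta(\sigma_n)$, not $o(\sigma_n)$. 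The paper closes this via a \emph{stochastic equicontinuity} argument (Lemma~\ref{lem:stoch_eq}): the function class $\{\gamma_j(\cdot;\theta,g(\cdot;\nu))\}$ has $L^2$-envelope $O(\|\theta-\theta_0\| + \|\nu-\nu_0(x)\|_r)$ by Lipschitzness of the score and of $g$ in $\nu$, and combining this envelope with the already-established consistency rates $\|\hat\theta-\theta_0\| = O_p(\sigma_n)$ and $\|\hat\nu-\nu_0(x)\|_r = O_p(\chi_{n,4})$ yields a bound $O(\sigma_n^{3/2}\polylog)$, which is $o(\sigma_n)$. Without this step your proof sketch would stall: asserting (iii) is $O(\chi_{n,4}^2)$ is false for the stochastic part, and without the equicontinuity envelope the CLT would be applied to the wrong linearization. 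This term—not the degenerate components of the Hoeffding decomposition—is the place where the Lipschitz-in-$\nu$ assumption on $g$ and the fourth-moment rate assumption on $\hat\nu$ are actually consumed.
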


Given the result in \Cref{thm:asym_norm}, we can follow the same
approach of \emph{Bootstrap of Little Bags} by
\cite{ATW17,SEXTON2009801} to build valid confidence intervals.

%
%


\section{Nuisance Estimation: Forest Lasso}\label{sec:kernel-lasso}
Next, we study the nuisance estimation problem in the first stage 
and provide a general nuisance estimation method that leverages
locally sparse parameterization of the nuisance function, permitting
low error rates even for high-dimensional problems.
\vsedit{Consider the case when the nuisance function
$h$ takes the form $h(x, w) = g(w; \nu(x))$ for some known functional form $g$, for some known function $g$
but unknown function $\nu: \cX\rightarrow \reals^{d_{\nu}}$, with $d_{\nu}$ potentially growing with $n$.
Moreover, the
parameter $\nu_0(x)$ of the true nuisance function
$h_0$ is identified as the minimizer of some local loss, as defined in Equation~\eqref{eqn:local-loss}.}

\vsedit{We consider the following estimation process: given a set of
observations $D_1$, we run the same tree learner in
Section~\ref{sec:plr} over $B$ random subsamples (without replacement)
to compute ORF weights $a_i$ for each observation $i$ over
$D_1$.  Then we apply a local $\ell_1$
penalized $M$-estimation:
\begin{equation}
\textstyle{\hat{\nu}(x) = \arg\min_{\nu\in \cV} \sum_{i=1}^n a_i\, \ell(Z_i;\nu) + \lambda \|\nu\|_1}
\end{equation}
}

\vsedit{To provide formal guarantees for this method we will need to make the following assumptions.}
\begin{assumpt}[Assumptions for nuisance estimation]\label{ass:nuisance}
The target parameter and data distribution satisfy:
  \begin{itemize}[topsep=0pt,itemsep=-1ex,partopsep=1ex,parsep=1ex]
  \item For any $x\in \cX$, $\nu(x)$ is $k$-sparse with support
    $S(x)$.
  \item $\nu(x)$ is a $O(1)$-Lipschitz in $x$ and the function
    $\nabla_{\nu} L(x; \nu)=\Ex{}{\nabla_{\nu} \ell(Z; \nu) \mid X=x}$
    is $O(1)$-Lipschitz in $x$ for any $\nu$, with respect to the $\ell_2$
    norm.

  \item The data distribution satisfies the conditional restricted
    eigenvalue condition: for all $\nu\in \cV$ and for all $z\in \cZ$, for
    some matrix $\mcH(z)$ that depends only on the data:
$\nabla_{\nu\nu}\ell(z; \nu) \succeq \mcH(z) \succeq 0$,
and for all $x$ and for all 
$\nu\in C(S(x);3) \equiv \{\nu\in \RR^d: \|\nu_{S(x)^c}\|_1 \leq
3\|\nu_{S(x)}\|_1\}$:
\begin{equation}
\textstyle{\nu^T \Ex{}{\mcH(Z) \mid X=x} \nu \geq \gamma \|\nu\|_2^2}
\end{equation}
  \end{itemize}
\end{assumpt}
\vsedit{Under Assumption~\ref{ass:nuisance}
we show that the local penalized estimator achieves the following parameter recovery
guarantee.}

\begin{theorem}\label{nuisance-main} With probability $1-\delta$:
  \begin{equation*}
\textstyle{\|\hat{\nu}(x) - \nu_0(x)\|_1 \leq \frac{2 \lambda k}{\gamma - 32k\sqrt{s\ln(d_{\nu}/\delta)/n}}}
\end{equation*}
as long as
$\lambda \geq \Theta\bp{s^{-1/(2 \alpha d)} + \sqrt{\frac{s \ln(d_\nu/\delta)}{n}}}$.
\end{theorem}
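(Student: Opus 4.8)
The plan is to follow the standard Lasso analysis (as in \cite{Lasso-Book}) but carried out for the \emph{locally weighted} empirical loss, and then to control the key stochastic quantity---the weighted gradient of the loss at the true parameter---using the kernel-shrinkage bound of \Cref{kernel-bound} together with a $U$-statistic concentration inequality. Write $\hat\nu = \hat\nu(x)$, $\nu_0 = \nu_0(x)$, $S = S(x)$, and let $\hat L(\nu) = \sum_i a_i \ell(Z_i;\nu)$ denote the weighted empirical loss (so $\sum_i a_i = 1$). The first step is the usual basic inequality: since $\hat\nu$ minimizes $\hat L(\nu) + \lambda\|\nu\|_1$, comparing with $\nu_0$ gives $\hat L(\hat\nu) - \hat L(\nu_0) \le \lambda(\|\nu_0\|_1 - \|\hat\nu\|_1)$. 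Combining this with convexity of $\hat L$ and the bound $\langle \nabla \hat L(\nu_0), \hat\nu - \nu_0\rangle \ge -\|\nabla\hat L(\nu_0)\|_\infty \|\hat\nu-\nu_0\|_1$, and choosing $\lambda \ge 2\|\nabla\hat L(\nu_0)\|_\infty$, yields the cone condition $\hat\nu - \nu_0 \in C(S;3)$ and the preliminary inequality on the loss gap in terms of $\lambda\|\Delta_S\|_1$ where $\Delta = \hat\nu - \nu_0$.

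The second step converts the loss gap into a bound on $\|\Delta\|_1$ via strong convexity. Using $\nabla_{\nu\nu}\ell(z;\nu) \succeq \mcH(z)$ from Assumption~\ref{ass:nuisance} and a second-order Taylor expansion, the empirical curvature along $\Delta$ is lower bounded by $\Delta^\top \big(\sum_i a_i \mcH(Z_i)\big)\Delta$. The population version of this quadratic form is $\Delta^\top \Ex{}{\mcH(Z)\mid X=x}\Delta \ge \gamma\|\Delta\|_2^2$ on the cone, so I need to show the weighted empirical quadratic form is close to its conditional expectation uniformly over the cone. This is where the kernel weights matter: the weights concentrate on a shrinking neighborhood of $x$ (by \Cref{kernel-bound}, radius $O(s^{-1/(2\alpha d)})$ in expectation), on which $\Ex{}{\mcH(Z)\mid X=x'}$ is close to $\Ex{}{\mcH(Z)\mid X=x}$ by the Lipschitz assumption, and the sampling fluctuation of the $U$-statistic $\sum_i a_i \mcH(Z_i)$ around its mean is $O(k\sqrt{s\ln(d_\nu/\delta)/n})$ by a Hoeffding-type bound for $U$-statistics applied entrywise (the $k^2$ and then $k$ factors coming from restricting to the cone, where $\|\Delta\|_1 \le 4\|\Delta_S\|_1 \le 4\sqrt{k}\|\Delta\|_2$). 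This gives the effective curvature $\gamma - 32k\sqrt{s\ln(d_\nu/\delta)/n}$ appearing in the denominator. Combining with Step 1 in the standard way ($\lambda\|\Delta_S\|_1 \ge$ curvature $\times \|\Delta\|_2^2 \ge$ curvature $\times \|\Delta_S\|_1^2/k$) produces $\|\Delta\|_1 \le 2\lambda k/(\gamma - 32k\sqrt{s\ln(d_\nu/\delta)/n})$.

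The third step, and the one I expect to be the main obstacle, is verifying the choice of $\lambda$---i.e., bounding $\|\nabla\hat L(\nu_0)\|_\infty = \|\sum_i a_i \nabla_\nu\ell(Z_i;\nu_0)\|_\infty$ with high probability. I would decompose this into a bias term and a variance term. The bias term is $\|\sum_i a_i\,(\nabla_\nu L(X_i;\nu_0) - \nabla_\nu L(x;\nu_0))\|_\infty$ plus $\|\nabla_\nu L(x;\nu_0)\|_\infty$; the latter is $0$ by the first-order optimality defining $\nu_0(x)$ in Equation~\eqref{eqn:local-loss}, and the former is $O(s^{-1/(2\alpha d)})$ by the Lipschitzness of $\nabla_\nu L(\cdot;\nu_0)$ combined with \Cref{kernel-bound} (taking care to turn the in-expectation kernel radius bound into a high-probability statement, e.g. via Markov on the event that all active weights lie within a slightly inflated radius, as in \cite{WA}). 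The variance term $\|\sum_i a_i(\nabla_\nu\ell(Z_i;\nu_0) - \nabla_\nu L(X_i;\nu_0))\|_\infty$ is a centered $U$-statistic in each coordinate; applying Hoeffding's inequality for $U$-statistics \cite{Hoeffding} and a union bound over the $d_\nu$ coordinates gives $O(\sqrt{s\ln(d_\nu/\delta)/n})$. Summing the two contributions shows that any $\lambda \ge \Theta(s^{-1/(2\alpha d)} + \sqrt{s\ln(d_\nu/\delta)/n})$ satisfies $\lambda \ge 2\|\nabla\hat L(\nu_0)\|_\infty$ with probability $1-\delta$, which closes the argument. The delicate points throughout are the correct bookkeeping of the $k$-dependence on the cone and making the random-forest kernel localization rigorous as a high-probability (not just in-expectation) statement; both are handled by now-standard arguments but require care.
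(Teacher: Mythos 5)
Your proposal is correct and follows essentially the same plan as the paper: basic inequality plus cone condition $\hat\nu-\nu_0\in C(S(x);3)$, restricted strong convexity of the weighted empirical loss via a $U$-statistic concentration argument, and then a bound on $\lVert\nabla\hat L(\nu_0)\rVert_\infty$ split into a kernel-shrinkage bias term (killed at $x$ by first-order optimality, controlled off $x$ by Lipschitzness of $\nabla_\nu L$) and a $U$-statistic variance term with a union bound over coordinates.

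The one place you deviate substantively is in establishing restricted strong convexity. You argue that the population quadratic form $\Delta^\top\Ex{}{\mcH(Z)\mid X=x'}\Delta$ is close to its value at $x$ by appealing to Lipschitzness of the conditional Hessian in $x'$ together with kernel localization; that Lipschitz property is \emph{not} among the hypotheses in Assumption~\ref{ass:nuisance}. The paper avoids needing it: using honesty of the ORF trees and the tower rule, $\E[U]$ is written as a convex combination of $\Ex{}{\mcH(Z)\mid X=x_i}$, each of which is already assumed to satisfy the conditional restricted eigenvalue bound, so the convex combination does too. This route stays entirely within the stated assumptions. A smaller stylistic difference: for the gradient term you propose lifting the in-expectation kernel radius bound to a high-probability event before concentrating, while the paper simply bounds the expectation of each gradient coordinate by $L\,s^{-1/(2\alpha d)}$ and then applies the $U$-statistic tail bound around that expectation; both orderings are fine and give the same $\lambda$-threshold.
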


\begin{example}[Forest Lasso]
  For locally sparse linear regression,
  $Z_i=(x_i, y_i, W_i)$ and
  $\ell(Z_i; \nu) = (y_i - \ldot{\nu}{W_i})^2$. This means,
  $\nabla_{\nu\nu}\ell(Z_i; \nu) = W_iW_i^T = \mcH(Z_i)$. Hence, the
  conditional restricted eigenvalue condition is simply a conditional
  covariance condition: $\Ex{}{WW^\intercal \mid x} \succeq \gamma I$.
\end{example}

\begin{example}[Forest Logistic Lasso]
  For locally sparse logistic regression, $Z_i=(x_i, y_i, W_i)$,
  $y_i\in \{0,1\}$ and
  $$\ell(Z_i; \nu) = y_i \ln\left(\cL(\ldot{\nu}{W_i})\right) + (1-y_i)
  \ln\left(1-\cL(\ldot{\nu}{W_i})\right),$$ where
  $\cL(t) = 1/(1+e^{-t})$ is the logistic function. In this case,
  \begin{align*}
\nabla_{\nu\nu}\ell(Z_i; \nu) &= \cL(\ldot{\nu}{W_i})
  (1-\cL(\ldot{\nu}{W_i})) W_iW_i^\intercal\\ & \succeq \rho
    W_iW_i^\intercal = \mcH(Z_i)
    \end{align*}
  assuming the index $\ldot{\nu}{w}$ is
  bounded in some finite range. Hence, our conditional restricted
  eigenvalue condition is the same conditional covariance condition:
  $$\rho \Ex{}{WW^T \mid x} \succeq \rho \gamma I$$
\end{example}

\omitcm{The latter reasoning extends to any loss whose gradient takes the form $(G(\ldot{x}{\nu}) - y)x$, for some strictly monotone increasing function $G$. This is the class of single index regression models with a monotone link function, which encompasses a broad class of estimation problems.}


\section{Heterogeneous Treatment Effects}\label{sec:hte}
Now we apply ORF to the problem of estimating \emph{heterogeneous
	treatment effects}.  We will consider the following \emph{extension}
of the \emph{partially linear regression (PLR)} model due to
\citet{Robinson}.
\footnote{The standard PLR model \cite{Robinson} considers solely the case of constant treatment effects, $Y = \ldot{\theta_0}{T} + f_0(X, W) + \eps$, and 
the goal is the estimation of the parameter $\theta_0$.} We have $2n$ i.i.d. observations
$D = \{Z_i = (T_i, Y_i, W_i, X_i)\}_{i=1}^{2n}$ such that for each
$i$, $T_i$ represents the treatment applied that can be either
real-valued (in $\mathbb{R}^p$) or discrete (taking values in
$\{0, e_1, \ldots , e_p\}$, where each $e_j$ denotes the standard
basis in $\RR^p$), $Y_i\in \RR$ represents the outcome,
$W_i\in [-1, 1]^{d_\nu}$ represents potential confounding variables
(controls), and $X_i\in \cX = [0, 1]^d$ is the feature vector that
captures the heterogeneity. The set of parameters are related via the
following equations:
\begin{align}
Y &= \ldot{\mu_0(X, W)}{T} + f_0(X, W) + \eps, \label{maineq}\\
T &= g_0(X, W) + \eta, \label{confounding}
\end{align}
where $\eta, \eps$ are bounded unobserved noises such that
$\Ex{}{\eps \mid W, X, T} = 0$ and
$\Ex{}{\eta \mid X, W, \eps} = 0$.\swcomment{added the last $\eps$}
In the main equation \eqref{maineq},
$\mu_0\colon \RR^d\times \RR^{d_\nu}\rightarrow [-1, 1]^p$ represents
the treatment effect function. Our goal is to estimate
\emph{conditional average treatment effect} (CATE) $\theta_0(x)$
conditioned on target feature $x$:
\begin{equation}
\textstyle{\theta_0(x) = \Ex{}{\mu_0(X, W)\mid X=x}}.
\end{equation}
The confounding equation \eqref{confounding} determines the
relationship between treatments variable $T$ and the feature $X$
and confounder $W$. 
To create an orthogonal moment for identifying $\theta_0(x)$, we follow the classical \emph{residualization} approach similar to \cite{doubleML}. First, observe
that
\begin{equation*}
Y - \Ex{}{Y \mid X, W} = \ldot{\mu_0(X, W)}{  T - \Ex{}{T\mid X, W}} + \eps
\end{equation*}
Let us define the function $q_0(X, W) = \Ex{}{Y\mid X, W}$, and
consider the residuals 
\begin{align*}
\tilde Y &= Y - q_0(X, W)\\
\tilde T &= T - g_0(X, W) = \eta
\end{align*}
 Then we can simplify the equation
as $\tilde Y = \mu_0(X, W) \cdot \tilde T + \eps$. \swedit{As long as
	$\eta$ is independent of $\mu_0(X, W)$ conditioned on $X$ (e.g.
	$\eta$ is independent of $W$ or $\mu_0(X, W)$ does not depend on $W$),
} we also have
$$\Ex{}{\mu_0(X, W) \mid X, \eta} = \Ex{}{\mu_0(X, W)\mid X} = \theta(X).$$
Since $\Ex{}{\eps \mid X, \eta} = \E\bb{\E\bb{\eps\mid X, W, T}\mid X, \eta}=0$, then
$$\Ex{}{\tilde Y \mid X, \tilde T} = \Ex{}{\mu_0(X, W) \mid X}\cdot \tilde T = \theta(X) \cdot \tilde T.$$
This relationship suggests that we can obtain an estimate of
$\theta(x)$ by regressing $\tilde Y$ on $\tilde T$ locally around $X=x$. We can thus define
the \emph{orthogonalized} score function: for any observation
$Z = (T, Y, W, x)$, any parameter $\theta\in \RR^p$, any estimates $q$
and $g$ for functions $q_0$ and $g_0$, the score $\psi(Z; \theta, h(X, W))$ is: 
\begin{align*}
\left\{Y - q(X,W) -  \theta \left(T - g(X, W)\rangle\right)  \right\} (T - g(X, W)),
\end{align*}
where $h(X, W) = (q(X, W), g(X, W))$. In the appendix, we show that
this moment condition satisfies local orthogonality, and it identifies
$\theta_0(x)$ as long as as the noise $\eta$ is independent of
$\mu_0(X,W)$ conditioned on $X$ and the expected matrix
$\Ex{}{\eta \eta^\intercal \mid X=x}$ is invertible.  Even though the
approach applies generically, to obtain formal guarantees on the
nuisance estimates via our Forest Lasso, we will restrict their
functional form.

\paragraph{Real-valued treatments.}
Suppose $f_0$ and each coordinate $j$ of $g_0$ and $\mu_0$ are
given by high-dimensional linear functions:
$f_0(X, W) = \langle W, \beta_0(X) \rangle$,
$\mu_0^j(X, W) = \langle W, u_0^j(X) \rangle, \quad g_0^j(X, W) =
\langle W, \gamma_0^j(X)\rangle$, where
$\beta_0(X), \gamma_0^j(X), u_0^j(X)$ are $k$-sparse vectors in
$\RR^{d_\nu}$. Consequently,  $q_0(X, W)$ can be written as
a $k^2$-sparse linear function over degree-2 polynomial
features $\phi_2(W)$ of $W$. Then
as long as
$\gamma_0, \beta_0$ and $\mu_0$ are Lipschitz in $X$ and the
confounders $W$ satisfy $$\Ex{}{\phi_2(W) \phi_2(W)^\intercal\mid X} \succeq \Omega(1) I,$$
then we can use Forest Lasso to estimate both $g_0(x, w)$ and $q_0(x,w)$.
Hence, we can apply the ORF algorithm to get estimation error rates and asymptotic normality results
for $\hat{\theta}$. 
\omitarxiv{(see Appendix~\ref{app:hte_cor} for formal statement).}
\omitcm{
\begin{corollary}[Accuracy for real-valued treatments]\label{cor:continuous}
  Suppose that $\beta_0(X)$ and each coorindate
  $u_0^j(X), \gamma_0^j(X)$ are Lipschitz in $X$ and have $\ell_1$
  norms bounded by $O(1)$ for any $X$. Assume that distribution of $X$
  admits a density that is bounded away from zero and infinity.  For
  any feature $X$, the conditional covariance matrices satisfy
  $\Ex{}{\eta \eta^\intercal \mid X} \succeq \Omega(1) I_p$,
  $\Ex{}{W W^\intercal \mid X} \succeq \Omega(1)I_{d_\nu}$ and
  $\Ex{}{\varphi_2(W) \varphi_2(W)^\intercal \mid X} \succeq
  \Omega(1)I_{d_\nu^2 + d_\nu}$, where $\varphi_2(W)$ denotes the
  degree-2 polynomial feature vector of $W$. Then with probability
  $1 - \delta$, ORF returns an estimator $\hat \theta$ such that
  \[
    \textstyle{\|\hat \theta - \theta_0\| \leq O\left(n^{\frac{-1}{2 + 2\alpha d}}\sqrt{\log(n d_{\nu}/\delta)} \right)}
  \]
  as long as the sparsity
  $k \leq O\left(n^{\frac{1}{8 + 8\alpha d}} \right)$ and the
  subsampling rate of ORF $s = \Theta(n^{\alpha d/(1 + \alpha d)})$.
  Moreover, for any $b \in \RR^p$ with $\|b\|\leq 1$, there exists a
  sequence $\sigma_n = \Theta(\sqrt{\polylog(n) n^{-1/(1+\alpha d)}})$
  such that
  \[
    \textstyle{\sigma_n^{-1} \ldot{b}{\hat\theta -\theta} \rightarrow_d \mathcal{N}(0, 1),}
  \]
  as long as the sparsity
  $k=o\left(n^{\frac{1}{8 + 8\alpha d}} \right)$ and the subsampling
  rate of ORF $s = \Theta(n^{\epsilon + \alpha d/(1 + \alpha d)})$ for any $\epsilon>0$.
\end{corollary}
}

\paragraph{Discrete treatments.} We now describe how our theory can be
applied to discrete treatments.  Suppose $f_0$ and each coordinate $j$
of $g_0$ are of the form: $f_0(X, W) = \langle W, \beta_0(X) \rangle$
and $g_0^j(X, W) = \cL(\langle W, \gamma_0^j(X)\rangle)$, where
$\cL(t) = 1/(1 + e^{-t})$ is the logistic function. Note in this case
$\eta$ is not independent of $W$ since
$$\Var(\eta_j) = g_0^j(X,W)(1 - g_0^j(X, W)).$$ To maintain the
conditional independence between $\mu_0(X, W)$ and $\eta$ conditioned
on $X$, we focus on the setting where $\mu_0$ is only a function of
$X$, i.e.  $\mu(X, W) = \theta(X)$ for all $W, X$. In this setting we
can estimate $g_0$ by running a forest logistic lasso for each
treatment $j$. Then we can estimate $q_0(x, W)$ as follows: For each
$t\in \{e_1, \ldots, e_p\}$ estimate the expected counter-factual
outcome function:
$m_0^t(x, W) = \mu_0^t(x, W) + f_0(x, W)$, by running a forest lasso between $Y$ and $X, W$ only among the subset of samples that received treatment $t$. Similarly, estimate $f_0(x, W)$ by running a forest lasso between $Y$ and $X, W$ only among the subset of samples that received treatment $t=0$. 
\omitcm{Then observe that:
\begin{equation*}
q_0(x, W) = \sum_{t=1}^p (m_0^t(x, W) - f_0(x, w)) g_0^t(x, W) + f_0(x, W).
\end{equation*}}
\omitarxiv{Then observe that $q_0(x, W)$ can be written as a function of $f_0$, $g_0^t$ and $m_0^t$.}
Thus we can combine these estimates to get an estimate of
$q_0$. Hence, we can obtain a guarantee similar to that of
Corollary~\ref{cor:continuous} (see appendix).

\paragraph{Doubly robust moment for discrete treatments.} In the setting where $\mu$ also depends on $W$ and treatments are discrete, we 
can formulate an alternative orthogonal moment that identifies the CATE even when $\eta$ is correlated with $\mu(X, W)$. This moment is based on first constructing unbiased estimates of the counterfactual outcome $m_0^t(X, W) = \mu_0^t(X, W) + f_0(X, W)$ for every observation $X, W$ and for any potential treatment $t$, i.e. even for $t\neq T$. The latter is done by invoking the doubly robust formula \cite{robins1995semiparametric,cassel1976some,kang2007demystifying}:
\begin{equation*}
\textstyle{Y^{(t)} = m_0^t(X, W) + \frac{(Y - m_0^t(X, W))\mathbf{1}\{T=t\}}{g_0^t(X, W)}}
\end{equation*}
with the convention that 
\begin{align*}
g_0^0(X, W) &= 1 - \sum_{t\neq 0} g_0^t(X, W)\\
m_0^0(X,W)  &= f_0(X,W)
\end{align*}
Then we can identify the parameter $\theta^t(x)$ using the moment:
\omitarxiv{$\textstyle{\E[Y^{(t)}-Y^{(0)} | X=x] = \theta_t(x).}$}
\omitcm{\begin{equation}
\textstyle{\E[Y^{(t)}-Y^{(0)} | X=x] = \theta_t(x).}
\end{equation}}
One can easily show that this moment satisfies the Neyman
orthogonality condition with respect to the nuisance functions $m$ and
$g$ (see appendix). In fact this property is essentially implied by
the fact that the estimates $Y^{(t)}$ satisfy the double robustness
property, since double robustness is a stronger condition than
orthogonality\omitcm{(see e.g. \cite{Chernozhukov2016locally})}. We will again consider $\mu_0^j(X, W) = \langle W, u_0^j(X)
\rangle$. Then using similar reasoning as in the previous paragraph,
we see that with a combination of forest logistic lasso for $g_0^t$
and forest lasso for $m_0^t$, we can estimate these nuisance functions
at a sufficiently fast rate for our ORF estimator (based on this
doubly robust moment) to be asymptotically normal, assuming they have
locally sparse linear or logistic parameterizations.

\section{Monte Carlo Experiments}  \label{sec:mcexp}
\begin{figure*}[h!]
	\centering
	\includegraphics[width=\textwidth]{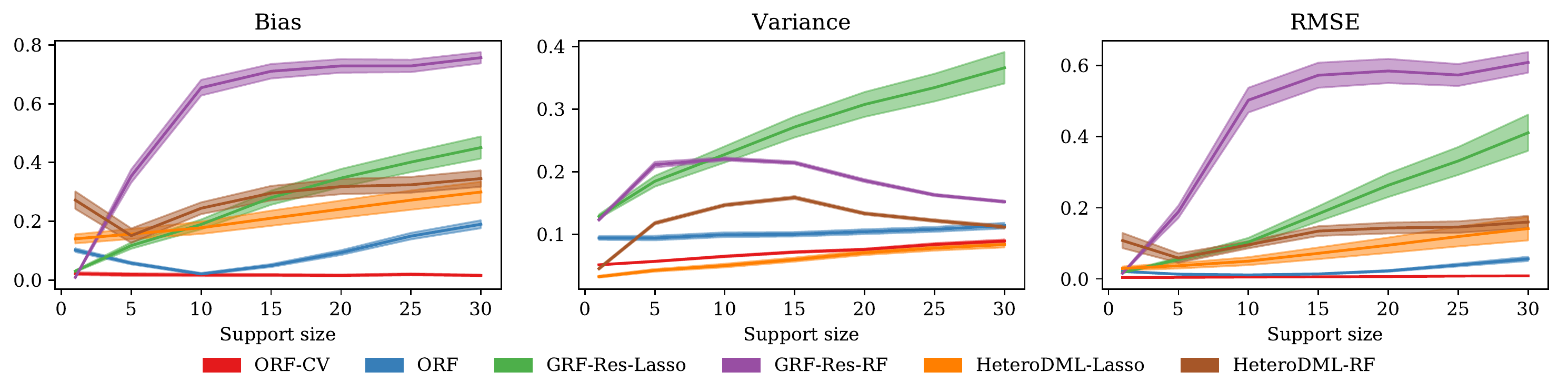}
	\caption{Bias, variance and RMSE as a function of support size for $n=5000$, $p=500$, $d=1$ and a piecewise linear treatment response function. The solid lines represent the mean of the metrics over Monte Carlo experiments and test points, and the filled regions depict the standard deviation, scaled down by 3 for clarity.}
	\label{fig:support}
\end{figure*}

\begin{figure*}[h]
	\centering
	\includegraphics[width=0.8\textwidth]{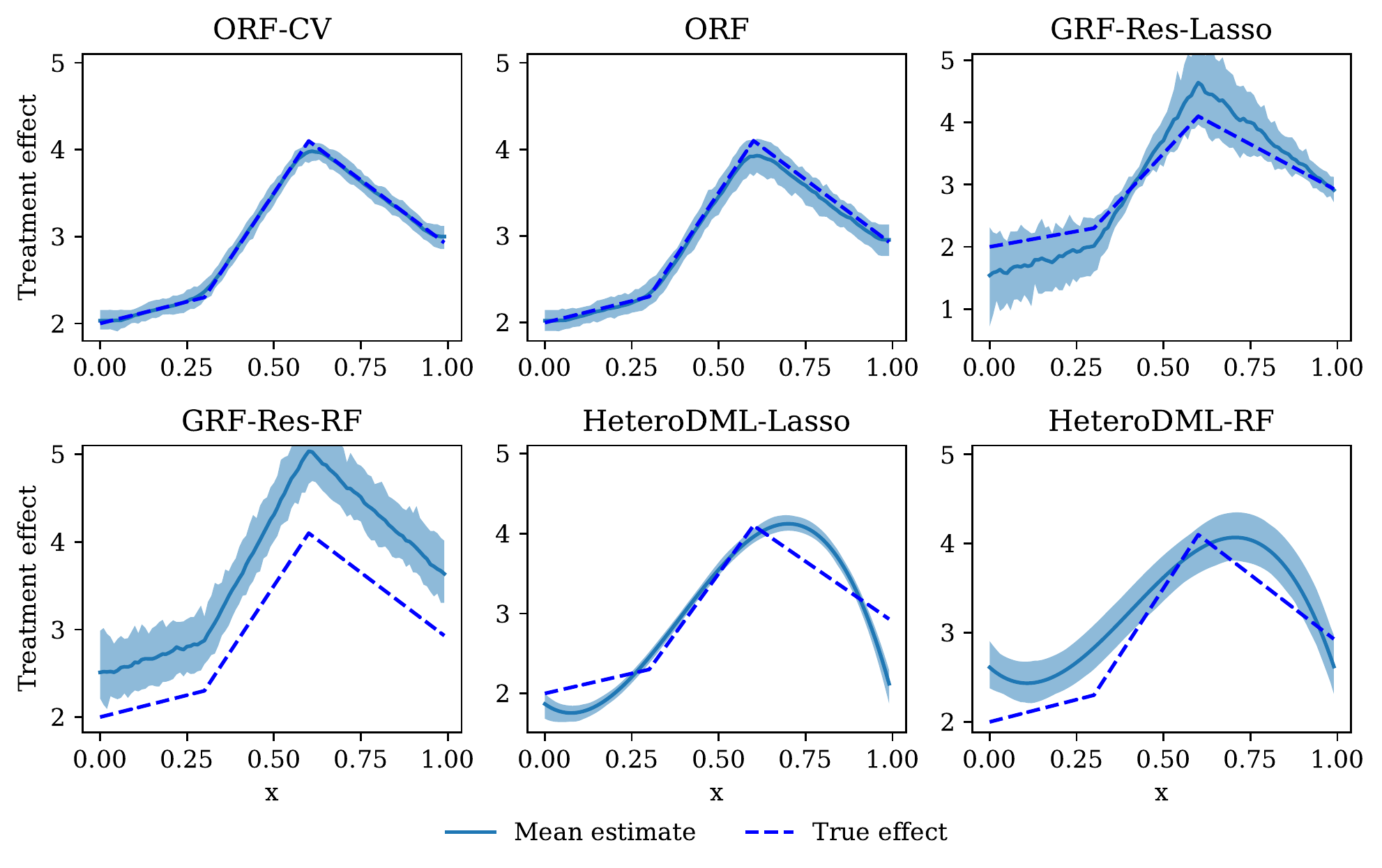}
	\caption{Treatment effect estimations for 100 Monte Carlo experiments with parameters $n=5000$, $p=500$, $d=1$, $k=15$, and $\theta(x) = (x+2)\mathbb{I}_{x\leq 0.3} + (6x+0.5)\mathbb{I}_{x>0.3 \text{ and } x \leq 0.6} + (-3x+5.9)\mathbb{I}_{x> 0.6}$. The shaded regions depict the mean and the $5\%$-$95\%$ interval of the 100 experiments.}
	\label{fig:examples}
\end{figure*}

We compare the empirical performance of ORF with other methods in the literature (and their variants).\footnote{The source code for running these experiments is available in the git repo \href{https://github.com/Microsoft/EconML/tree/master/prototypes/orthogonal\_forests}{Microsoft/EconML}.}
\omitcm{\paragraph{Data Generating Process (DGP).}}
The data generating process we consider is described by the following equations:
\begin{align*}
Y_i &= \theta_0(x_i)\, T_i + \langle W_i, \gamma_0 \rangle + \eps_i\\ 
T_i &= \langle W_i,\beta_0 \rangle + \eta_i
\end{align*}

Moreover, $x_i$ is drawn from the uniform distribution $U[0, 1]$, $W_i$ is drawn from
$\mathcal{N}(0, I_p)$, and the noise terms
$\eps_i \sim U[-1, 1], \eta_i \sim U[-1, 1]$. The $k$-sparse vectors
$\beta_0, \gamma_0 \in \RR^{p}$ have coefficients drawn independently
from $U[0, 1]$. The dimension $p = 500$ and we vary the support size
$k$ over the range of $\{1, 5, 10, 15, 20, 25, 30\}$.
We examine a treatment function $\theta(x)$ that is
continuous and piecewise linear (detailed in \Cref{fig:examples}).  
In Appendix~\ref{sec:allexp} we analyze other forms for $\theta(x)$.
\vsdelete{we show that ORF also outperforms other methods with
discontinuous $\theta(x)$ that is piecewise constant or piecewise
polynomial.}
 
\omitcm{\paragraph{Experiments setup.}}
For each fixed treatment function, we
repeat $100$ experiments, each of which consists of generating $5000$ observations from the DGP, drawing the vectors $\beta_0$
and $\gamma_0$, and estimating $\hat{\theta}(x)$ at 100 test points
$x$ over a grid in $[0, 1]$. We then calculate the bias, variance and
root mean squared error (RMSE) of each estimate $\hat{\theta}(x)$.
\iffull Here we report summary statistics of the median and $5-95$
percentiles of these three quantities across test points, so as to
evaluate the average performance of each method.  \fi We compare two variants of ORF with two variants of
GRF~\citep{ATW17} (see Appendix~\ref{sec:allexp} for a third variant) and two extensions of double ML methods for
heterogeneous treatment effect estimation~\citep{Chernozhukov2017}. \omitcm{Let us
describe the methods in more detail:}

\textit{ORF variants.} (1) ORF: We implement ORF as described in
\Cref{sec:plr}, setting parameters under the guidance of our
theoretical result: 
subsample size
$s \approx (n/\log(p))^{1/(2\tau+1)}$, Lasso regularization
$\lambda_\gamma, \lambda_q \approx \sqrt{\log(p) s / n}/20$ 
(for both tree learner and kernel estimation), number of trees
$B=100 \geq n/s$, a max tree depth of $20$, and a minimum leaf size of
$r=5$. (2) ORF with LassoCV (ORF-CV): we replaced the Lasso algorithm
in ORF's kernel estimation, with a cross-validated Lasso for the
selection of the regularization parameter $\lambda_\gamma$ and
$\lambda_q$. ORF-CV provides a more systematic optimization over the
parameters.

\textit{GRF variants.} (1) GRF-Res-Lasso: We perform a naive combination of
double ML and GRF by first residualizing the treatments and outcomes
on both the features $x$ and controls $W$, then running GRF R
package by \citep{GRF} on the residualized treatments $\hat T$, residualized outcomes
$\hat Y$, and features $x$. A cross-validated Lasso is used for residualization. (2) GRF-Res-RF: We combine DoubleML and GRF as above, but we use cross-validated Random Forests for calculating residuals $\hat T$ and $\hat Y$.

\textit{Double ML with Polynomial Heterogeneity (DML-Poly).} An
extension of the classic Double ML procedure for heterogeneous
treatment effects introduced in \cite{Chernozhukov2017}. This method
accounts for heterogeneity by creating an expanded linear base of
composite treatments (cross products between treatments and features).
(1) Heterogeneous Double ML using LassoCV for first-stage estimation
(HeteroDML-Lasso): In this version, we use Lasso with cross-validation
for calculating residuals on $x\cup W$ in the first stage.  (2)
Heterogeneous Double ML using random forest for first-stage estimation
(HeteroDML-RF): A more flexible version that uses random forests to
perform residualization on treatments and outcomes. The latter
performs better when treatments and outcomes have a non-linear
relationship with the joint features of $(x, W)$.

\omitcm{\paragraph{Experimental results.}}
We generated data according to the Monte Carlo process above
and set the parameters to $n=5000$ samples, $p=500$
controls, $d=1$ features and support size
$k \in \{1, 5, 10, 15, 20, 25, 30\}$ and three types of treatment
effect functions. In this section, we present the results for a \omitcm{continuous,} piecewise linear treatment effect
function. \omitcm{The remainder of the results can be found in
Appendix~\ref{sec:allexp}.}

In Figure
\ref{fig:examples}, we inspect the goodness of fit for the chosen
estimation methods across 100 Monte Carlo experiments. We note the
limitations of two versions of the GRF-Res estimators, GRF-Res-Lasso and GRF-Res-RF, in capturing the treatment effect function well. The GRF-Res-RF estimations have a
consistent bias as the Random Forest residualization cannot capture the dependency on the controls $W$ given their high-dimensionality. The HeteroDML methods are not
flexible enough to capture the complexity of the treatment effect
function. The best performers are the ORF-CV, ORF, and GRF-Res-Lasso, with
the latter estimator having a larger bias and variance.

\begin{figure}[htpb]
 	\centering
 	\includegraphics[scale=\figurescale]{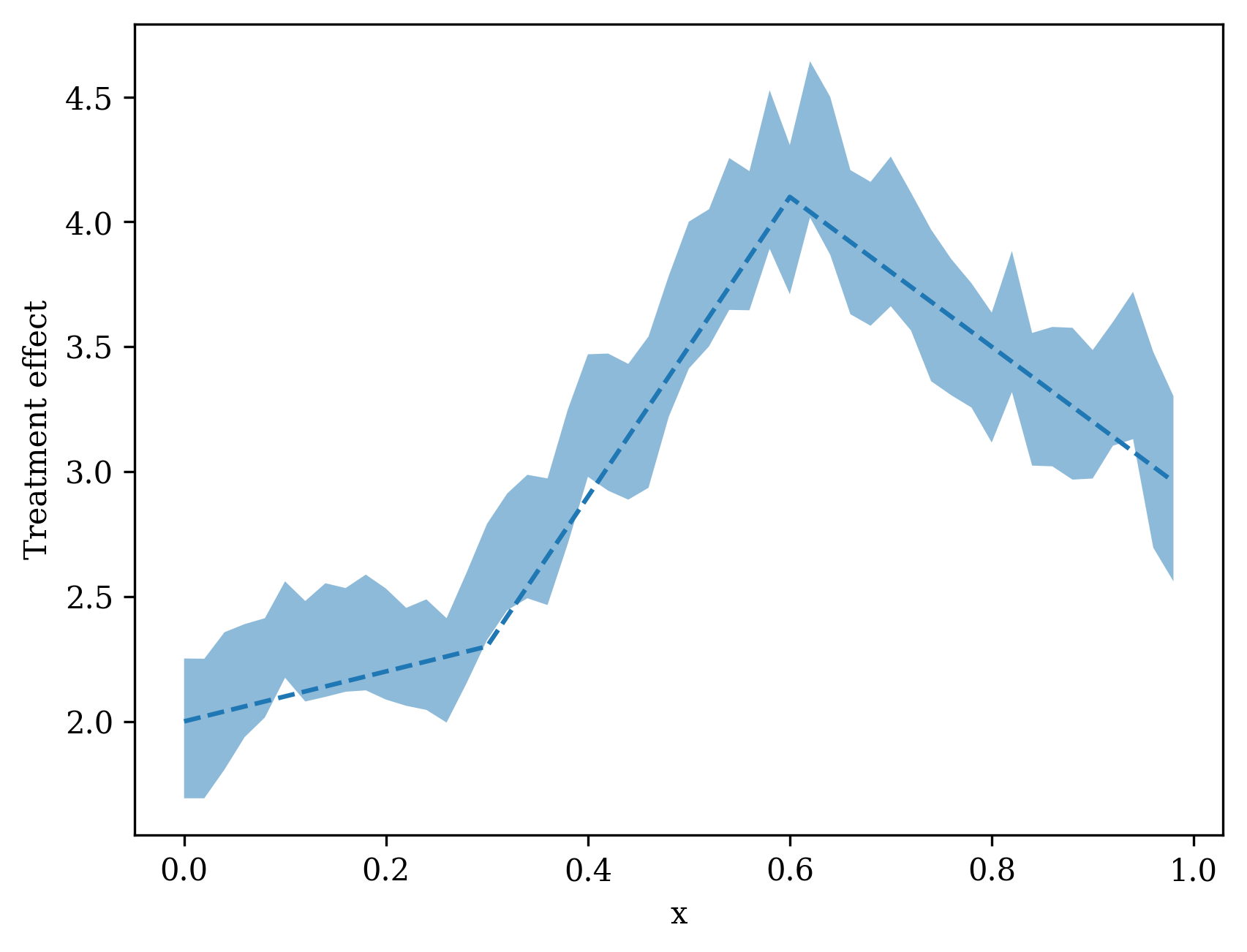}
 	\caption{Sample 1\%-99\% confidence intervals for 1000 bootstrap iterations with parameters $n=5000$, $p=500$, $d=1$, $k=15$, and $\theta(x) = (x+2)\mathbb{I}_{x\leq 0.3} + (6x+0.5)\mathbb{I}_{x>0.3 \text{ and } x \leq 0.6} + (-3x+5.9)\mathbb{I}_{x> 0.6}$. Approximately 90\% of the sampled test points are contained in the interval.}
 	\label{fig:coverage}
\end{figure}


%

We analyze 
these estimators as we increase the support size of $W$. Figures \ref{fig:support} 
illustrate 
the evaluation metrics across different support sizes.
The ORF-CV performs very well, with consistent bias and RMSE across support sizes and treatment functions. The bias, variance and RMSE of the ORF grow with support size, but this growth is at a lower rate compared to the alternative estimators. The ORF-CV and ORF algorithms perform better than the GRF-Res methods on all metrics for this example. We observe this pattern for the other choices of support size, sample size and treatment effect function (see Appendix~\ref{sec:allexp}).
In Figure \ref{fig:coverage}, we provide a snapshot of the bootstrap confidence interval coverage for this example. \omitcm{We note that 90\% (45) of the test points lie within the calculated intervals.} 


\section*{Acknowledgements}
ZSW is supported in part by a Google Faculty Research Award, a J.P.
Morgan Faculty Award, a Mozilla research grant, and a Facebook
Research Award. Part of this work was completed while ZSW was at
Microsoft Research-New York City.

\newpage

\bibliographystyle{icml2019}

\bibliography{main.bbl}

\appendix
\onecolumn

\section{Two-forest ORF v.s. One-forest ORF} 
A natural question
about ORF is whether it is necessary to have two separate forests for
the two-stage estimation. We investigate this question by implementing
a variant of ORF without sample splitting (ORF-NS)---it builds only
one random forest over the entire dataset, and perform the two-stage
estimation using the same set of importance weights. We empirically
compare ORF-CV with ORF-NS. In Figure~\ref{fig:honesty}, we note that
the bias, variance and RMSE of the ORF-NS increase drastically with
the subsample ratio $(s/n)$, whereas the same metrics are almost
constant for the ORF-CV. This phenomenon is consistent with the
theory, since larger subsamples induce a higher probability of
collision between independently drawn samples, and the ``spill-over''
can incur large bias and error.

\begin{figure*}[!]
	\centering
	\includegraphics[scale=.45]{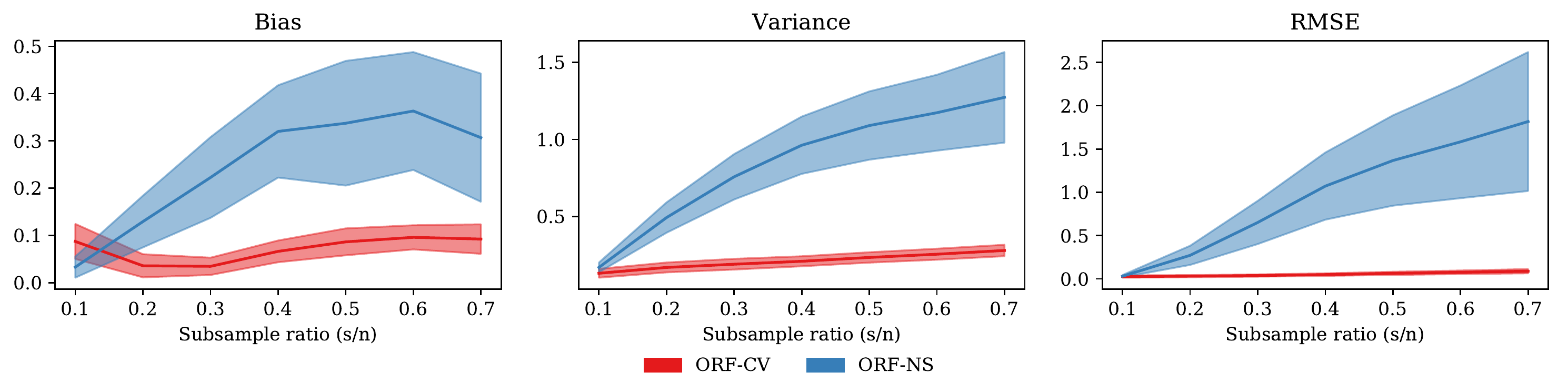}
	\caption{Bias, variance and RMSE versus subsample ratio used for training individual trees. The solid lines represent the means and the filled regions depict the standard deviation for the different metrics across test points, averaged over 100 Monte Carlo experiments.}
	\label{fig:honesty}
\end{figure*}

\omitarxiv{
\section{Formal Guarantee for Real-Valued Treatments}\label{app:hte_cor}
\begin{corollary}[Accuracy for real-valued treatments]\label{cor:continuous}
  Suppose that $\beta_0(X)$ and each coorindate
  $u_0^j(X), \gamma_0^j(X)$ are Lipschitz in $X$ and have $\ell_1$
  norms bounded by $O(1)$ for any $X$. Assume that distribution of $X$
  admits a density that is bounded away from zero and infinity.  For
  any feature $X$, the conditional covariance matrices satisfy
  $\Ex{}{\eta \eta^\intercal \mid X} \succeq \Omega(1) I_p$,
  $\Ex{}{W W^\intercal \mid X} \succeq \Omega(1)I_{d_\nu}$ and
  $\Ex{}{\varphi_2(W) \varphi_2(W)^\intercal \mid X} \succeq
  \Omega(1)I_{d_\nu^2 + d_\nu}$, where $\varphi_2(W)$ denotes the
  degree-2 polynomial feature vector of $W$. Then with probability
  $1 - \delta$, ORF returns an estimator $\hat \theta$ such that
  \[
    \textstyle{\|\hat \theta - \theta_0\| \leq O\left(n^{\frac{-1}{2 + 2\alpha d}}\sqrt{\log(n d_{\nu}/\delta)} \right)}
  \]
  as long as the sparsity
  $k \leq O\left(n^{\frac{1}{8 + 8\alpha d}} \right)$ and the
  subsampling rate of ORF $s = \Theta(n^{\alpha d/(1 + \alpha d)})$.
  Moreover, for any $b \in \RR^p$ with $\|b\|\leq 1$, there exists a
  sequence $\sigma_n = \Theta(\sqrt{\polylog(n) n^{-1/(1+\alpha d)}})$
  such that
  \[
    \textstyle{\sigma_n^{-1} \ldot{b}{\hat\theta -\theta} \rightarrow_d \mathcal{N}(0, 1),}
  \]
  as long as the sparsity
  $k=o\left(n^{\frac{1}{8 + 8\alpha d}} \right)$ and the subsampling
  rate of ORF $s = \Theta(n^{\epsilon + \alpha d/(1 + \alpha d)})$ for any $\epsilon>0$.
\end{corollary}
}

\section{Uniform Convergence of Lipschitz $U$-Processes}\label{app:concentration}

\begin{lemma}[Stochastic Equicontinuity for $U$-statistics via Bracketing]\label{lem:stoch_eq}
Consider a parameter space $\Theta$ that is a bounded subset of $\reals^p$, with $\diam(\Theta)=\sup_{\theta,\theta'\in \Theta} \|\theta-\theta'\|_2\leq R$. Consider the $U$-statistic over $n$ samples of order $s$:
\begin{equation}
    \mG_{s,n} f(\cdot; \theta) = \binom{n}{s}^{-1} \sum_{1\leq i_1\leq \ldots \leq i_s \leq n} f(z_{i_1},\ldots, z_{i_s}; \theta)
\end{equation}
where $f(\cdot; \theta): \cZ^s \rightarrow \reals$ is a known symmetric function in its first $s$ sarguments and $L$-Lipschitz in $\theta$. Suppose that $\sup_{\theta \in \Theta} \sqrt{\E\bb{f(Z_1,\ldots, Z_s; \theta)^2}} \leq \eta$ and $\sup_{\theta\in \Theta, Z_1, \ldots, Z_s \in \cZ^s} f(Z_1,\ldots, Z_s; \theta) \leq G$. Then w.p. $1-\delta$:
\begin{equation}
    \sup_{\theta \in \Theta} \ba{\mG_{s,n} f(\cdot; \theta) - \E[f(Z_{1:s}; \theta)]} = O\left(  \eta \sqrt{\frac{s\, (\log(n/s) + \log(1/\delta))}{n}} + (G+L\, R) \frac{s (\log(n/s) + \log(1/\delta))}{n} \right)
\end{equation}
\end{lemma}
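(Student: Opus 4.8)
The plan is to establish uniform convergence of the $U$-process $\theta \mapsto \mG_{s,n} f(\cdot;\theta) - \E[f(Z_{1:s};\theta)]$ via a chaining/bracketing argument, reducing the supremum over the continuous parameter space $\Theta$ to a supremum over a finite $\epsilon$-net and then controlling the net by a union bound over tail inequalities for individual $U$-statistics. First I would invoke Hoeffding's decomposition (or, more directly, Hoeffding's observation that a $U$-statistic of order $s$ can be written as an average over $\lfloor n/s\rfloor$ independent averages of i.i.d. blocks) to reduce concentration of a single $\mG_{s,n} f(\cdot;\theta)$ to a Bernstein-type bound for a sum of $\lfloor n/s \rfloor$ i.i.d. bounded random variables with variance at most $\eta^2$ and range at most $G$. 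This yields, for fixed $\theta$, a deviation of order $\eta\sqrt{s(\log(1/\delta'))/n} + G\, s\log(1/\delta')/n$ with probability $1-\delta'$.

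Next I would build a finite $\epsilon$-net $\Theta_\epsilon$ of $(\Theta,\|\cdot\|_2)$; since $\Theta\subset\reals^p$ has diameter $R$ and $p$ is a constant, we can take $|\Theta_\epsilon| \leq (3R/\epsilon)^p$, so $\log|\Theta_\epsilon| = O(p\log(R/\epsilon))$. Applying the single-point bound above with $\delta' = \delta/|\Theta_\epsilon|$ and a union bound, with probability $1-\delta$ the deviation is uniformly over $\Theta_\epsilon$ of order
\[
\eta\sqrt{\frac{s\,(\log(1/\delta) + p\log(R/\epsilon))}{n}} + G\,\frac{s\,(\log(1/\delta) + p\log(R/\epsilon))}{n}.
\]
To pass from the net to all of $\Theta$, I would use the $L$-Lipschitz property in $\theta$: both $\mG_{s,n}f(\cdot;\theta)$ (being an average of $L$-Lipschitz functions) and $\E[f(Z_{1:s};\theta)]$ are $L$-Lipschitz in $\theta$, so for any $\theta\in\Theta$ and its nearest net point $\theta'$, the difference of the centered processes at $\theta$ and $\theta'$ is at most $2L\epsilon$. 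Choosing $\epsilon = 1/n$ (or any $\epsilon$ of order $\poly(1/n)$) makes $L\epsilon$ a lower-order term absorbed into the $L\,R\cdot s\log(n/s)/n$ term, and makes $p\log(R/\epsilon) = O(\log(n))$, which combines with $\log(n/s)$ up to constants (note $\log n \leq \log(n/s) + \log s \leq 2\log(n/s) + O(1)$ when $s \leq \sqrt n$, and more carefully one keeps the $\log(n/s)$ form by a slightly more delicate choice; in any case the stated bound has the right shape).

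The main obstacle I anticipate is getting the \emph{variance-dependent} (sub-Gaussian) term right for the $U$-statistic rather than only a crude Hoeffding bound: a naive bounded-differences argument would give $G\sqrt{s/n}$ in the leading term instead of $\eta\sqrt{s/n}$, which is too weak when $\eta \ll G$ (exactly the regime we care about, since $\eta$ will be small in applications). Obtaining the $\eta$ factor requires a genuine Bernstein inequality for $U$-statistics — either via the Hoeffding i.i.d.-blocks reduction combined with classical Bernstein, or via a moment/Bernstein inequality for (degenerate and non-degenerate parts of) $U$-statistics. I would use the blocks reduction since it is elementary and self-contained: $\mG_{s,n}f = \frac{1}{\lfloor n/s\rfloor}\sum_{j} (\text{average of one i.i.d. block})$, each block-average bounded in $[-G,G]$ with variance $\le \eta^2$, so Bernstein on $\lfloor n/s\rfloor \asymp n/s$ i.i.d. terms gives the claimed $\eta\sqrt{s\log(1/\delta)/n} + G\, s\log(1/\delta)/n$. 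The remaining bookkeeping — tracking the $\log(n/s)$ versus $\log n$ discrepancy and confirming the $L R$ term is only lower-order — is routine.
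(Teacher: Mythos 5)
Your proposal follows the same strategy as the paper's proof: a Bernstein inequality for $U$-statistics at each fixed $\theta$ (which the paper imports from \citet{peel2010empirical} while you rederive it via Hoeffding's i.i.d.-blocks representation plus classical Bernstein and Jensen for the mgf---the same underlying argument), followed by an $\epsilon$-net over $\Theta$, a union bound, and the $L$-Lipschitz extension from the net to all of $\Theta$. The only place you leave implicit is the ``delicate choice'' of $\epsilon$ you allude to: the paper takes $\epsilon = sR/n$, which makes $\log(R/(\epsilon\delta)) = \log(n/(s\delta))$ exactly and makes the Lipschitz remainder $2L\epsilon = 2LRs/n$ land in the $(G+LR)\,s\log(n/s)/n$ term; your choice $\epsilon = 1/n$ gives $\log n$ rather than $\log(n/s)$, which you correctly flag as fixable.
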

\begin{proof}[Proof of Lemma~\ref{lem:stoch_eq}]
    Note that  for any fixed $\theta\in \Theta$, $\Psi_s(\theta, Z_{1:n})$ is a U-statistic of order $s$. Therefore by the Bernestein inequality for $U$-statistics (see e.g. Theorem~2 of \cite{peel2010empirical}), for any fixed $\theta\in \Theta$, w.p. $1-\delta$
    \begin{equation*}
        \ba{\mG_{s,n} f(\cdot; \theta)- \E[f(Z_{1:s};\theta)]} \leq \eta \sqrt{\frac{2\log(1/\delta)}{n/s}} + G\,\frac{2 \log(1/\delta)}{3 (n/s)}
    \end{equation*}
    Since $\diam(\Theta)\leq R$, we can find a finite space $\Theta_\epsilon$ of size $R/\epsilon$, such that for any $\theta\in\Theta$, there exists $\theta_\epsilon\in \Theta_\epsilon$ with $\|\theta-\theta_\epsilon\|\leq \epsilon$. Moreover, since $f$ is $L$-Lipschitz with respect to $\theta$:
    \begin{align*}
    \ba{\mG_{s,n} f(\cdot; \theta)- \E[f(Z_{1:s};\theta)]} \leq \ba{\mG_{s,n} f(\cdot; \theta_\epsilon)- \E[f(Z_{1:s}\theta_\epsilon)]} + 2 L \|\theta-\theta_\epsilon\|
    \end{align*}
    Thus we have that:
    \begin{align*}
    \sup_{\theta \in \Theta} \ba{\mG_{s,n} f(\cdot; \theta)- \E[f(Z_{1:s};\theta)]} \leq \sup_{\theta \in \Theta_\epsilon} \ba{\mG_{s,n} f(\cdot; \theta_\epsilon)- \E[f(Z_{1:s}\theta_\epsilon)]} + 2 L \epsilon
    \end{align*}
    Taking a union bound over $\theta \in \Theta_\epsilon$, we have that w.p. $1-\delta$:
    \begin{equation*}
   \sup_{\theta \in \Theta_\epsilon} \ba{\mG_{s,n} f(\cdot; \theta_\epsilon)- \E[f(Z_{1:s}\theta_\epsilon)]} 
     \leq \eta \sqrt{\frac{2\log(R/(\epsilon\,\delta))}{n/s}} + G\,\frac{2 \log(R/(\epsilon\, \delta))}{3 (n/s)}
    \end{equation*}
	Choosing $\epsilon = \frac{s\, R}{n}$ and applying the last two inequalities, yields the desired result.	
\end{proof}

\section{Estimation Error and Asymptotic Normality}

Since throughout the section we will fix the target vector $x$, we will drop it from the notation when possible, e.g. we will let $\theta_0 = \theta_0(x)$ and $\hat{\theta}=\hat{\theta}(x)$. 
%
%
We begin by introducing some quantities that will be useful throughout our theoretical analysis. 
First we denote with $\omega$ the random variable that corresponds to the internal randomness of the tree-splitting algorithm. Moreover, when the tree splitting algorithm is run
with target $x$, an input dataset of $\{Z_i\}_{i=1}^s$ and internal randomness $\omega$, we denote with $\alpha_i\bp{\{Z_i\}_{i=1}^s, \omega}$ the weight that it assigns to the sample with index $i$. Finally, for each sub-sample $b=1\ldots B$ we denote with $S_b$ the index of the samples chosen and $\omega_b$ the internal randomness that was drawn.

We then consider the weighted empirical score, weighted by the sub-sampled ORF weights:
\begin{align}
  \Psi(\theta, h) &= \sum_{i=1}^n a_{i} \, \psi(Z_i;
                 \theta,  h(W_i)) = \frac{1}{B} \sum_{b=1}^B \sum_{i\in S_b} \alpha_i\bp{\{Z_i\}_{i\in S_b}, \omega_b}\, \psi(Z_i; \theta, h(W_i))
\end{align}
We will also be considering the complete multi-dimensional $U$-statistic, where we average over all sub-samples of size $s$:
\begin{equation}
    \Psi_0(\theta, h) = \binom{n}{s}^{-1} \sum_{1\leq i_1 \leq \ldots \leq i_s \leq n} \E_{\omega}\bb{\sum_{t = 1}^s \alpha_{i_t}\bp{\{Z_{i_t}\}_{t=1}^s, \omega}\, \psi(Z_{i_t}; \theta, h(W_{i_t}))}\,.
\end{equation}
and we denote with:
\begin{equation}
f(Z_{i_1}, \ldots, Z_{i_s};\, \theta, h) = \E_{\omega}\bb{\sum_{t = 1}^s \alpha_{i_t}\bp{\{Z_{i_t}\}_{t=1}^s, \omega}\, \psi(Z_{i_t}; \theta, h(W_{i_t}))}
\end{equation}

First, we will bound the estimation error as a sum of
$m(x; \theta, \hat h)$ and second order terms. The proof follows from
the Taylor expansion of the moment function and the mean-value theorem.

\begin{lemma}\label{lem:taylor}
  Under \Cref{tech-conditions}, for any nuisance estimate $\hat h$ and for the ORF estimate $\hat{\theta}$ estimated with plug-in nuisance estimate $\hat{h}$:
  \begin{align*}
   \hat{\theta} - \theta_0 = M^{-1}\, \left(m(x; \hat{\theta}, \hat h) - \Psi(\hat{\theta}, \hat{h})\right) + \xi
  \end{align*}
  where $\xi$ satisfies $\|\xi\|=\cO\left(\Ex{}{\|\hat h(W) - h_0(W)\|^2\mid x}
      +\|\hat{\theta}-\theta_0\|^2\right)$.
\end{lemma}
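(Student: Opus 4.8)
The plan is to Taylor-expand the population moment $m(x;\theta,h)$ around the point $(\theta_0, h_0)$ and exploit both the first-order optimality of the estimator and the local orthogonality condition to discard the troublesome first-order nuisance term. First I would recall that the ORF estimate $\hat\theta$ solves (up to the usual subsampling/U-statistic slack) the weighted empirical moment $\Psi(\hat\theta,\hat h)=0$, so that $m(x;\hat\theta,\hat h) = m(x;\hat\theta,\hat h) - \Psi(\hat\theta,\hat h)$; this is the quantity that will appear pre-multiplied by $M^{-1}$. It therefore remains to show that $m(x;\hat\theta,\hat h)$ itself equals $M(\hat\theta - \theta_0)$ plus second-order terms, where $M = \nabla_\theta m(x;\theta_0,h_0)$, which is invertible by the Curvature assumption in \Cref{tech-conditions} (so $M^{-1}$ is well defined and $\|M^{-1}\|=O(1)$).

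The key step is a joint second-order Taylor expansion of $\theta\mapsto m(x;\theta,h)$ and $h\mapsto m(x;\theta,h)$ about $(\theta_0,h_0)$. Writing $\Delta_\theta = \hat\theta-\theta_0$ and $\Delta_h(W) = \hat h(W)-h_0(W)$, I expand
\begin{align*}
m(x;\hat\theta,\hat h) = m(x;\theta_0,h_0) + \nabla_\theta m(x;\theta_0,h_0)\,\Delta_\theta + D_\psi[\hat h - h_0 \mid x] + R,
\end{align*}
where $m(x;\theta_0,h_0)=0$ by Identifiability, $D_\psi[\hat h - h_0\mid x]=0$ by Local Orthogonality, and $R$ collects all the second-order remainder terms: the pure-$\theta$ term of order $\|\Delta_\theta\|^2$, the pure-$h$ term, which is $\Ex{}{\nabla_h^2\psi(\cdot)[\Delta_h,\Delta_h]\mid x}$ and hence of order $\Ex{}{\|\Delta_h(W)\|^2\mid x}$ by the bounded-Hessian part of the Smoothness of scores assumption, and the cross term of order $\|\Delta_\theta\|\,\Ex{}{\|\Delta_h(W)\|\mid x}$, which by AM–GM is absorbed into the other two. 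Thus $\|R\| = O\!\left(\|\Delta_\theta\|^2 + \Ex{}{\|\hat h(W)-h_0(W)\|^2\mid x}\right)$. Rearranging, $M\,\Delta_\theta = m(x;\hat\theta,\hat h) - R$, and multiplying through by $M^{-1}$ gives $\Delta_\theta = M^{-1}\big(m(x;\hat\theta,\hat h) - \Psi(\hat\theta,\hat h)\big) + \xi$ with $\xi = -M^{-1}R$ satisfying the claimed bound, since $\|M^{-1}\|=O(1)$.

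The main obstacle is making the Taylor expansion of $m$ in the nuisance direction fully rigorous: $h$ lives in an infinite-dimensional function space, so the ``second-order remainder'' must be controlled via the mean-value form along the segment $h_0 + t(\hat h - h_0)$, $t\in[0,1]$, using that the expected Hessian $\Ex{}{\nabla^2_{(\theta,h)}\psi_j\mid x,W}$ has eigenvalues bounded by $O(1)$ uniformly in $(\theta,h)$ — this is exactly what the Smoothness of scores assumption supplies, and one integrates the Hessian bound against $\Delta_h\Delta_h^\top$ and uses Jensen/Cauchy–Schwarz to pass from $\Ex{}{\|\Delta_h(W)\|\mid x}^2$ to $\Ex{}{\|\Delta_h(W)\|^2\mid x}$. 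A secondary subtlety is that the first-order Gateaux term vanishes only when evaluated at $(\theta_0,h_0)$, not at $(\hat\theta,h_0)$; handling this requires either expanding first in $h$ at $\theta=\theta_0$ and then in $\theta$, or noting that the discrepancy $\big(\nabla_h m(x;\hat\theta,h_0)-\nabla_h m(x;\theta_0,h_0)\big)\Delta_h$ is itself a cross term of order $\|\Delta_\theta\|\,\Ex{}{\|\Delta_h(W)\|\mid x}$ by the assumed Lipschitzness of $\nabla_\theta\psi$ in $\xi$, hence again second-order. Everything else is routine bookkeeping.
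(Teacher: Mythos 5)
Your proposal is correct and follows essentially the same argument as the paper: a coordinate-wise second-order Taylor expansion of $m_j(x;\theta,\hat h)$ about $(\theta_0,h_0)$ (via the mean-value theorem), with $m(x;\theta_0,h_0)=0$ killed by identifiability, the Gateaux term $D_{\psi_j}[\hat h-h_0\mid x]$ killed by local orthogonality, and the second-order remainder bounded via the bounded-Hessian part of the smoothness assumption, followed by solving for $\hat\theta-\theta_0$ through $M^{-1}$ and inserting $\Psi(\hat\theta,\hat h)=0$. The ``secondary subtlety'' you raise is a non-issue once one does the joint expansion at $(\theta_0,h_0)$, as you ultimately note and as the paper does.
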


\paragraph{Proof outline of main theorems.} We will now give a rough outline of the proof of our main results. In doing so we will also present some core technical Lemmas that we will use in the formal proofs of these theorems in the subsequent corresponding subsections. 

Lemma~\ref{lem:taylor} gives rise to the following core quantity:
\begin{equation}
  \Lambda(\theta, h) =   m(x; \theta, h) - \Psi(\theta, h)
\end{equation}
Suppose that our first stage estimation rate guarantees a local root-mean-squared-error (RMSE) of $\chi_n$, i.e.:
\begin{equation}
\cE(h) = \sqrt{\Ex{}{\|\hat h(W) - h_0(W)\|^2\mid x}} \leq \chi_n
\end{equation}
Then we have that:
\begin{align*}
   \hat{\theta} - \theta_0 = M^{-1}\,\Lambda(\hat{\theta}, \hat h)  + O(\chi_n^2 + \|\hat{\theta} - \theta_0\|^2)
\end{align*}
Thus to understand the estimation error of $\hat{\theta}$ and its asymptotic distribution, we need to analyze the concentration of $\Lambda(\hat{\theta}, \hat h)$ around zero and its asymptotic distribution. Subsequently, invoking consistency of $\hat{\theta}$ and conditions on a sufficiently fast nuisance estimation rate $\chi_n$, we would be able to show that the remainder terms are asymptotically negligible.

Before delving into our two main results on mean absolute error (MAE) and asymptotic normality we explore a bit more the term $\Lambda(\theta, h)$ and decompose it into three main quantities, that we will control each one separately via different arguments.
\begin{lemma}[Error Decomposition]\label{lem:mae_decomposition}
For any $\theta, h$, let $\mu_0(\theta, h) = \E\bb{\Psi_0(\theta,h)}$. Then:
\begin{align}
   \Lambda(\theta, h) =~& \underbrace{m(x; \theta,h) - \mu_0(\theta, h)}_{\Gamma(\theta,h) \textnormal{ = kernel error}} + \underbrace{\mu_0(\theta, h) - \Psi_0(\theta, h)}_{\Delta(\theta,h) \textnormal{ = sampling error}} + \underbrace{\Psi_0(\theta, h) - \Psi(\theta, h)}_{E(\theta,h) \textnormal{ = subsampling error}}\,.
\end{align}
\end{lemma}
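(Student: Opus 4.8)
The statement to prove (Lemma~\ref{lem:mae_decomposition}) is a purely algebraic decomposition, so the plan is essentially an exercise in adding and subtracting the right intermediate quantities. Recall we have defined $\Lambda(\theta,h) = m(x;\theta,h) - \Psi(\theta,h)$, and the intermediate objects are the population $U$-statistic $\Psi_0(\theta,h)$ and its expectation $\mu_0(\theta,h) = \E[\Psi_0(\theta,h)]$.

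\textbf{Plan.} I would simply insert $-\mu_0(\theta,h) + \mu_0(\theta,h)$ and $-\Psi_0(\theta,h) + \Psi_0(\theta,h)$ into the definition of $\Lambda$ and regroup:
\begin{align*}
\Lambda(\theta,h) &= m(x;\theta,h) - \Psi(\theta,h)\\
&= \big(m(x;\theta,h) - \mu_0(\theta,h)\big) + \big(\mu_0(\theta,h) - \Psi_0(\theta,h)\big) + \big(\Psi_0(\theta,h) - \Psi(\theta,h)\big),
\end{align*}
which is exactly the claimed identity with the three bracketed terms labelled $\Gamma$, $\Delta$, and $E$ respectively. There is genuinely nothing more to it than this telescoping step; the content of the lemma is the \emph{choice} of decomposition rather than any nontrivial derivation.

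\textbf{What the pieces are, and why this split.} It is worth a sentence in the proof to flag that each term is handled by a distinct tool later: $\Gamma(\theta,h)$ compares the true conditional moment $m(x;\theta,h)$ to the expectation of the forest-kernel-weighted moment, and is controlled via the kernel-shrinkage bound (Theorem~\ref{kernel-bound}) together with the Lipschitz-in-$x$ smoothness of the moments (Assumption~\ref{tech-conditions}); $\Delta(\theta,h)$ is the deviation of the $U$-statistic $\Psi_0$ from its mean, controlled by the uniform concentration bound for Lipschitz $U$-processes (Lemma~\ref{lem:stoch_eq}); and $E(\theta,h)$ is the Monte-Carlo ``subsampling'' error between the complete $U$-statistic over all $\binom{n}{s}$ subsamples and the finite-$B$ average $\Psi$ actually computed by the algorithm, controlled by a Hoeffding/Bernstein argument over the $B$ draws.

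\textbf{Main obstacle.} Honestly, there is no obstacle in proving this particular statement — it is a one-line identity. The only thing to be careful about is that $\Psi_0(\theta,h)$ is well-defined (it is, by the formula given in the excerpt, as the expectation over internal split randomness $\omega$ of the symmetrized kernel-weighted score averaged over all size-$s$ subsamples) and that $\mu_0(\theta,h) = \E[\Psi_0(\theta,h)]$ denotes expectation over the i.i.d. data. Since $\Psi_0$ and $\mu_0$ appear and cancel in adjacent terms, no integrability or measurability subtlety is needed beyond what is already implicit in the boundedness assumption on $\psi$ (Assumption~\ref{tech-conditions}, Boundedness). So the proof is: state the two add-subtract insertions, regroup, and identify terms.
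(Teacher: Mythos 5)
Your proof is correct and matches the paper, which in fact states Lemma~\ref{lem:mae_decomposition} without any separate proof precisely because it is the one-line telescoping identity you wrote down: inserting $-\mu_0+\mu_0$ and $-\Psi_0+\Psi_0$ into $\Lambda(\theta,h)=m(x;\theta,h)-\Psi(\theta,h)$ and regrouping. Your surrounding commentary on the role of each term is consistent with how the paper uses $\Gamma$, $\Delta$, and $E$ in the subsequent lemmas, so nothing is missing or misattributed.
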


When arguing about the MAE of our estimator, the decomposition presented in Lemma~\ref{lem:mae_decomposition} is sufficient to give us the final result by arguing about concentration of each of the terms. However, for asymptotic normality we need to further refine the decomposition into terms that when scaled appropriately converge to zero in probability and terms that converge to a normal random variable. In particular, we need to further refine the sampling error term $\Delta(\theta,h)$ as follows:
\begin{align}\label{eqn:stoch_eq_decomposition}
\Delta(\theta, h) = \underbrace{\Delta(\theta_0, \tilde{h}_0)}_{\textnormal{asymptotically normal term}} + \underbrace{\Delta(\theta, h) - \Delta(\theta_0, \tilde{h}_0)}_{F(\theta, h)\textnormal{ = stochastic equicontinuity term}}
\end{align}
for some appropriately defined fixed function $\tilde{h}_0$. Consider for instance the case where $\theta$ is a scalar. If we manage to show that there exists a scaling $\sigma_n$, such that $\sigma_n^{-1} \Delta(\theta_0, \tilde{h}_0) \rightarrow_d \normal(0,1)$, 
and all other terms $\Gamma, E, F$ and $\chi_n^2$ converge to zero in probability when scaled by $\sigma_n^{-1}$, then we will be able to conclude by Slutzky's theorem that:
$\sigma_n^{-1} M \left(\hat{\theta} - \theta_0\right) \rightarrow_d  \normal(0, 1)$ and establish the desired asymptotic normality result. 

Since controlling the convergence rate to zero of the terms $\Gamma, \Delta, E$ would be useful in both results, we provide here three technical lemmas that control these rates.

\begin{lemma}[Kernel Error]\label{lem:kernel_error}
If the ORF weights when trained on a random sample $\{Z_i\}_{i=1}^s$, satisfy that:
\begin{equation}
\E\bb{\sup \{\|X_i - x\|: a_{i}(\{Z_i\}_{i=1}^s, \omega) > 0\}} \leq \epsilon(s)
\end{equation}
where expectation is over the randomness of the samples and the internal randomness $\omega$ of the ORF algorithm. Then
\begin{equation}
\sup_{\theta, h} \|\Gamma(\theta, h)\| = \sqrt{p}\, L\, \epsilon(s)
\end{equation}
\end{lemma}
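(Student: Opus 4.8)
The plan is to exploit the $U$-statistic structure of $\Psi_0$ to turn $\mu_0(\theta,h)=\E[\Psi_0(\theta,h)]$ into the expectation of a single weighted score, then replace that score by the population moment $m(\cdot;\theta,h)$ using honesty, and finally bound the resulting discrepancy by Lipschitzness together with the hypothesized control on the support of the weights. Concretely, since $\Psi_0$ averages the symmetric kernel $f(\cdot;\theta,h)$ over all size-$s$ subsamples, $\mu_0(\theta,h)=\E[f(Z_1,\dots,Z_s;\theta,h)]=\E\big[\sum_{t=1}^s\alpha_t(\{Z_i\}_{i=1}^s,\omega)\,\psi(Z_t;\theta,h(W_t))\big]$, where the outer expectation is over a fresh size-$s$ sample and the forest's internal randomness $\omega$, and where $\alpha_t\ge 0$ and $\sum_{t=1}^s\alpha_t=1$ (the single-tree weights normalize to one within each leaf).

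Next I would condition on the $\sigma$-algebra $\mathcal{F}$ generated by $\omega$ (which I take to include the random honest partition of the $s$ samples into the split set $S^1$ and the estimation set $S^2$, together with the splitting variables), the full data of $S^1$, and the feature vectors $\{X_j\}_{j\in S^2}$. By the honest tree construction every weight is $\mathcal{F}$-measurable: $\alpha_t=0$ for $t\in S^1$, and for $t\in S^2$ one has $\alpha_t=\mathbf{1}[X_t\in L(x)]/|L(x)\cap S^2|$, all determined by $\mathcal{F}$. Moreover, for $t\in S^2$ the observation $Z_t$ is independent of $\omega$, of the samples in $S^1$, and of $\{X_j\}_{j\in S^2\setminus\{t\}}$, so $\E[\psi(Z_t;\theta,h(W_t))\mid\mathcal{F}]=\E[\psi(Z_t;\theta,h(W_t))\mid X_t]=m(X_t;\theta,h)$. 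Taking iterated expectations then yields the clean identity $\mu_0(\theta,h)=\E\big[\sum_{t=1}^s\alpha_t\,m(X_t;\theta,h)\big]$.

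Finally, for each coordinate $j\in[p]$, using $\sum_t\alpha_t=1$, $\alpha_t\ge 0$, and the Smooth Signal condition in Assumption~\ref{tech-conditions} (which makes each $m_j(\cdot;\theta,h)$ $L$-Lipschitz with $L=O(1)$),
$$|\Gamma_j(\theta,h)|=\Big|\E\Big[\sum_{t=1}^s\alpha_t\big(m_j(x;\theta,h)-m_j(X_t;\theta,h)\big)\Big]\Big|\le \E\Big[\sum_{t=1}^s\alpha_t\,L\,\|X_t-x\|\Big].$$
On the event $\{\alpha_t>0\}$ we have $\|X_t-x\|\le\sup\{\|X_i-x\|:\alpha_i>0\}$, hence $\sum_t\alpha_t\|X_t-x\|\le\sup\{\|X_i-x\|:\alpha_i>0\}$ pointwise, and the hypothesis $\E[\sup\{\|X_i-x\|:\alpha_i>0\}]\le\epsilon(s)$ gives $|\Gamma_j(\theta,h)|\le L\,\epsilon(s)$ uniformly over $\theta$ and $h$. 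Summing over the $p$ coordinates yields $\|\Gamma(\theta,h)\|\le\sqrt{p}\,L\,\epsilon(s)$, as claimed. The only genuinely delicate step is the conditioning argument in the middle paragraph: it is precisely honesty (Specification~\ref{forest_spec}) that makes the weights a function of the features and internal randomness alone, so that the conditional-mean collapse $\E[\psi(Z_t;\cdot)\mid\mathcal{F}]=m(X_t;\cdot)$ holds; without honesty the weights would depend on the scores of the estimation samples and this identity would break.
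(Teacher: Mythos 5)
Your proof is correct and follows essentially the same route as the paper: exploit honesty of the tree weights to replace each $\psi(Z_t;\theta,h(W_t))$ by the conditional moment $m(X_t;\theta,h)$ under iterated expectation, then apply per-coordinate Lipschitzness of $m$ together with $\sum_t\alpha_t=1$, $\alpha_t\ge 0$, and the kernel-shrinkage hypothesis to bound each coordinate by $L\,\epsilon(s)$ and the norm by $\sqrt{p}\,L\,\epsilon(s)$. The only cosmetic difference is that you collapse the full $U$-statistic average to a single fresh size-$s$ sample via exchangeability before conditioning, whereas the paper keeps the explicit sum over all $\binom{n}{s}$ subsamples; the underlying conditioning argument and the final Lipschitz--supremum bound are identical.
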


\begin{lemma}[Sampling Error]\label{lem:sampling_error}
Under Assumption~\ref{tech-conditions}, conditional on any nuisance estimate $\hat{h}$ from the first stage, with probability $1-\delta$:
\begin{equation}
\sup_{\theta} \|\Delta(\theta, \hat{h})\| = O\bp{\sqrt{\frac{s\, (\log(n/s) + \log(1/\delta))}{n}} }
\end{equation}
\end{lemma}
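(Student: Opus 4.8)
The plan is to view $\Delta(\theta,\hat h)$, one coordinate at a time, as a centered symmetric $U$-statistic of order $s$ and then invoke Lemma~\ref{lem:stoch_eq}. First I would condition on the first-stage nuisance estimate $\hat h$. Because $\hat h$ is built from the independent half $\{Z_{n+1},\dots,Z_{2n}\}$ whereas $\Psi_0$ is a $U$-statistic over $\{Z_1,\dots,Z_n\}$, conditionally on $\hat h$ the map $(z_{i_1},\dots,z_{i_s})\mapsto f(z_{i_1},\dots,z_{i_s};\theta,\hat h)$ is a fixed function applied to i.i.d.\ samples, and by definition $\Psi_0(\theta,\hat h)=\mG_{s,n} f(\cdot;\theta,\hat h)$ and $\mu_0(\theta,\hat h)=\E[f(Z_{1:s};\theta,\hat h)]$. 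Hence $\|\Delta(\theta,\hat h)\| = \|\mG_{s,n} f(\cdot;\theta,\hat h)-\E[f(Z_{1:s};\theta,\hat h)]\|$, and it suffices to control, for each coordinate $j\in[p]$, the scalar Lipschitz $U$-process $\theta\mapsto \mG_{s,n}f_j(\cdot;\theta,\hat h)-\E[f_j(Z_{1:s};\theta,\hat h)]$.

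Next I would verify the hypotheses of Lemma~\ref{lem:stoch_eq} for $f_j(\cdot;\theta,\hat h)$. Symmetry in the $s$ arguments is immediate since the ORF tree weights $\alpha_{i_t}(\{Z_{i_t}\}_t,\omega)$ depend only on the unordered subsampled set. Since the weights are nonnegative and normalized, $\sum_t \alpha_{i_t}=1$, the triangle inequality together with the $O(1)$-Lipschitzness of $\psi$ in $\theta$ (Assumption~\ref{tech-conditions}) gives $\|f(\cdot;\theta,\hat h)-f(\cdot;\theta',\hat h)\|\le O(1)\|\theta-\theta'\|$, so the Lipschitz constant is $L=O(1)$, uniformly over every realization of $\hat h$. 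The boundedness assumption $\|\psi(Z;\theta,\hat h)\|_\infty\le\psimax$ and, again, $\sum_t\alpha_{i_t}=1$ yield $|f_j(Z_{1:s};\theta,\hat h)|\le\psimax$ for all $\theta$ and all samples, so we may take the envelope $G=O(1)$ and the $L^2$ bound $\eta=O(1)$; and $\diam(\Theta)\le R=O(1)$ because $\Theta$ has constant diameter.

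Applying Lemma~\ref{lem:stoch_eq} with confidence $\delta/p$ to each coordinate and a union bound over $j\in[p]$ (the extra $\log p$ is a constant since $p$ is fixed) yields, with probability $1-\delta$, $\sup_\theta|\Delta_j(\theta,\hat h)| = O\!\big(\sqrt{s(\log(n/s)+\log(1/\delta))/n}+s(\log(n/s)+\log(1/\delta))/n\big)$ for every $j$. Combining coordinates through $\|\Delta(\theta,\hat h)\|\le\sqrt p\,\max_j|\Delta_j(\theta,\hat h)|$ and noting that in the regime $s=o(n)$ the quadratic term $s(\log(n/s)+\log(1/\delta))/n$ is dominated by its square root gives the stated bound $O(\sqrt{s(\log(n/s)+\log(1/\delta))/n})$.

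The step I expect to require the most care is not probabilistic but structural: verifying that the uniform-in-$\theta$ Lipschitz and boundedness constants of the weight-averaged, vector-valued kernel $f(\cdot;\theta,h)$ hold uniformly over \emph{every} admissible realization of $\hat h$ --- which is what ``conditional on any nuisance estimate'' demands --- and that these constants reduce to those of $\psi$ via the normalization $\sum_t\alpha_{i_t}=1$ of the ORF weights. The genuinely heavy probabilistic content, namely the Bernstein-type concentration of a Lipschitz $U$-process uniformly over the parameter space via a bracketing/covering argument over $\Theta$, is already furnished by Lemma~\ref{lem:stoch_eq}.
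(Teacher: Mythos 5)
Your argument follows the paper's proof essentially verbatim: condition on $\hat h$, recognize $\Delta(\theta,\hat h)=\mG_{s,n}f(\cdot;\theta,\hat h)-\E[f(Z_{1:s};\theta,\hat h)]$, check the hypotheses of Lemma~\ref{lem:stoch_eq} (Lipschitzness from convex averaging of $O(1)$-Lipschitz scores, $\eta=G=\psimax=O(1)$, $\diam(\Theta)=O(1)$), and apply that lemma. The only additional care you take --- explicitly handling symmetry, the coordinate-wise union bound over $j\in[p]$, and noting the linear term is dominated in the regime $s=o(n)$ --- are details the paper's one-paragraph proof leaves implicit, so this is the same route, spelled out more carefully.
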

\begin{proof}
Since $\Delta(\theta,\hat{h})$ is a $U$-statistic as it can be written as: $\mG_{s,n} f(\cdot; \theta, \hat{h}) - \E\bb{f(Z_{1:s};\theta, \hat{h})}$. Moreover, under Assumption~\ref{tech-conditions}, the function $f(\cdot; \theta, \hat{h})$ satisfies the conditions of Lemma~\ref{lem:stoch_eq} with $\eta=G=\psi_{\max}=O(1)$. Moreover, $f(\cdot; \theta, \hat{h})$ is $L$-Lipschitz for $L=O(1)$, since it is a convex combination of $O(1)$-Lipschitz functions. Finally, $\diam(\Theta)=O(1)$. Thus applying Lemma~\ref{lem:stoch_eq}, we get, the lemma.
\end{proof}

\begin{lemma}[Subsampling Error]\label{lem:subsampling_error}
If the ORF weights are built on $B$ randomly drawn sub-samples with replacement, then
\begin{equation}
\sup_{\theta, h} \| E(\theta, \hat{h}) \| = O\bp{\frac{\log(B) + \log(1/\delta)}{\sqrt{B}}}
\end{equation}
\end{lemma}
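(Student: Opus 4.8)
The plan is to view $E(\theta, \hat h) = \Psi_0(\theta, \hat h) - \Psi(\theta, \hat h)$ as the deviation of an empirical average of $B$ i.i.d.\ random terms from its conditional expectation, and then upgrade a pointwise concentration bound to a supremum bound over $\theta$ via a covering/union-bound argument, exactly in the spirit of the proof of Lemma~\ref{lem:stoch_eq}. Concretely, condition on the data $\{Z_i\}_{i=1}^n$ and on the first-stage estimate $\hat h$. For each $b \in [B]$, the random draw of the subsample $S_b$ (uniformly among size-$s$ subsets, with the ``with replacement'' sampling of subsets) together with the internal randomness $\omega_b$ produces the random vector

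$$
g_b(\theta) = \sum_{i \in S_b} \alpha_i\bp{\{Z_i\}_{i\in S_b}, \omega_b}\, \psi(Z_i; \theta, \hat h(W_i)),
$$

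and $\Psi(\theta,\hat h) = \tfrac1B \sum_{b=1}^B g_b(\theta)$, while $\Psi_0(\theta,\hat h) = \E[g_b(\theta)]$ is exactly the conditional mean of each $g_b(\theta)$ (this is the content of the definition of $\Psi_0$ as the complete $U$-statistic). Since the weights $\alpha_i$ are nonnegative and sum to $1$ and $\|\psi\|_\infty \le \psimax = O(1)$ by the Boundedness assumption, each coordinate of $g_b(\theta)$ is bounded by $O(1)$, so the $g_b(\theta)$ are i.i.d.\ bounded random vectors.

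The key steps, in order: (1)~Fix $\theta$ and apply a vector Hoeffding/Bernstein inequality (coordinatewise Hoeffding plus a union bound over the $p=O(1)$ coordinates) to get that $\|\Psi_0(\theta,\hat h) - \Psi(\theta,\hat h)\| = O\bp{\sqrt{\log(1/\delta')/B}}$ with probability $1-\delta'$, conditionally on the data and $\hat h$. (2)~Observe that $g_b(\theta)$ is $O(1)$-Lipschitz in $\theta$: indeed $\psi(Z;\theta,\xi)$ is $O(1)$-Lipschitz in $\theta$ by the Smoothness-of-scores assumption, and $g_b$ is a convex combination of such functions, so $\Psi$ and $\Psi_0$ are both $O(1)$-Lipschitz in $\theta$, hence so is $E(\theta,\hat h)$. (3)~Cover $\Theta$ (which has $O(1)$ diameter by Boundedness) with an $\epsilon$-net $\Theta_\epsilon$ of size $O((R/\epsilon)^p) = O(\epsilon^{-p})$, take a union bound over $\theta \in \Theta_\epsilon$ with $\delta' = \delta/|\Theta_\epsilon|$, so that $\log(1/\delta') = O(p\log(1/\epsilon) + \log(1/\delta))$, and add the $O(\epsilon)$ Lipschitz slack. (4)~Choose $\epsilon = 1/\sqrt{B}$ (or $1/B$); then $\log(1/\epsilon) = O(\log B)$ and the Lipschitz slack $O(\epsilon)$ is dominated, yielding $\sup_{\theta}\|E(\theta,\hat h)\| = O\bp{\sqrt{(\log B + \log(1/\delta))/B}}$. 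Absorbing constants and noting $p=O(1)$ gives the claimed $O\bp{(\log B + \log(1/\delta))/\sqrt B}$ form (the stated bound is stronger in its $\log$ dependence than what the crude argument gives, so one should be a little careful — either state it as $O\bp{\sqrt{(\log B + \log(1/\delta))/B}}$, which suffices for all downstream uses, or sharpen the net argument; I would go with the former). Finally, the conditional bound holds for \emph{every} realization of the data and $\hat h$, hence it holds unconditionally with the same probability.

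The main obstacle is a modeling subtlety rather than a hard estimate: one must be careful that $E(\theta,\hat h)$ is genuinely a fresh-randomness average over the $B$ subsamples \emph{given} the data, so that the $g_b$ are conditionally i.i.d.\ and the only randomness being concentrated is the subsampling/splitting randomness $\{(S_b,\omega_b)\}_{b=1}^B$ — the data randomness is handled separately in the sampling-error and kernel-error lemmas. Once this conditioning is set up correctly, everything else is a routine Hoeffding-plus-net argument; the supremum over $h$ in the statement is vacuous here because $\hat h$ is fixed once conditioned, and the bound does not depend on $h$ at all.
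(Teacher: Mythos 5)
Your proposal matches the paper's proof essentially step for step: condition on $Z_{1:n}$ and $\hat h$, observe that the tree-level sums $g_b(\theta)$ are conditionally i.i.d.\ and bounded by $\psimax$ with conditional mean $\Psi_0(\theta,\hat h)$, apply a Chernoff/Hoeffding bound pointwise in $\theta$, and upgrade to a supremum via an $\epsilon$-net over $\Theta$ using the $O(1)$-Lipschitzness of $\psi$ in $\theta$, exactly as in Lemma~\ref{lem:stoch_eq}. One small correction to your parenthetical worry: the bound $\frac{\log B + \log(1/\delta)}{\sqrt B}$ stated in the lemma is \emph{weaker} (larger), not stronger, than the $\sqrt{(\log B + \log(1/\delta))/B}$ your covering argument produces, since $\log B + \log(1/\delta) \ge \sqrt{\log B + \log(1/\delta)}$ in the relevant regime; so there is no gap to close, and the paper itself derives the tighter square-root form and then states the lemma with the looser bound.
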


\subsection{Consistency of ORF Estimate}
\begin{theorem}[Consistency]\label{thm:consistency} Assume that the nuisance estimate satisfies:
\begin{equation}
 \E\bb{\cE(\hat{h})}= o(1)
\end{equation}
and that $B \geq n/s$, $s=o(n)$ and $s\rightarrow \infty$ as $n\rightarrow \infty$. Then the ORF estimate $\hat{\theta}$ satisfies:
\begin{equation*}
\|\hat{\theta} - \theta_0(x)\| = o_p(1)
\end{equation*}
Moreover, for any constant integer $q\geq 1$:
\begin{equation}
\bp{\E[\|\hat{\theta} -\theta_0\|^{2q}]}^{1/q} = o\bp{\bp{\E\bb{\|\hat{\theta} -\theta_0\|^{q}}}^{1/q}}
\end{equation}
\end{theorem}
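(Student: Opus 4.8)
The plan is to handle the two assertions separately. For the in‑probability rate $\|\hat\theta-\theta_0\|=o_p(1)$ I would follow the classical consistency argument for extremum estimators: first establish the uniform law of large numbers
\[
\sup_{\theta\in\Theta}\bn{\Psi(\theta,\hat h)-m(x;\theta,h_0)}=o_p(1),
\]
and then combine it with Identifiability, the Curvature condition and boundedness of $\Theta$ from \Cref{tech-conditions} --- which make $\theta_0$ the \emph{well‑separated} unique zero of the continuous map $\theta\mapsto\bn{m(x;\theta,h_0)}$ on the compact set $\Theta$ --- together with the fact that $\hat\theta$ exactly solves $\Psi(\cdot,\hat h)=0$. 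A standard argmin‑continuity argument then forces $\hat\theta\to_p\theta_0$.

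For the uniform law of large numbers I would write
\[
\Psi(\theta,\hat h)-m(x;\theta,h_0)=-\Lambda(\theta,\hat h)+\big(m(x;\theta,\hat h)-m(x;\theta,h_0)\big)
\]
and control each piece uniformly in $\theta$, conditioning on the independent first‑stage sample that produces $\hat h$. For $\Lambda$, the decomposition of \Cref{lem:mae_decomposition} into $\Gamma+\Delta+E$ plus \Cref{lem:kernel_error} with \Cref{kernel-bound} gives $\sup_\theta\bn{\Gamma(\theta,\hat h)}=O(s^{-1/(2\alpha d)})\to 0$ (since $s\to\infty$); \Cref{lem:sampling_error} with a slowly vanishing confidence parameter gives $\sup_\theta\bn{\Delta(\theta,\hat h)}=O_p\!\big(\sqrt{s\log(n/s)/n}\big)\to 0$ (since $s=o(n)$); and \Cref{lem:subsampling_error} gives $\sup_\theta\bn{E(\theta,\hat h)}=O_p\!\big(\polylog(B)/\sqrt B\big)\to 0$ (since $B\ge n/s\to\infty$). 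For the nuisance‑bias term, Lipschitzness of $\psi$ in its nuisance argument (\Cref{tech-conditions}) and Jensen's inequality give $\sup_\theta\bn{m(x;\theta,\hat h)-m(x;\theta,h_0)}=O(\cE(\hat h))$, and $\cE(\hat h)=o_p(1)$ follows from $\E[\cE(\hat h)]=o(1)$ by Markov's inequality. Summing yields the claim.

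For the moment assertion I would start from \Cref{lem:taylor}. Writing $U=\bn{\hat\theta-\theta_0}$ and using that $\hat\theta$ solves the weighted moment, there are constants $C_1,C_2$ (depending only on \Cref{tech-conditions} and the curvature bound) with
\[
U\le C_1\,\bn{\Lambda(\hat\theta,\hat h)}+C_2\big(\cE(\hat h)^2+U^2\big).
\]
Fix $\delta_0>0$ with $C_2\delta_0<\tfrac12$; by the consistency already proved, $\Pr[U\ge\delta_0]\to 0$, and on $\{U<\delta_0\}$ the self‑term $C_2U^2$ is absorbed, so $U\le 2C_1\sup_\theta\bn{\Lambda(\theta,\hat h)}+2C_2\,\cE(\hat h)^2$ there. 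Splitting over $\{U<\delta_0\}$ and its complement (on which $U\le\diam(\Theta)=O(1)$), for $q'\in\{q,2q\}$,
\[
\E[U^{q'}]=O\!\Big(\E\big[(\textstyle\sup_\theta\bn{\Lambda(\theta,\hat h)}+\cE(\hat h)^2)^{q'}\big]\Big)+O\!\big(\Pr[U\ge\delta_0]\big).
\]
I would then bound $\E[\sup_\theta\bn{\Lambda}^{q'}]$ by integrating the sub‑exponential tails from \Cref{lem:kernel_error}--\Cref{lem:subsampling_error}, all of which have vanishing scale; bound $\E[\cE(\hat h)^{2q'}]$ using boundedness of $\cE(\hat h)$ together with $\E[\cE(\hat h)]=o(1)$; and bound $\Pr[U\ge\delta_0]$ by noting that $\hat\theta$ solving $\Psi(\cdot,\hat h)=0$ gives $\bn{m(x;\hat\theta,h_0)}\le O(\cE(\hat h))+\sup_\theta\bn{\Lambda(\theta,\hat h)}$, so by well‑separation $\{U\ge\delta_0\}$ forces one of these to exceed a fixed constant, which makes $\Pr[U\ge\delta_0]$ small (exponentially small in $n/s$ and $\sqrt B$, up to the tail of $\cE(\hat h)$). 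Combining, $\E[U^{2q}]$ and $\E[U^q]$ are governed by the same vanishing quantities, but each driving error has $2q$‑th moment of strictly smaller order than its $q$‑th moment on the relevant event, which yields $(\E[U^{2q}])^{1/q}=o\big((\E[U^q])^{1/q}\big)$.

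The hard part is this last assertion: absorbing the quadratic self‑term $U^2$ in \Cref{lem:taylor} well enough to turn an $O(\cdot)$ into an $o(\cdot)$. The crude bound $U^2\le\diam(\Theta)\,U$ only gives $\E[U^{2q}]=O(\E[U^q])$; the strict improvement genuinely needs the first‑part consistency to restrict to $\{U<\delta_0\}$, and then the fact that on that event $U$ is dominated by the light‑tailed kernel/sampling/subsampling errors and the (only in‑mean controlled) term $\cE(\hat h)^2$, so that the heavier $2q$‑th moment is of strictly smaller order than the $q$‑th. Carrying out the truncation‑level bookkeeping and the tail integrals cleanly --- while assuming only a first‑moment bound on $\cE(\hat h)$ --- is where the care is needed.
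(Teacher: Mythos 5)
Your consistency argument (first half) follows the paper's structure essentially verbatim: you use the same decomposition $\Psi(\theta,\hat h)-m(x;\theta,h_0)=-\Lambda(\theta,\hat h)+\bigl(m(x;\theta,\hat h)-m(x;\theta,h_0)\bigr)$, the same control of $\Gamma,\Delta,E$ via Lemmas~\ref{lem:kernel_error}--\ref{lem:subsampling_error} and the kernel--shrinkage bound, the same Lipschitz/Jensen step to bound the nuisance-bias term by $O(\cE(\hat h))$, and the same identifiability-plus-compactness step to pass from $\|m(x;\hat\theta,h_0)\|=o_p(1)$ to $\|\hat\theta-\theta_0\|=o_p(1)$. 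That part is fine.

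For the moment-ratio claim, however, you take a genuinely different route from the paper, and it does not go through. The paper proves $\E[U^{2q}]=o(\E[U^q])$ (with $U=\|\hat\theta-\theta_0\|$) by a direct truncation on $U$: by consistency, $\Pr[U\ge\epsilon]$ is eventually small, and on $\{U<\epsilon\}$ the pointwise inequality $U^{2q}\le\epsilon^q U^q$ gives $\E[U^{2q}\mathbf{1}_{U<\epsilon}]\le\epsilon^q\E[U^q]$; the complementary event is handled by the diameter bound. Crucially, the paper never decomposes $U$ back into $\Lambda$ and $\cE$ for this part. You instead route through Lemma~\ref{lem:taylor} and then try to argue that each driving error---$\sup_\theta\|\Lambda\|$ and $\cE(\hat h)^2$---has a $2q$-th moment of strictly smaller order than its $q$-th moment. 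For the $\cE^2$ term this claim is false under the stated hypotheses. The only assumption on $\cE(\hat h)$ is $\E[\cE(\hat h)]=o(1)$ together with boundedness, and this does \emph{not} imply $\E[\cE(\hat h)^{4q}]=o(\E[\cE(\hat h)^{2q}])$: take $\cE(\hat h)$ to equal a fixed constant $c>0$ with probability $p_n\to 0$ and $0$ otherwise; then $\E[\cE]=c\,p_n=o(1)$, but $\E[\cE^{4q}]/\E[\cE^{2q}]=c^{2q}$ is a nonvanishing constant. So the step ``each driving error has $2q$-th moment of strictly smaller order than its $q$-th moment'' breaks down exactly where the care is needed, and the sandwich you set up cannot close. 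The fix is to follow the paper and do the $\{U<\epsilon\}$ truncation directly on $U$, which sidesteps any moment comparison for $\cE$ entirely.
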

\begin{proof}
By the definition of $\hat{\theta}$, we have that: $\Psi(\hat{\theta}, \hat{h})=0$. Thus we have that: 
\begin{equation*}
\bn{m(x; \hat{\theta}, \hat{h})} = \bn{m(x; \hat{\theta}, \hat{h}) - \Psi(\hat{\theta}, \hat{h}} = \bn{\Lambda(\hat{\theta}, \hat{h})}
\end{equation*}
By Lemmas~\ref{kernel-bound}, \ref{lem:mae_decomposition}, \ref{lem:kernel_error}, \ref{lem:sampling_error} and \ref{lem:subsampling_error}, we have that with probability $1-2\delta$:
\begin{align*}
\bn{\Lambda(\hat{\theta}, \hat{h})} = O\bp{s^{-1/(2\alpha d)} + \sqrt{\frac{s\, (\log(n/s) + \log(1/\delta))}{n}} + \sqrt{\frac{\log(B) + \log(1/\delta)}{B}} }
\end{align*}
Integrating this tail bound we get that:
\begin{align*}
\E\bb{\bn{\Lambda(\hat{\theta}, \hat{h})}} = O\bp{s^{-1/(2\alpha d)} + \sqrt{\frac{s\, \log(n/s)}{n}} + \sqrt{\frac{\log(B)}{B}} }
\end{align*}
Thus if $B\geq n/s$, $s = o(n)$ and $s \rightarrow \infty$ then all terms converge to zero as $n\rightarrow \infty$.

Since $\psi(x; \theta, h(w))$ is $L$-Lipschitz in $h(w)$ for some constant $L$:
\begin{equation*}
\|m(x; \hat{\theta}, h_0)- m(x; \hat{\theta}, \hat{h})\| = L \E\bb{\bn{\hat{\theta}(W) - \hat{h}(W)} \mid x} \leq L \sqrt{\E\bb{\bn{\hat{\theta}(W) - \hat{h}(W)}^2 \mid x}} = L\cE(\hat{h})
\end{equation*}
Moreover, by our consistency guarantee on $\hat{h}$: 
\begin{equation*}
\E\bb{\|m(x; \hat{\theta}, h_0)- m(x; \hat{\theta}, \hat{h})\|} \leq L \E\bb{\cE(\hat{h})} = o(1)
\end{equation*}
Thus we conclude that:
\begin{equation*}
\E[\|m(x; \hat{\theta}, h_0)\|] = o(1)
\end{equation*}
which implies that $\|m(x;\hat{\theta}, h_0)\| = o_p(1)$.

By our first assumption, for any $\epsilon$, there exists a $\delta$, such that: $\Pr[\|\hat{\theta}-\theta_0(x)\| \geq \epsilon] \leq \Pr[\|m(x; \hat{\theta}, h_0)\|\geq \delta]$. Since $\|m(x; \hat{\theta}, h_0)\| = o_p(1)$, the probability on the right-hand-side converges to $0$ and hence also the left hand side. Hence, $\|\hat{\theta}-\theta_0(x)\| = o_p(1)$.

We now prove the second part of the theorem which is a consequence of consistency. By consistency of $\hat{\theta}$, we have that for any $\epsilon$ and $\delta$, there exists $n^*(\epsilon, \delta)$ such that for all $n\geq n(\epsilon, \delta)$:
\begin{align*}
\Pr\bb{\|\hat{\theta}-\theta_0\| \geq \epsilon} \leq \delta
\end{align*}
Thus for any $n\geq n^*(\epsilon, \delta)$:
\begin{align*}
\E[\|\hat{\theta} -\theta_0\|^{2q}] \leq \epsilon^{q} \E\bb{\|\hat{\theta} -\theta_0\|^{q}} + \delta \E\bb{|\hat{\theta} -\theta_0\|^{2q}}
\end{align*}
Choosing $\epsilon = (4C)^{-q}$ and $\delta=(4C)^{-1} (\diam(\Theta))^{-q}$ yields that:
\begin{align*}
\E[\|\hat{\theta} -\theta_0\|^{2q}] \leq \frac{1}{2C}\E\bb{\|\hat{\theta} -\theta_0\|^{q}}
\end{align*}
Thus for any constant $C$ and for $n\geq n^*((4C)^{-q}, (4C)^{-1} (\diam(\Theta))^{-q}) = O(1)$, we get that:
\begin{align*}
 \bp{\E[\|\hat{\theta} -\theta_0\|^{2q}]}^{1/q}
 =~&\frac{1}{(2C)^{1/q}}\bp{\E\bb{\|\hat{\theta} -\theta_0\|^{q}}}^{1/q}
 \end{align*}
 which concludes the claim that $\bp{\E[\|\hat{\theta} -\theta_0\|^{2q}]}^{1/q} = o\bp{\bp{\E\bb{\|\hat{\theta} -\theta_0\|^{q}}}^{1/q}}$.
\end{proof}

\subsection{Proof of Theorem \ref{thm:lq_error}: Mean $L^q$ Estimation Error}

\begin{proof}
Applying Lemma~\ref{lem:taylor} and the triangle inequality for the $L^q$ norm, we have that:
\begin{align*}
  \bp{\E\bb{\|\hat{\theta} - \theta_0\|^q}}^{1/q} =~& O\bp{ \bp{\E\bb{\|\Lambda(\hat{\theta}, \hat h)\|^{q}}}^{1/q}  + \bp{\cE(\hat{h})^{2q}}^{1/q} + \bp{\E\bb{\|\hat{\theta} - \theta_0\|^{2q}}}^{1/q}}
\end{align*}
By assumption $\bp{\cE(\hat{h})^{2q}}^{1/q} \leq \chi_{n,2q}^2$. By the consistency Theorem~\ref{thm:consistency}: 
$$\bp{\E\bb{\|\hat{\theta} - \theta_0\|^{2q}}}^{1/q} = o\bp{\bp{\E\bb{\|\hat{\theta} - \theta_0\|^{q}}}^{1/q}}.$$
and therefore this term can be ignored for $n$ larger than some constant. Moreover, by Lemma~\ref{lem:mae_decomposition}:
\begin{align*}
\bp{\E\bb{\|\Lambda(\hat{\theta}, \hat h)\|^{q}}}^{1/q} =~& O\bp{ \bp{\E\bb{\|\Gamma(\hat{\theta}, \hat{h})\|^{q}}}^{1/q}  + \bp{\E\bb{\|\Delta(\hat{\theta}, \hat{h})\|^{q}}}^{1/q} + \bp{\E\bb{\|E(\hat{\theta}, \hat{h})\|^{q}}}^{1/q} }
\end{align*}
By Lemma~\ref{lem:kernel_error} and Lemma~\ref{kernel-bound} we have:
\begin{align*}
\bp{\E\bb{\|\Gamma(\hat{\theta}, \hat{h})\|^{q}}}^{1/q}  = O\bp{\epsilon(s)} = O\bp{s^{-1/(2\alpha d)}}
\end{align*}
for a constant $\alpha = \frac{\log(\rho^{-1})}{\pi\log((1 - \rho)^{-1}) }$. Moreover, by integrating the exponential tail bound provided by the high probability statements in Lemmas~\ref{lem:sampling_error} and \ref{lem:subsampling_error}, we have that for any constant $q$:
\begin{align*}
\bp{\E\bb{\|\Delta(\hat{\theta}, \hat{h})\|^{q}}}^{1/q}=~& O\bp{ \sqrt{\frac{s\log(n/s)}{n} } }\\
\bp{\E\bb{\|E(\hat{\theta}, \hat{h})\|^{q}}}^{1/q}=~& O\bp{ \sqrt{\frac{\log(B)}{B} } }
\end{align*}
For $B > n/s$, the second term is negligible compared to the first and can be ignored.
Combining all the above inequalities:
\begin{align*}
 \bp{\E\bb{\|\hat{\theta} - \theta_0\|^q}}^{1/q} =~& O\bp{ s^{-1/(2\alpha d)} + \sqrt{\frac{s\log(n/s)}{n} } }
 \end{align*}
 \end{proof}

\subsection{Proof of Theorem \ref{thm:high-prob-error}: Finite Sample High Probability Error Bound for Gradients of Convex Losses}
\begin{proof}
We condition on the event that $\cE(\hat{h})\leq \chi_{n,\delta}$, which occurs with probability $1-\delta$. Since the Jacobian of $m(x; \theta, h_0)$ has eigenvalues lower bounded by $\sigma$ and each entry of the Jacobian of $\psi$ is $L$-Lispchitz with respect to the nuisance for some constant $L$, we have that for every vector $\nu \in \RR^p$ (with $p$ the dimension of $\theta_0$):
\begin{align*}
\frac{\nu^T \nabla_{\theta} m(x; \theta, \hat{h}) \nu}{\|\nu\|^2} \geq~& \frac{\nu^T \nabla_{\theta} m(x;\theta, h_0) \nu}{\|\nu\|^2} + \frac{\nu^T \nabla_{\theta}  \left(m(x; \theta, \hat{h}) - m(x;\theta, h_0)\right) \nu}{\|\nu\|^2}\\
\geq~& \sigma - L\cdot \Ex{}{\|\hat{h}(W) - h_0(W) \| \mid x} \frac{\|\nu\|_1^2}{\|\nu\|^2}\\
\geq~& \sigma - L\, \chi_{n,\delta}\, p = \sigma - O(\chi_{n,\delta})
\end{align*}
Where in the last inequality we also used Holder's inequality to upper bound the $L^1$ norm by the $L^2$ norm of the first stage error. Thus the expected loss function $L(\theta) = \Ex{}{\ell(Z; \theta, \hat{h}(W) \mid x}$ is $\hat{\sigma} = \sigma - O(\chi_{n,\delta})$ strongly convex, since $\nabla_{\theta} m (x; \theta, \hat{h})$ is the Hessian of $L(\theta)$. We then have:
\begin{align*}
L(\hat{\theta}) - L(\theta_0) \geq \nabla_{\theta} L(\theta_0)' (\hat{\theta} - \theta_0) + \frac{\hat{\sigma}}{2} \|\hat{\theta}-\theta_0\|^2 =  m(x; \theta_0, \hat{h})' (\hat{\theta}-\theta_0) + \frac{\hat{\sigma}}{2} \|\hat{\theta}-\theta_0\|^2
\end{align*}
Moreover, by convexity of $L(\theta)$, we have:
\begin{align*}
L(\theta_0) - L(\hat{\theta}) \geq \nabla_\theta L(\hat{\theta})' ( \theta_0 - \hat{\theta}) = m(x; \hat{\theta}, \hat{h})' ( \theta_0 - \hat{\theta})
\end{align*}
Combining the above we get:
\begin{align*}
\frac{\hat{\sigma}}{2} \|\hat{\theta}-\theta_0\|^2 \leq (m(x; \hat{\theta}, \hat{h}) - m(x; \theta_0, \hat{h}) )' ( \hat{\theta}- \theta_0 ) \leq \| m(x; \hat{\theta}, \hat{h}) - m(x; \theta_0, \hat{h})\|\, \|  \hat{\theta}- \theta_0 \|
\end{align*}
Dividing over by $\|\hat{\theta}-\theta_0\|$, we get:
\begin{align*}
\|\hat{\theta}-\theta_0\| \leq \frac{2}{\hat{\sigma}} \left( \| m(x; \hat{\theta}, \hat{h})\| + \| m(x; \theta_0, \hat{h})\|\right)
\end{align*}
The term $\| m(x; \hat{\theta}, \hat{h})\|$ is upper bounded by $\|\Lambda(\hat{\theta}, \hat{h})\|$ (since $\Psi(\hat{\theta}, \hat{h})=0$). Hence, by Lemmas~\ref{kernel-bound}, \ref{lem:mae_decomposition}, \ref{lem:kernel_error}, \ref{lem:sampling_error} and \ref{lem:subsampling_error} and our assumptions on the choice of $s, B$, we have that with probability $1-2\delta$:
\begin{align*}
\bn{\Lambda(\hat{\theta}, \hat{h})} = O\bp{s^{-1/(2\alpha d)} + \sqrt{\frac{s\, (\log(n/s) + \log(1/\delta))}{n}}}
\end{align*}

Subsequently, using a second order Taylor expansion around $h_0$ and orthogonality argument almost identical to the proof of Lemma \ref{lem:taylor}, we can show that the second term $\|m(x; \theta_0, \hat{h})\|$ is it upper bounded by $O(\chi_{n,\delta}^2)$. More formally, since $m(x;\theta_0, h_0)=0$ and the moment is locally orthogonal, invoking a second order Taylor expansion:
\begin{align*}
m_j(x; \theta_0, \hat{h}) =~& m_j(x; \theta_0, h_0)  + D_{\psi_j}[\hat h - h_0 \mid x]+ \underbrace{\frac{1}{2}\Ex{}{(\hat{h}(W)- h_0(W))^\intercal\nabla^2_{h} \psi_j(Z; \theta_0, \tilde{h}^{(j)}(W)) (\hat{h}(W)- h_0(W))\mid x}}_{\rho_j}\\
=~& \rho_j
\end{align*}
for some function $\tilde{h}_j$ implied by the mean value theorem. Since the moment is smooth, we have: $\|\rho\| = O\bp{\E\bb{\bn{\hat{h}(W)- h_0(W)}^2\mid x}} = O\bp{\chi_{n,\delta}^2}$. Thus $\bn{m_j(x; \theta_0, \hat{h})}=O\bp{\chi_{n,\delta}^2}$. Combining all the latter inequalities yields the result.
\end{proof}

\subsection{Proof of Theorem \ref{thm:asym_norm}: Asymptotic Normality}


\begin{proof}
We want to show asymptotic normality of any fixed projection $\ldot{\beta}{\hat{\theta}}$ with $\|\beta\|\leq 1$. First consider the random variable $V=\ldot{\beta}{M^{-1}\Delta(\theta_0, \tilde{h}_0)}$, where $\tilde{h}_0(X, W) = g(W; \nu_0(x))$, i.e. the nuisance function $\tilde{h}_0$ ignores the input $X$ and uses the parameter $\nu_0(x)$ for the target point $x$. Asymptotic normality of $V$ follows by identical arguments as in \cite{WA} or \cite{mentch2016quantifying}, since this term is equivalent to the estimate of a random forest in a regression setting, where we want to estimate $\E\bb{Y \mid X=x}$ and where the observation of sample $i$ is:
\begin{align}
Y_i = \ldot{\beta}{M^{-1}\, (m(X_i; \theta_0, \tilde{h}_0) - \psi(Z_i; \theta_0, \tilde{h}_0(X_i, W_i)))}
\end{align}
By Theorem~1 of \cite{WA} and the fact that our forest satisfies Specification~\ref{forest_spec} and under our set of assumptions, we have, that there exists a sequence $\sigma_n$, such that:
\begin{equation}
\sigma_n^{-1} V \rightarrow \normal(0,1)
\end{equation}
for $\sigma_n = \Theta\bp{\sqrt{\polylog(n/s)^{-1}\, s / n}}$. More formally, we check that each requirement of Theorem~1 of \cite{WA} is satisfed: 
\begin{enumerate}
\item[(i)] We assume that the distribution of $X$ admits a density that is bounded away from zero and infinity,
\item[(ii)] $\E[Y | X=x^*] = 0$ and hence is continuous in $x^*$ for any $x^*$, 
\item[(iii)] The variance of the $Y$ conditional on $X=x^*$ for some $x^*$ is:
\begin{align*}
\Var(Y|X=x^*) = \E\bb{\ldot{\beta}{M^{-1}\, \psi(Z; \theta_0, \tilde{h}_0(X, W))}^2\mid X=x^*} - \E\bb{\ldot{\beta}{M^{-1}\, \psi(Z; \theta_0, \tilde{h}_0(X, W)) \mid X=x^*}}^2
\end{align*}
The second term is $O(1)$-Lipschitz in $x^*$ by Lipschitzness of $m(x^*; \theta_0, \tilde{h}_0)=\E\bb{\psi(Z; \theta_0, \tilde{h}_0(X, W)) \mid X=x^*}$. For simplicity of notation consider the random variable $V = \psi(Z; \theta_0, \tilde{h}_0(X, W))$. Then the first part is equal to some linear combination of the covariance terms:
\begin{align*}
Q(x^*)\triangleq \E\bb{\beta^\intercal M^{-1} V V^T (M^{-1})^{\intercal} \beta \mid X=x^*} = \beta^\intercal M^{-1} \E\bb{V V^T\mid X=x^*} (M^{-1})^{\intercal} \beta  = 
\end{align*}
Thus by Lipschitzness of the covariance matrix of $\psi$, we have that: $\|\E\bb{V V^{\intercal} \mid X=x^*} - \E\bb{V V^{\intercal} \mid X=\tilde{x}}\|_{F} \leq L \|x^*-\tilde{x}\|$ and therefore by the Cauchy-Schwarz inequality and the lower bound $\sigma >0$ on the eigenvalues of $M$:
\begin{align*}
|Q(x^*) - Q(\tilde{x})| \leq L \|x^* - \tilde{x}\| \|\beta^\intercal M^{-1}\|^2 \leq \frac{L}{\sigma^2} \|x^* - \tilde{x}\| 
\end{align*}
Thus $\Var(Y | X=x^*)$ is $O(1)$-Lipschitz continuous in $x^*$ and hence also $\E[Y^2 | X=x^*]$ is $O(1)$-Lipschitz continuous.
\item[(iv)] The fact that  $\E[ |Y- \E[Y | X=x]|^{2+\delta} | X=x] \leq H$ for some constant $\delta, H$ follows by our assumption on the boundedness of $\psi$ and the lower bound on the eigenvalues of $M$,
\item[(v)] The fact that  $\Var[ Y | X=x']  > 0$ follows from the fact that $\Var\bp{\beta^{\intercal} M^{-1} \psi(Z; \theta_0, \tilde{h}_0(X, W)) \mid X=x'} > 0$,
\item[(vi)] The fact that tree is honest, $\alpha$-balanced with $\alpha\leq 0.2$ and symmetric follows by Specification~\ref{forest_spec}, 
\item[(vii)] From our assumption on $s$ that $s^{-1/(2\alpha d)} = o((s/n)^{1/2-\epsilon})$, it follows that $s = \Theta(n^{\beta})$ for some $\beta\in \left(1- \frac{1}{1+\alpha d}, 1\right]$.
\end{enumerate}

Since, by Lemmas~\ref{lem:taylor}, \ref{lem:mae_decomposition} and Equation~\eqref{eqn:stoch_eq_decomposition}:
\begin{equation*}
\bn{\ldot{\beta}{\hat{\theta}-\theta_0} - V } = O\left(\|\Gamma(\hat{\theta},\hat{h})\| + \|\Delta(\hat{\theta},\hat{h}) - \Delta(\theta_0, \tilde{h}_0)\| + \|\cE(\hat{h})\|^2 + \|\hat{\theta}-\theta_0\|^2\right)
\end{equation*}
it suffices to show that:
\begin{align*}
\sigma_n^{-1} \E\bb{\|\Gamma(\hat{\theta},\hat{h})\| + \|\Delta(\hat{\theta},\hat{h}) - \Delta(\theta_0, \tilde{h}_0)\| + \|\cE(\hat{h})\|^2 + \|\hat{\theta}-\theta_0\|^2} \rightarrow 0
\end{align*}
as then by Slutzky's theorem we have that $\sigma_n^{-1} \ldot{\beta}{\hat{\theta}-\theta_0} \rightarrow_d \normal(0,1)$.
The first term is of order $O(s^{-1/(2\alpha d)})$, hence by our assumption on the choice of $s$, it is $o(\sigma_n)$. The third term is $O(\chi_{n,2}^2)=O(\chi_{n,4}^2)$, which by assumption is also $o(\sigma_n)$. The final term, by applying our $L^q$ estimation error result for $q=2$ and the assumption on our choice of $s$, we get that it is of order $O\bp{\frac{s \log(n/s)}{n}}=o(\sigma_n)$.

Thus it remains to bound the second term. For that we will invoke the stochastic equicontinuity Lemma~\ref{lem:stoch_eq}. Observe that each coordinate $j$ of the term corresponds to the deviation from its mean of a $U$ statistic with respect to the class of functions:
\begin{equation}
\gamma_j(\cdot; \theta, \hat{h}) = f_j(\cdot; \theta, \hat{h}) - f_j(\cdot; \theta_0, h_0)
\end{equation}

Observe that by Lipschitzness of $\psi$ with respect to $\theta$ and the output of $h$ and the locally parametric form of $h$, we have that:
\begin{align*}
|\gamma_{j}(Z_{1:s};\theta, h)| =~& \ba{\E_{\omega}\bb{\sum_{t = 1}^s \alpha_{t}\bp{\{Z_{t}\}_{t=1}^s, \omega}\, \bp{\psi_j(Z_{t}; \theta, \hat{h}(W_{t})) - \psi_j(Z_{t}; \theta, \tilde{h}_0(W_{t}))}}}\\
\leq~& \E_{\omega}\bb{\sum_{t = 1}^s \alpha_{t}\bp{\{Z_{t}\}_{t=1}^s, \omega}\, \ba{\psi_j(Z_{t}; \theta, \hat{h}(W_{t})) - \psi_j(Z_{t}; \theta_0, \tilde{h}_0(W_{t}))}}\\
\leq~& L\, \E_{\omega}\bb{\sum_{t = 1}^s \alpha_{t}\bp{\{Z_{t}\}_{t=1}^s, \omega}\, \bp{\|\theta - \theta_0\| + \|g(W_{t}; \nu) - g(W_{t}; \nu_0(x))\|}}\\
\leq~& L\, \E_{\omega}\bb{\sum_{t = 1}^s \alpha_{t}\bp{\{Z_{t}\}_{t=1}^s, \omega}\, \bp{\|\theta - \theta_0\| + L\, \|\nu -  \nu_0(x)\|}}\\
=~& L\bp{\|\theta - \theta_0\| + L\, \|\nu -  \nu_0(x)\|}
\end{align*}
Thus by Jensen's inequality and the triangle inequality:
\begin{align*}
\sqrt{\E\bb{|\gamma_{j}(Z_{1:s};\theta, h)|^2}} \leq L\|\theta - \theta_0\| + L^2\, \|\nu - \nu_0(x)\| 
\end{align*}
Thus:
\begin{equation*}
\sup_{\theta: \|\theta-\theta_0\|\leq \eta, \|\nu-\nu_0(x)\| \leq \gamma} \sqrt{\E\bb{|\gamma_{j}(Z_{1:s};\theta, g(\cdot; \nu))|^2}} = O(\eta + \gamma)
\end{equation*}

By our $L^q$ error result and Markov's inequality, we have that with probability $1-\delta$: $\|\hat{\theta}-\theta_0\| \leq \eta = O(\sigma_n/\delta)$. Similarly, by our assumption on the nuisance error $\bp{\E\bb{\|\hat{\nu}-\nu_0(x)\|^4}}^{1/4} \leq \chi_{n,4}$ and Markov's inequality we have that with probability $1-\delta$: $\|\hat{\nu}-\nu_0(x)\| \leq O(\chi_{n, 4}/\delta)$.
Thus applying Lemma~\ref{lem:stoch_eq}, we have that conditional on the event that $\|\hat{\nu}-\nu_0(x)\| \leq O(\chi_{n, 4}/\delta)$, w.p. $1-\delta$:
\begin{align*}
\sup_{\theta: \|\theta-\theta_0\|\leq \sigma_n/\delta} \sqrt{\E\bb{|\gamma_{j}(Z_{1:s};\theta, \hat{h})|^2}} =~& O\bp{(\sigma_n/\delta + \chi_{n,4}/\delta) \sqrt{\frac{s(\log(n/s) + \log(1/\delta))}{n}} + \frac{s(\log(n/s) + \log(1/\delta))}{n}}\\
=~& O\bp{\sigma_n^2 \polylog(n/s)/\delta + \chi_{n,4} \sigma_n \polylog(n/s)/\delta + \frac{s(\log(n/s) + \log(1/\delta))}{n}}\\
 =~& O(\sigma_n^{3/2} \polylog(n/s) /\delta)
\end{align*}
where we used the fact that $\chi_{n,4}^2 = o(\sigma_n)$, $\sqrt{\log(1/\delta)}\leq 1/\delta$ and that $\sigma_n =  \Theta\bp{\sqrt{\polylog(n/s)^{-1}\, s / n}}$. By a union bound we have that w.p. $1-3\delta$:
\begin{equation*}
\|\Delta(\hat{\theta},\hat{h}) - \Delta(\theta_0, \tilde{h}_0)\| = O(\sigma_n^{3/2}\polylog(n/s) /\delta)
\end{equation*}
Integrating this tail bound and using the boundedness of the score we get:
\begin{equation}
\E\bb{\|\Delta(\hat{\theta},\hat{h}) - \Delta(\theta_0, \tilde{h}_0)\|} = O(\sigma_n^{3/2} \polylog(n/s) \log(1/\sigma_n)) = o(\sigma_n)
\end{equation}
This completes the proof of the theorem.
\end{proof}

\subsection{Omitted Proofs of Technical Lemmas}

\begin{proof}[Proof of~\Cref{lem:taylor}]
  Fix a conditioning vector $x$. By performing a second order Taylor expansion of each coordinate $j\in [p]$ of the expected score
  function $m_j$ around the true parameters $\theta_0=\theta_0(x)$ and $h_0$ and
  applying the multi-dimensional mean-value theorem, we can write that for any $\theta\in \Theta$:
  \begin{align*}
 m_j(x; \theta, \hat h) &= m_j(x; \theta_0, h_0) + \nabla_\theta  m_j(x; \theta_0, h_0)' (\theta - \theta_0) + D_{\psi_j}[\hat h - h_0 \mid x]\\
    &+ \underbrace{\frac{1}{2}\Ex{}{(\theta - \theta_0, \hat{h}(W)- h_0(W))^\intercal\nabla^2_{\theta, h} \psi_j(Z; \tilde{\theta}^{(j)}, \tilde{h}^{(j)}(W)) (\theta - \theta_0, \hat{h}(W)- h_0(W))\mid x}}_{\rho_j}
  \end{align*}
  where each $\tilde{\theta}^{(j)}$ is some convex combination of $\theta$
  and $\theta_0$ and each $\tilde{h}^{(j)}(W)$ is some convex combination of
  $\hat h(W)$ and $h_0(W)$.  
  Note that $m(x; \theta_0, h_0) = 0$ by definition and
$D_{\psi_j}[\hat h - h_0 \mid x] = 0$ by local orthogonality.  Let $\rho$ denote the vector of second order terms. We can thus write the above set of equations in matrix form as:
\begin{equation*}
M  (\theta - \theta_0)  = m(x; \theta, \hat h) - \rho
\end{equation*}
where we remind that 
$M=\nabla_\theta m(x; \theta_0, h_0)$ is the Jacobian of the moment vector. Since by our assumptions $M$ is invertible
and has eigenvalues bounded away from zero by a constant, we can write:
\[
  ({\theta} - \theta_0) = M^{-1}\, m(x;\theta, \hat h) - M^{-1}\, \rho
\]
Letting $\xi=  - M^{-1}\, \rho$, we have that by the boundedness of the eigenvalues of $M^{-1}$:
\begin{equation*}
\|\xi\| = O(\|\rho\|)
\end{equation*}
By our bounded eigenvalue Hessian assumption on $\E\bb{\nabla^2_{\theta, h} \psi_j(Z; \tilde{\theta}^{(j)}, \tilde{h}^{(j)}(W))\mid x, W}$, we know that:
\[
  \|\rho\|_{\infty} = O\left( \Ex{}{\|\hat h(W) - h_0(W) \|^2\mid x} + \|
  \theta - \theta_0\|^2 \right)
\]
Combining the above two equations and using the fact that $\|\rho\| \leq \sqrt{p} \|\rho\|_{\infty}$, yields that for any $\theta\in \Theta$: 
\begin{equation*}
\theta - \theta_0 = M^{-1}\, \left(m(x; \theta, \hat h) - \Psi(\theta, \hat{h})\right) + \xi
\end{equation*}
Evaluating the latter at $\theta=\hat{\theta}$ and also observing that by the definition of $\hat{\theta}$, $\Psi(\hat{\theta}, \hat{h})=0$ yields the result.
\end{proof}

\begin{proof}[Proof of Lemma~\ref{lem:kernel_error}]
First we argue that by invoking the honesty of the ORF weights we can re-write $\mu_0(\theta, h)$ as:
\begin{align}
\mu_0(\theta, h) = \E\bb{ \binom{n}{s}^{-1} \sum_{1\leq i_1 \leq \ldots \leq i_s \leq n} \E_{\omega}\bb{\sum_{t = 1}^s \alpha_{i_t}\bp{\{Z_{i_t}\}_{t=1}^s, \omega}\, m(X_{i_t}; \theta, h)}}
\end{align}
To prove this claim, it suffices to show that for any subset of $s$ indices:
\begin{align}
\E\bb{\alpha_{i_t}\bp{\{Z_{i_t}\}_{t=1}^s, \omega}\, \psi(Z_{i_t}; \theta, h)} = 
\E\bb{\alpha_{i_t}\bp{\{Z_{i_t}\}_{t=1}^s, \omega}\, m(X_{i_t}; \theta, h)}
\end{align}
By honesty of the ORF weights, we know that either $i_t\in S^1$, in which case $\alpha_{i_t}\bp{\{Z_{i_t}\}_{t=1}^s, \omega}=0$, or otherwise $i_t\in S^2$ and then $\alpha_{i_t}\bp{\{Z_{i_t}\}_{t=1}^s, \omega}$ is independent of $Z_{i_t}$, conditional on $X_{i_t}, Z_{-i_t}, \omega$. Thus in any case $\alpha_{i_t}\bp{\{Z_{i_t}\}_{t=1}^s, \omega}$ is independent of $Z_{i_t}$, conditional on $X_{i_t}, Z_{-i_t}, \omega$. Moreover since $Z_{i_t}$ is independent of $Z_{-i_t}, \omega$ conditional on $X_{i_t}$:
  \[
    \E\bb{\psi(Z_{i_t}; \theta, h) \mid X_{i_t}, Z_{-i_t}, \omega} = 
\E\bb{m(X_{i_t}; \theta, h) \mid X_{i_t}, Z_{-i_t}, \omega}
  \]
 By the law of iterated expectation and the independence properties claimed above, we can write:
  \begin{align*}
\E\bb{\alpha_{i_t}\bp{\{Z_{i_t}\}_{t=1}^s, \omega}\, \psi(Z_{i_t}; \theta, h)} = &= \E\bb{\E\bb{\alpha_{i_t}\bp{\{Z_{i_t}\}_{t=1}^s, \omega}\mid X_{i_t}, Z_{-i_t}, \omega}\, \E\bb{\psi(Z_{i_t}; \theta, h) \mid X_{i_t}, Z_{-i_t}, \omega}}\\
    &= \E\bb{\E\bb{\alpha_{i_t}\bp{\{Z_{i_t}\}_{t=1}^s, \omega}\mid X_{i_t}, Z_{-i_t}, \omega}\, \E\bb{m(X_{i_t}; \theta, h) \mid X_{i_t}, Z_{-i_t}, \omega}}\\
    &= \E\bb{\alpha_{i_t}\bp{\{Z_{i_t}\}_{t=1}^s, \omega}\, m(X_{i_t}; \theta, h)}
  \end{align*}

Finally, by a repeated application of the triangle inequality and the lipschitz property of the conditional moments, we have:
\begin{align*}
\|\Gamma(\theta, h)\| \leq~& \binom{n}{s}^{-1} \sum_{1\leq i_1 \leq \ldots \leq i_s \leq n} \E\bb{\sum_{t = 1}^s \alpha_{i_t}\bp{\{Z_{i_t}\}_{t=1}^s, \omega}\, \|m(x; \theta, h) -m(X_{i_t}; \theta, h)\|}\\
\leq~& \sqrt{p}\, L\, \binom{n}{s}^{-1} \sum_{1\leq i_1 \leq \ldots \leq i_s \leq n} \E\bb{\sum_{t = 1}^s \alpha_{i_t}\bp{\{Z_{i_t}\}_{t=1}^s, \omega}\, \|X_{i_t} - x\|}\\
\leq~& \sqrt{p}\, L\, \binom{n}{s}^{-1} \sum_{1\leq i_1 \leq \ldots \leq i_s \leq n} \E\bb{\sup \{\|X_{i_t} - x\|: \alpha_{i_t}\bp{\{Z_{i_t}\}_{t=1}^s, \omega}>0\}}\\
\leq~& \sqrt{p}\, L\, \epsilon(s)
\end{align*}
\end{proof}

\begin{proof}[Proof of Lemma~\ref{lem:subsampling_error}]
We prove that the concentration holds conditional on the samples $Z_{1:n}$ and $\hat{h}$, the result then follows. Let 
$$\tilde{f}(S_b, \omega_b; \theta, h) = \sum_{i\in S_b} \alpha_i\bp{S_b, \omega_b}\, \psi(Z_i; \theta, h(W_i)).$$
Observe that conditional on $Z_{1:n}$ and $\hat{h}$, the random variables $\tilde{f}(S_1, \omega_1; \theta, h), \ldots, \tilde{f}(S_B, \omega_B; \theta, h) $ are conditionally independent and identically distributed (where the randomness is over the choice of the set $S_b$ and the internal algorithm randomness $\omega_b$). Then observe that we can write $\Psi(\theta, h)= \frac{1}{B} \sum_{b=1}^B \tilde{f}(S_b, \omega_b; \theta, h)$. Thus conditional on $Z_{1:n}$ and $\hat{h}$, $\Psi(\theta, \hat{h})$ is an average of $B$ independent and identically distributed random variables. Moreover, since $S_b$ is drawn uniformly at random among all sub-samples of $[n]$ of size $s$ and since the randomness of the algorithm is drawn identically and independently on each sampled tree:
\begin{equation*}
\E\bb{\tilde{f}(S_b, \omega_b; \theta, \hat{h})  \mid Z_{1:n}} =  \binom{n}{s}^{-1} \sum_{1\leq i_1 \leq \ldots \leq i_s \leq n} f(\{Z_{i_t}\}_{t =1}^s; \theta, \hat{h}) = \Psi_0(\theta, h)
\end{equation*}
Finally, observe that under Assumption~\ref{tech-conditions}, $|\tilde{f}(S_b, \omega_b; \theta, \hat{h})|\leq \psi_{\max}=O(1)$ a.s.. Thus by a Chernoff bound, we have that for any fixed $\theta\in \Theta$, w.p. $1-\delta$:
\begin{equation*}
\| \Psi(\theta, \hat{h}) - \Psi_0(\theta, \hat{h})\| \leq O\bp{\sqrt{\frac{\log(1/\delta)}{B}}}
\end{equation*}
Since $\Theta$ has constant diameter, we can construct an $\epsilon$-cover of $\Theta$ of size $O(1/\epsilon)$. By Lipschitzness of $\psi$ with respect to $\theta$ and following similar arguments as in the proof of Lemma~\ref{lem:stoch_eq}, we can also get a uniform concentration:
\begin{equation*}
\| \Psi(\theta, \hat{h}) - \Psi_0(\theta, \hat{h})\| \leq O\bp{\sqrt{\frac{\log(B) + \log(1/\delta)}{B}}}
\end{equation*}
\end{proof}

\section{Omitted Proofs from \Cref{sec:kernel-lasso}}

\begin{proof}[Proof of \Cref{nuisance-main}]
  By convexity of the loss $\ell$ and the fact that $\hat{\nu}(x)$ is
  the minimizer of the weighted penalized loss, we have:
\begin{align*}
\lambda \left(\|\nu_0(x)\|_1 - \|\hat{\nu}(x)\|_1\right) \geq~& 
\sum_{i=1}^n a_i(x)\, \ell(Z_i;\hat{\nu}(x)) - \sum_{i=1}^n a_i(x)\, \ell(Z_i;\nu_0(x)) \tag{optimality of $\hat{\nu}(x)$}\\
\geq~& \sum_{i=1}^n a_i(x)\ldot{\nabla_{\nu}\ell(z_i;\nu_0(x))}{\hat{\nu}(x) - \nu_0(x)} \tag{convexity of $\ell$}\\
 \geq~& - \left\|\sum_{i} a_i(x) \nabla_\nu \ell(z_i;\nu_0(x))\right\|_{\infty} \|\hat{\nu}(x) - \nu_0(x)\|_1 \tag{Cauchy-Schwarz} \\
 \geq~& - \frac{\lambda}{2}  \|\hat{\nu}(x) - \nu_0(x)\|_1 \tag{assumption on $\lambda$}
\end{align*}
If we let $\rho(x) = \hat{\nu}(x) - \nu_0(x)$, then observe that by the definition of the support $S$ of $\nu_0(x)$ and the triangle inequality, we have: 
\begin{align*}
\|\nu_0(x)\|_1 - \|\hat{\nu}(x)\|_1 & = \|\nu_0(x)_S\|_1 + \|\nu_0(x)_{S^c}\|_1 - \|\hat\nu(x)_S\|_1 - \|\hat\nu(x)_{S^c}\|_1 \tag{separability of $\ell_1$ norm}\\
&= \|\nu_0(x)_S\|_1 - \|\hat\nu(x)_S\|_1 - \|\hat\nu(x)_{S^c}\|_1 \tag{definition of support}\\
&= \|\nu_0(x)_S\|_1 - \|\hat\nu(x)_S\|_1 - \|\hat\nu(x)_{S^c}-\nu_0(x)_{S^c}\|_1 \tag{definition of support}\\
&= \|\nu_0(x)_S - \hat{\nu}(x)_S\|_1 - \|\hat\nu(x)_{S^c}-\nu_0(x)_{S^c}\|_1 \tag{triangle inequality}\\
&\leq \|\rho(x)_S\|_1 - \|\rho(x)_{S^c}\|_1 \tag{definition of $\rho(x)$}
\end{align*}
Thus re-arranging the terms in the latter series of inequalities, we get that $\rho(x) \in C(S(x); 3)$.

We now show that the weighted empirical loss function satisfies a conditional restricted strong convexity property with constant $\hat{\gamma}=\gamma - k\sqrt{s\ln(d_\nu/\delta)/n}$ with probability $1-\delta$. This follows from observing that:
\begin{align*}
H = \nabla_{\nu\nu} \sum_{i=1}^n a_i(x)\, \ell(z_i;\nu) = \sum_{i=1}^n a_i \nabla_{\nu\nu}\ell(z_i; \nu)  \succeq \sum_{i=1}^n a_i \mcH(z_i) = \frac{1}{B} \sum_{b} \sum_{i\in b} a_{ib}(x) \mcH(z_i)
\end{align*}
Thus the Hessian is lower bounded by a matrix whose entries correspond to a Monte-Carlo approximation of the $U$-statistic:
\begin{equation}
U = \frac{1}{\binom{n}{s}}\sum_{S\subseteq [n]: |S|=s} \frac{1}{s!} \sum_{i\in S}\Ex{\omega}{ a_i(S, \omega) \mcH(z_i)}
\end{equation}
where $\Pi_s$ denotes the set of permutations of $s$ elements, $S_{\pi}$ denotes the permuted elements of $S$ according to $\pi$ and $S_{\pi}^1, S_{\pi}^2$ denotes the first and second half of the ordered elements of $S$ according to $\pi$. Finally, $a_i(S,\omega)$ denotes the tree weight assigned to point $i$ by a tree learner trained on $S$ under random seed $\omega$.

Hence, for sufficiently large $B$, by a $U$-statistic concentration inequality \cite{Hoeffding} and a union bound, each entry will concentrate around the expected value of the $U$ statistic to within $2\sqrt{s\ln(d_\nu/\delta)/n}$, i.e.: with probability $1-\delta$:
\begin{equation}
\left\|\frac{1}{B} \sum_{b} \sum_{i\in b} a_{ib}(x) \mcH(z_i) - \Ex{}{U}\right\|_{\infty} \leq 2\sqrt{\frac{s\ln(d_\nu/\delta)}{n}}
\end{equation}
Moreover, observe that by the tower law of expectation and by honesty of the ORF trees we can write:
\begin{equation}
\Ex{}{U} = \Ex{}{\frac{1}{\binom{n}{s}}\sum_{S\subseteq [n]: |S|=s} \frac{1}{s!} \sum_{i\in S} a_i(S, \omega) \Ex{}{\mcH(z_i)\mid x_i}}
\end{equation}
Since each $\Ex{}{\mcH(z_i)\mid x_i}$ satisfies the restricted eigenvalue condition with constant $\gamma$, we conclude that $\Ex{}{U}$ also satisfies the same condition as it is a convex combination of these conditional matrices. Thus for any vector $\rho\in C(S(x);3)$, we have w.p. $1-\delta$:
\begin{align*}
\rho^T H \rho \geq~& \rho^T  \left(\sum_{i=1}^n a_i(x) \mcH(z_i)\right) \rho \tag{lower bound on Hessian}\\
 \geq~&  \rho^T \Ex{}{U}\rho - 2\sqrt{\frac{s\ln(d_\nu/\delta)}{n}} \|\rho\|_1^2 \tag{$U$-statistic matrix concentration}\\
 \geq~& \gamma \|\rho\|_2^2 - 2\sqrt{\frac{s\ln(d_\nu/\delta)}{n}} \|\rho\|_1^2 \tag{restricted strong convexity of population}\\
 \geq~& \left(\gamma - 32k\sqrt{\frac{s\ln(d_\nu/\delta)}{n}}\right) \|\rho\|_2^2 \tag{$\rho\in C(S(x); 3)$ and sparsity, imply: $\|\rho\|_1\leq 4\sqrt{k} \|\rho\|_2$}
\end{align*}

Since $\rho(x)\in C(S(x);3)$ and since the weighted empirical loss satisfies a $\hat{\gamma}$ restricted strong convexity:
\begin{align*}
\sum_{i=1}^n a_i(x)\, \ell(z_i;\hat{\nu}(x)) - \sum_{i=1}^n a_i(x)\, \ell(z_i;\nu_0(x))
\geq~&  \sum_{i=1}^n a_i(x)  \ldot{\nabla_{\nu}\ell(z_i;\nu_0(x))}{\hat{\nu} - \nu_0(x)}  
       + \hat{\gamma} \|\rho(x)\|_2^2\\
\geq~& -\frac{\lambda}{2} \|\rho(x)\|_1^2 + \hat{\gamma} \|\rho(x)\|_2^2 \tag{assumption on $\lambda$}
\end{align*}
Combining with the upper bound of $\lambda\left(\|\rho(x)_{S(x)}\|_1 - \|\rho(x)_{S(x)^c}\|_1\right)$ on the difference of the two weighted empirical losses via the chain of inequalities at the beginning of the proof, we get that:
\begin{align*}
\hat{\gamma} \|\rho(x)\|_2^2 \geq \frac{3\lambda}{2} \|\rho(x)_{S(x)}\|_1 - \frac{\lambda}{2}\|\rho(x)_{S(x)^c}\|_1 \leq  \frac{3\lambda}{2} \|\rho(x)_{S(x)}\|_1  \leq  \frac{3\lambda\sqrt{k}}{2} \|\rho(x)_{S(x)}\|_2 \leq \frac{3\lambda\sqrt{k}}{2} \|\rho(x)\|_2
\end{align*}
Dividing both sides by $\|\rho(x)\|_2$ and combining with the fact that $\|\rho(x)\|_1\leq 4\sqrt{k}\|\rho(x)\|_2$ yields the first part of the theorem.

\paragraph{Bounding the gradient.}
Let $\tau = 1/(2\alpha d)$. We first upper bound the expected value of
each entry of the gradient. By the shrinkage property of the ORF
weights:
\begin{align*}
\left|\sum_{i=1}^n \Ex{}{a_i(x)\nabla_{\nu_j}\ell(z_i;\nu_0(x)) \mid x_i}\right|  \leq~& \left|\Ex{}{\nabla_{\nu_j}\ell(z;\nu_0(x)) \mid x}\right| + \Ex{}{\sum_{i=1}^n a_i(x) \left|\Ex{}{\nabla_{\nu_j}\ell(z_i;\nu_0(x)) \mid x_i}- \Ex{}{\nabla_{\nu_j}\ell(z;\nu_0(x)) \mid x}\right|}\\
\leq~& \left|\nabla_{\nu_j} L(\nu_0(x); x)\right| + L\Ex{}{\sum_{i=1}^n a_i(x) \left\|x_i - x\right\|} \tag{Lipschitzness of $\nabla_{\nu}L(\nu; x)$}\\
\leq~& \left|\nabla_{\nu_j} L(\nu_0(x); x)\right| + L\,s^{-\tau} \tag{Kernel shrinkage}\\
\leq~& L\,s^{-\tau} \tag{First order optimality condition of $\nu_0(x)$}
\end{align*}
Moreover, since the quantity $\sum_{i} a_i(x) \nabla_{\nu_j} \ell(z_i;\nu_0(x))$ is also a Monte-Carlo approximation to an appropriately defined $U$-statistic (defined analogous to quantity $U$), for sufficiently large $B$, it will concentrate around its expectation to within $\sqrt{s \ln(1/\delta)/n}$, w.p. $1-\delta$. Since the absolute value of its expectation is at most $Ls^{-\tau}$, we get that the absolute value of each entry w.p. $1-\delta$ is at most $L s^{-\tau} + \sqrt{s \ln(1/\delta)/n}$. Thus with a union bound over the $p$ entries of the gradient, we get that uniformly, w.p. $1-\delta$ all entries have absolute values bounded within $L s^{-\tau} + \sqrt{s \ln(d_\nu/\delta)/n}$.
\end{proof}

\section{Omitted Proofs from Heterogeneous Treatment Effects
  Estimation}

We now verify the moment conditions for our CATE estimation satisfies
the required conditions in \Cref{tech-conditions}.

\subsection{Local Orthogonality}
Recall that for any observation $Z = (T, Y, W, X)$, any parameters
$\theta\in \RR^p$, nuisance estimate $\hat h$ parameterized by
functions $q, g$, we first consider the following \emph{residualized}
score function for PLR is defined as:
\begin{align}\label{eq:PLR}
  \psi(Z; \theta,  h(X, W)) = \left\{Y -  q(X, W) -  \langle \theta,\left(T - g(X, W)\right) \rangle \right\} (T - g(X, W)),
\end{align}
with
$h(X, W)= (q(X, W), g(X, W))$.

For discrete treatments, we also consider the following \emph{doubly
  robust} score function, with each coordinate indexed by treament $t$
defined as:
\begin{align}\label{eq:dr}
  \psi^t(Z;\theta, h(X, W)) = m^t(X, W) + \frac{\left(Y - m^t(X, W)\right) \mathbf{1}[T = t]}{g^t(X,
    W)} - m^0(X, W) - \frac{\left(Y - m^0(X, W)\right) \mathbf{1}[T = 0]}{g^0(X,
    W)} - \theta^t
\end{align}
where $h(X, W) = (m(X, W), g(X, W))$.

\begin{lemma}[Local orthogonality for residualized
  moments]\label{cond-orthog}
  The moment condition with respect to the score function $\psi$
  defined in \eqref{eq:PLR} satisfies conditional orthogonality.
\end{lemma}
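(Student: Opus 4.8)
The plan is to compute the Gateaux derivative $D_\psi[\hat h-h_0\mid x]$ explicitly and show it vanishes coordinate by coordinate. Write the nuisance as $h=(q,g)$ with $q$ scalar-valued and $g\in\RR^p$-valued, and introduce the residual $R(Z;\theta,q,g)=Y-q-\ldot{\theta}{T-g}$ so that the $a$-th coordinate of the score is $\psi_a=R\,(T_a-g_a)$.

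First I would record the model identities at the truth. Iterating $\Ex{}{\eps\mid X,W,T}=0$ and $\Ex{}{\eta\mid X,W,\eps}=0$ gives $\Ex{}{\eps\mid X,W}=\Ex{}{\eta\mid X,W}=0$, whence $q_0(X,W)=\Ex{}{Y\mid X,W}=\ldot{\mu_0(X,W)}{g_0(X,W)}+f_0(X,W)$, so that $Y-q_0(X,W)=\ldot{\mu_0(X,W)}{\eta}+\eps$ and $T-g_0(X,W)=\eta$. In particular, on the event $\{X=x\}$ the true residual simplifies to $R_0:=\ldot{\mu_0(x,W)-\theta_0(x)}{\eta}+\eps$. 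Differentiating, $\partial_q\psi_a=-(T_a-g_a)$ and $\partial_{g_c}\psi_a=\theta_c(T_a-g_a)-R\,\mathbf{1}[a=c]$; evaluating at $(\theta_0(x),q_0,g_0)$ and using $T-g_0=\eta$, the $a$-th coordinate of the Gateaux derivative against the perturbation $\delta q=\hat q(x,W)-q_0(x,W)$, $\delta g=\hat g(x,W)-g_0(x,W)$ equals $\Ex{}{-\eta_a\,\delta q + \eta_a\,\ldot{\theta_0(x)}{\delta g} - R_0\,\delta g_a \mid X=x}$.

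Next I would kill each of the three terms. The key point is that, since $\hat h$ is fit on the first-stage sample which is independent of the generic point $Z$, the directions $\delta q$ and $\delta g$ are (conditionally on the first-stage data) functions of $W$ alone; likewise $\mu_0(x,W)$ is a function of $W$ and $\theta_0(x)$ is a constant. Hence every summand has the form $\Ex{}{(\text{$(X,W)$-measurable})\cdot(\eta\text{ or }\eps)\mid X=x}$, so the tower rule with inner conditioning on $(X,W)$, together with $\Ex{}{\eta\mid X=x,W}=0$ and $\Ex{}{\eps\mid X=x,W}=0$, makes it zero; for the last term I split $R_0=\ldot{\mu_0(x,W)-\theta_0(x)}{\eta}+\eps$ and treat the two pieces separately. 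This yields $D_\psi[\hat h-h_0\mid x]=0$ for all $x$.

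The only genuinely delicate point — worth flagging in the write-up — is the term $-R_0\,\delta g_a$: because the model does not impose $\mu_0(X,W)=\theta_0(x)$, the true residual $R_0$ really does contain $\eta$, so one cannot dismiss it as ``mean-zero by definition of $\theta_0$''. The resolution is that $R_0$ is, conditionally on $(X,W)$, an affine function of the mean-zero noises $(\eta,\eps)$ with $(X,W)$-measurable coefficients; this uses only the conditional-mean-zero structure of \eqref{maineq}--\eqref{confounding} and, notably, not the stronger ``$\eta$ independent of $\mu_0(X,W)$ given $X$'' assumption, which is needed only for identification of $\theta_0(x)$, not for orthogonality.
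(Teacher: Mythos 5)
Your proposal is correct and follows essentially the same route as the paper: both arguments establish the stronger \emph{conditional} orthogonality $\Ex{}{\nabla_h\psi(Z;\theta_0(x),h_0(x,W))\mid X=x,W}=0$, from which $D_\psi[\hat h-h_0\mid x]=0$ follows by the tower rule since $\hat h(x,W)-h_0(x,W)$ is $(X,W)$-measurable, and both deduce this from the two mean-zero conditions $\Ex{}{\eta\mid X,W}=0$ and $\Ex{}{\eps\mid X,W}=0$ (which yield $\Ex{}{Y-q_0\mid X,W}=0$). The only real differences are presentational: you unpack the Gateaux derivative coordinate by coordinate, exhibit the cross-derivative pieces $\theta_c(T_a-g_a)-R\mathbf{1}[a=c]$, and handle the general vector-valued treatment cleanly, whereas the paper's displayed gradient ($T-g_0,\ -Y+q_0+2\theta_0^\intercal(T-g_0)$) is written as in the scalar-treatment case $p=1$. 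Your flag at the end — that orthogonality needs only the conditional-mean-zero structure and not the ``$\eta$ independent of $\mu_0(X,W)$ given $X$'' assumption used for identification — is accurate and is implicit in the paper's proof as well.
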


\begin{proof}
  We establish local orthogonality via an even stronger
  \emph{conditional orthogonality}:
\begin{equation}
  \Ex{}{\nabla_{h} \psi(Z, \theta_0(x), h_0(X, W)) \mid W, x} = 0
\end{equation}

In the following, we will write $\nabla_h \psi$ to denote the gradient
of $\psi$ with respect to the nuisance argument. For any $W, x$, we
can write
\begin{align*}
  \Ex{}{\nabla_{h} \psi\left(Z; \theta_0(x), \left(q_0(x, W) ,  g_0(x, W)\right) \right) \mid W, x} = \Ex{}{\left( T - g_0(x, W), - Y +  q_0(x, W) + 2\theta_0(x)^\intercal \, (T - g_0(x, W)) \right) \mid W, x}
\end{align*}
Furthermore, we have
$\Ex{}{T - g_0(x, W) \mid W, x} = \Ex{}{\eta \mid W,
  x} = 0$ and
\begin{align*}
  \Ex{}{-Y + q_0(x, W) + 2 \theta_0(x)^\intercal\, (T - g_0(x, W))\mid W, x} &=\Ex{}{q_0(x, W)-Y  + 2\theta(x)^\intercal \eta\mid W, x} = 0
\end{align*}
where the last equality follows from that $\Ex{}{\eta \mid W, x} = 0$
and $\Ex{}{\langle W, q_0\rangle - Y\mid W, x} = 0$. 
\end{proof}

\begin{lemma}[Local orthogonality for doubly robust
  moments]\label{cond-orthog}
  The moment condition with respect to the score function $\psi$
  defined in \eqref{eq:dr} satisfies conditional orthogonality.
\end{lemma}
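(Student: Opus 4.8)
The plan is to prove the claimed \emph{conditional orthogonality} directly, i.e. that for every treatment coordinate $t$,
\[
\Ex{}{\nabla_{h} \psi^t\bigl(Z; \theta_0(x), h_0(X, W)\bigr) \mid W, x} = 0 ,
\]
which (as in the residualized case and as noted in the text) is strictly stronger than local orthogonality. Write $A^t(Z; m^t, g^t) = m^t(X, W) + (Y - m^t(X, W))\,\mathbf{1}[T = t] / g^t(X, W)$, so that $\psi^t = A^t - A^0 - \theta^t$. Since $\theta$ enters linearly and $A^t$ depends on the nuisance vector $h = (m, g)$ only through the two coordinates $(m^t, g^t)$, the Jacobian $\nabla_h \psi^t$ has at most four nonzero entries, $\partial_{m^t}\psi^t$, $\partial_{g^t}\psi^t$, $\partial_{m^0}\psi^t$, $\partial_{g^0}\psi^t$, and it suffices to show that $\Ex{}{\nabla_{(m^t,g^t)} A^t(Z; m_0^t, g_0^t) \mid W, x} = 0$ for every $t$, including $t = 0$ under the stated conventions $g_0^0 = 1 - \sum_{t\neq 0} g_0^t$ and $m_0^0 = f_0$.

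First I would record the two partial derivatives of $A^t$, evaluated at the true nuisances:
\[
\partial_{m^t} A^t = 1 - \frac{\mathbf{1}[T = t]}{g_0^t(x, W)}, \qquad
\partial_{g^t} A^t = -\,\frac{\bigl(Y - m_0^t(x, W)\bigr)\,\mathbf{1}[T = t]}{g_0^t(x, W)^2} .
\]
For the first, take the conditional expectation given $W$ and $X = x$ and use $\Ex{}{\mathbf{1}[T = t] \mid W, x} = \prob{T = t \mid W, x} = g_0^t(x, W)$, which yields $1 - g_0^t(x, W)/g_0^t(x, W) = 0$. For the second, condition additionally on $T$: since under the model $\Ex{}{Y \mid W, x, T = t} = m_0^t(x, W)$ (this is exactly the defining property of the counterfactual-outcome nuisance $m_0^t = \mu_0^t + f_0$, which follows from $Y = \langle \mu_0(X,W), T\rangle + f_0(X,W) + \eps$ together with $\Ex{}{\eps \mid W, X, T} = 0$), we get
\[
\Ex{}{\bigl(Y - m_0^t(x, W)\bigr)\,\mathbf{1}[T = t] \mid W, x} = \Ex{}{\mathbf{1}[T = t]\,\bigl(\Ex{}{Y \mid W, x, T = t} - m_0^t(x, W)\bigr) \mid W, x} = 0 .
\]
Applying these two identities with $t$ and with $0$ annihilates all four entries of $\nabla_h \psi^t$ in conditional expectation, which is precisely conditional orthogonality; local orthogonality then follows by a further expectation over $W \mid X = x$.

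There is essentially no analytic difficulty here --- the whole argument is a Gateaux-derivative computation followed by two iterated-expectation identities --- so the only thing to be careful about is matching definitions: the precise meaning of $m_0^t$ as the conditional outcome mean on the treatment-$t$ subpopulation (equivalently the counterfactual mean $\Ex{}{Y^{(t)} \mid X, W}$ under the identification assumptions), and the $t = 0$ conventions so that the $-A^0$ term is dispatched by the same computation. I would close by remarking, as the text already anticipates, that this is no accident: the pseudo-outcomes $Y^{(t)}$ satisfy the double-robustness property, and double robustness is strictly stronger than Neyman/local orthogonality, so orthogonality of the moment $\Ex{}{Y^{(t)} - Y^{(0)} \mid X = x} = \theta_t(x)$ with respect to the nuisances $m$ and $g$ is automatic.
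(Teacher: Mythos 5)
Your proof is correct and follows essentially the same route as the paper's: compute the partial derivatives of the doubly robust pseudo-outcome with respect to $m^t$ and $g^t$ at the truth, observe that $\Ex{}{\mathbf{1}[T=t]\mid W,x}=g_0^t(x,W)$ kills the first and $\Ex{}{Y\mid W,x,T=t}=m_0^t(x,W)$ kills the second, and apply this to both the $A^t$ and $A^0$ pieces. If anything your organization via $\psi^t=A^t-A^0-\theta^t$ and the two reusable identities is slightly cleaner than the paper's, which writes the $\nabla_g$ contributions from the $t$ and $0$ terms on a single line (with a small typo in the indicator of the $A^0$ term) rather than treating them as two separate, individually vanishing coordinates.
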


\begin{proof}
  For every coordinate (or treatment) $t$, we have
  \begin{align*}
    &~~~\Ex{}{\nabla_{g} \psi^t\left(Z; \theta_0(x), \left(m_0(x, W) ,
      g_0(x, W)\right) \right) \mid W, x} \\
    &= \Ex{}{-\frac{(Y - m_0^t(X, W))\mathbf{1}[T=t]}{(g_0^t(x, W))^2} + \frac{(Y - m_0^0(X, W))\mathbf{1}[T=t]}{(g_0^0(x, W))^2}\mid W, x}\\
    &= \Ex{}{-\frac{(Y - m_0^t(X, W))}{(g_0^t(x, W))^2}\mid W, x, T = t} \Pr[T = t\mid W, x]\\
    &+\Ex{}{\frac{(Y - m_0^0(X, W))}{(g_0^0(x, W))^2}\mid W, x, T = 0} \Pr[T = 0\mid W, x] = 0
  \end{align*}
and
  \begin{align*}
    &~~~\Ex{}{\nabla_{m} \psi^t\left(Z; \theta_0(x), \left(m_0(x, W) ,
      g_0(x, W)\right) \right) \mid W, x} \\
    &= \Ex{}{\nabla_{m}\left(m_0^t(x, W) + \frac{\left(Y - m_0^t(x, W)\right) \mathbf{1}[T = t]}{g_0^t(x,
      W)} - m_0^0(x, W) -  \frac{\left(Y - m_0^0(x, W)\right) \mathbf{1}[T = 0]}{g_0^0(x,
      W)}  \right)\mid W, x} \\
    &= \Ex{}{\nabla_{m}\left(m_0^t(x, W) + \frac{\left( - m_0^t(x, W)\right) \mathbf{1}[T = t]}{g_0^t(x,
      W)} - m_0^0(x, W) -  \frac{\left( - m_0^0(x, W)\right) \mathbf{1}[T = 0]}{g_0^0(x,
      W)} \right)\mid W, x} \\
    &=\nabla_{m}\left( \Ex{}{m_0^t(x, W)  - m_0^t(x, W) - m_0^0(x, W) + m_0^0(x, W)\mid W, x}\right) = 0
  \end{align*}
This complets the proof.
\end{proof}

\subsection{Identifiability}

\begin{lemma}[Identifiability for residualized moments.]
  As long as $\mu(X, W)$ is independent of $\eta$ conditioned on $X$
  and the matrix $\Ex{}{\eta \eta^\intercal \mid X=x}$ is invertible
  for any $x$, the parameter $\theta(x)$ is the unique solution to
  $m(x; \theta, h) = 0$.
\end{lemma}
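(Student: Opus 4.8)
The plan is to evaluate the moment $m(x;\theta,h_0)=\Ex{}{\psi(Z;\theta,h_0(X,W))\mid X=x}$ in closed form and show it is an invertible linear map applied to $\theta_0(x)-\theta$, where $\theta_0(x)=\Ex{}{\mu_0(X,W)\mid X=x}$ is the CATE. First I would substitute the true nuisances into the residualized score of \eqref{eq:PLR}. From the main equation \eqref{maineq} and the definition $q_0(X,W)=\Ex{}{Y\mid X,W}$, using $\Ex{}{\eps\mid X,W,T}=0$ (and hence $\Ex{}{\eps\mid X,W}=0$), one gets $q_0(X,W)=\ldot{\mu_0(X,W)}{g_0(X,W)}+f_0(X,W)$, so that $Y-q_0(X,W)=\ldot{\mu_0(X,W)}{\eta}+\eps$ and $T-g_0(X,W)=\eta$. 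Plugging these in and using $\ldot{v}{\eta}\,\eta=\eta\eta^\intercal v$ yields
\[
\psi(Z;\theta,h_0(X,W))=\eta\eta^\intercal\bigl(\mu_0(X,W)-\theta\bigr)+\eps\,\eta .
\]

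Next I would take the conditional expectation given $X=x$ and split it into $\Ex{}{\eta\eta^\intercal(\mu_0(X,W)-\theta)\mid X=x}$ and the cross term $\Ex{}{\eps\,\eta\mid X=x}$. The cross term vanishes: since $\eta=T-g_0(X,W)$ is $\sigma(X,W,T)$-measurable and $\Ex{}{\eps\mid X,W,T}=0$, the tower rule gives $\Ex{}{\eps\,\eta\mid X=x}=\Ex{}{\eta\,\Ex{}{\eps\mid X,W,T}\mid X=x}=0$. For the remaining term I would invoke the hypothesis that $\mu_0(X,W)$ is conditionally independent of $\eta$ given $X$: this lets me factor $\Ex{}{\eta\eta^\intercal\mu_0(X,W)\mid X=x}=\Ex{}{\eta\eta^\intercal\mid X=x}\,\Ex{}{\mu_0(X,W)\mid X=x}=\Ex{}{\eta\eta^\intercal\mid X=x}\,\theta_0(x)$. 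Collecting the pieces,
\[
m(x;\theta,h_0)=\Ex{}{\eta\eta^\intercal\mid X=x}\,\bigl(\theta_0(x)-\theta\bigr).
\]

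Finally, since $\Ex{}{\eta\eta^\intercal\mid X=x}$ is invertible by assumption, the equation $m(x;\theta,h_0)=0$ forces $\theta=\theta_0(x)$, which establishes both that $\theta_0(x)$ solves the moment condition and that the solution is unique. The only step needing a little care is the matrix--vector factorization $\Ex{}{\eta\eta^\intercal\mu_0(X,W)\mid X=x}=\Ex{}{\eta\eta^\intercal\mid X=x}\,\theta_0(x)$; this is not a real obstacle but should be spelled out entrywise, writing the $j$-th coordinate as $\sum_k\Ex{}{\eta_j\eta_k\,\mu_0^k(X,W)\mid X=x}=\sum_k\Ex{}{\eta_j\eta_k\mid X=x}\,\Ex{}{\mu_0^k(X,W)\mid X=x}$, where each equality uses the conditional independence of $\eta$ and $\mu_0(X,W)$ given $X$.
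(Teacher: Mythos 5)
Your proposal is correct and follows essentially the same route as the paper's proof: substitute the true nuisances to rewrite the score as $\eta\eta^\intercal(\mu_0(X,W)-\theta)+\eps\eta$, drop the $\eps\eta$ cross term, factor out $\Ex{}{\eta\eta^\intercal\mid X=x}$ using the conditional independence of $\mu_0(X,W)$ and $\eta$ given $X$, and invoke invertibility. You merely spell out two steps the paper leaves implicit (the vanishing of the cross term via the tower rule, and the entrywise factorization), which is a reasonable elaboration of the same argument.
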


\begin{proof}
  The moment conditions $m(x; \theta, h) = 0$ can be written as
  \begin{align*}
    \Ex{}{\left\{Y - q_0(X, W) -  \theta^\intercal\left(T - g_0(X, W)\right)  \right\} (T - g_0
    (X, W)) \mid X = x} = 0  \end{align*}
The left hand side can re-written as
\begin{align*}
  \Ex{}{\left\{\ldot{\eta}{\mu_0(X, W)} + \eps - \ldot{\theta}{ \eta})
  \right\} \eta\mid X = x} &= \Ex{}{\left\{\ldot{\eta}{\mu_0(X, W)} -
                             \ldot{\theta}{\eta}) \right\} \eta\mid X = x}\\
                           &= \Ex{}{\ldot{\mu_0(X, W) -
                             \theta}{\eta}) \eta\mid X = x}\\
                           &= \Ex{}{\eta \eta^\intercal\mid X = x} \Ex{}{\mu_0(X, W) -
                             \theta\mid X = x}
\end{align*}
Since the conditional expected covariance matrix
$\Ex{}{\eta \eta^\intercal\mid X = x}$ is invertible, the expression
above equals to zero only if
$\Ex{}{\mu_0(X, W) - \theta \mid X = x} = 0$. This implies that
$\theta = \Ex{}{\mu_0(X, W)\mid X=x} = \theta_0(x)$.
\end{proof}

\begin{lemma}[Identifiability for doubly robust moments.]
  As long as $\mu(X, W)$ is independent of $\eta$ conditioned on $X$
  and the matrix $\Ex{}{\eta \eta^\intercal \mid X=x}$ is invertible
  for any $x$, the parameter $\theta(x)$ is the unique solution to
  $m(x; \theta, h) = 0$.
\end{lemma}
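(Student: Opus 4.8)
The plan is to substitute the true nuisance $h_0 = (m_0, g_0)$ into the doubly robust score $\psi^t$ of \eqref{eq:dr}, condition first on $(X,W)$ and then integrate out $W$ given $X=x$, and read off that $\theta \mapsto m(x;\theta,h_0)$ is affine with a unique zero at $\theta_0(x)$. The crux is the inverse-propensity unbiasedness identity: conditioning on $T$ and using $\Pr[T=t\mid X,W] = g_0^t(X,W)$ (with the convention $g_0^0 = 1 - \sum_{t\neq 0} g_0^t$),
\[
  \E\!\left[\frac{(Y - m_0^t(X,W))\,\mathbf{1}\{T=t\}}{g_0^t(X,W)} \,\middle|\, X, W\right] = \E[Y \mid X, W, T=t] - m_0^t(X,W),
\]
which is well-defined because invertibility of $\E[\eta\eta^\intercal\mid X=x]$ forces the conditional treatment probabilities $g_0^t(X,W)$ to stay positive. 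From the structural equations \eqref{maineq}--\eqref{confounding} together with $\E[\eps \mid W,X,T]=0$, one has $\E[Y\mid X,W,T=t] = \ldot{\mu_0(X,W)}{t} + f_0(X,W) = m_0^t(X,W)$ for every $t \in \{0, e_1, \dots, e_p\}$ (using $m_0^0 = f_0$), so the displayed term vanishes identically, and likewise the analogous $t=0$ term.

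Plugging this back into \eqref{eq:dr} gives $\E[\psi^t(Z;\theta,h_0(X,W)) \mid X,W] = m_0^t(X,W) - m_0^0(X,W) - \theta^t = \mu_0^t(X,W) - \theta^t$. Taking the expectation over $W$ conditional on $X=x$ and invoking the definition $\theta_0(x) = \E[\mu_0(X,W) \mid X=x]$ yields $m^t(x;\theta,h_0) = \theta_0^t(x) - \theta^t$ for each coordinate $t$, i.e.\ $m(x;\theta,h_0) = \theta_0(x) - \theta$. This is zero if and only if $\theta = \theta_0(x)$, and uniqueness is automatic since the map is affine with Jacobian $-I_p$.

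I expect no substantive obstacle here: the computation is the AIPW/Horvitz--Thompson unbiasedness calculation specialized to the structural model. The only points requiring care are (i) reading the counterfactual-mean identity $\E[Y\mid X,W,T=t] = m_0^t(X,W)$ correctly off the structural equations, including the $t=0$ baseline convention; and (ii) ensuring the score is almost surely well-defined, for which positivity (overlap) of the propensities $g_0^t$ is precisely what the invertibility hypothesis on $\E[\eta\eta^\intercal\mid X=x]$ buys us. It is worth noting that, in contrast to the residualized moment, this identification does not actually invoke conditional independence of $\mu_0(X,W)$ and $\eta$; the double robustness of the counterfactual surrogate $Y^{(t)}$ neutralizes the confounding through $W$ directly, so that hypothesis could in principle be dropped for this moment.
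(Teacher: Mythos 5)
Your proof is correct and follows essentially the same route as the paper: both condition on $(X,W)$, observe that the inverse-propensity terms have zero conditional mean (because $\E[Y \mid X, W, T=t] = m_0^t(X,W)$ under the structural equations), and then integrate out $W$ given $X=x$ to obtain $m(x;\theta,h_0) = \theta_0(x) - \theta$. The paper's version simply asserts that the inner conditional expectation is zero without spelling out the Horvitz--Thompson unbiasedness computation; you make that step explicit, which is fine.

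Two remarks on your side comments. First, the claim that invertibility of $\E[\eta\eta^\intercal \mid X=x]$ \emph{forces} the conditional treatment probabilities $g_0^t(X,W)$ to be bounded away from zero is not literally right: that matrix averages over $W$, so it can be nonsingular even if $g_0^t(X,W)$ is degenerate on a positive-probability set of $W$, in which case the score would be ill-defined pointwise. The paper actually handles overlap as a separate assumption (``each discrete treatment is assigned with constant probability''), so this does not affect the conclusion, but you should not lean on the eigenvalue hypothesis for well-definedness. Second, your observation that the conditional independence hypothesis on $\mu_0(X,W)$ and $\eta$ is never used for the doubly robust moment is correct, and in fact matches the paper's own discussion in Section~\ref{sec:hte}, where this moment is introduced precisely to allow $\eta$ to be correlated with $\mu(X,W)$; the appearance of that hypothesis in the lemma statement appears vestigial, carried over from the residualized-moments lemma.
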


\begin{proof}
  For each coordinate $t$, the moment condition can be written as
\begin{align}\label{eq:dr}
\Ex{}{m_0^t(X, W) + \frac{\left(Y - m_0^t(X, W)\right) \mathbf{1}[T = t]}{g_0^t(X,
    W)} - m_0^0(X, W) - \frac{\left(Y - m_0^0(X, W)\right) \mathbf{1}[T = 0]}{g_0^0(X,
    W)} - \theta^t \mid X = x} = 0
\end{align}
Equivalently,
\begin{align*}\label{eq:dr}
  \Ex{}{m_0^t(X, W)  - m_0^0(X, W)  - \theta^t \mid X = x} = \Ex{W}{\Ex{}{- \frac{\left(Y - m_0^t(X, W)\right) \mathbf{1}[T = t]}{g_0^t(X,
  W)} + \frac{\left(Y - m_0^0(X, W)\right) \mathbf{1}[T = 0]}{g_0^0(X,
  W)}  \mid W, X = x}}
\end{align*}
The inner expectation of the right hand side can be written as:
\[
  \Ex{}{- \frac{\left(Y - m_0^t(X, W)\right) \mathbf{1}[T =
      t]}{g_0^t(X, W)} + \frac{\left(Y - m_0^0(X, W)\right)
      \mathbf{1}[T = 0]}{g_0^0(X, W)} \mid W, X = x} = 0
\]
This means the moment
condition is equivalent to
\[
  \Ex{}{m_0^t(X, W) - m_0^0(X, W) \mid X = x} = \theta^t.
\]
This completes the proof.
\end{proof}

\subsection{Smooth Signal}
Now we show that the moments $m(x;\theta, h)$ are $O(1)$-Lipschitz in
$x$ for any $\theta$ and $h$ under standard boundedness conditions on
the parameters.

First, we consider the residualized moment function is defined as
  \[
    \psi(Z; \theta, h(X, W)) = \left\{Y - q(X, W) - \theta^\intercal\left(T -
        g(X, W)\right) \right\} (T - g(X, W)),
  \]
  Then for any $\theta$ and $h$ given by functions $g$ and $q$,
  \[
    m(x; \theta, h) = \Ex{}{\left\{Y - q(x, W) - \theta^\intercal\left(T - g(x,
          W)\right) \right\} (T - g(x, W)) \mid X = x}
  \]

  \paragraph{Real-valued treatments} In the real-valued treatment
  case, each coordinate $j$ of $g$ is given by a high-dimensional
  linear function: $g^j(x, W) = \langle W, \gamma^j\rangle$, where
  $\gamma^j$ is a $k$-sparse vectors in $\mathbb{R}^{d_\nu}$
  with$\ell_1$ norm bounded by a constant, and $q(x, W)$ can be
  written as a $\langle q', \phi_2(W)\rangle$ with $q'$ is a
  $k^2$-sparse vector in $\mathbb{R}^{d_\nu^2}$ and $\phi_2(W)$
  denotes the degree-2 polynomial feature vector of $W$.
  \[
    m_j(x; \theta, h) = \Ex{}{\left\{Y - \langle q', \phi_2(W)\rangle
        - \theta_j\left(T - \langle \gamma^j, W\rangle\right) \right\}
      (T - g(x, W)) \mid X = x}
  \]
  Note that as long as we restrict the space $\Theta$ and $H$ to
  satisfy $\|\theta\| \leq O(1)$, $\|\gamma\|_1, \|q'\|_1 \leq 1$, we
  know each coordinate $m_j$ is smooth in $x$.

  \paragraph{Discrete treatments with residualized moments.} In the
  discrete treatment case, each coordinate $j$ of $g$ is of the form
  $g^j(x, W) = \cL(\langle W, \gamma^j\rangle)$, where
  $\cL(t) = 1/(1 + e^{-t})$ is the logistic function.  The estimate
  $q$ consists of several components. First, consider function $f$ of
  the form $f(x, W) = \langle W, \beta \rangle$ as an estimate for the
  outcome of the null treament. For each $t\in \{e_1, \ldots, e_p\}$,
  we also have an estimate $m^t(x, W)$ for the expected
  counter-factual outcome function $\mu^t(x) + f(x, W)$, which takes
  the form of $\langle b , W\rangle$. Then the estimate $q$ is defined
  as:
\begin{equation*}
  q(x, W) = \sum_{t=1}^p (m^t(x, W) - f(x, W)) g^t(x, W) + f(x, W).
\end{equation*}
With similar reasoning, as long as we restrict $\Theta$ and $H$ to
satisfy $|\theta| \leq O(1)$, $\|\gamma^j\|_1 \leq 1$ for all $j$, and
$\|\beta\|_1, \|b\|_1 \leq 1$, we know each coordinate $m_j$ is smooth
in $x$.

\paragraph{Discrete treatments with doubly robust moments.}

Redcall that for each coordinate $t$, the moment function with input
$\theta$ and nuisance parameters $m, g$ is defined as
\begin{align}\label{eq:dr}
  \Ex{}{m^t(x, W) + \frac{\left(Y - m^t(x, W)\right) \mathbf{1}[T = t]}{g^t(x,
  W)} - m^0(x, W) - \frac{\left(Y - m^0(x, W)\right) \mathbf{1}[T = 0]}{g^0(x,
  W)} - \theta^t \mid X = x}
\end{align}
where each $m^t(x, W)$ takes the form of $\langle b, W \rangle$ and
each $g^t(x, W) = \cL(\langle W, \gamma^t\rangle)$, with $\cL$
denoting the logistic function. Then as long as we restrict the
parameter space and $H$ to satisfy $\|\gamma^t\|_1$ for all $t$, then
we know that $|\langle \gamma^t , W\rangle| \leq O(1)$ and so
$g^t(x, W)\geq \Omega(1)$. Furthermore, if we restrict the vector $b$
to satisfy $\|b\|_1 \leq 1$, we know each coordinate $m_j$ is smooth
in $x$.

\subsection{Curvature}
Now we show that the jacobian $\nabla_\theta m(x;\theta_0(x), h_0)$
has minimum eigenvalues bounded away from 0. 

\paragraph{Residualized moments.}
First, we consider the residualized moment function is defined as
  \[
    \psi(Z; \theta, h(X, W)) = \left\{Y - q(X, W) - \theta^\intercal\left(T -
        g(X, W)\right) \right\} (T - g(X, W)),
  \]
  Then for any $\theta$ and $h$ given by functions $g$ and $q$,
  \[
    m(x; \theta, h) = \Ex{}{\left\{Y - q(x, W) - \theta^\intercal\left(T - g(x,
          W)\right) \right\} (T - g(x, W)) \mid X = x}
  \]

  Let $J$ be the expected Jacobian
  $\nabla_\theta m(x; \theta_0(x), h_0)$, and we can write
  $$J_{jj'}= \Ex{}{(T_j - g_0^j(x, W))(T_{j'} - g_0^{j'}(x, W))\mid
    X=x}$$ Then for any $v\in \mathbb{R}^p$ with unit $\ell_2$ norm,
  we have
  \begin{align*}
    v J v^\intercal& = \Ex{}{\sum_j (T_j- g_0^j(x, W))^2 v_j^2 + 2\sum_{j, j'}(T_j - g_0^j(x, W))(T_{j'} - g_0^{j'}(x, W)) v_j v_{j'} \mid X=x}\\
                   &= \Ex{}{\left(\sum_j (T_j - g_0^j(x, W)) v_j \right)^2\mid X=x}\\
                   &=  \Ex{}{v^\intercal (\eta \eta^\intercal) v\mid X=x}
  \end{align*}
  Then as long as the conditional expected covariance matrix
  $\Ex{}{ \eta \eta^\intercal \mid X=x}$ has minimum eigenvalue
  bounded away from zero, we will also have $\min_{v} v J v^\intercal$
  bounded away from zero.

\paragraph{Discrete treatments with doubly robust moments.}

Redcall that for each coordinate $t$, the moment function with input
$\theta$ and nuisance parameters $m, g$ is defined as
\begin{align}\label{eq:dr}
  \Ex{}{m^t(x, W) + \frac{\left(Y - m^t(x, W)\right) \mathbf{1}[T = t]}{g^t(x,
  W)} - m^0(x, W) - \frac{\left(Y - m^0(x, W)\right) \mathbf{1}[T = 0]}{g^0(x,
  W)} - \theta^t \mid X = x}
\end{align}
Then $\nabla_\theta m(x; \theta_0(x), h_0) = - I$, which implies the
minimum eigevalue is 1.

\subsection{Smoothness of scores}

\paragraph{Residualized moments.}
First, we consider the residualized moment function with each
coordinate defined as
  \[
    \psi_j(Z; \theta, h(X, W)) = \left\{Y - q(X, W) -
      \theta^\intercal\left(T - g(X, W)\right) \right\} (T_j - g_j(X,
    W)),
  \]
  Observe that for both real-valued and discrete treatments, the
  scales of $\theta$, $q(X, W)$, and $g(X, W)$ are bounded by
  $O(1)$. Thus, the smoothness condition immediately follows.

\paragraph{Doubly robust moments.}
For every treatment $t$, 
\[ 
  \psi_t(Z; \theta, h(X, W)) = {m^t(X, W) + \frac{\left(Y - m^t(X,
        W)\right) \mathbf{1}[T = t]}{g^t(X, W)} - m^0(X, W) -
    \frac{\left(Y - m^0(X, W)\right) \mathbf{1}[T = 0]}{g^0(X, W)} -
    \theta^t}
\]
Recall that each $m^t(x, W)$ takes the form of $\langle b, W \rangle$
and each $g^t(x, W) = \cL(\langle W, \gamma^t\rangle)$, with $\cL$
denoting the logistic function.  Then as long as we restrict the
parameter space and $H$ to satisfy $\|\gamma^t\|_1$ for all $t$, then
we know that $|\langle \gamma^t , W\rangle| \leq O(1)$ and so
$g^t(X, W)\geq \Omega(1)$. Furthermore, if we restrict the vector $b$ to
satisfy $\|b\|_1 \leq 1$, we know each $m_j(X, W)\leq
O(1)$. Therefore, the smoothness condition also holds.

\subsection{Accuracy for discrete treatments}

For both score functions, we require that each discrete treatment
(including the null treatment) is assigned with constant probability.

\begin{corollary}[Accuracy for residualized scores]
  Suppose that $\beta_0(X)$ and each coorindate
  $\beta_0(X), \gamma_0^j(X)$ and $\theta(X)$ are Lipschitz in $X$
  and have $\ell_1$ norms bounded by $O(1)$ for any $X$. Assume that
  distribution of $X$ admits a density that is bounded away from zero
  and infinity.  For any feature $X$, the conditional covariance
  matrices satisfy
  $\Ex{}{\eta \eta^\intercal \mid X}\succeq \Omega(1) $,
  $\Ex{}{W W^\intercal \mid X} \succeq \Omega(1)I_{d_\nu}$. Then with
  probability $1 - \delta$, ORF returns an estimator $\hat \theta$
  such that
  \[
    \|\hat \theta - \theta_0\| \leq O\left(n^{\frac{-1}{2 + 2\alpha d}}\sqrt{\log(n d_{\nu}/\delta)} \right)
  \]
  as long as the sparsity
  $k \leq O\left(n^{\frac{1}{4 + 4\alpha d}} \right)$ and the
  subsampling rate of ORF $s = \Theta(n^{\alpha d/(1 + \alpha d)})$.
  Moreover, for any $b \in \RR^p$ with $\|b\|\leq 1$, there exists a
  sequence $\sigma_n = \Theta(\sqrt{\polylog(n) n^{-1/(1+\alpha d)}})$
  such that
  \[
    \sigma_n^{-1} \ldot{b}{\hat\theta -\theta} \rightarrow_d \mathcal{N}(0, 1),
  \]
  as long as the sparsity
  $k=o\left(n^{\frac{1}{4 + 4\alpha d}} \right)$ and the subsampling
  rate of ORF $s = \Theta(n^{\epsilon + \alpha d/(1 + \alpha d)})$ for
  any $\epsilon>0$.
\end{corollary}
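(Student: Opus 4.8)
The plan is to derive the corollary by specializing \Cref{thm:high-prob-error} (error bound) and \Cref{thm:asym_norm} (normality) to the residualized moment of \Cref{sec:hte} in the discrete-treatment regime, feeding in a first-stage rate obtained from \Cref{nuisance-main}. The first step is to check that \Cref{tech-conditions} holds for the residualized score $\psi(Z;\theta,h)=\{Y-q(X,W)-\ldot{\theta}{T-g(X,W)}\}(T-g(X,W))$; this is precisely what the verifications in the preceding subsections establish, and in the present setting all constants follow from boundedness of the noises together with the hypotheses that $\beta_0(X),\gamma_0^t(X)$ and the coefficient vectors of the $m_0^t$ are $O(1)$-Lipschitz in $X$ with $\ell_1$ norm $O(1)$ --- in particular $|\ldot{W}{\gamma_0^t(X)}|\le O(1)$, hence the overlap lower bound $g_0^t(x,W)\ge\Omega(1)$, the curvature condition is exactly $\Ex{}{\eta\eta^\intercal\mid X}\succeq\Omega(1)$, and identifiability holds since $\mu_0(X,W)=\theta_0(X)$ is trivially independent of $\eta$ given $X$. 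Since $\psi$ is, up to sign, the $\theta$-gradient of the convex residual square loss $\frac{1}{2}(\tilde Y-\ldot{\theta}{\tilde T})^2$, the ORF estimate is the minimizer of a locally weighted convex loss, so \Cref{thm:high-prob-error} applies.

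Next I would establish the nuisance rate. Each block of $\nu_0(x)$ --- $\beta_0(x)$ (linear loss for $f_0$), each $\gamma_0^t(x)$ (logistic loss for $g_0^t$), and the coefficient vectors of the $m_0^t$ (linear loss) --- is $k$-sparse and $O(1)$-Lipschitz in $x$, and the hypothesis $\Ex{}{WW^\intercal\mid X}\succeq\Omega(1)I_{d_\nu}$ yields the conditional restricted-eigenvalue condition of \Cref{ass:nuisance} (for the logistic blocks with an extra $\Omega(1)$ factor from the bounded logistic index, as in the Forest-Logistic-Lasso example). \Cref{nuisance-main} then gives, with probability $1-\delta$ and $\lambda=\Theta\!\big(s^{-1/(2\alpha d)}+\sqrt{s\log(d_\nu/\delta)/n}\big)$, the bound $\|\hat\nu(x)-\nu_0(x)\|_1=O(k\lambda)$. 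Using $W\in[-1,1]^{d_\nu}$ and Lipschitzness of the logistic link, this converts to $|\hat g^t(x,W)-g_0^t(x,W)|\le\|\hat\gamma^t-\gamma_0^t\|_1$ and analogous bounds for $\hat f$ and $\hat m^t$; and since $q_0=\sum_t(m_0^t-f_0)g_0^t+f_0$ has all factors $O(1)$, the error of the composite estimate $\hat q$ is at most a constant times the sum of the block $\ell_1$ errors. Hence $\cE(\hat h)=O(k\lambda)=:\chi_{n,\delta}$, and integrating the tail of \Cref{nuisance-main} gives $\Ex{}{\cE(\hat h)^4}^{1/4}=O(k\lambda)=:\chi_{n,4}$ up to polylog factors.

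Then I would read off exponents. Plugging $\chi_{n,\delta}$ into \Cref{thm:high-prob-error} gives $\|\hat\theta-\theta_0\|=O\!\big(s^{-1/(2\alpha d)}+\sqrt{s\log(n/(s\delta))/n}+\chi_{n,\delta}^2\big)/(\sigma-O(\chi_{n,\delta}))$; choosing $s=\Theta(n^{\alpha d/(1+\alpha d)})$ balances the first two terms at $\Theta(n^{-1/(2+2\alpha d)})$, $\chi_{n,\delta}^2=O(k^2 n^{-1/(1+\alpha d)}\log(d_\nu/\delta))$ is then dominated once $k\le O(n^{1/(4+4\alpha d)})$, and the same (weaker) condition makes $\chi_{n,\delta}=o(1)$ so that $\sigma-O(\chi_{n,\delta})=\Theta(\sigma)$, yielding the stated $O(n^{-1/(2+2\alpha d)}\sqrt{\log(nd_\nu/\delta)})$ rate. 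For normality I would instead take $s=\Theta(n^{\epsilon+\alpha d/(1+\alpha d)})$, so $s=o(n)$ and the kernel-shrinkage bias obeys $s^{-1/(2\alpha d)}=o((s/n)^{1/2-\epsilon})$ as \Cref{thm:asym_norm} requires; the condition $\chi_{n,4}=o((s/n)^{1/4})$ reduces, after absorbing $\epsilon$ and polylog factors, to $k=o(n^{1/(4+4\alpha d)})$; $g(W;\nu)$ is $O(1)$-Lipschitz in $\nu$ with respect to the $\ell_1$ norm because $W$ and the parameter norms are bounded; and $\Var(V\mid X=x')$ is Lipschitz in $x'$ since the residualized score is bounded and Lipschitz. \Cref{thm:asym_norm} then yields $\sigma_n^{-1}\ldot{b}{\hat\theta-\theta_0}\rightarrow_d\normal(0,1)$ with $\sigma_n=\Theta(\sqrt{\polylog(n)\,n^{-1/(1+\alpha d)}})$.

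The main obstacle I anticipate is the nuisance-rate step: cleanly propagating the $\ell_1$ parameter-recovery guarantee of \Cref{nuisance-main} through the nonlinear logistic link and through the composite identity $q_0=\sum_t(m_0^t-f_0)g_0^t+f_0$ to an $O(k\lambda)$ bound on $\cE(\hat h)$, and verifying that the conditional restricted-eigenvalue hypothesis of \Cref{ass:nuisance} really does follow, uniformly over the relevant $\ell_1$-cone, from the stated conditional-covariance assumptions. Once $\chi_{n,\delta}$ and $\chi_{n,4}$ are pinned down, everything else is substitution into \Cref{thm:high-prob-error} and \Cref{thm:asym_norm} plus exponent bookkeeping.
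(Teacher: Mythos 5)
Your proposal follows exactly the route the paper intends: the appendix subsections you point to (local orthogonality, identifiability, smooth signal, curvature, smoothness of scores) serve precisely as the verification of \Cref{tech-conditions} for the residualized score, and the paper gives no further proof of the corollary beyond stating it --- it is meant to follow by plugging the Forest-Lasso / Forest-Logistic-Lasso first-stage rate from \Cref{nuisance-main} into \Cref{thm:high-prob-error} and \Cref{thm:asym_norm} and balancing exponents, which is what you do. Your exponent bookkeeping is correct (the $k\le O(n^{1/(4+4\alpha d)})$ threshold comes from requiring $(k\lambda)^2 \lesssim n^{-1/(2+2\alpha d)}$ with $\lambda\asymp n^{-1/(2+2\alpha d)}\sqrt{\log}$, and the $k^2$ rather than $k^4$ dependence relative to the continuous-treatment corollary reflects that $q_0$ here remains $O(k)$-sparse because $\mu_0$ depends only on $X$), and your handling of the logistic link and the composite $q_0 = \sum_t \theta^t g_0^t + f_0$ via bounded-index and bounded-$W$ arguments is the same reasoning the paper uses in its Forest Logistic Lasso example and the discrete-treatments paragraph of Section~\ref{sec:hte}. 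The one place you are slightly loose --- the normality sparsity condition actually yields $k=o(n^{1/(4+4\alpha d)-\epsilon/4})$ after inserting $s=\Theta(n^{\epsilon+\alpha d/(1+\alpha d)})$ --- is the same looseness as in the corollary statement, which treats $\epsilon$ and polylog factors as absorbable, so this does not constitute a gap.
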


\begin{corollary}[Accuracy for doubly robust scores]
  Suppose that $\beta_0(X)$ and each coorindate
  $u_0^j(X), \gamma_0^j(X)$ are Lipschitz in $X$ and have $\ell_1$
  norms bounded by $O(1)$ for any $X$. Assume that distribution of $X$
  admits a density that is bounded away from zero and infinity.  For
  any feature $X$, the conditional covariance matrices satisfy
  $\Ex{}{\eta \eta^\intercal \mid X}\succeq \Omega(1) $,
  $\Ex{}{W W^\intercal \mid X} \succeq \Omega(1)I_{d_\nu}$. Then with
  probability $1 - \delta$, ORF returns an estimator $\hat \theta$
  such that
  \[
    \|\hat \theta - \theta_0\| \leq O\left(n^{\frac{-1}{2 + 2\alpha d}}\sqrt{\log(n d_{\nu}/\delta)} \right)
  \]
  as long as the sparsity
  $k \leq O\left(n^{\frac{1}{4 + 4\alpha d}} \right)$ and the
  subsampling rate of ORF $s = \Theta(n^{\alpha d/(1 + \alpha d)})$.
  Moreover, for any $b \in \RR^p$ with $\|b\|\leq 1$, there exists a
  sequence $\sigma_n = \Theta(\sqrt{\polylog(n) n^{-1/(1+\alpha d)}})$
  such that
  \[
    \sigma_n^{-1} \ldot{b}{\hat\theta -\theta} \rightarrow_d \mathcal{N}(0, 1),
  \]
  as long as the sparsity
  $k=o\left(n^{\frac{1}{4 + 4\alpha d}} \right)$ and the subsampling
  rate of ORF $s = \Theta(n^{\epsilon + \alpha d/(1 + \alpha d)})$ for
  any $\epsilon>0$.
\end{corollary}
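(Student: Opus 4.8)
The plan is to derive both claims as instances of the general guarantees already established: the high-probability error bound \Cref{thm:high-prob-error} for the first part, and the asymptotic normality \Cref{thm:asym_norm} for the second. To invoke them I must (a) check that the doubly robust score $\psi^t$ of \Cref{sec:hte} satisfies \Cref{tech-conditions}, (b) feed in a nuisance-estimation rate $\chi_{n,\delta}$ (and its $L^4$ analogue $\chi_{n,4}$) obtained from the Forest Lasso guarantee \Cref{nuisance-main}, and (c) choose the subsample size $s$ so that the three error contributions balance at the claimed rate. Items (a)--(b) are where the real work lies; (c) is exponent bookkeeping.

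For (a), local orthogonality and identifiability are the two lemmas proved just above for $\psi^t$; the Jacobian is $\nabla_\theta m = -I$, so the curvature constant is $\sigma=1$; smoothness of the signal/scores and boundedness of $\psi$ follow from the computations in the preceding subsections once $\Theta,H$ are restricted so that every index $\ldot{W}{\gamma^t}$ is $O(1)$ (hence $g^t\ge\Omega(1)$ and $1/g^t$ is smooth and bounded) and $\|b^t\|_1,\|\gamma^t\|_1=O(1)$; moreover $\psi^t=\nabla_{\theta^t}\bp{\tfrac12\|\theta\|^2-\ldot{v(Z;h)}{\theta}}$ is the gradient of a convex loss, as \Cref{thm:high-prob-error} requires. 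For (b), the nuisance is locally parametric, $h(x,W)=g(W;\nu(x))$, with $m_0^t(x,W)=\ldot{W}{u_0^t(x)+\beta_0(x)}$ a $2k$-sparse linear function (fit by forest lasso on the $\{T=t\}$ subsample) and $g_0^t(x,W)=\cL(\ldot{W}{\gamma_0^t(x)})$ a $k$-sparse logistic index (fit by forest logistic lasso). Since $W\in[-1,1]^{d_\nu}$ and $\cL$ is $1$-Lipschitz, each coordinate of $\hat h(x,\cdot)-h_0(x,\cdot)$ is bounded by the $\ell_1$-error of the corresponding sub-parameter, so $\cE(\hat h)=O(\|\hat\nu(x)-\nu_0(x)\|_1)$. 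By the two examples after \Cref{nuisance-main}, the conditional restricted-eigenvalue hypothesis for both losses is exactly the conditional-covariance condition $\Ex{}{WW^\intercal\mid X}\succeq\Omega(1)I_{d_\nu}$ (times $\rho=\min\cL(1-\cL)>0$ in the logistic case), which is assumed, and restricting to $\{T=t\}$ only rescales the effective sample size by an $\Omega(1)$ assignment probability. Hence \Cref{nuisance-main} with $\lambda=\Theta(s^{-1/(2\alpha d)}+\sqrt{s\ln(d_\nu/\delta)/n})$ gives, w.p.\ $1-\delta$,
\[
\chi_{n,\delta}:=\cE(\hat h)=O\!\left(k\Big(s^{-1/(2\alpha d)}+\sqrt{\tfrac{s\ln(d_\nu/\delta)}{n}}\Big)\right),
\]
valid provided $k\sqrt{s\ln(d_\nu/\delta)/n}=o(1)$ so the denominator $\gamma-32k\sqrt{\cdot}$ stays $\Omega(1)$; integrating this tail bound gives the same estimate (up to $\polylog(n)$) for $\chi_{n,4}=\E[\|\hat\nu(x)-\nu_0(x)\|_1^4]^{1/4}$.

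For (c), take $s=\Theta(n^{\alpha d/(1+\alpha d)})$, which equalizes $s^{-1/(2\alpha d)}=\sqrt{s/n}=\Theta(n^{-1/(2+2\alpha d)})$; under $k\le O(n^{1/(4+4\alpha d)})$ this makes $\chi_{n,\delta}=o(1)$ and $\chi_{n,\delta}^2=O(k^2 n^{-1/(1+\alpha d)}\log)=O(n^{-1/(2+2\alpha d)}\log)$, dominated by the leading terms, so \Cref{thm:high-prob-error} with $\sigma=1$ and $B\ge n/s$ yields $\|\hat\theta-\theta_0\|=O(n^{-1/(2+2\alpha d)}\sqrt{\log(nd_\nu/\delta)})$ w.p.\ $1-2\delta$. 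For normality, enlarge to $s=\Theta(n^{\epsilon+\alpha d/(1+\alpha d)})$: then $s^{-1/(2\alpha d)}=o((s/n)^{1/2-\epsilon})$ and, under $k=o(n^{1/(4+4\alpha d)})$, $\chi_{n,4}=o((s/n)^{1/4})$, exactly as \Cref{thm:asym_norm} demands; the moments are locally parametric with $g$ Lipschitz in $\nu$ w.r.t.\ $\ell_1$ ($r=1$); and $\Var(V\mid X=x')$ for $V=\psi(Z;\theta_0(x),g(W;\nu_0(x)))$ is positive (genuine outcome/treatment randomness in the $\mathbf{1}[T=t]/g^t$ term, via $\Ex{}{\eta\eta^\intercal\mid X}\succeq\Omega(1)I$) and Lipschitz in $x'$ (boundedness of $1/g_0^t$ and Lipschitzness of $u_0^t,\beta_0,\gamma_0^t$ in $x$). \Cref{thm:asym_norm} then gives $\sigma_n^{-1}\ldot{b}{\hat\theta-\theta}\to_d\normal(0,1)$ with $\sigma_n=\Theta(\sqrt{\polylog(n)\,s/n})=\Theta(\sqrt{\polylog(n)\,n^{-1/(1+\alpha d)}})$ up to the harmless $n^{\epsilon}$ factor. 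The companion residualized-score corollary (under $\mu_0(X,W)=\theta_0(X)$) follows by the identical route, with $M=\Ex{}{\eta\eta^\intercal\mid X}$ replacing $I$ and $q_0$ replacing the $m_0^t$'s.

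\textbf{Main obstacle.} The crux is step (b): composing the locally-sparse Forest Lasso bounds for the two \emph{composite} nuisance pieces---$m_0^t$ assembled from $u_0^t+\beta_0$ and fit only on the $\{T=t\}$ subsample, and the logistic index $g_0^t$---into one clean bound on $\cE(\hat h)$, while simultaneously keeping $\hat g^t$ bounded away from zero so the doubly robust score stays Lipschitz and bounded. This is precisely what forces the parameter-space restrictions and the conditional-covariance hypotheses; everything downstream is a matter of matching exponents.
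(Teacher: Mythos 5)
Your proposal is correct and follows essentially the same route the paper implicitly takes for this corollary: verify that the doubly robust moment satisfies Assumption~\ref{tech-conditions} (which the appendix does piece-by-piece—local orthogonality, identifiability, smooth signal, curvature $\nabla_\theta m=-I$, smoothness of scores under the $\ell_1$-norm restrictions that keep $g^t\ge\Omega(1)$), derive $\chi_{n,\delta}$ and $\chi_{n,4}$ from the Forest Lasso / Forest Logistic Lasso guarantee (Theorem~\ref{nuisance-main}) under the assumed conditional-covariance condition, and then plug these into Theorems~\ref{thm:high-prob-error} and \ref{thm:asym_norm} with $s$ chosen so that $s^{-1/(2\alpha d)}$, $\sqrt{s\log/n}$, and $\chi_n^2$ balance. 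Your exponent accounting is right, including the sparsity threshold $k\le O(n^{1/(4+4\alpha d)})$ rather than the $n^{1/(8+8\alpha d)}$ of the real-valued case (since $m_0^t$ here is $O(k)$-sparse rather than $k^2$-sparse via a product), and the observation that the $\{T=t\}$ subsampling only costs an $\Omega(1)$ factor by the overlap assumption.
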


\newpage

\section{Orange Juice Experiment}
\label{sec:oj}

Dominick's orange juice dataset (provided by the University of Chicago Booth School of Business) contains 28,947 entries of store-level, weekly prices and sales of different brands of orange juice. The dataset also contains 15 continuous and categorical variables that encode store-level customer information such as the mean age, income, education level, etc, as well as brand information. The goal is to learn the elasticity of orange juice as a function of income (or education, etc) in the presence of high-dimensional controls. 

In the experiment depicted in Figure~\ref{fig:oj}, we trained the ORF using $500$ trees, a minimum leaf size of $50$, subsample ratio of $0.02$, with Lasso models for both residualization and kernel estimation. We evaluated the resulting algorithm on $50$ $\log(Income)$ points between $10.4$ and $10.9$. We then followed-up with 100 experiments on bootstrap samples of the original dataset to build bootstrap confidence intervals. The emerging trend in the elasticity as a function of income follows our intuition: higher income levels correspond to a more inelastic demand.

\section{All Experimental Results}
\label{sec:allexp}

We present all experimental results for the parameter choices
described in Section \ref{sec:mcexp}. We vary the support size $k\in\{1,5,10,15,20,25,30\}$,
the dimension $d\in\{1,2\}$ of the feature vector $x$ and the
treatment response function
$\theta \in\{\text{piecewise linear, piecewise constant and piecewise
  polynomial}\}$. We measure the bias, variance and root mean square
error (RMSE) as evaluation metrics for the different estimators we
considered in Section \ref{sec:mcexp}. In addition, we add another version of the GRF (GRF-xW) where we run the GRF R package directly on the observations, using features and controls $(x, W)$ jointly as the covariates. For the parameter space we consider, the ORF-CV and the ORF
algorithms outperform the other estimators on all regimes.

\subsection{Experimental results for one-dimensional, piecewise linear $\theta_0$}
Consider a piecewise linear function: $
\theta_0(x) = (x+2)\mathbb{I}_{x\leq 0.3} + (6x+0.5)\mathbb{I}_{x>0.3 \text{ and } x \leq 0.6} + (-3x+5.9)\mathbb{I}_{x> 0.6}$.

\begin{figure}[H]
	\begin{minipage}[c]{0.75\textwidth}
		\includegraphics[width=\textwidth]{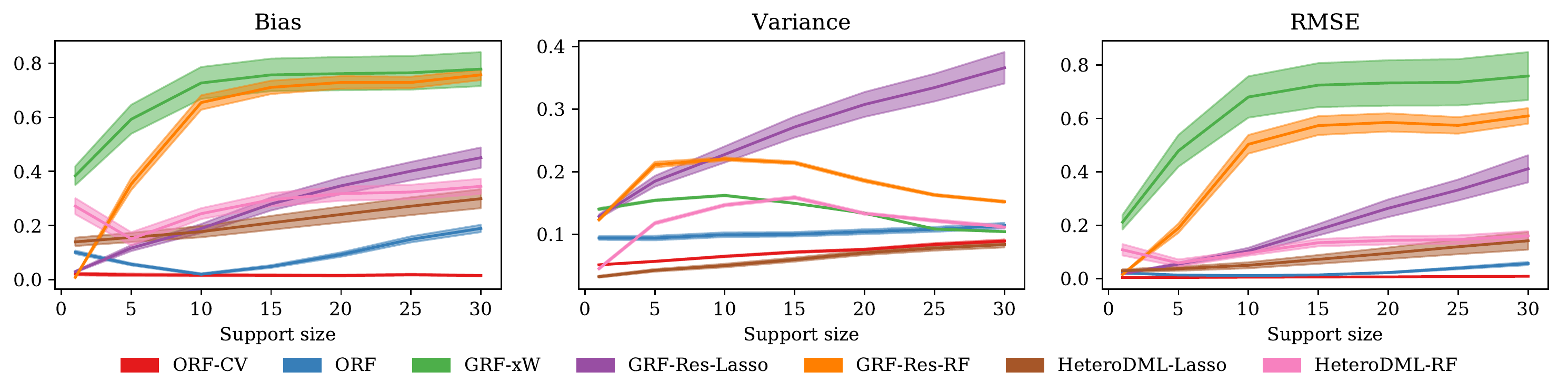}
	\end{minipage}\hfill
	\begin{minipage}[c]{0.23\textwidth}
		\caption{Bias, variance and RMSE as a function of support size for $n=5000$, $p=500$, $d=1$ and $\theta_0$. The solid lines represent the mean of the metrics across test points, averaged over the 100 experiments, and the filled regions depict the standard deviation, scaled down by a factor of 3 for clarity.}
	\end{minipage}
\end{figure}
\begin{figure}[H]
	\begin{minipage}[c]{0.75\textwidth}
		\includegraphics[width=\textwidth]{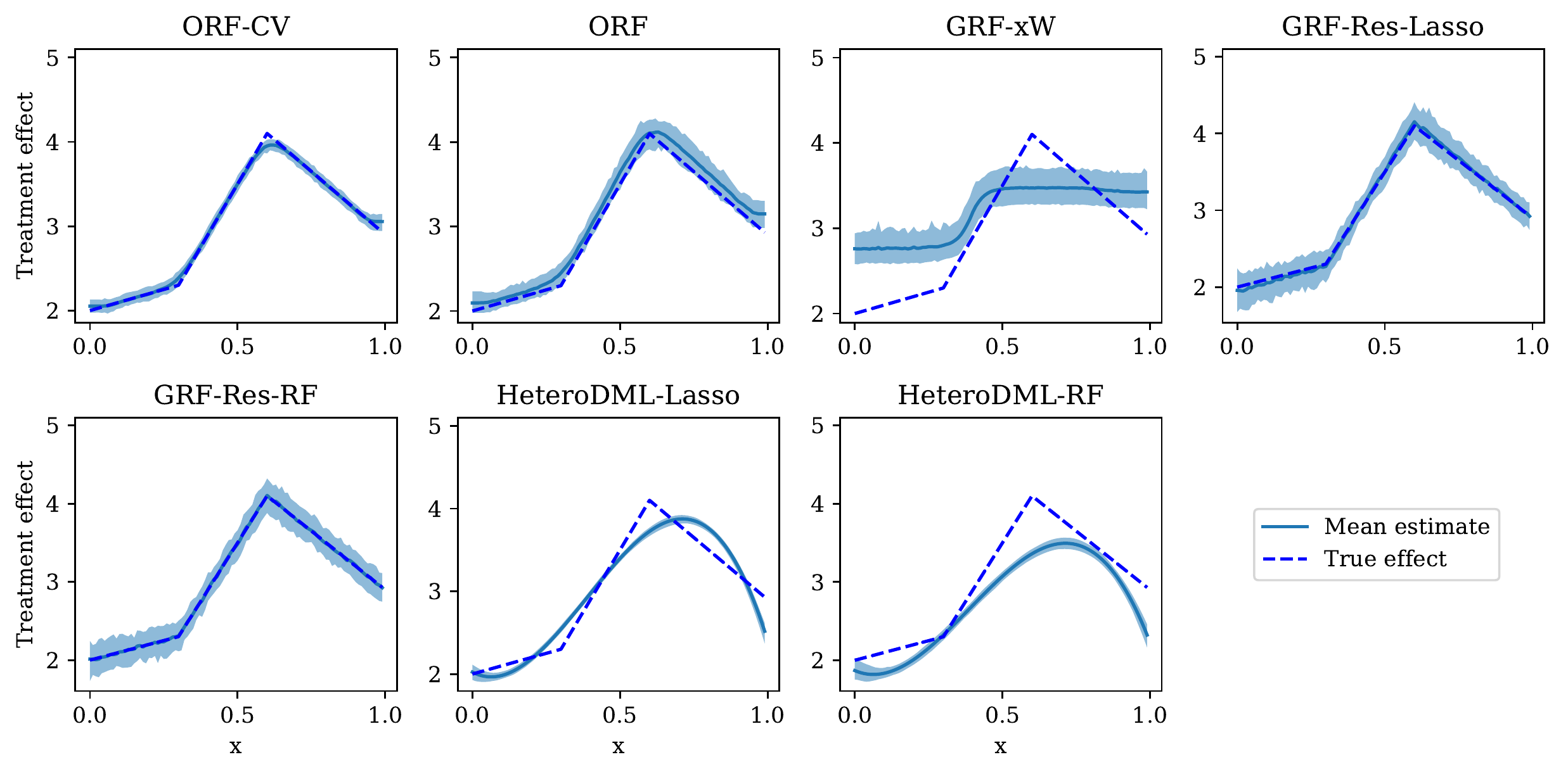}
	\end{minipage}\hfill
	\begin{minipage}[c]{0.23\textwidth}
		\caption{
		Treatment effect estimations for 100 Monte Carlo experiments with parameters $n=5000$, $p=500$, $d=1$, $\mathbf{k=1}$, and a piecewise linear treatment response. The shaded regions depict the mean and the $5\%$-$95\%$ interval of the 100 experiments.
		} 
	\end{minipage}
\end{figure}
\begin{figure}[H]
	\begin{minipage}[c]{0.75\textwidth}
		\includegraphics[width=\textwidth]{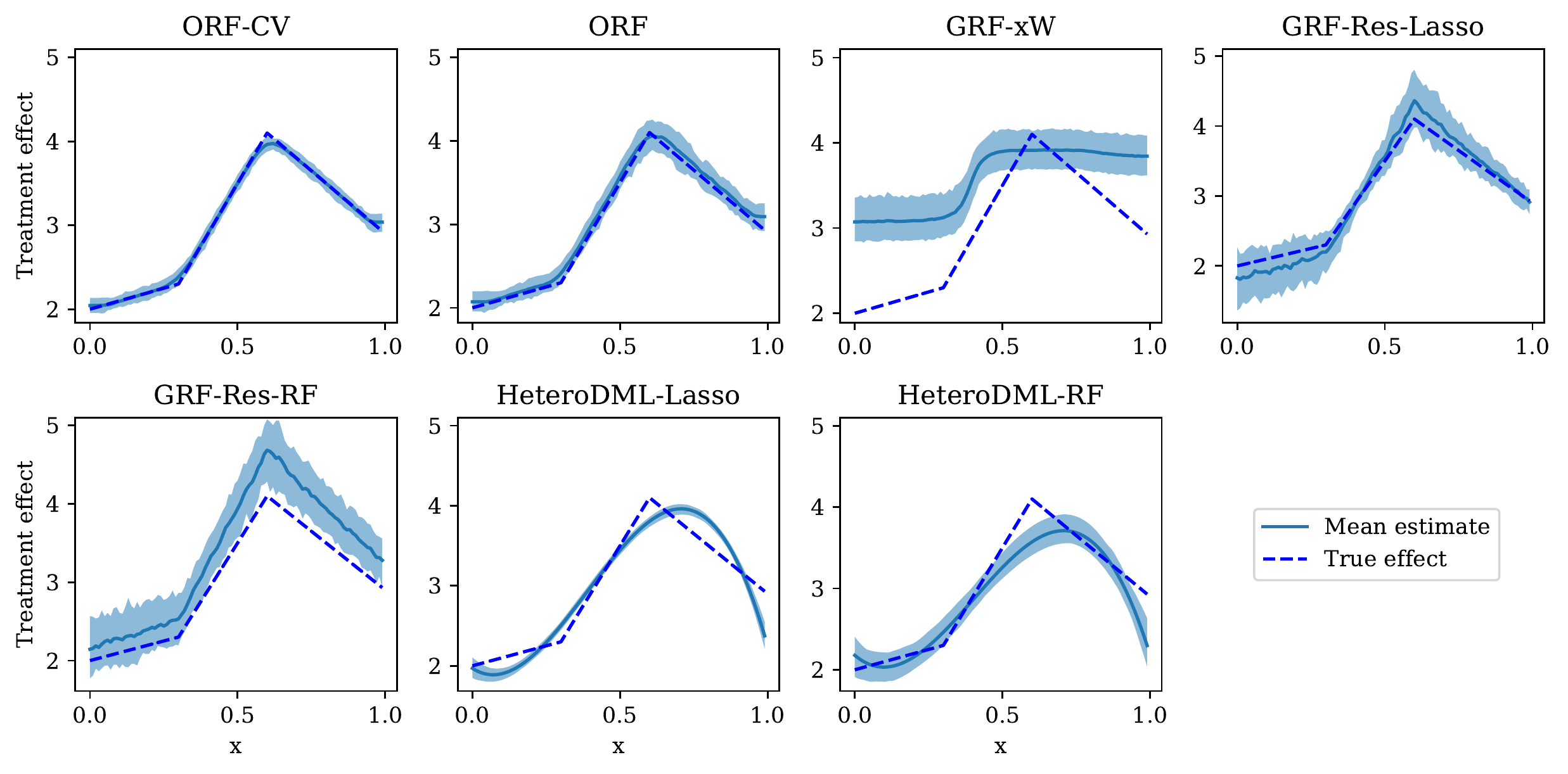}
	\end{minipage}\hfill
	\begin{minipage}[c]{0.23\textwidth}
		\caption{
			Treatment effect estimations for 100 Monte Carlo experiments with parameters $n=5000$, $p=500$, $d=1$, $\mathbf{k=5}$, and a piecewise linear treatment response. The shaded regions depict the mean and the $5\%$-$95\%$ interval of the 100 experiments.
		} 
	\end{minipage}
\end{figure}
\begin{figure}[H]
	\begin{minipage}[c]{0.75\textwidth}
		\includegraphics[width=\textwidth]{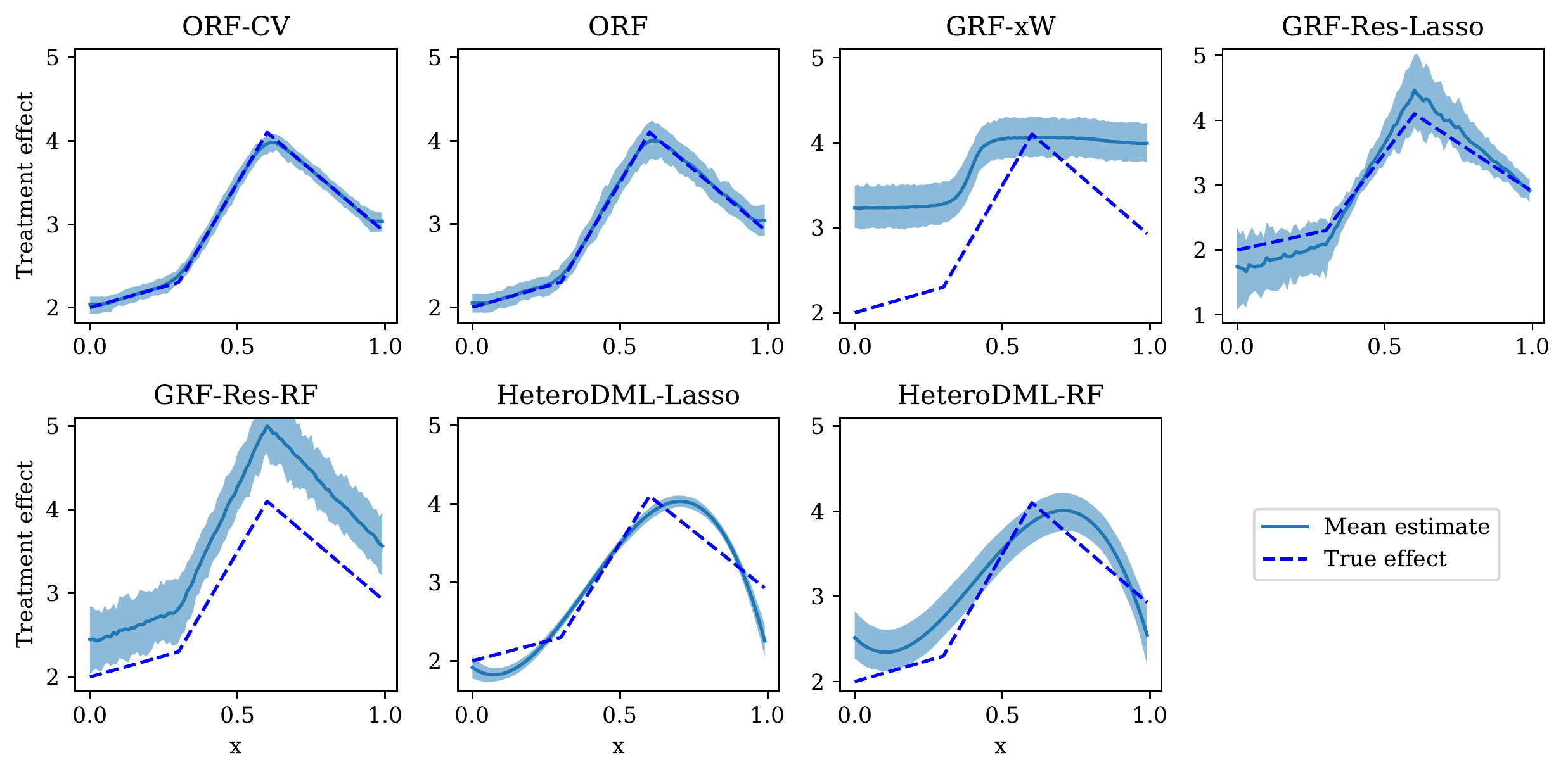}
	\end{minipage}\hfill
	\begin{minipage}[c]{0.23\textwidth}
		\caption{
		Treatment effect estimations for 100 Monte Carlo experiments with parameters $n=5000$, $p=500$, $d=1$, $\mathbf{k=10}$, and a piecewise linear treatment response. The shaded regions depict the mean and the $5\%$-$95\%$ interval of the 100 experiments.
		} 
	\end{minipage}
\end{figure}
\begin{figure}[H]
	\begin{minipage}[c]{0.75\textwidth}
		\includegraphics[width=\textwidth]{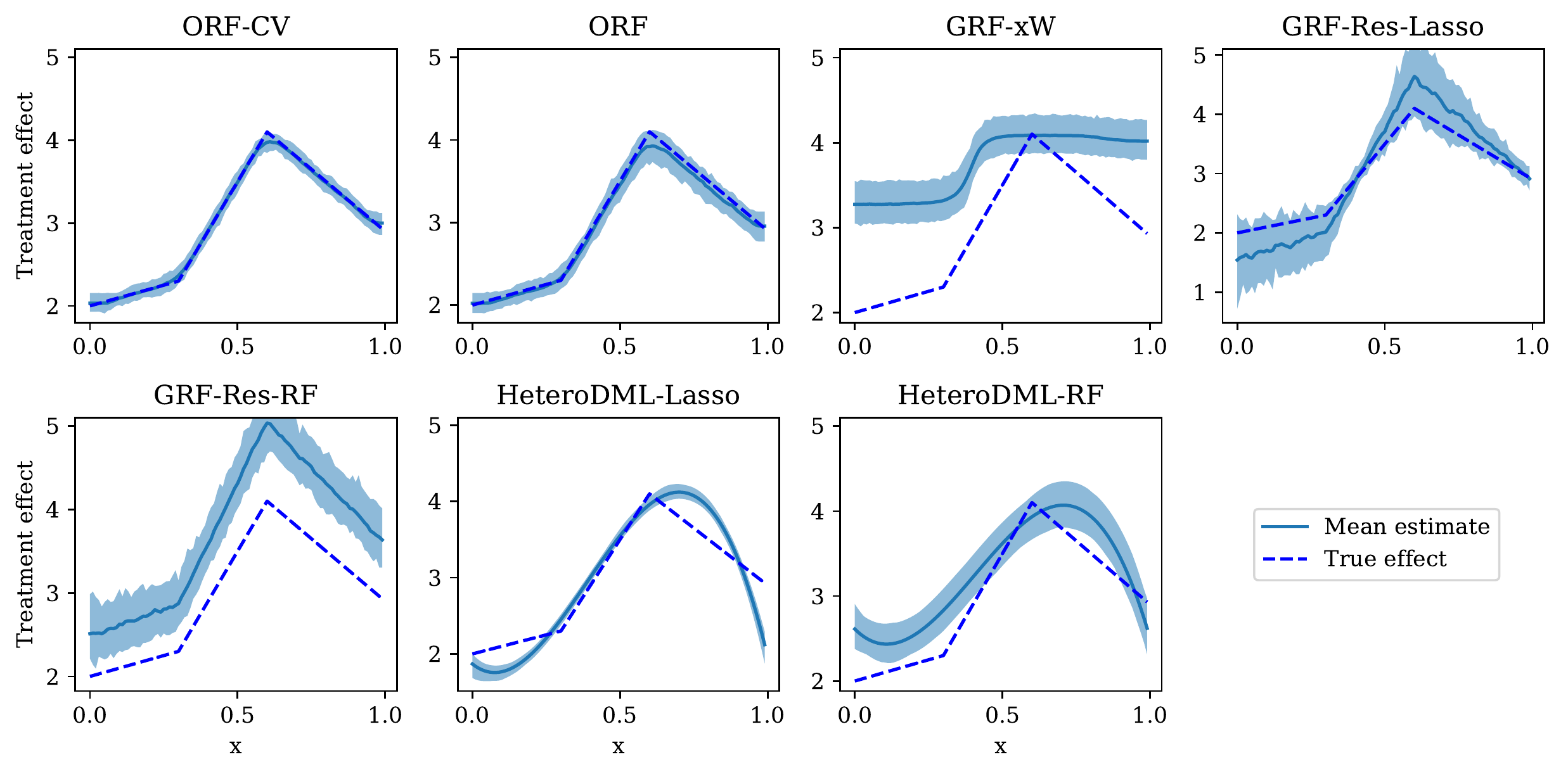}
	\end{minipage}\hfill
	\begin{minipage}[c]{0.23\textwidth}
		\caption{
			Treatment effect estimations for 100 Monte Carlo experiments with parameters $n=5000$, $p=500$, $d=1$, $\mathbf{k=15}$, and a piecewise linear treatment response. The shaded regions depict the mean and the $5\%$-$95\%$ interval of the 100 experiments.
		} 
	\end{minipage}
\end{figure}
\begin{figure}[H]
	\begin{minipage}[c]{0.75\textwidth}
		\includegraphics[width=\textwidth]{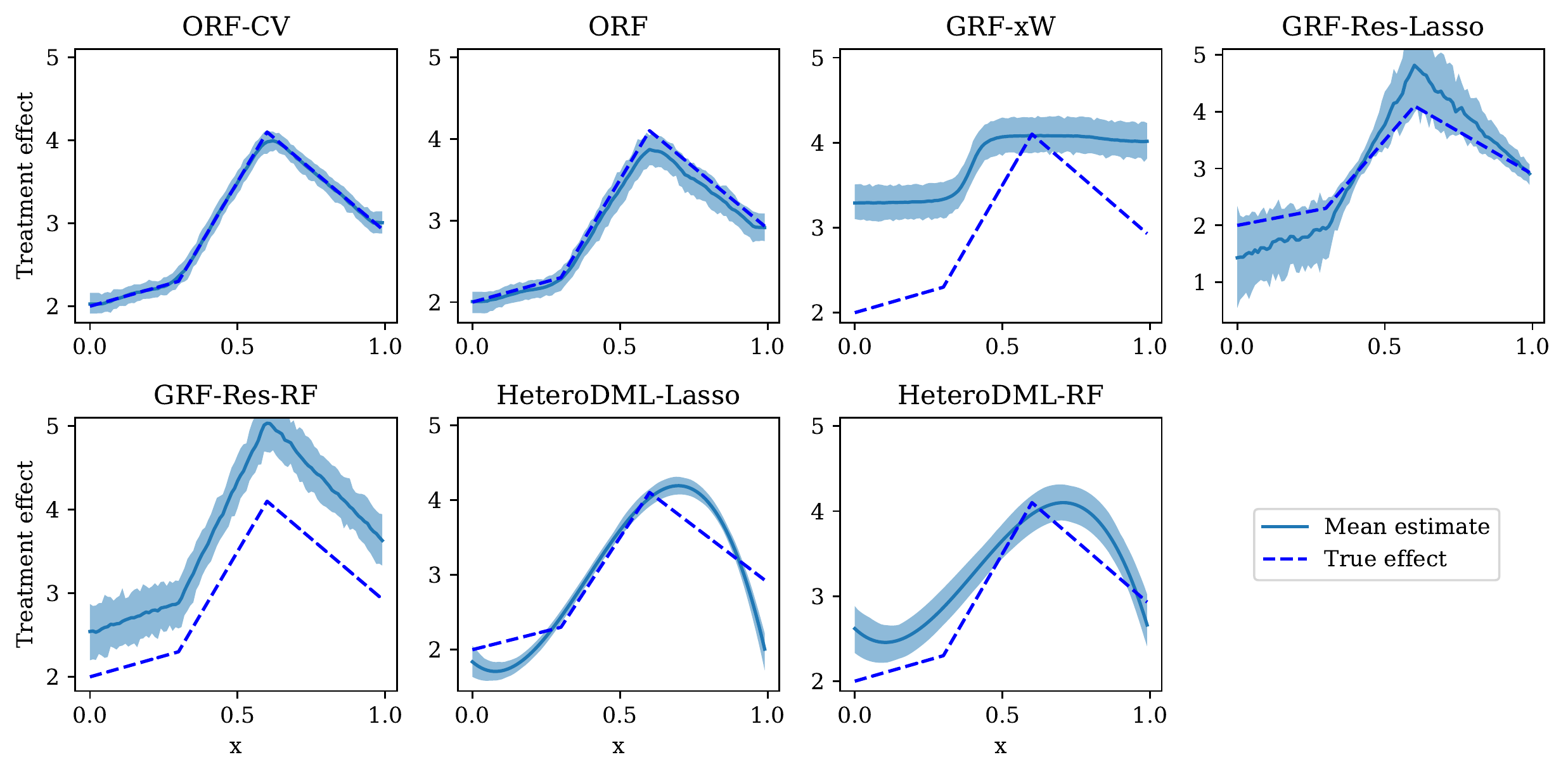}
	\end{minipage}\hfill
	\begin{minipage}[c]{0.23\textwidth}
		\caption{
			Treatment effect estimations for 100 Monte Carlo experiments with parameters $n=5000$, $p=500$, $d=1$, $\mathbf{k=20}$, and a piecewise linear treatment response. The shaded regions depict the mean and the $5\%$-$95\%$ interval of the 100 experiments.
		} 
	\end{minipage}
\end{figure}
\begin{figure}[H]
	\begin{minipage}[c]{0.75\textwidth}
		\includegraphics[width=\textwidth]{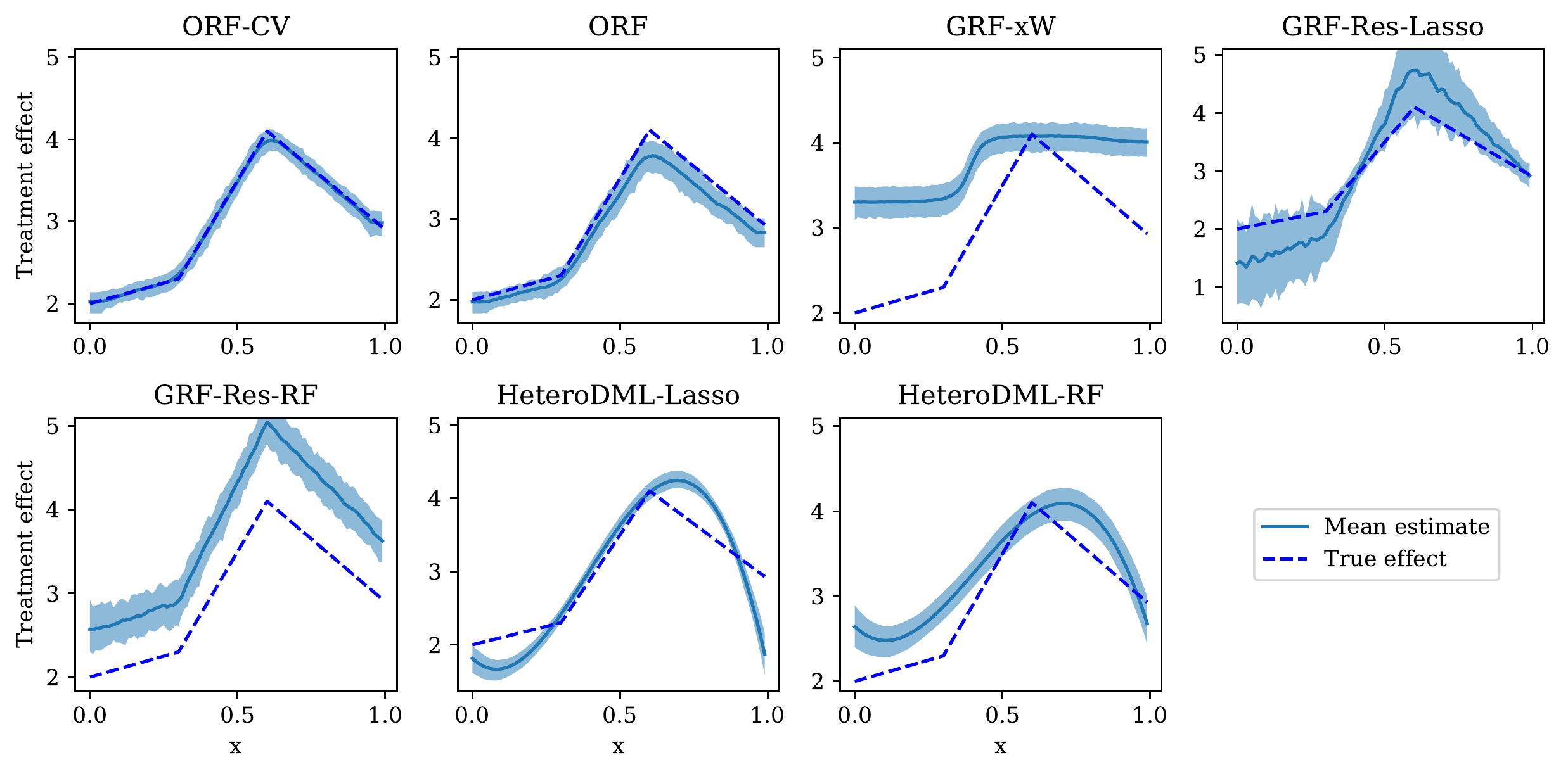}
	\end{minipage}\hfill
	\begin{minipage}[c]{0.23\textwidth}
		\caption{
			Treatment effect estimations for 100 Monte Carlo experiments with parameters $n=5000$, $p=500$, $d=1$, $\mathbf{k=25}$, and a piecewise linear treatment response. The shaded regions depict the mean and the $5\%$-$95\%$ interval of the 100 experiments.
		} 
	\end{minipage}
\end{figure}
\begin{figure}[H]
		\begin{minipage}[c]{0.75\textwidth}
			\includegraphics[width=\textwidth]{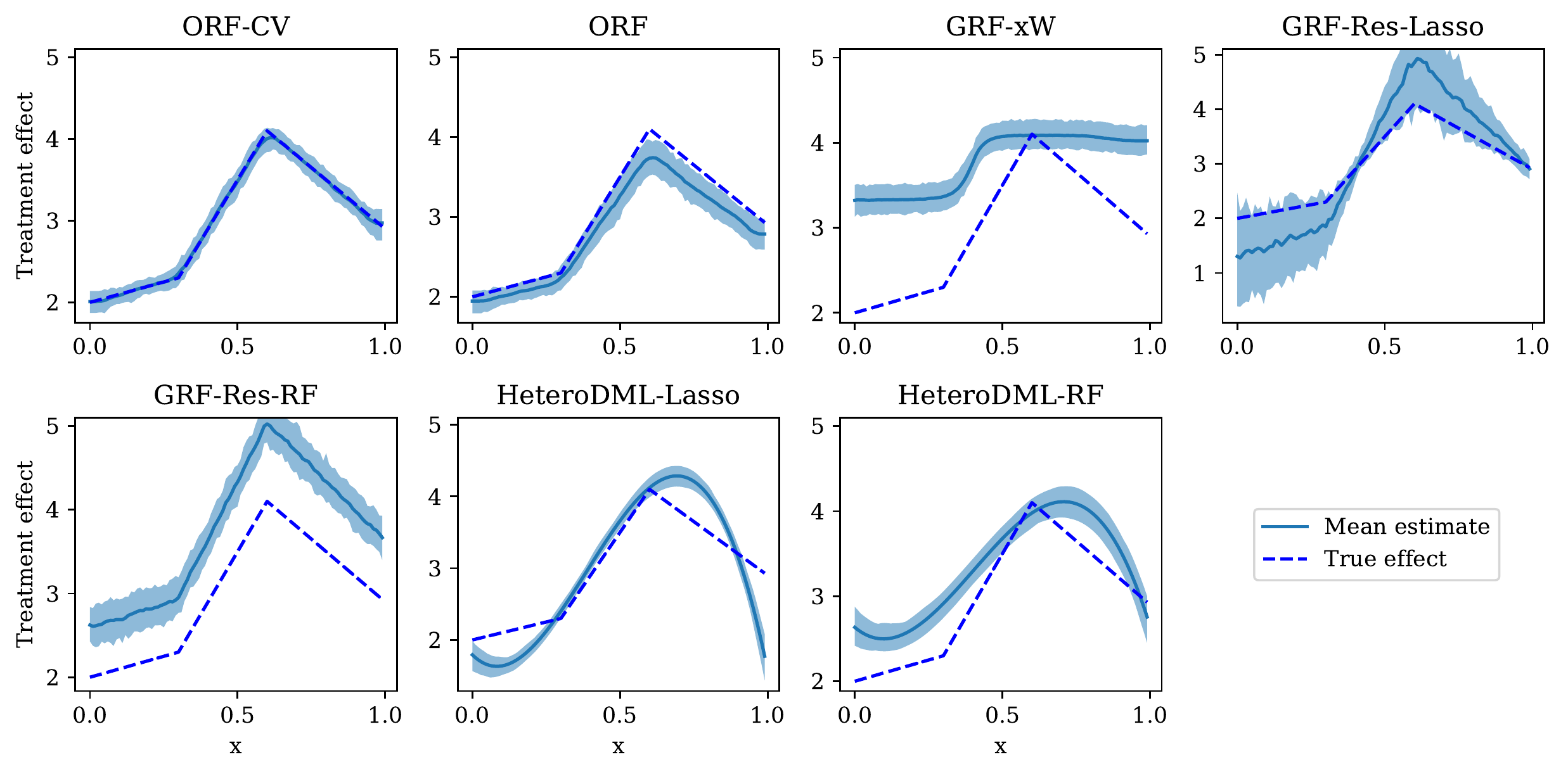}
		\end{minipage}\hfill
		\begin{minipage}[c]{0.23\textwidth}
			\caption{
				Treatment effect estimations for 100 Monte Carlo experiments with parameters $n=5000$, $p=500$, $d=1$, $\mathbf{k=30}$, and a piecewise linear treatment response. The shaded regions depict the mean and the $5\%$-$95\%$ interval of the 100 experiments.
			} 
		\end{minipage}
	\end{figure}

\pagebreak
\subsection{Experimental results for one-dimensional, piecewise constant $\theta_0$}

We introduce the results for a piecewise constant function given by:
\begin{align*}
\theta_0(x) &= \mathbb{I}_{x\leq 0.2} + 5\mathbb{I}_{x>0.2 \text{ and } x \leq 0.6} + 3\mathbb{I}_{x> 0.6}
\end{align*}

\begin{figure}[H]
	\centering
	\includegraphics[scale=.50]{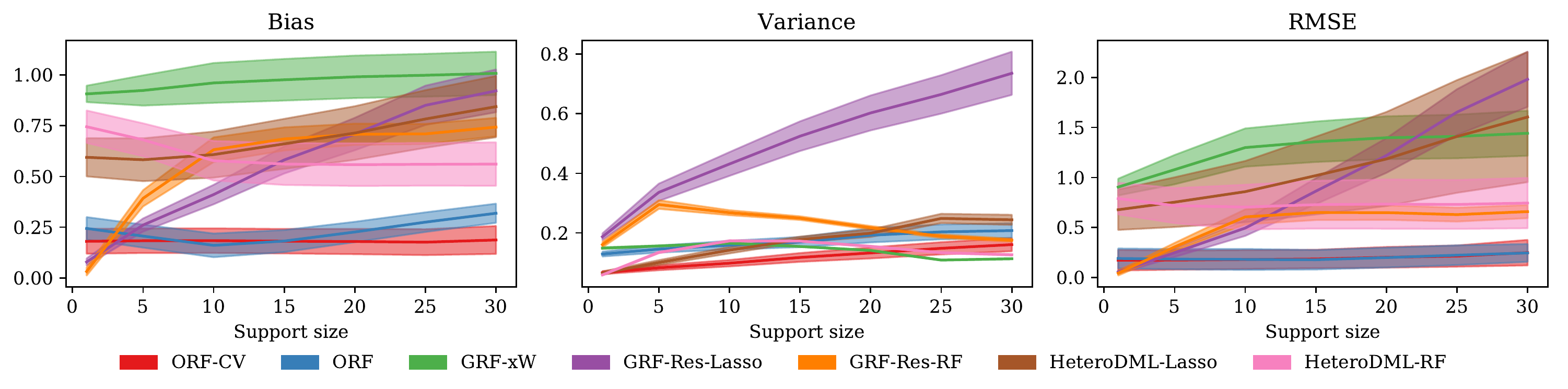}
	\caption{Bias, variance and RMSE as a function of support size for $n=5000$, $p=500$, $d=1$ and a piecewise constant treatment function. The solid lines represent the mean of the metrics across test points, averaged over the Monte Carlo experiments, and the filled regions depict the standard deviation, scaled down by a factor of 3 for clarity.}
\end{figure}
\begin{figure}[H]
	\begin{minipage}[c]{0.75\textwidth}
		\includegraphics[width=\textwidth]{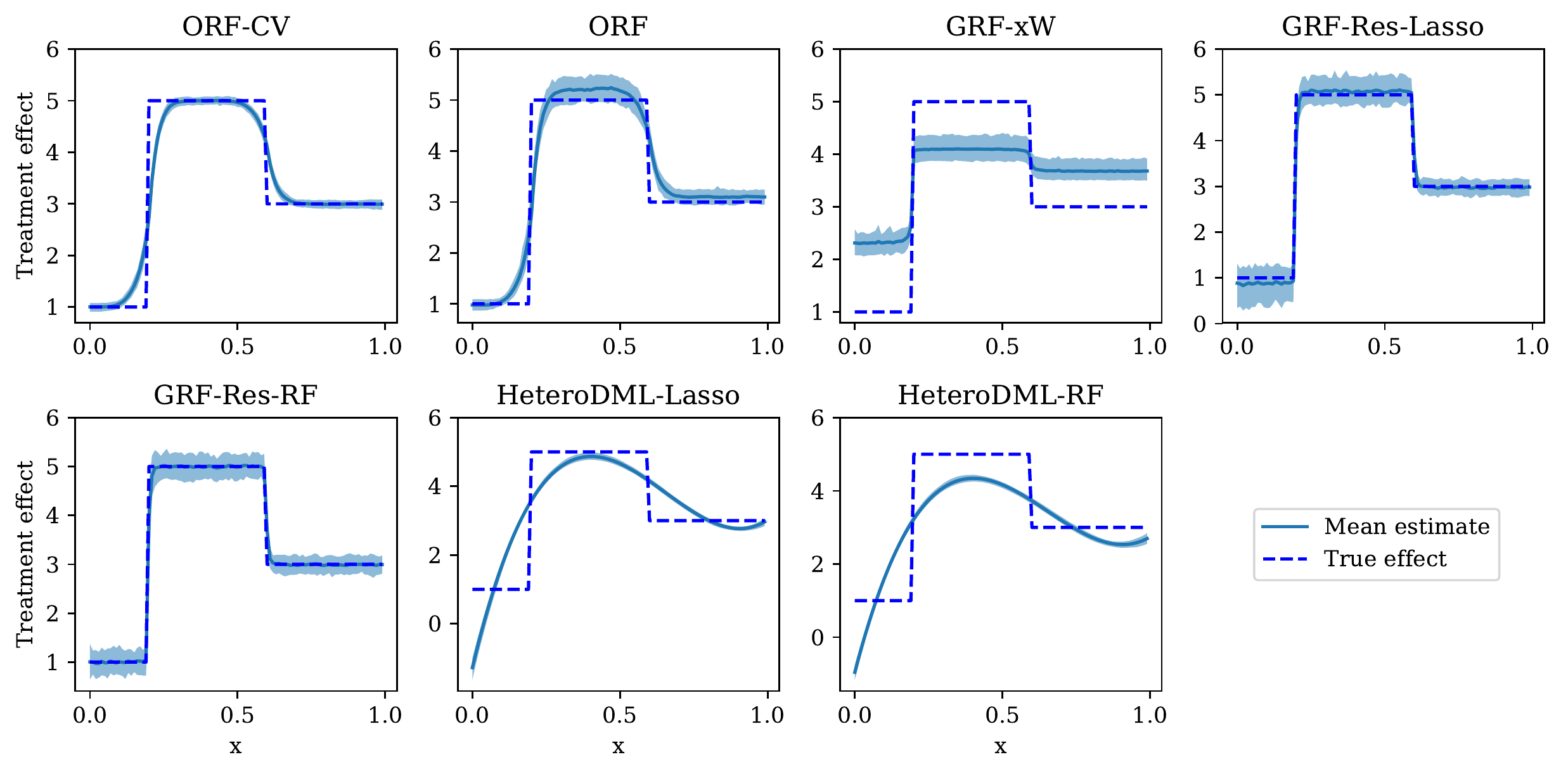}
	\end{minipage}\hfill
	\begin{minipage}[c]{0.23\textwidth}
		\caption{
			Treatment effect estimations for 100 Monte Carlo experiments with parameters $n=5000$, $p=500$, $d=1$, $\mathbf{k=1}$, and a piecewise constant treatment response. The shaded regions depict the mean and the $5\%$-$95\%$ interval of the 100 experiments.
		} 
	\end{minipage}
\end{figure}
\begin{figure}[H]
	\begin{minipage}[c]{0.75\textwidth}
		\includegraphics[width=\textwidth]{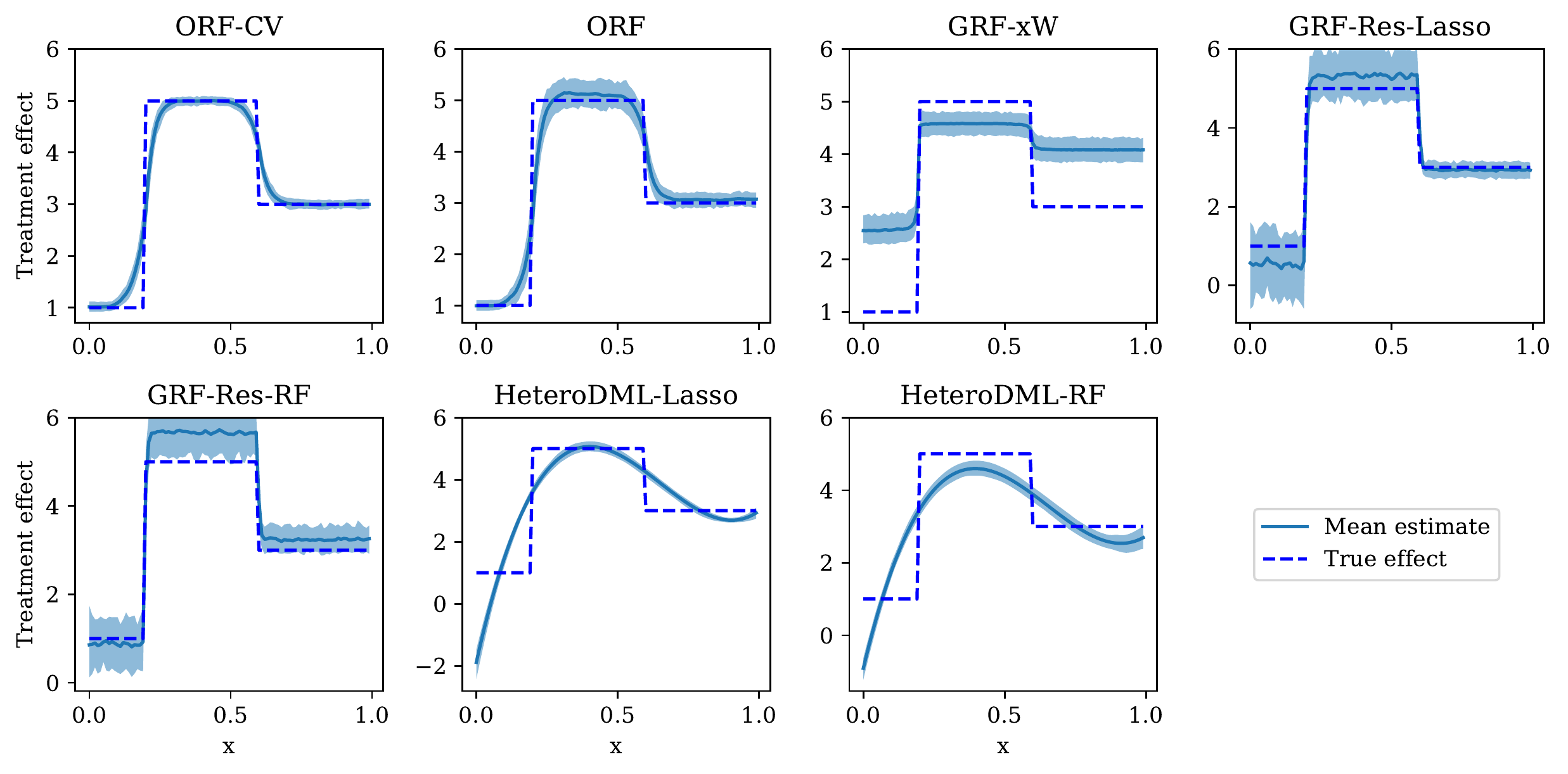}
	\end{minipage}\hfill
	\begin{minipage}[c]{0.23\textwidth}
		\caption{
			Treatment effect estimations for 100 Monte Carlo experiments with parameters $n=5000$, $p=500$, $d=1$, $\mathbf{k=5}$, and a piecewise constant treatment response. The shaded regions depict the mean and the $5\%$-$95\%$ interval of the 100 experiments.
		} 
	\end{minipage}
\end{figure}

\pagebreak
\begin{figure}[H]
	\begin{minipage}[c]{0.75\textwidth}
		\includegraphics[width=\textwidth]{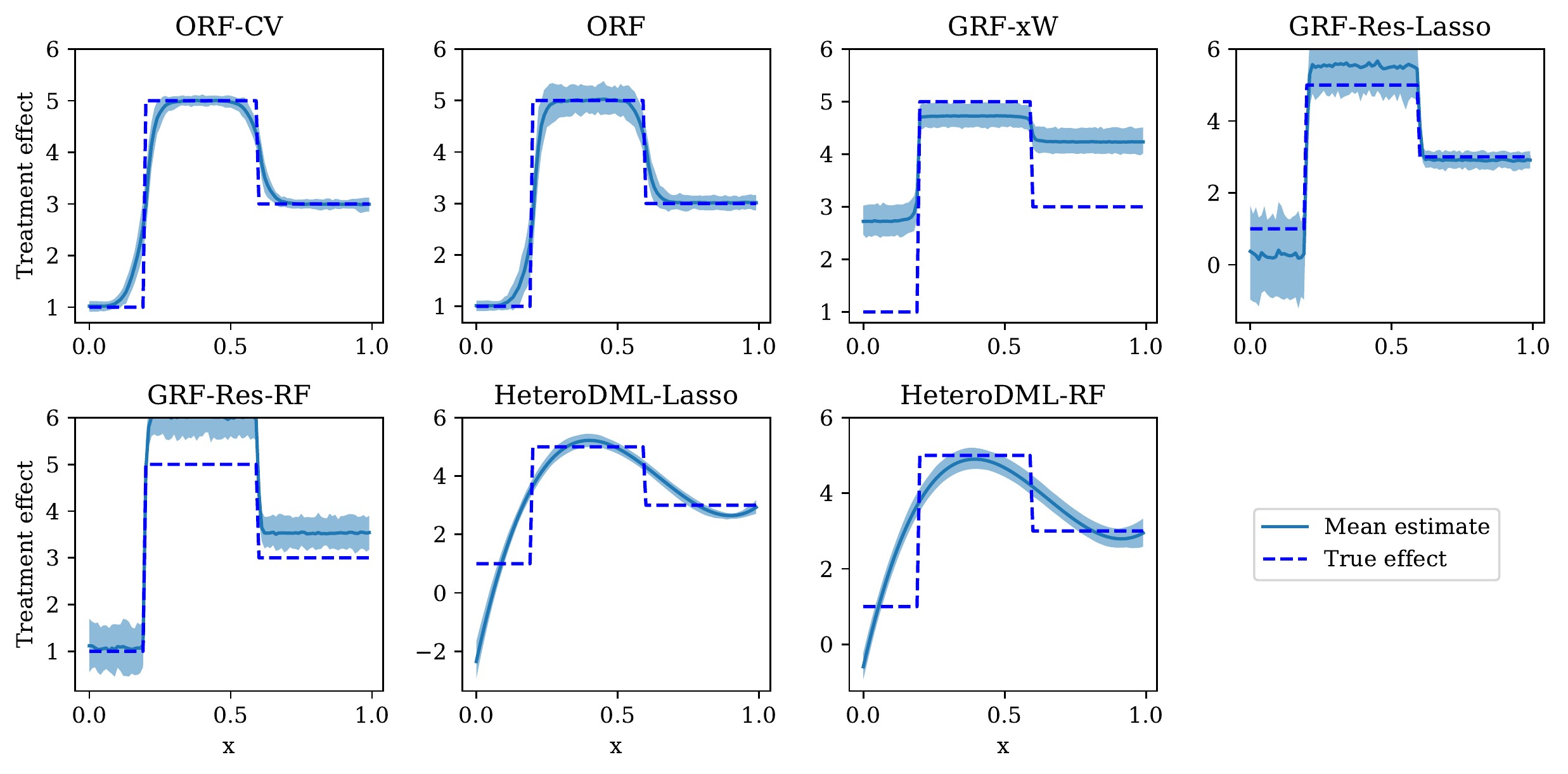}
	\end{minipage}\hfill
	\begin{minipage}[c]{0.23\textwidth}
		\caption{
			Treatment effect estimations for 100 Monte Carlo experiments with parameters $n=5000$, $p=500$, $d=1$, $\mathbf{k=10}$, and a piecewise constant treatment response. The shaded regions depict the mean and the $5\%$-$95\%$ interval of the 100 experiments.
		} 
	\end{minipage}
\end{figure}
\begin{figure}[H]
	\begin{minipage}[c]{0.75\textwidth}
		\includegraphics[width=\textwidth]{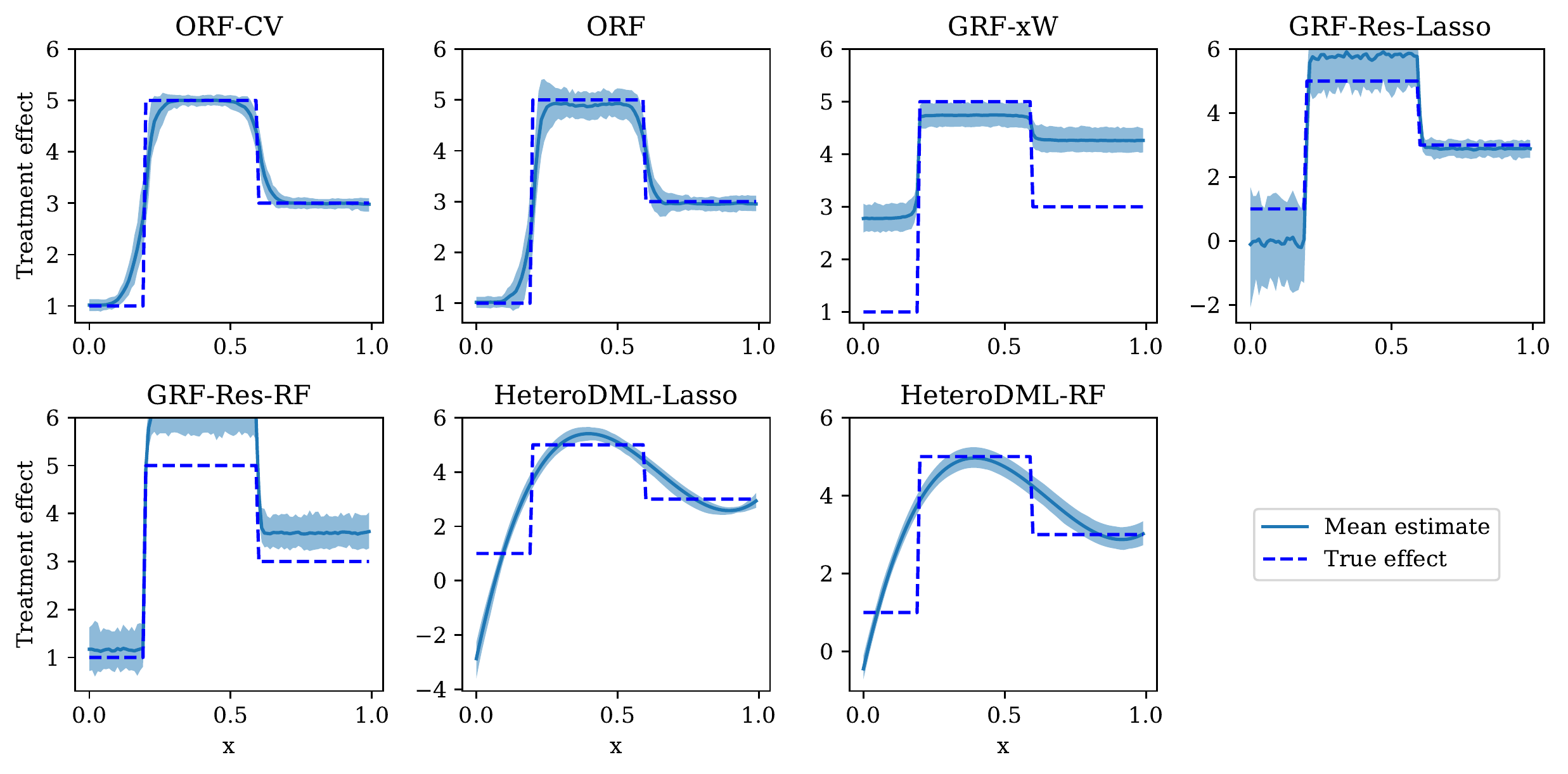}
	\end{minipage}\hfill
	\begin{minipage}[c]{0.23\textwidth}
		\caption{
			Treatment effect estimations for 100 Monte Carlo experiments with parameters $n=5000$, $p=500$, $d=1$, $\mathbf{k=15}$, and a piecewise constant treatment response. The shaded regions depict the mean and the $5\%$-$95\%$ interval of the 100 experiments.
		} 
	\end{minipage}
\end{figure}
\begin{figure}[H]
	\begin{minipage}[c]{0.75\textwidth}
		\includegraphics[width=\textwidth]{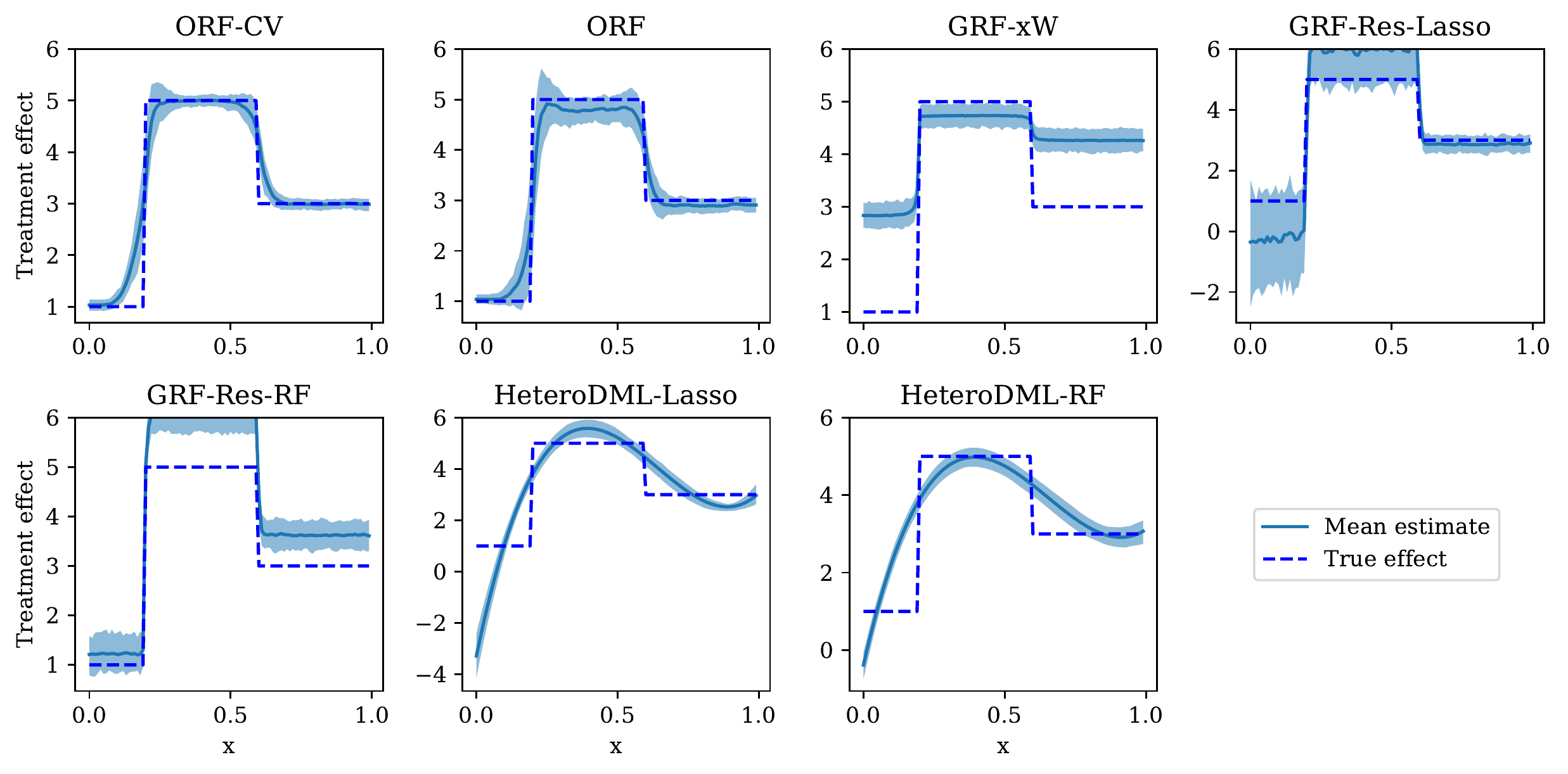}
	\end{minipage}\hfill
	\begin{minipage}[c]{0.23\textwidth}
		\caption{
			Treatment effect estimations for 100 Monte Carlo experiments with parameters $n=5000$, $p=500$, $d=1$, $\mathbf{k=20}$, and a piecewise constant treatment response. The shaded regions depict the mean and the $5\%$-$95\%$ interval of the 100 experiments.
		} 
	\end{minipage}
\end{figure}

\pagebreak
\begin{figure}[H]
	\begin{minipage}[c]{0.75\textwidth}
		\includegraphics[width=\textwidth]{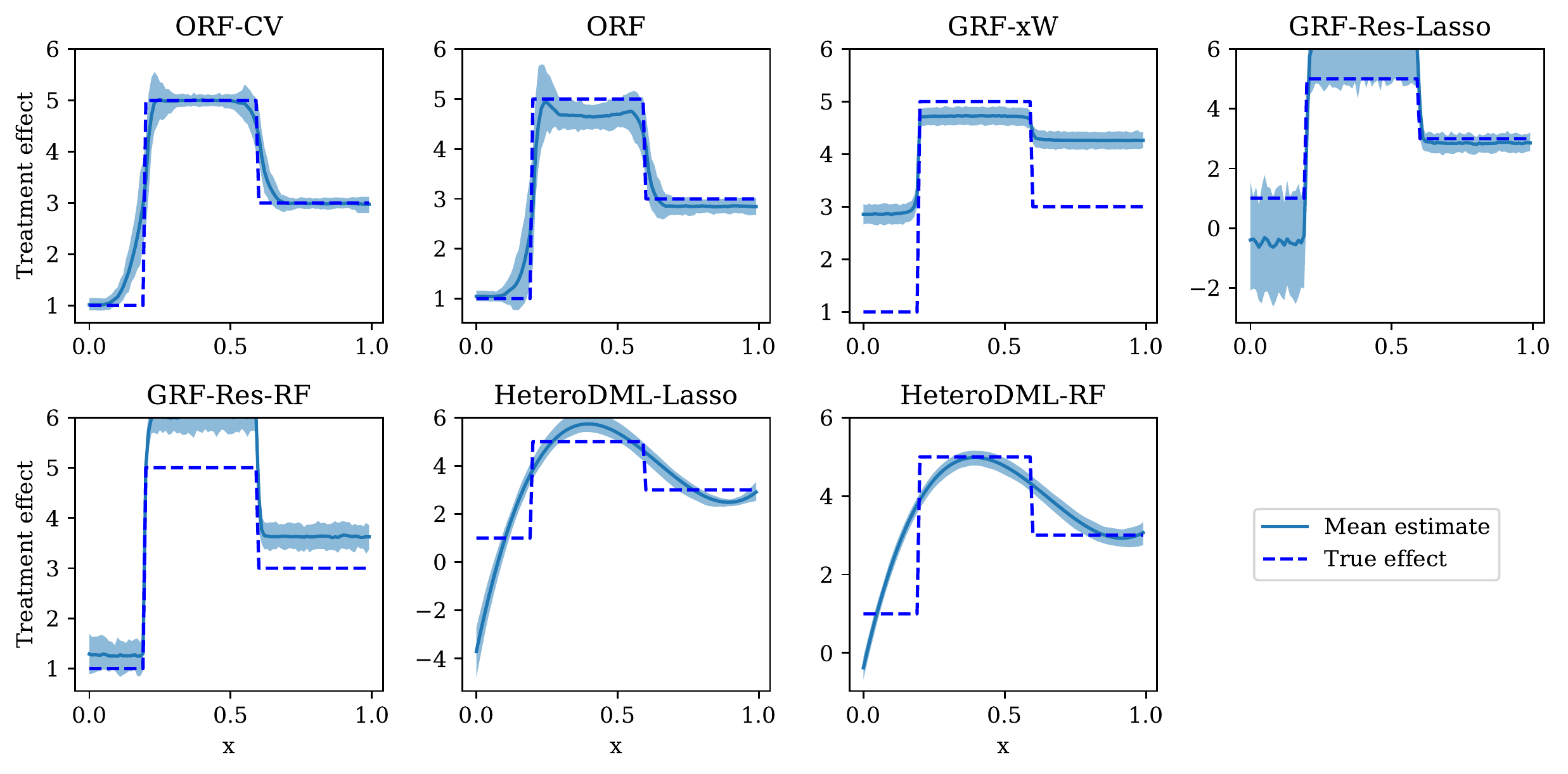}
	\end{minipage}\hfill
	\begin{minipage}[c]{0.23\textwidth}
		\caption{
			Treatment effect estimations for 100 Monte Carlo experiments with parameters $n=5000$, $p=500$, $d=1$, $\mathbf{k=25}$, and a piecewise constant treatment response. The shaded regions depict the mean and the $5\%$-$95\%$ interval of the 100 experiments.
		} 
	\end{minipage}
\end{figure}
\begin{figure}[H]
		\begin{minipage}[c]{0.75\textwidth}
			\includegraphics[width=\textwidth]{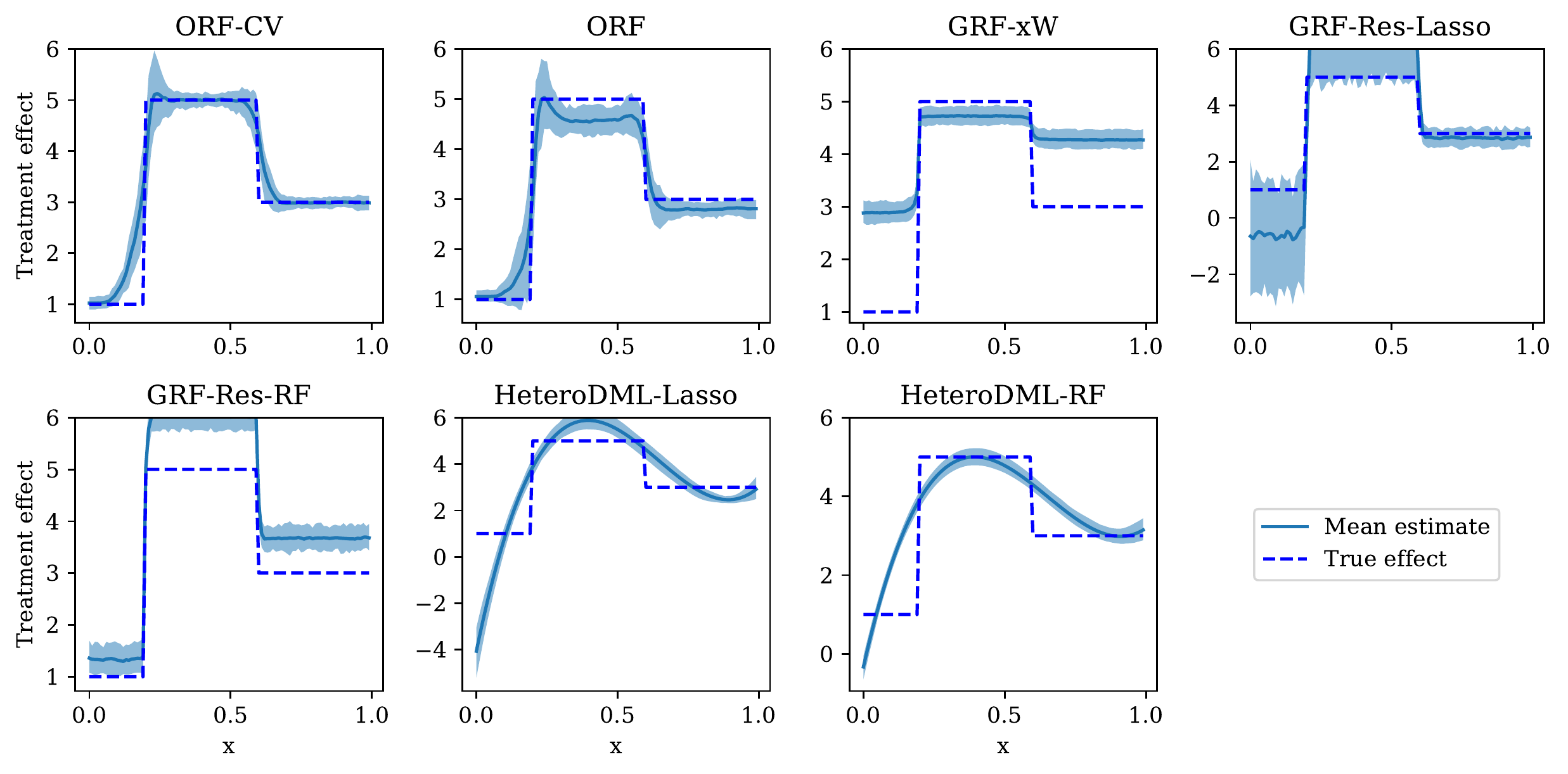}
		\end{minipage}\hfill
		\begin{minipage}[c]{0.23\textwidth}
			\caption{
				Treatment effect estimations for 100 Monte Carlo experiments with parameters $n=5000$, $p=500$, $d=1$, $\mathbf{k=30}$, and a piecewise constant treatment response. The shaded regions depict the mean and the $5\%$-$95\%$ interval of the 100 experiments.
			} 
		\end{minipage}
	\end{figure}

\pagebreak
\subsection{Experimental results for one-dimensional, piecewise polynomial $\theta_0$}

We present the results for a piecewise polynomial function given by:
\begin{align*}
\theta_0(x) &= 3x^2\mathbb{I}_{x\leq 0.2} + (3x^2+1)\mathbb{I}_{x>0.2 \text{ and } x \leq 0.6} + (6x+2)\mathbb{I}_{x> 0.6}
\end{align*}

\begin{figure}[H]
	\centering
	\includegraphics[scale=.50]{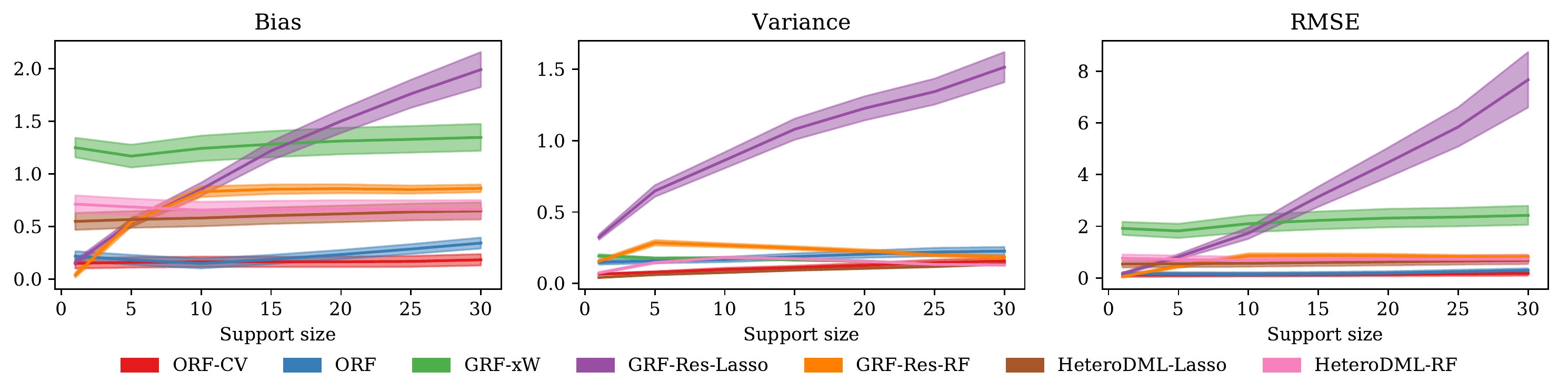}
	\caption{Bias, variance and RMSE as a function of support size for $n=5000$, $p=500$, $d=1$ and a piecewise polynomial treatment function. The solid lines represent the mean of the metrics across test points, averaged over the Monte Carlo experiments, and the filled regions depict the standard deviation, scaled down by a factor of 3 for clarity.}
\end{figure}
\begin{figure}[H]
	\begin{minipage}[c]{0.75\textwidth}
		\includegraphics[width=\textwidth]{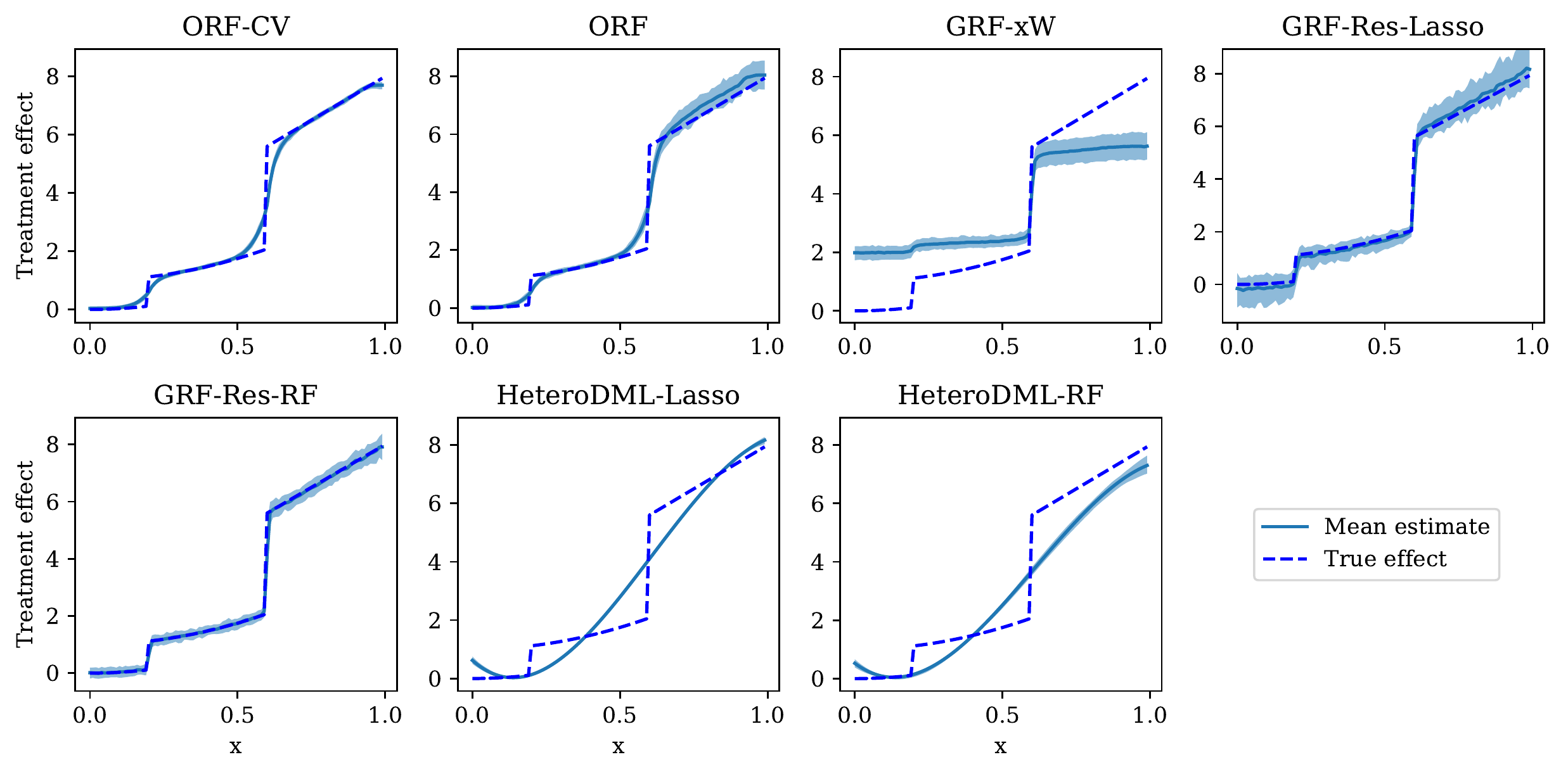}
	\end{minipage}\hfill
	\begin{minipage}[c]{0.23\textwidth}
		\caption{
			Treatment effect estimations for 100 Monte Carlo experiments with parameters $n=5000$, $p=500$, $d=1$, $\mathbf{k=1}$, and a piecewise polynomial treatment response. The shaded regions depict the mean and the $5\%$-$95\%$ interval of the 100 experiments.
		} 
	\end{minipage}
\end{figure}
\begin{figure}[H]
	\begin{minipage}[c]{0.75\textwidth}
		\includegraphics[width=\textwidth]{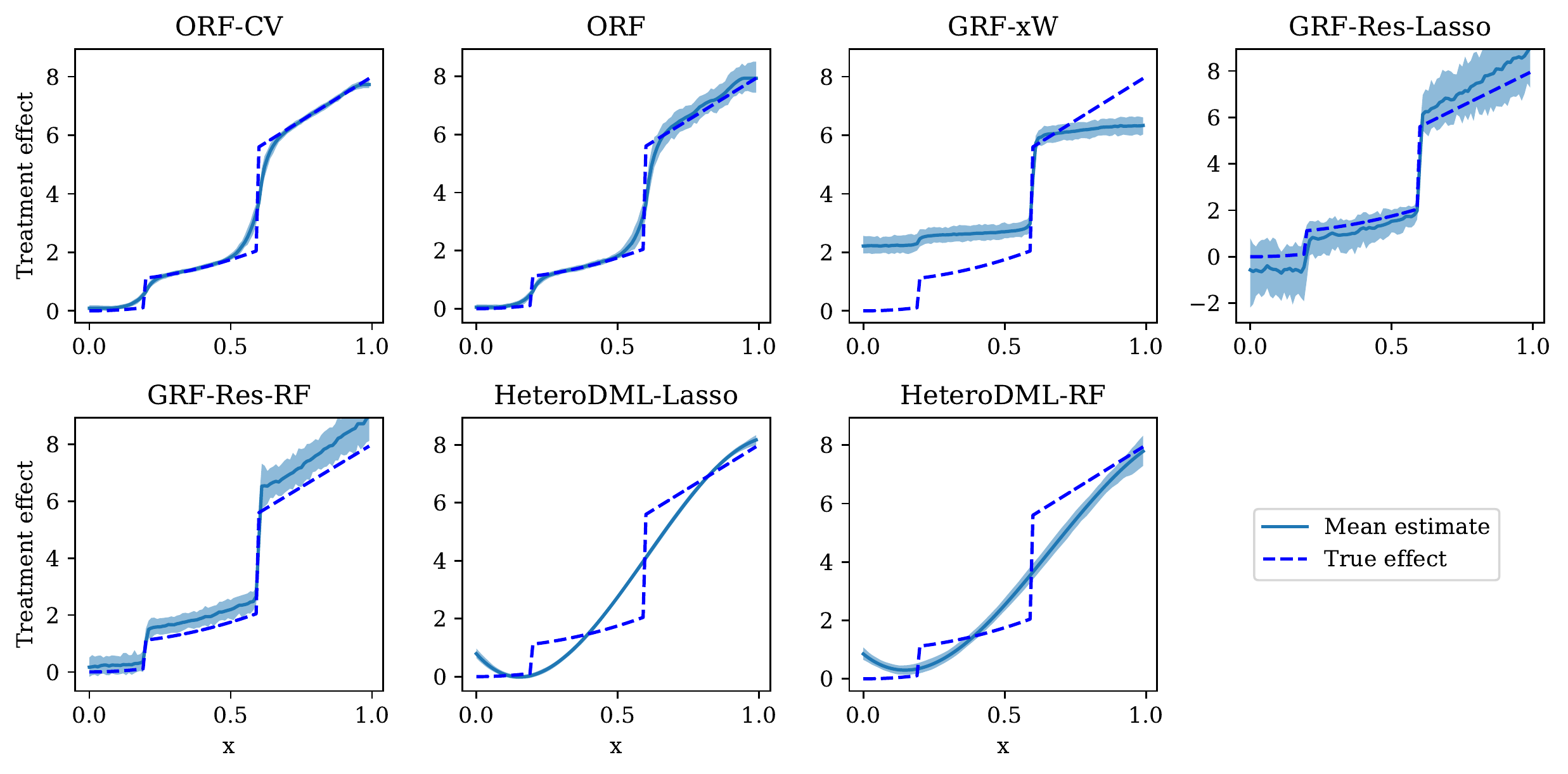}
	\end{minipage}\hfill
	\begin{minipage}[c]{0.23\textwidth}
		\caption{
			Treatment effect estimations for 100 Monte Carlo experiments with parameters $n=5000$, $p=500$, $d=1$, $\mathbf{k=5}$, and a piecewise polynomial treatment response. The shaded regions depict the mean and the $5\%$-$95\%$ interval of the 100 experiments.
		} 
	\end{minipage}
\end{figure}

\pagebreak
\begin{figure}[H]
	\begin{minipage}[c]{0.75\textwidth}
		\includegraphics[width=\textwidth]{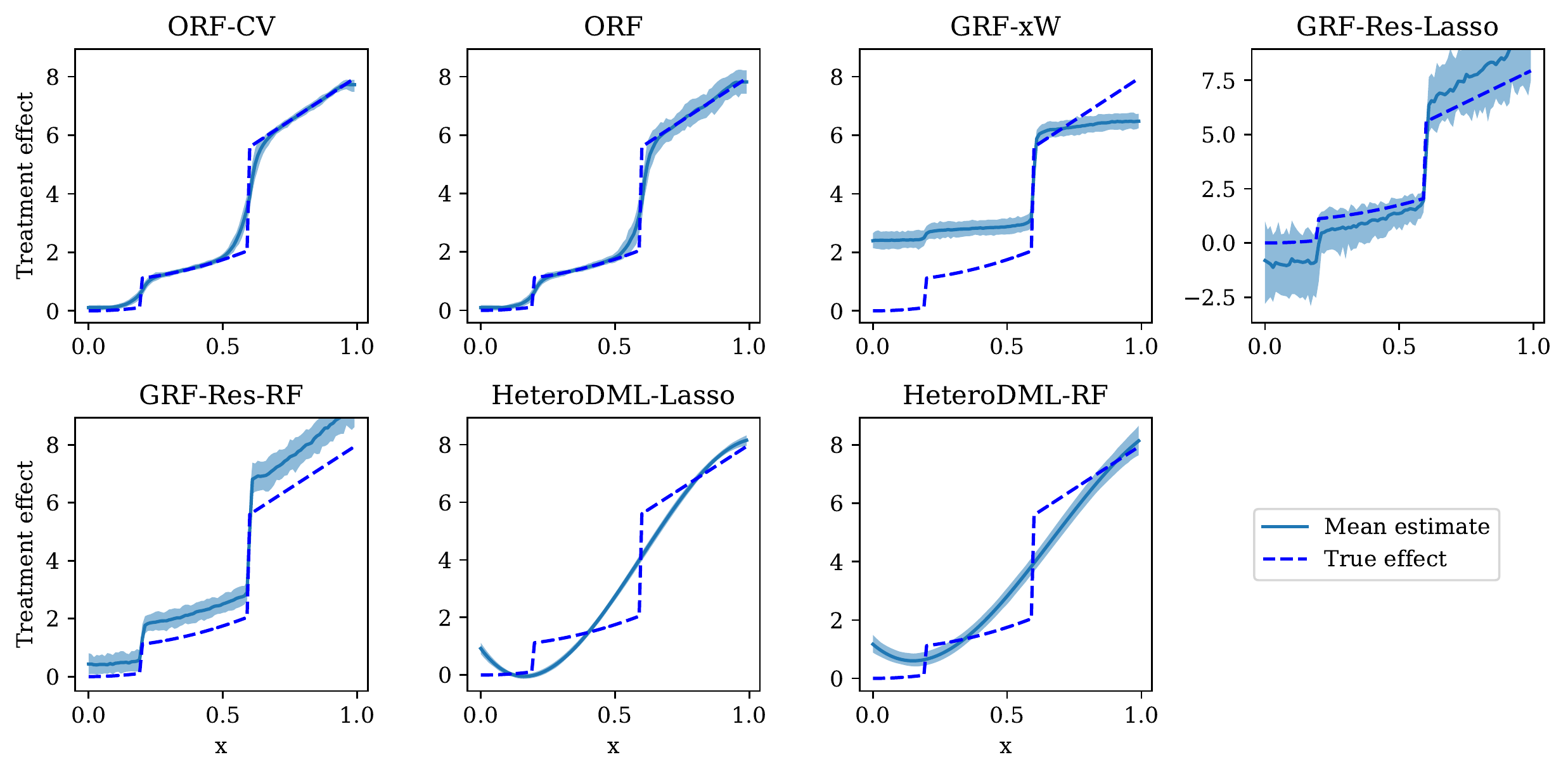}
	\end{minipage}\hfill
	\begin{minipage}[c]{0.23\textwidth}
		\caption{
			Treatment effect estimations for 100 Monte Carlo experiments with parameters $n=5000$, $p=500$, $d=1$, $\mathbf{k=10}$, and a piecewise polynomial treatment response. The shaded regions depict the mean and the $5\%$-$95\%$ interval of the 100 experiments.
		} 
	\end{minipage}
\end{figure}
\begin{figure}[H]
	\begin{minipage}[c]{0.75\textwidth}
		\includegraphics[width=\textwidth]{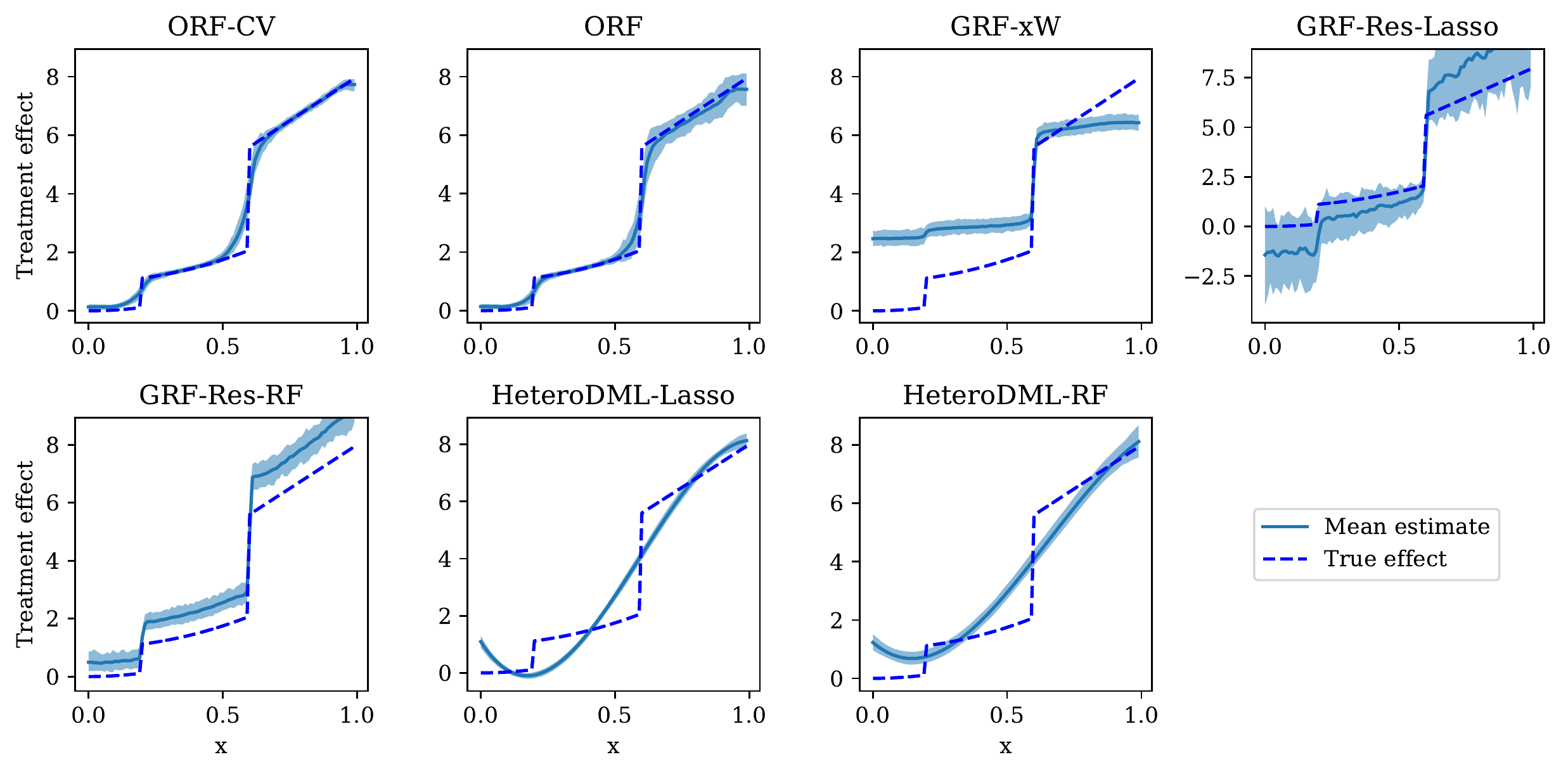}
	\end{minipage}\hfill
	\begin{minipage}[c]{0.23\textwidth}
		\caption{
			Treatment effect estimations for 100 Monte Carlo experiments with parameters $n=5000$, $p=500$, $d=1$, $\mathbf{k=15}$, and a piecewise polynomial treatment response. The shaded regions depict the mean and the $5\%$-$95\%$ interval of the 100 experiments.
		} 
	\end{minipage}
\end{figure}
\begin{figure}[H]
	\begin{minipage}[c]{0.75\textwidth}
		\includegraphics[width=\textwidth]{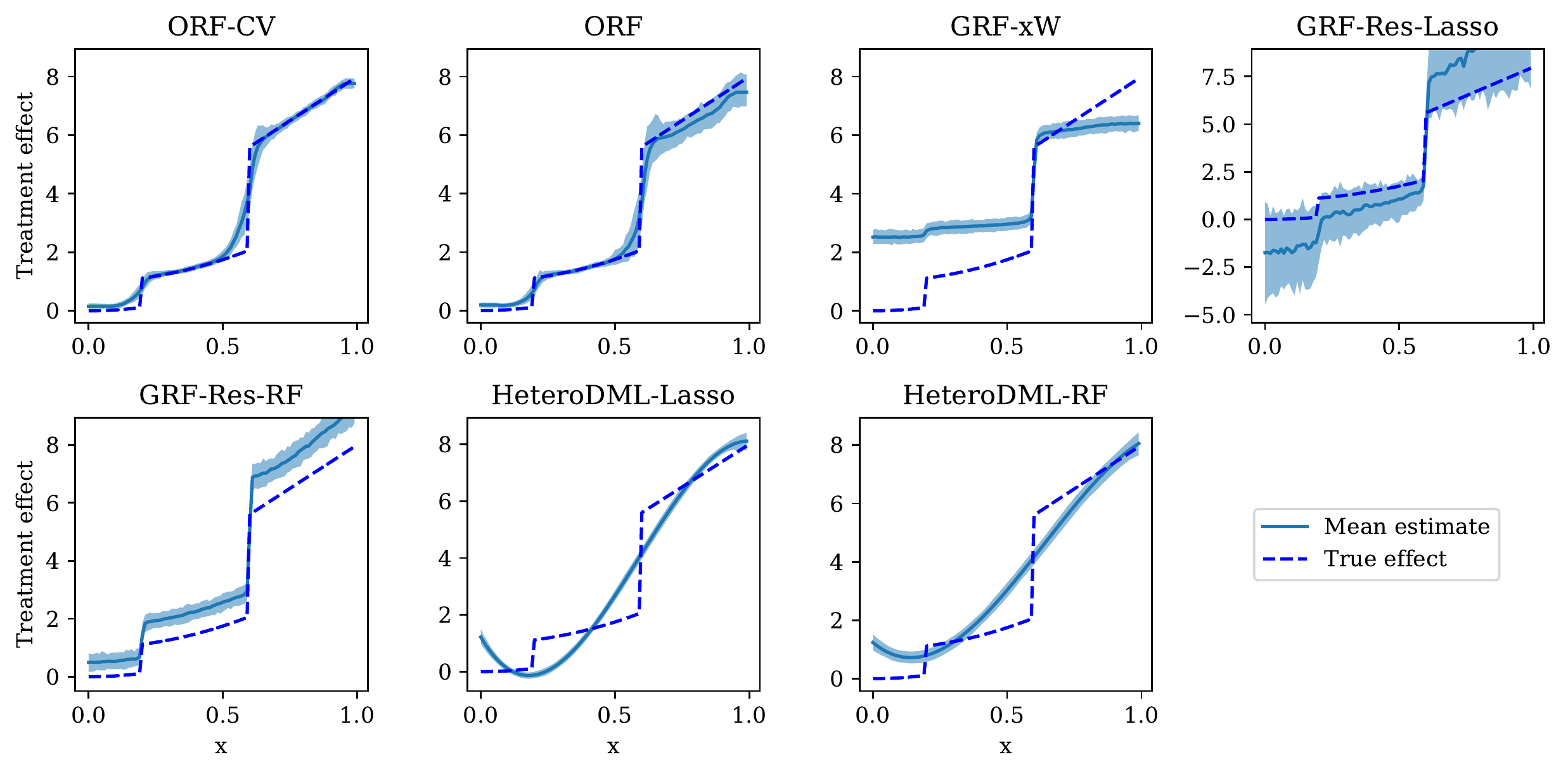}
	\end{minipage}\hfill
	\begin{minipage}[c]{0.23\textwidth}
		\caption{
			Treatment effect estimations for 100 Monte Carlo experiments with parameters $n=5000$, $p=500$, $d=1$, $\mathbf{k=20}$, and a piecewise polynomial treatment response. The shaded regions depict the mean and the $5\%$-$95\%$ interval of the 100 experiments.
		} 
	\end{minipage}
\end{figure}

\pagebreak
\begin{figure}[H]
	\begin{minipage}[c]{0.75\textwidth}
		\includegraphics[width=\textwidth]{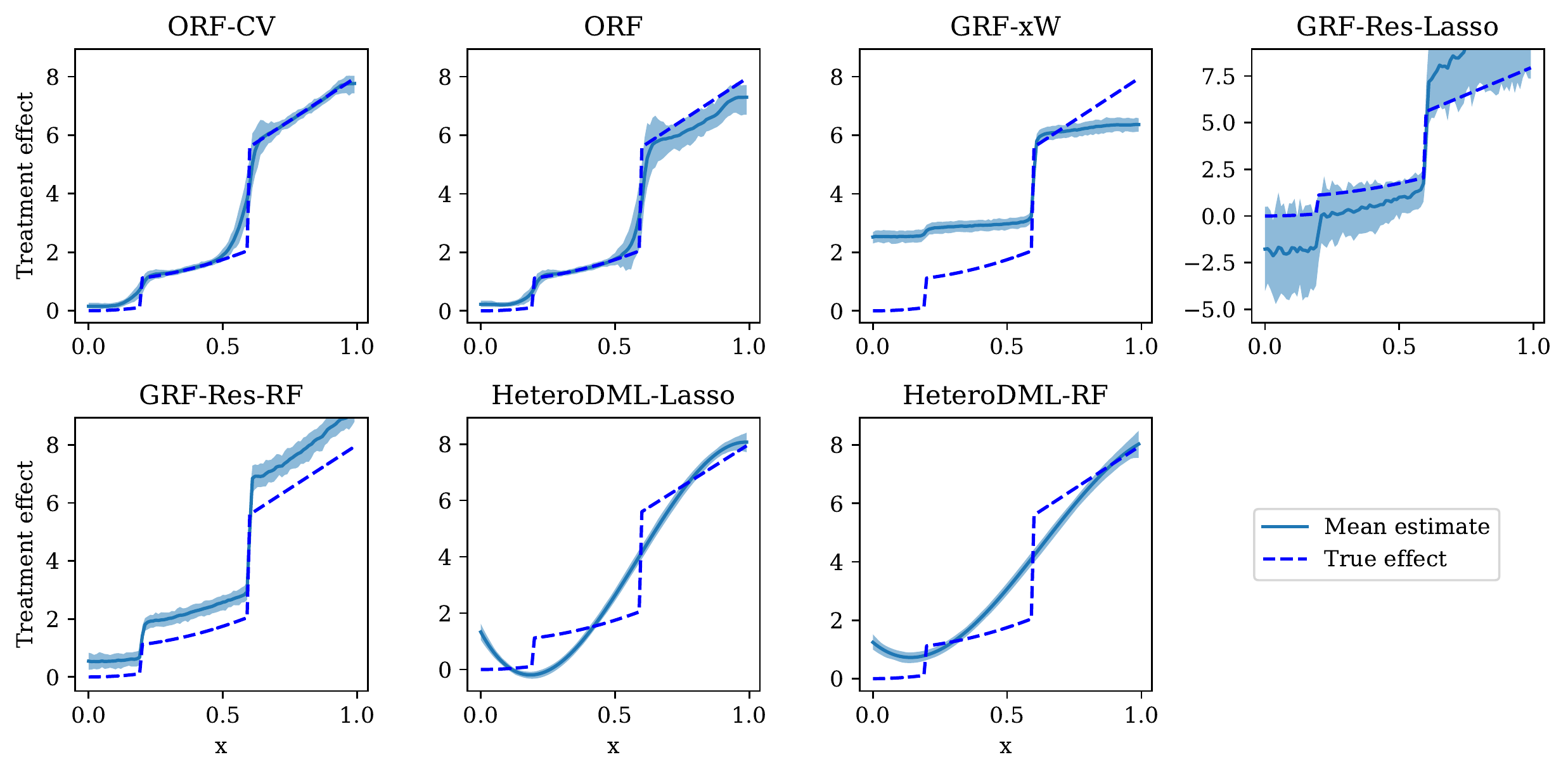}
	\end{minipage}\hfill
	\begin{minipage}[c]{0.23\textwidth}
		\caption{
			Treatment effect estimations for 100 Monte Carlo experiments with parameters $n=5000$, $p=500$, $d=1$, $\mathbf{k=25}$, and a piecewise polynomial treatment response. The shaded regions depict the mean and the $5\%$-$95\%$ interval of the 100 experiments.
		} 
	\end{minipage}
\end{figure}
\begin{figure}[H]
		\begin{minipage}[c]{0.75\textwidth}
			\includegraphics[width=\textwidth]{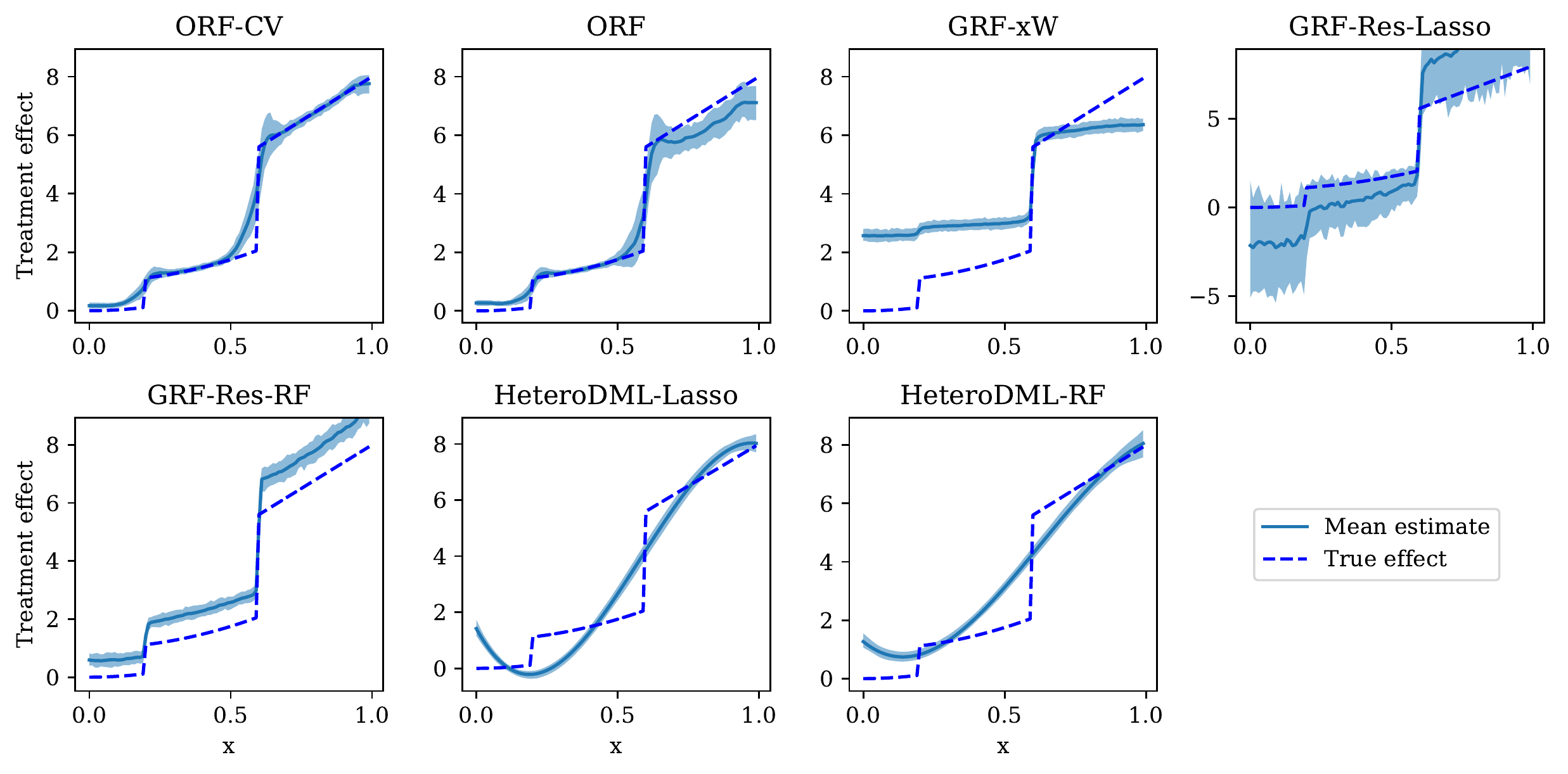}
		\end{minipage}\hfill
		\begin{minipage}[c]{0.23\textwidth}
			\caption{
				Treatment effect estimations for 100 Monte Carlo experiments with parameters $n=5000$, $p=500$, $d=1$, $\mathbf{k=30}$, and a piecewise polynomial treatment response. The shaded regions depict the mean and the $5\%$-$95\%$ interval of the 100 experiments.
			} 
		\end{minipage}
	\end{figure}

\pagebreak
\subsection{Experimental results for larger control support}

We present experimental results for a piecewise linear treatment response $\theta_0$, with $n=5000$ samples and large support $k\in \{50,75,100,150,200\}$. Figures \ref{fig:large1}-\ref{fig:large2} illustrate that the behavior of the ORF-CV algorithm, with parameters set in accordance our theoretical results, is consistent up until fairly large support sizes. Our method performs well with respect to the chosen evaluation metrics and outperform other estimators for larger support sizes. 
\begin{figure}[H]
	\centering
	\includegraphics[scale=.45]{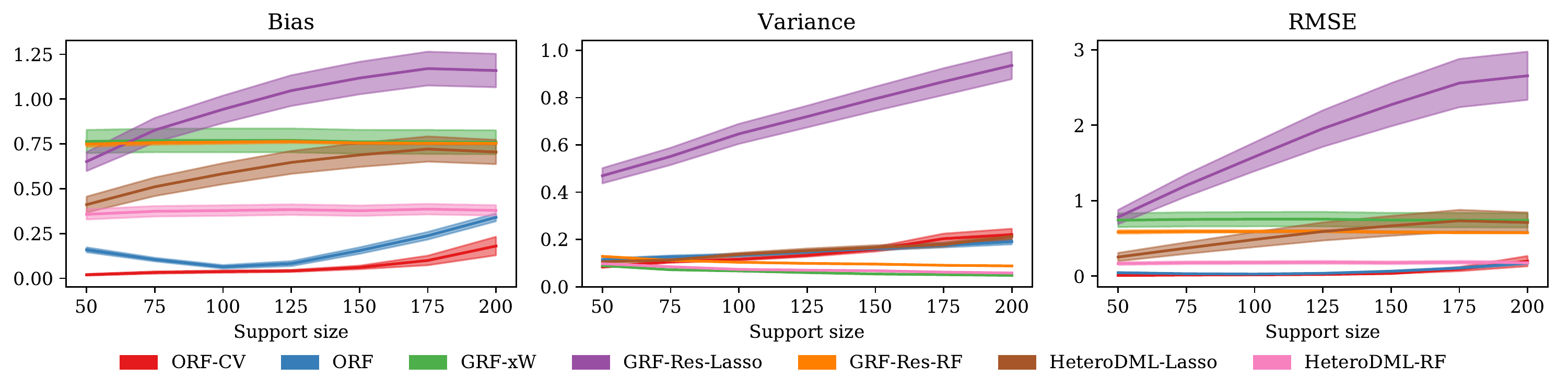}
	\caption{Bias, variance and RMSE as a function of support size. The solid lines represent the mean of the metrics across test points, averaged over the Monte Carlo experiments, and the filled regions depict the standard deviation, scaled down by a factor of 3 for clarity.}
	\label{fig:large1}
\end{figure}
\begin{figure}[H]
	\begin{minipage}[c]{0.70\textwidth}
		\includegraphics[width=\textwidth]{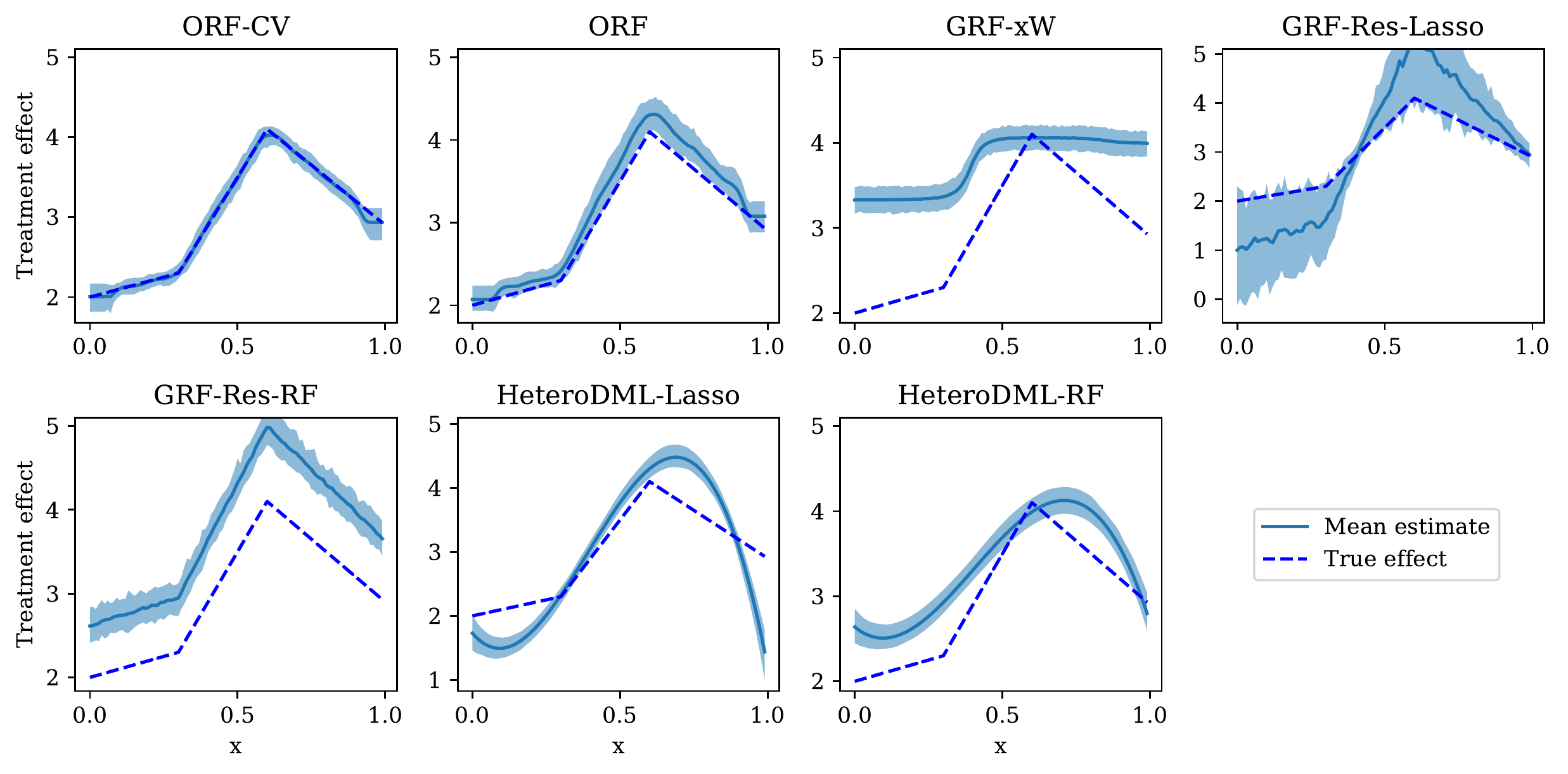}
	\end{minipage}\hfill
	\begin{minipage}[c]{0.25\textwidth}
		\caption{
			Treatment effect estimations for 100 Monte Carlo experiments with parameters $n=5000$, $p=500$, $d=1$, $\mathbf{k=50}$, and a piecewise linear treatment response. The shaded regions depict the mean and the $5\%$-$95\%$ interval of the 100 experiments.
		} 
	\end{minipage}
\end{figure}
\begin{figure}[H]
	\begin{minipage}[c]{0.70\textwidth}
		\includegraphics[width=\textwidth]{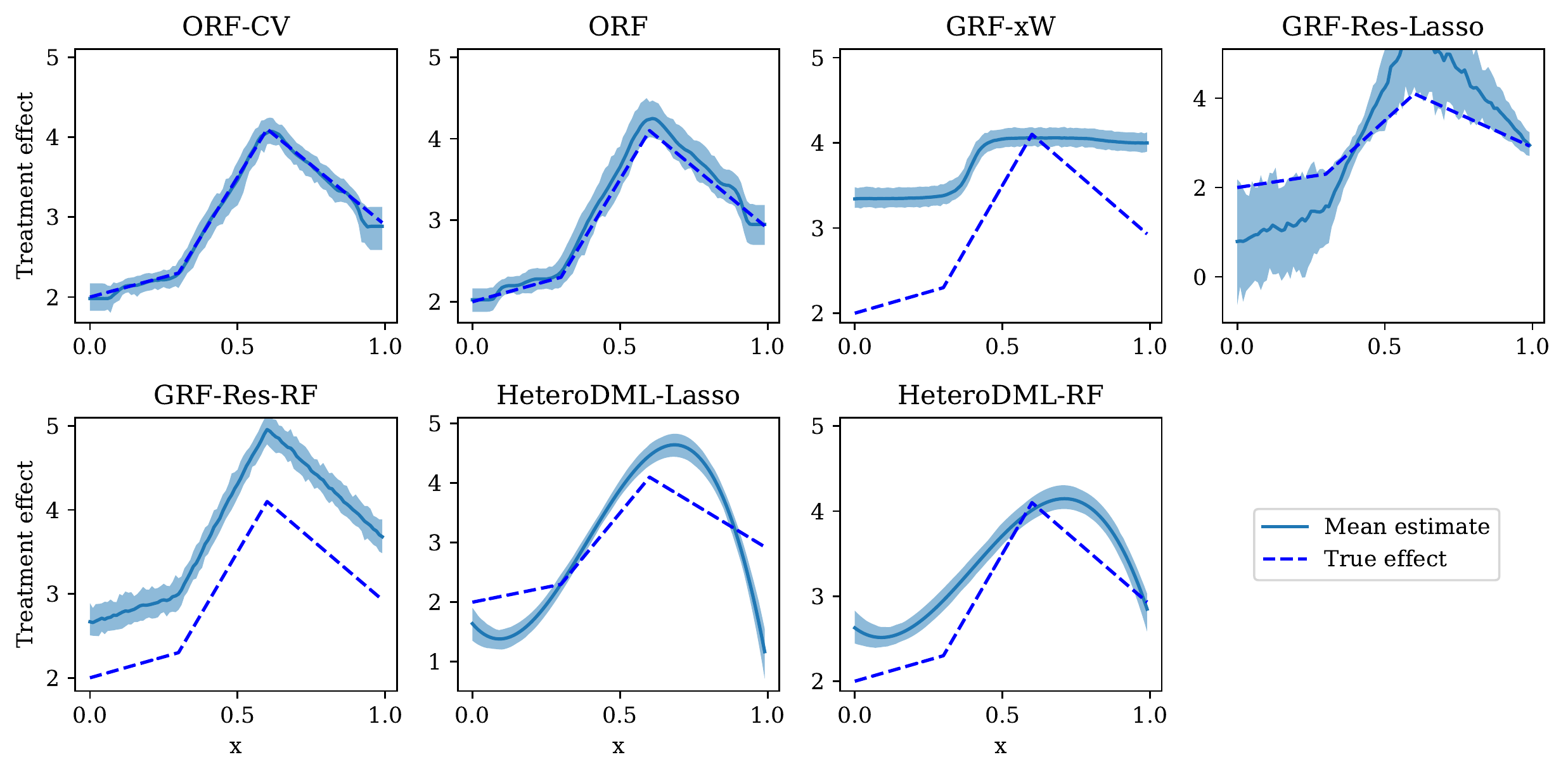}
	\end{minipage}\hfill
	\begin{minipage}[c]{0.25\textwidth}
		\caption{
			Treatment effect estimations for 100 Monte Carlo experiments with parameters $n=5000$, $p=500$, $d=1$, $\mathbf{k=75}$, and a piecewise linear treatment response. The shaded regions depict the mean and the $5\%$-$95\%$ interval of the 100 experiments.
		} 
	\end{minipage}
\end{figure}

\pagebreak
\begin{figure}[H]
	\begin{minipage}[c]{0.75\textwidth}
		\includegraphics[width=\textwidth]{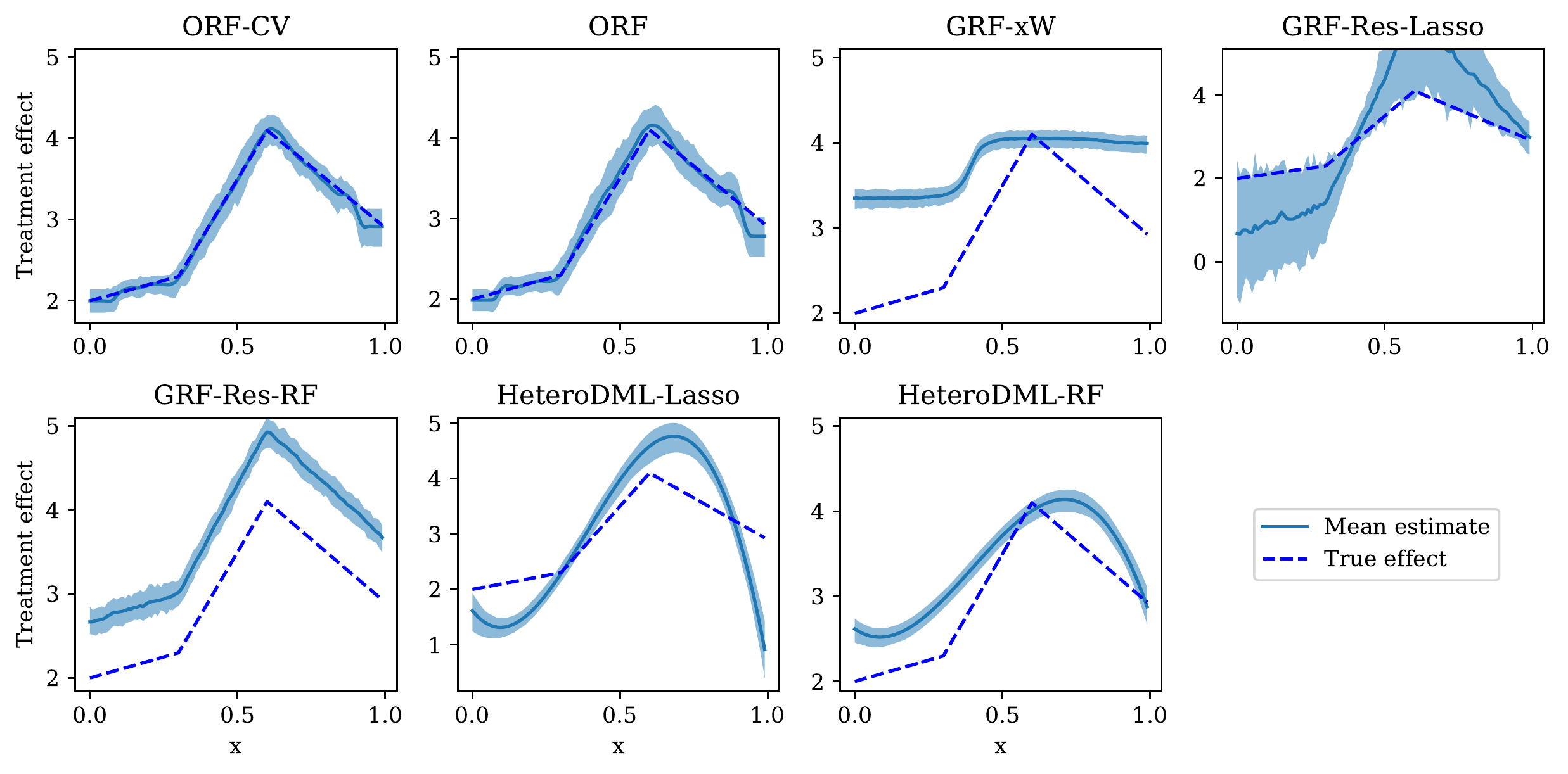}
	\end{minipage}\hfill
	\begin{minipage}[c]{0.23\textwidth}
		\caption{
			Treatment effect estimations for 100 Monte Carlo experiments with parameters $n=5000$, $p=500$, $d=1$, $\mathbf{k=100}$, and a piecewise linear treatment response. The shaded regions depict the mean and the $5\%$-$95\%$ interval of the 100 experiments.
		} 
	\end{minipage}
\end{figure}
\begin{figure}[H]
	\begin{minipage}[c]{0.75\textwidth}
		\includegraphics[width=\textwidth]{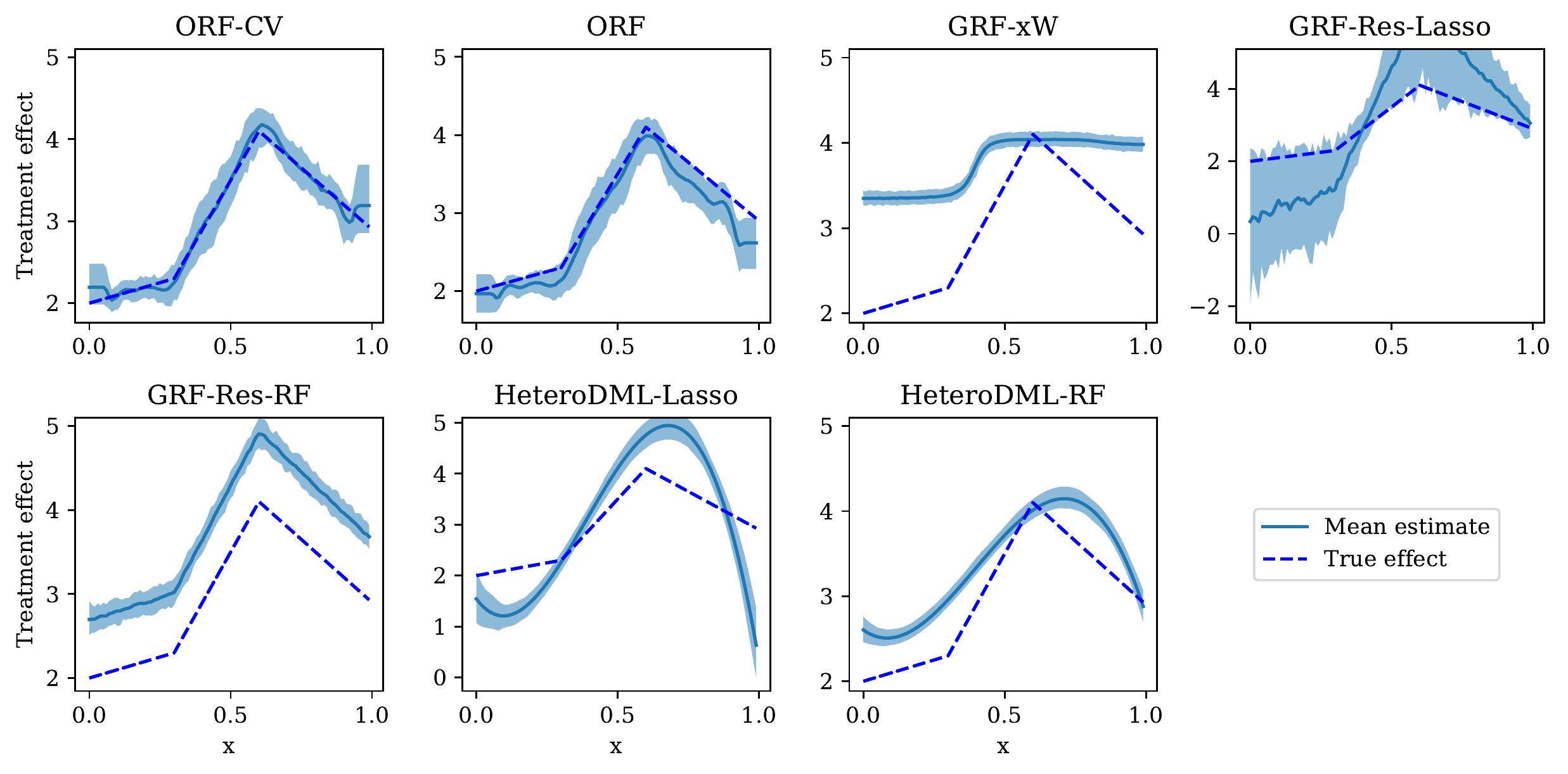}
	\end{minipage}\hfill
	\begin{minipage}[c]{0.23\textwidth}
		\caption{
			Treatment effect estimations for 100 Monte Carlo experiments with parameters $n=5000$, $p=500$, $d=1$, $\mathbf{k=150}$, and a piecewise linear treatment response. The shaded regions depict the mean and the $5\%$-$95\%$ interval of the 100 experiments.
		} 
	\end{minipage}
\end{figure}
\begin{figure}[H]
	\begin{minipage}[c]{0.75\textwidth}
		\includegraphics[width=\textwidth]{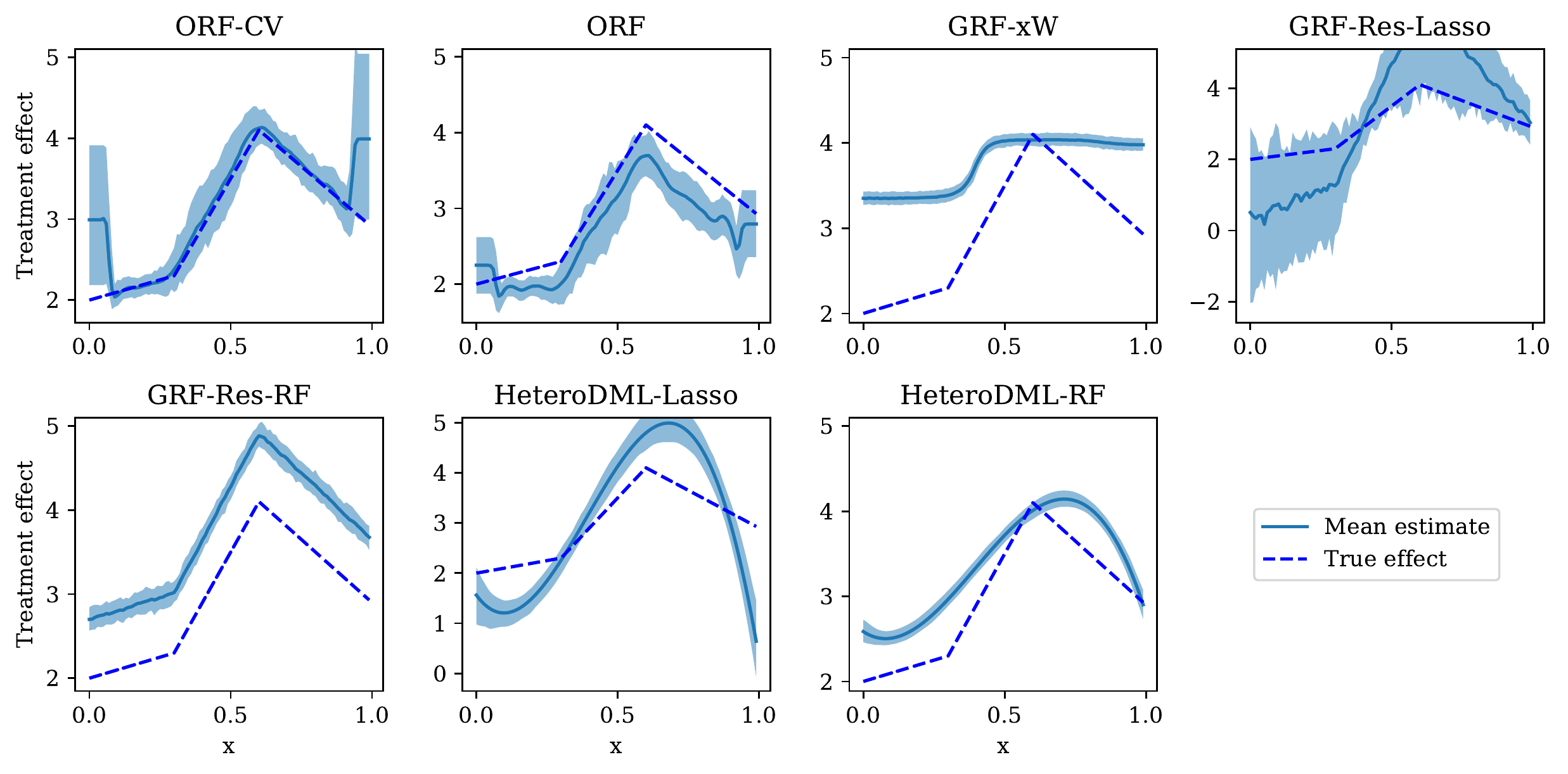}
	\end{minipage}\hfill
	\begin{minipage}[c]{0.23\textwidth}
		\caption{
			Treatment effect estimations for 100 Monte Carlo experiments with parameters $n=5000$, $p=500$, $d=1$, $\mathbf{k=200}$, and a piecewise linear treatment response. The shaded regions depict the mean and the $5\%$-$95\%$ interval of the 100 experiments.
		}
		\label{fig:large2} 
	\end{minipage}
\end{figure}

\newpage
\subsection{Experimental results for two-dimensional heterogeneity}

We introduce experimental results for a two-dimensional $x$ and corresponding $\theta_0$ given by:
\begin{align*}
\theta_0(x_1,x_2) &= \theta_{\text{piecewise linear}}(x_1)\mathbb{I}_{x_2=0} + 
\theta_{\text{piecewise constant}}(x_1)\mathbb{I}_{x_2=1}
\end{align*}
where $x_1\sim U[0, 1]$ and $x_2\sim Bern(0.5)$. In Figures \ref{fig:2D_1}-\ref{fig:2D_2}, we examine the overall behavior of the ORF-CV and ORF estimators, as well as the behavior across the slices $x_2=0$ and $x_2=1$. We compare the performance of the ORF-CV and ORF estimators with alternative methods for $n=5000$ and $k\in\{1,5,10,15,20,25,30\}$. We conclude that the ORF-CV algorithm yields a better performance for all support sizes and evaluation metrics. 

\begin{figure}[H]
	\centering
	\includegraphics[scale=.50]{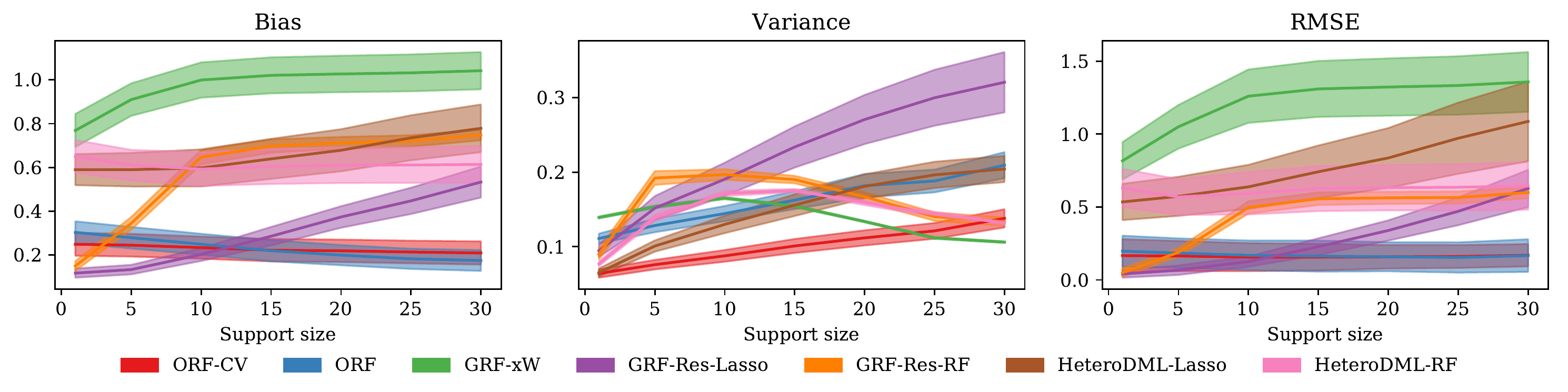}
	\caption{Overall bias, variance and RMSE as a function of support size for $n=5000$, $p=500$, $d=2$. The solid lines represent the mean of the metrics across test points, averaged over the Monte Carlo experiments, and the filled regions depict the standard deviation, scaled down by a factor of 3 for clarity.}
	\label{fig:2D_1}
\end{figure}
\begin{figure}[H]
	\begin{minipage}[c]{0.75\textwidth}
		\includegraphics[width=\textwidth]{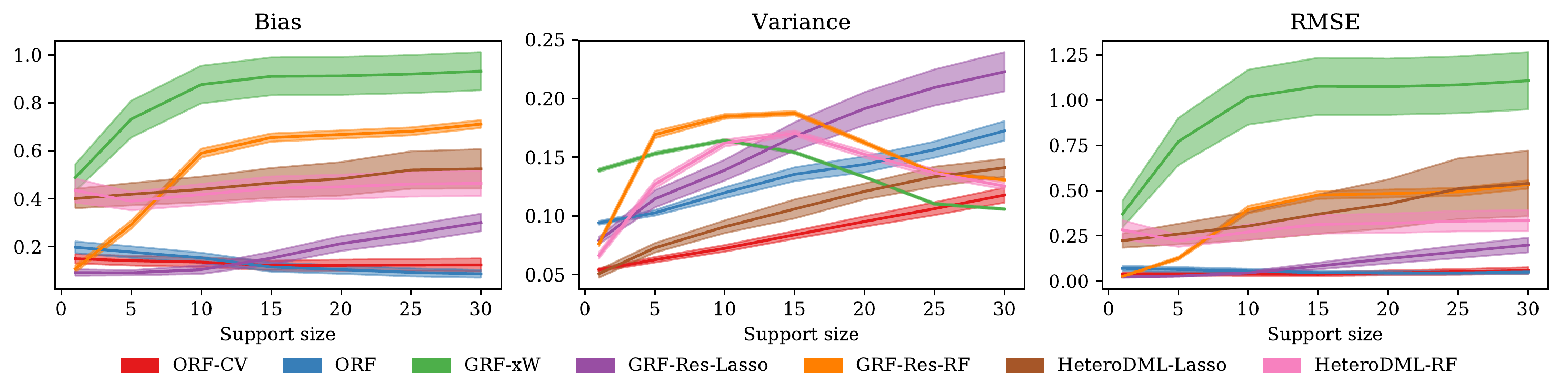}
		\includegraphics[width=\textwidth]{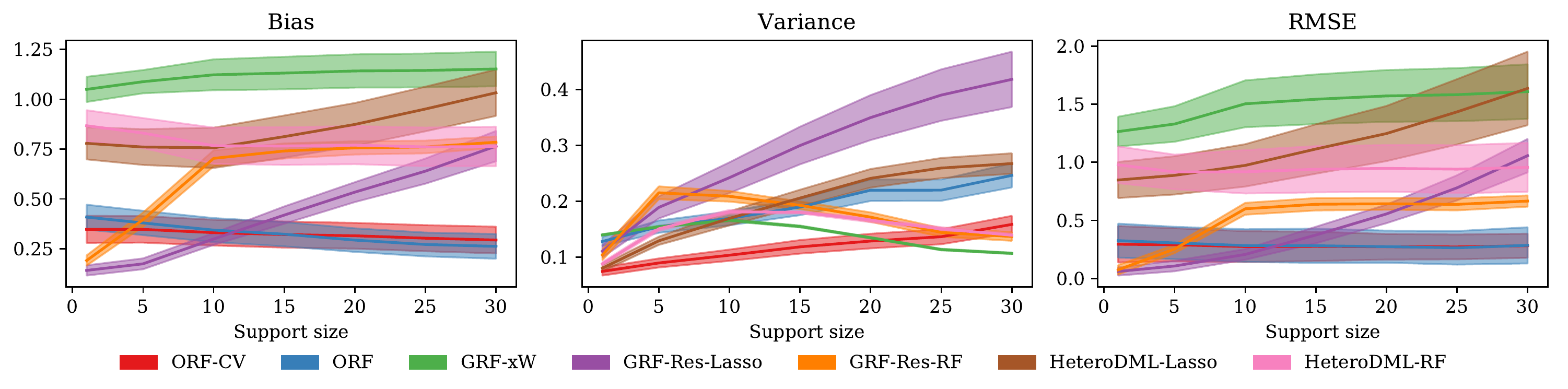}
	\end{minipage}\hfill
	\begin{minipage}[c]{0.23\textwidth}
		\caption{
			Bias, variance and RMSE as a function of support for $n=5000$, $p=500$, $d=2$ and slices $\mathbf{x_2=0}$ and $\mathbf{x_2=1}$, respectively. The solid lines represent the mean of the metrics across test points, averaged over the Monte Carlo experiments, and the filled regions depict the standard deviation, scaled down by a factor of 3 for clarity.
		} 
	\end{minipage}
\end{figure}

\begin{figure}[H]
	\begin{minipage}[c]{0.63\textwidth}
		\includegraphics[width=\textwidth]{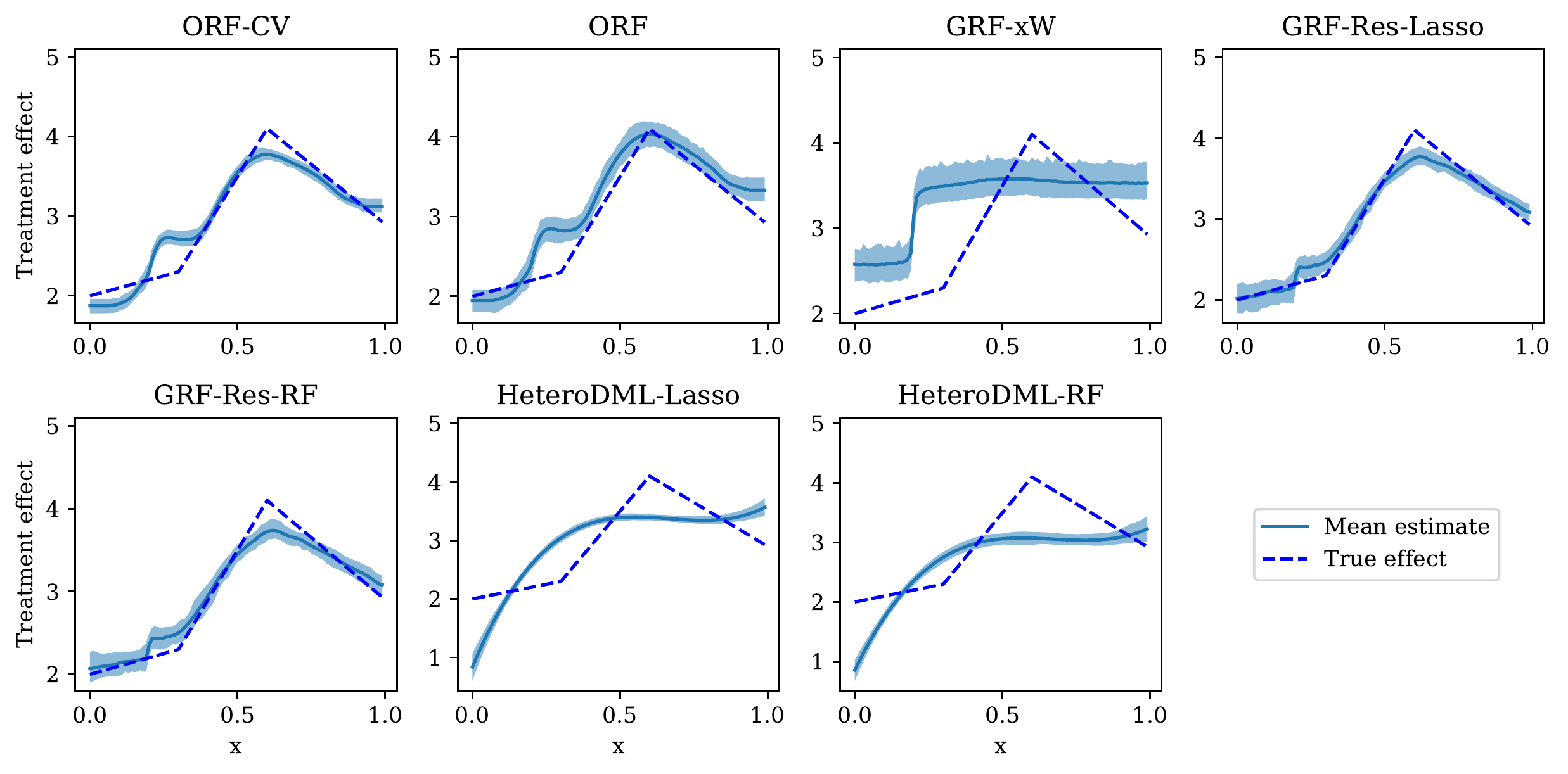}
		\includegraphics[width=\textwidth]{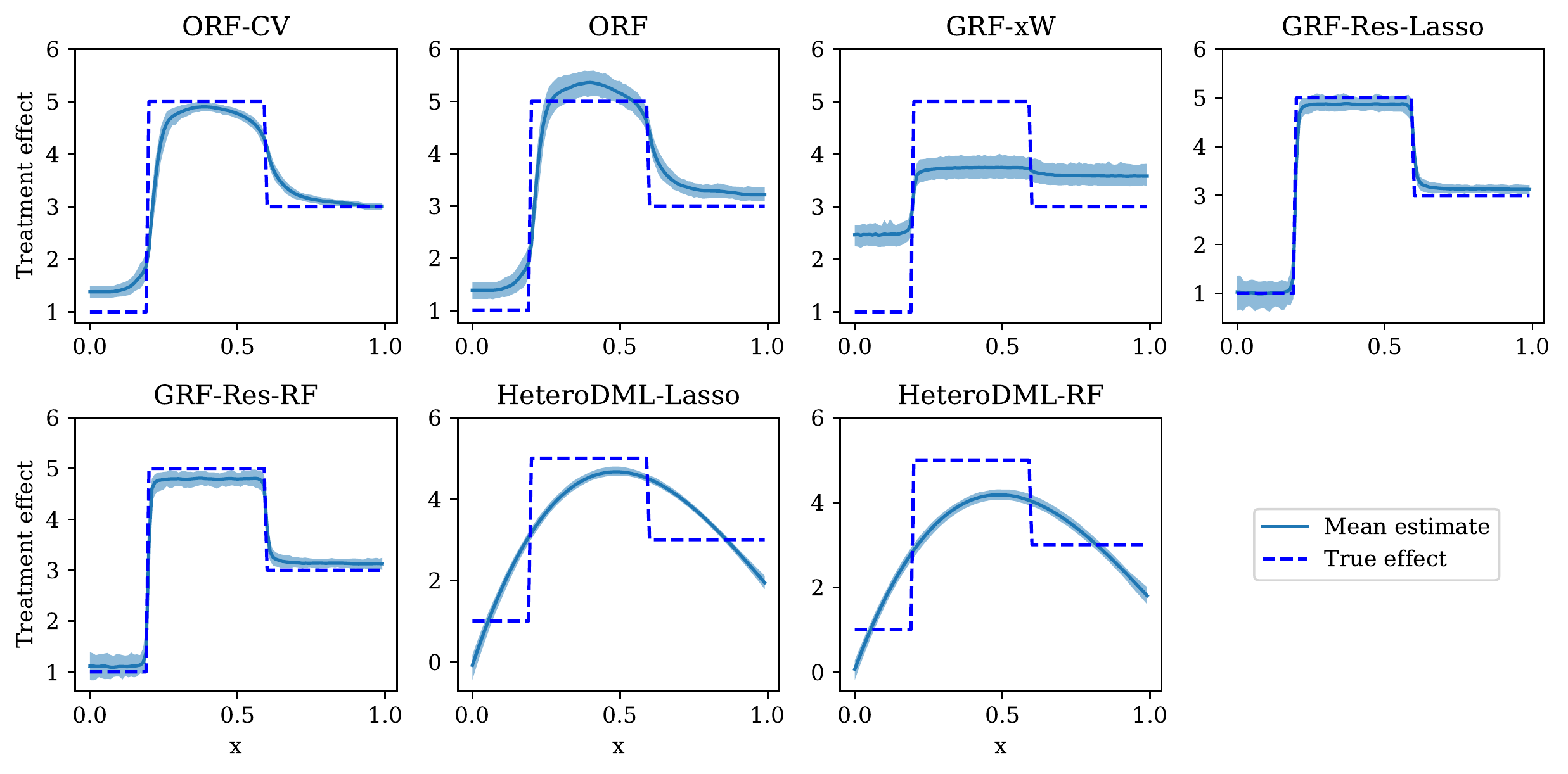}
	\end{minipage}\hfill
	\begin{minipage}[c]{0.34\textwidth}
		\caption{
			Treatment effect estimations for 100 Monte Carlo experiments with parameters $n=5000$, $p=500$, $d=2$, $\mathbf{k=1}$, and slices $\mathbf{x_2=0}$ and $\mathbf{x_2=1}$, respectively. The shaded regions depict the mean and the $5\%$-$95\%$ interval of the 100 experiments.
		} 
	\end{minipage}
\end{figure}
\begin{figure}[H]
	\begin{minipage}[c]{0.63\textwidth}
		\includegraphics[width=\textwidth]{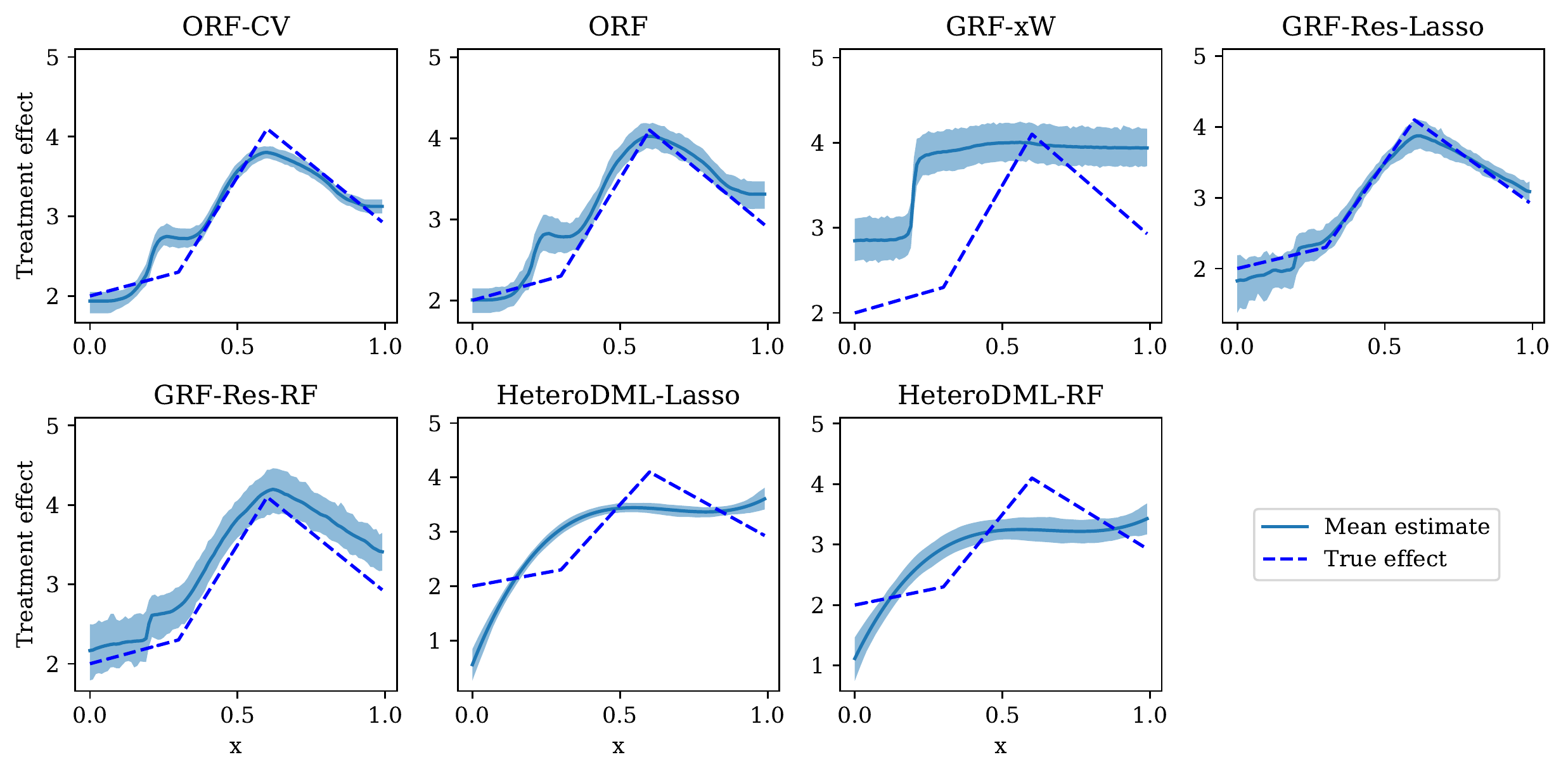}
		\includegraphics[width=\textwidth]{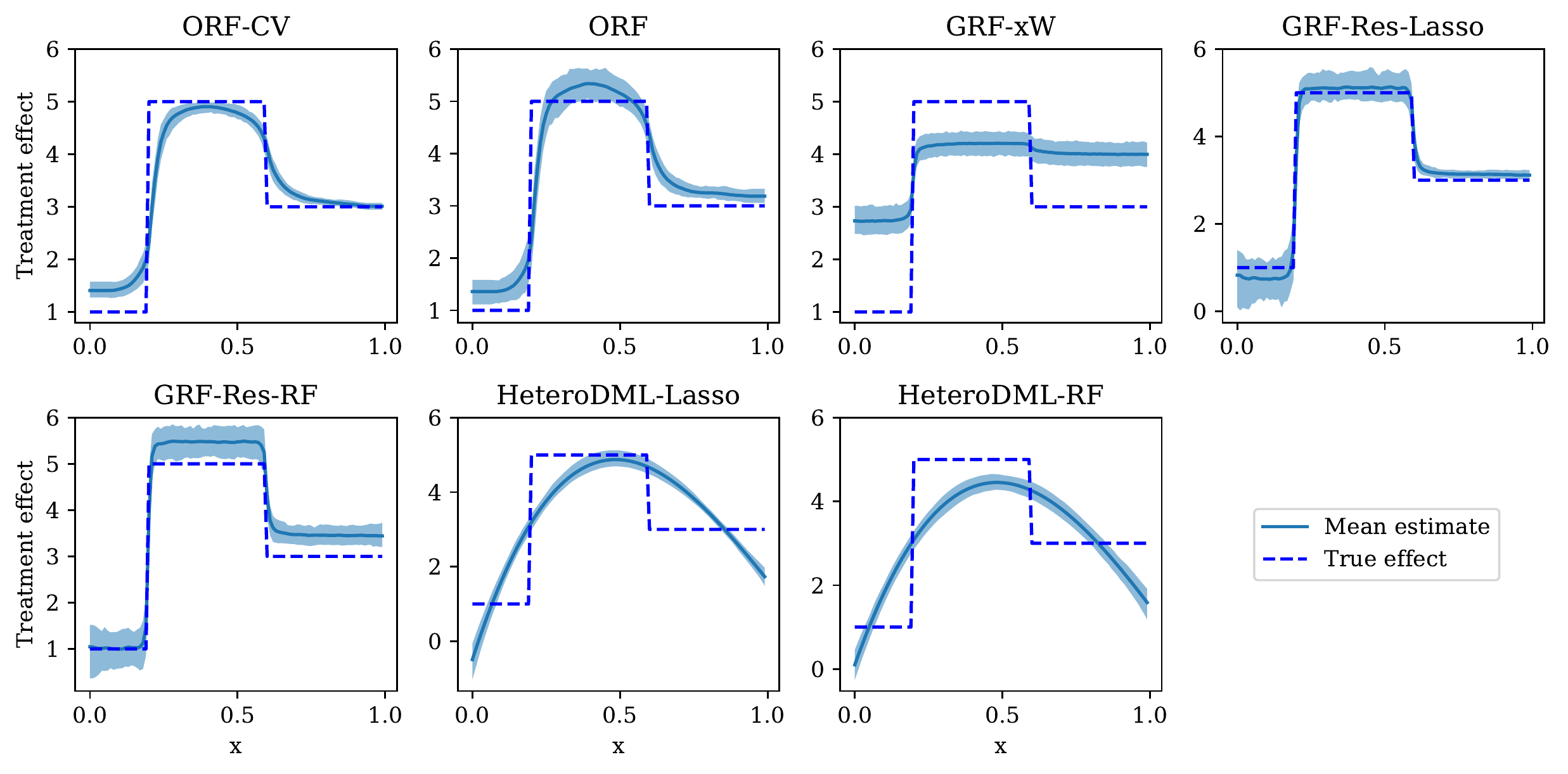}
	\end{minipage}\hfill
	\begin{minipage}[c]{0.34\textwidth}
		\caption{
			Treatment effect estimations for 100 Monte Carlo experiments with parameters $n=5000$, $p=500$, $d=2$, $\mathbf{k=5}$, and slices $\mathbf{x_2=0}$ and $\mathbf{x_2=1}$, respectively. The shaded regions depict the mean and the $5\%$-$95\%$ interval of the 100 experiments.
		} 
	\end{minipage}
\end{figure}

\begin{figure}[H]
	\begin{minipage}[c]{0.63\textwidth}
		\includegraphics[width=\textwidth]{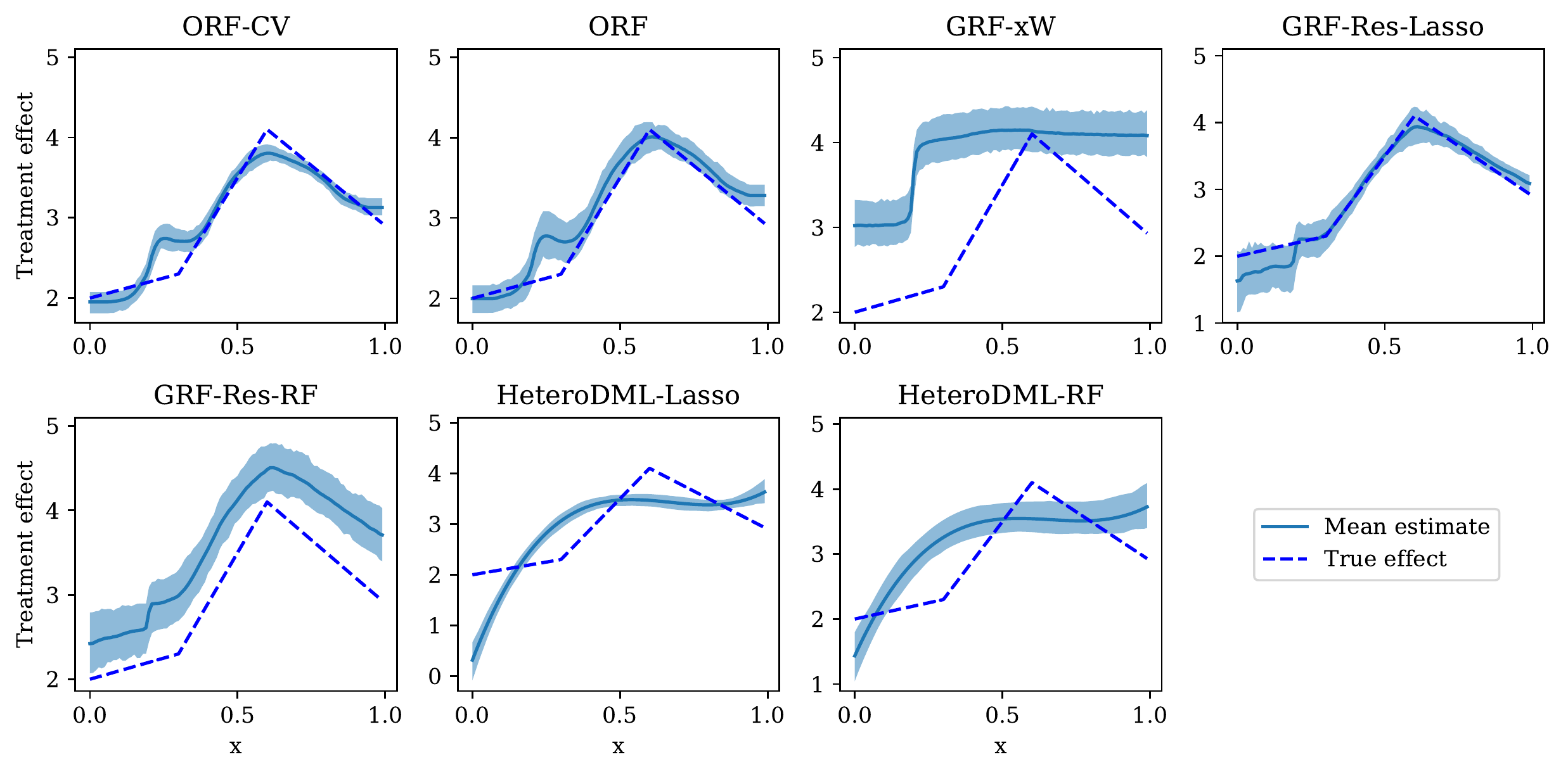}
		\includegraphics[width=\textwidth]{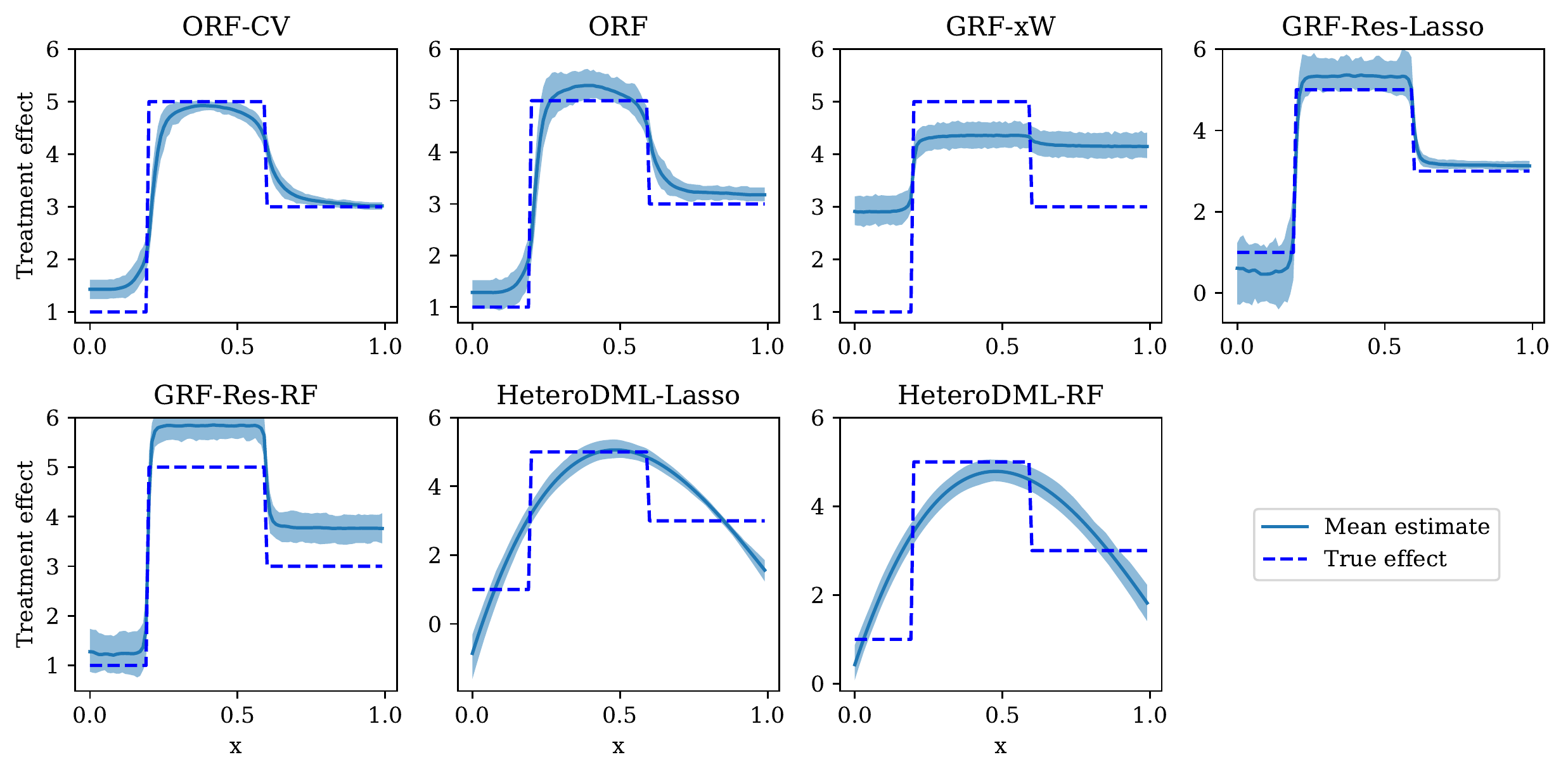}
	\end{minipage}\hfill
	\begin{minipage}[c]{0.34\textwidth}
		\caption{
			Treatment effect estimations for 100 Monte Carlo experiments with parameters $n=5000$, $p=500$, $d=2$, $\mathbf{k=10}$, and slices $\mathbf{x_2=0}$ and $\mathbf{x_2=1}$, respectively. The shaded regions depict the mean and the $5\%$-$95\%$ interval of the 100 experiments.
		} 
	\end{minipage}
\end{figure}

\begin{figure}[H]
	\begin{minipage}[c]{0.63\textwidth}
		\includegraphics[width=\textwidth]{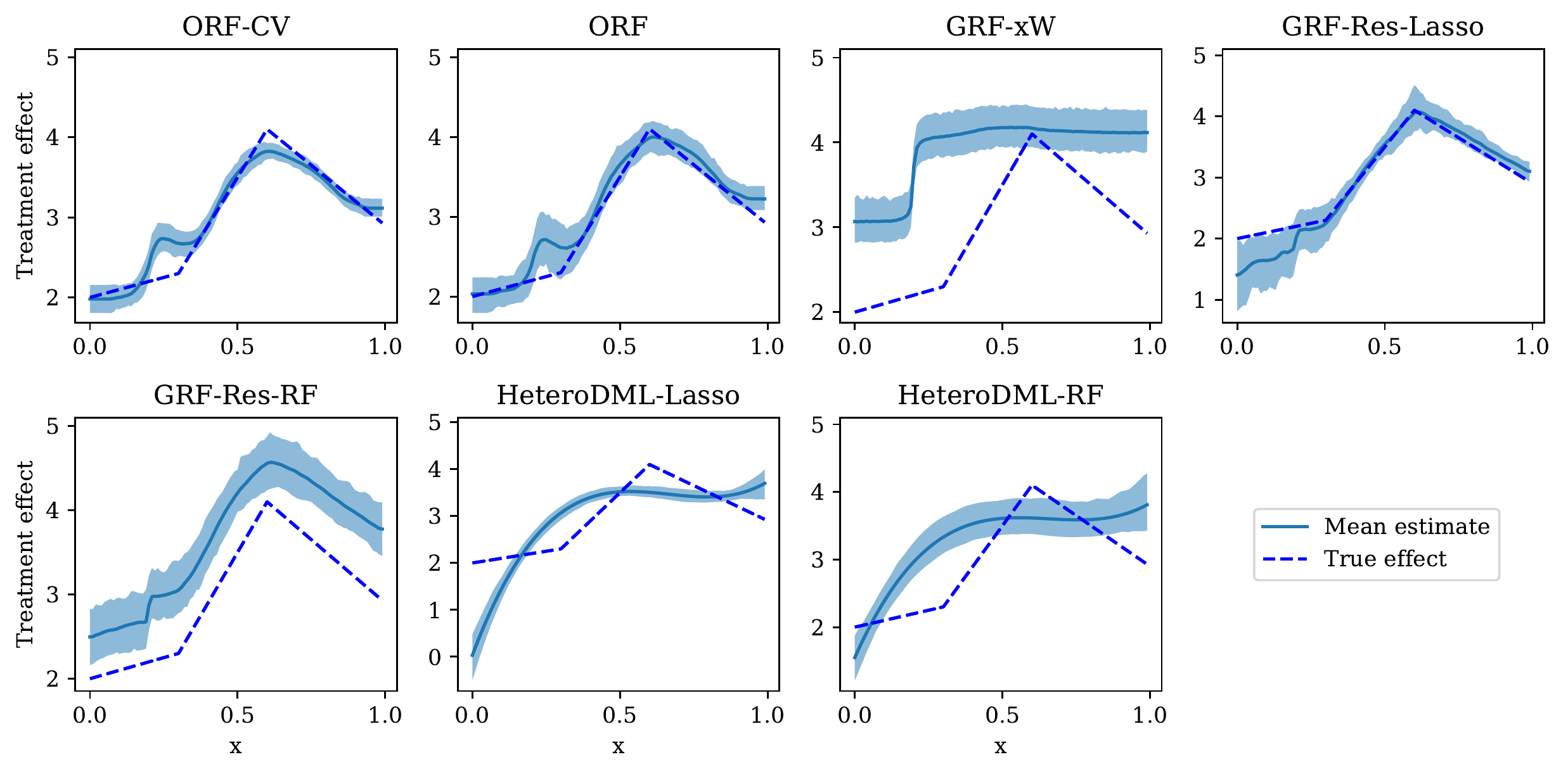}
		\includegraphics[width=\textwidth]{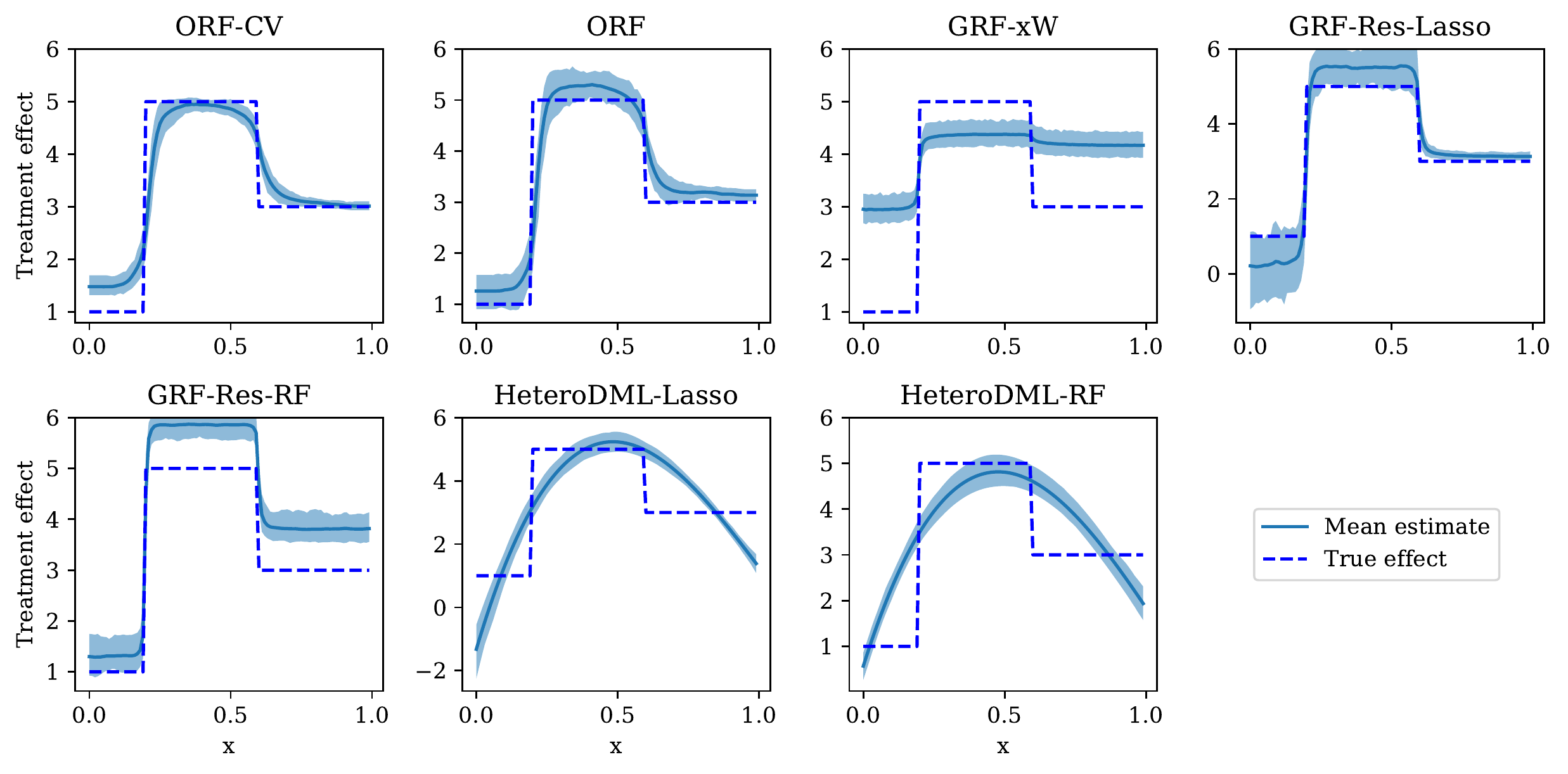}
	\end{minipage}\hfill
	\begin{minipage}[c]{0.34\textwidth}
		\caption{
			Treatment effect estimations for 100 Monte Carlo experiments with parameters $n=5000$, $p=500$, $d=2$, $\mathbf{k=15}$, and slices $\mathbf{x_2=0}$ and $\mathbf{x_2=1}$, respectively. The shaded regions depict the mean and the $5\%$-$95\%$ interval of the 100 experiments.
		} 
	\end{minipage}
\end{figure}

\begin{figure}[H]
	\begin{minipage}[c]{0.63\textwidth}
		\includegraphics[width=\textwidth]{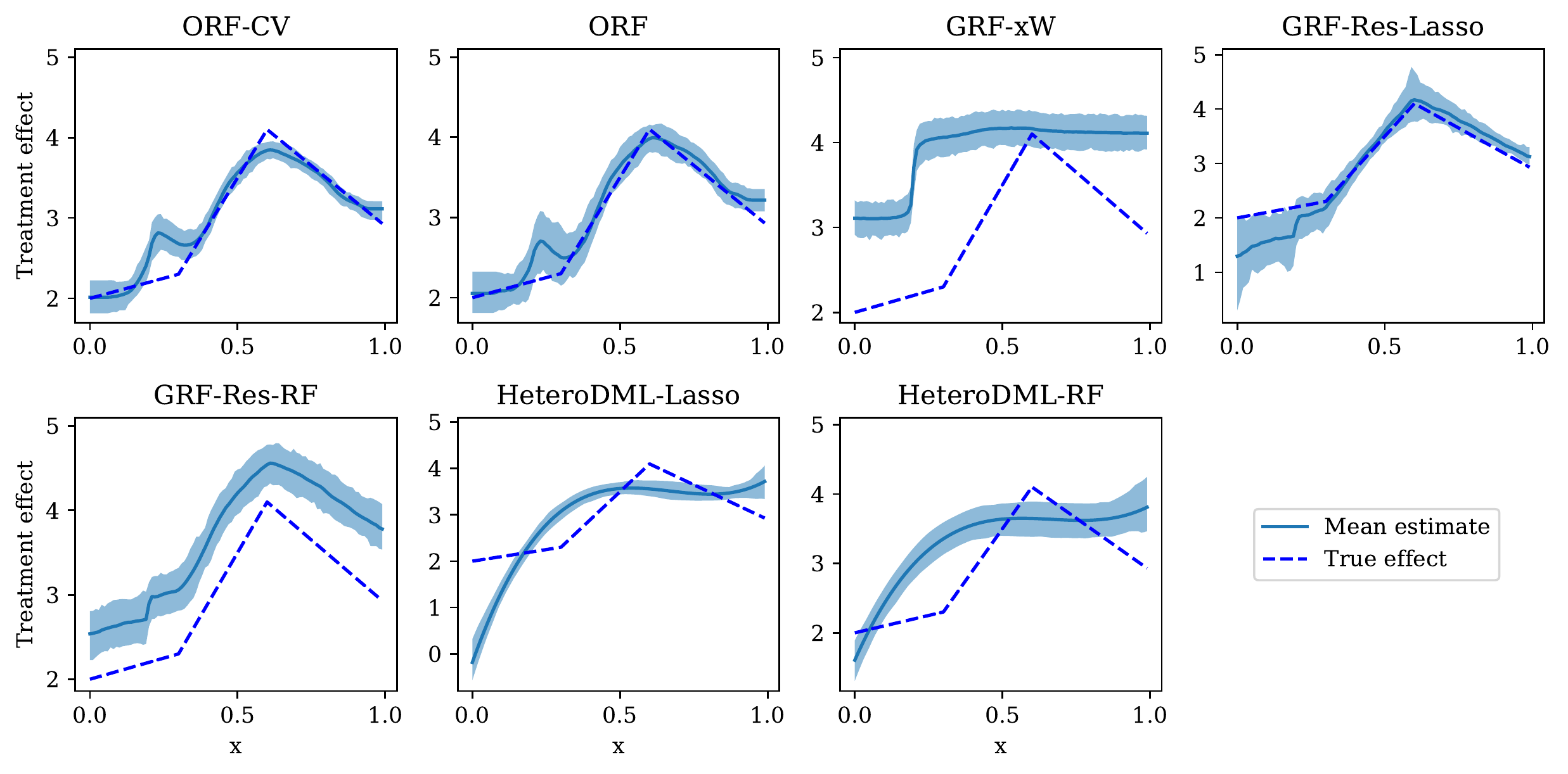}
		\includegraphics[width=\textwidth]{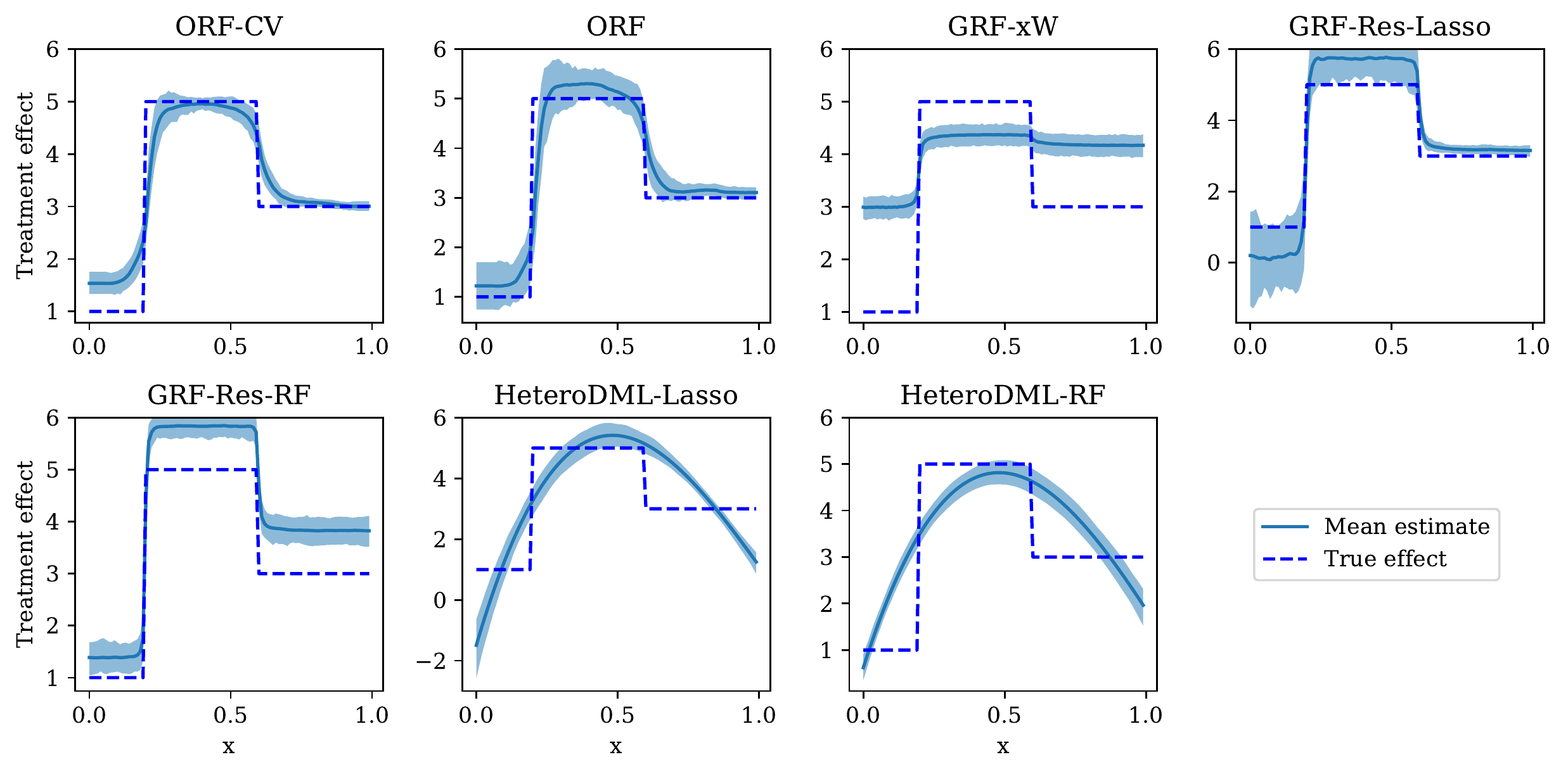}
	\end{minipage}\hfill
	\begin{minipage}[c]{0.34\textwidth}
		\caption{
			Treatment effect estimations for 100 Monte Carlo experiments with parameters $n=5000$, $p=500$, $d=2$, $\mathbf{k=20}$, and slices $\mathbf{x_2=0}$ and $\mathbf{x_2=1}$, respectively. The shaded regions depict the mean and the $5\%$-$95\%$ interval of the 100 experiments.
		} 
	\end{minipage}
\end{figure}
\begin{figure}[H]
	\begin{minipage}[c]{0.63\textwidth}
		\includegraphics[width=\textwidth]{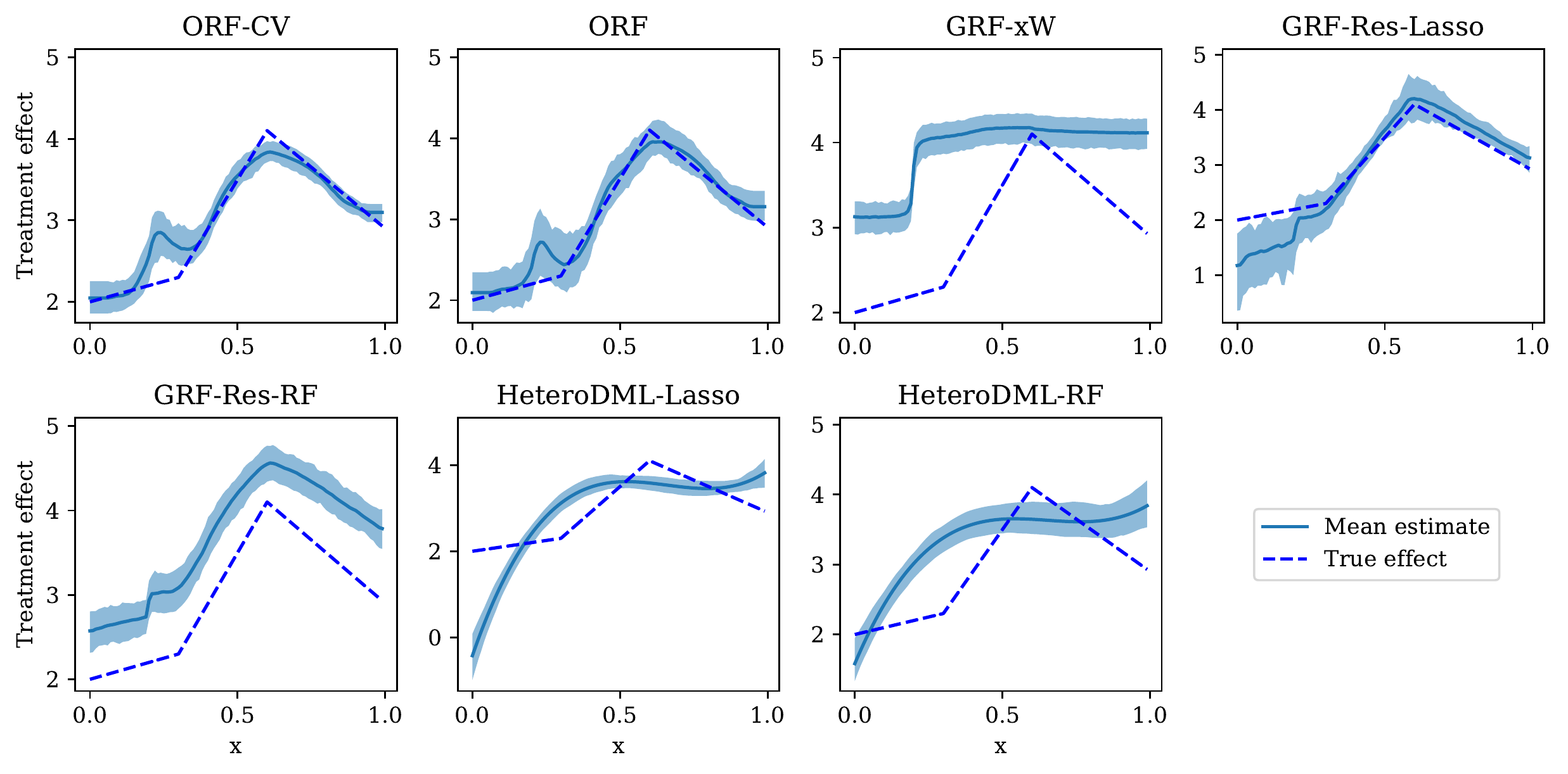}
		\includegraphics[width=\textwidth]{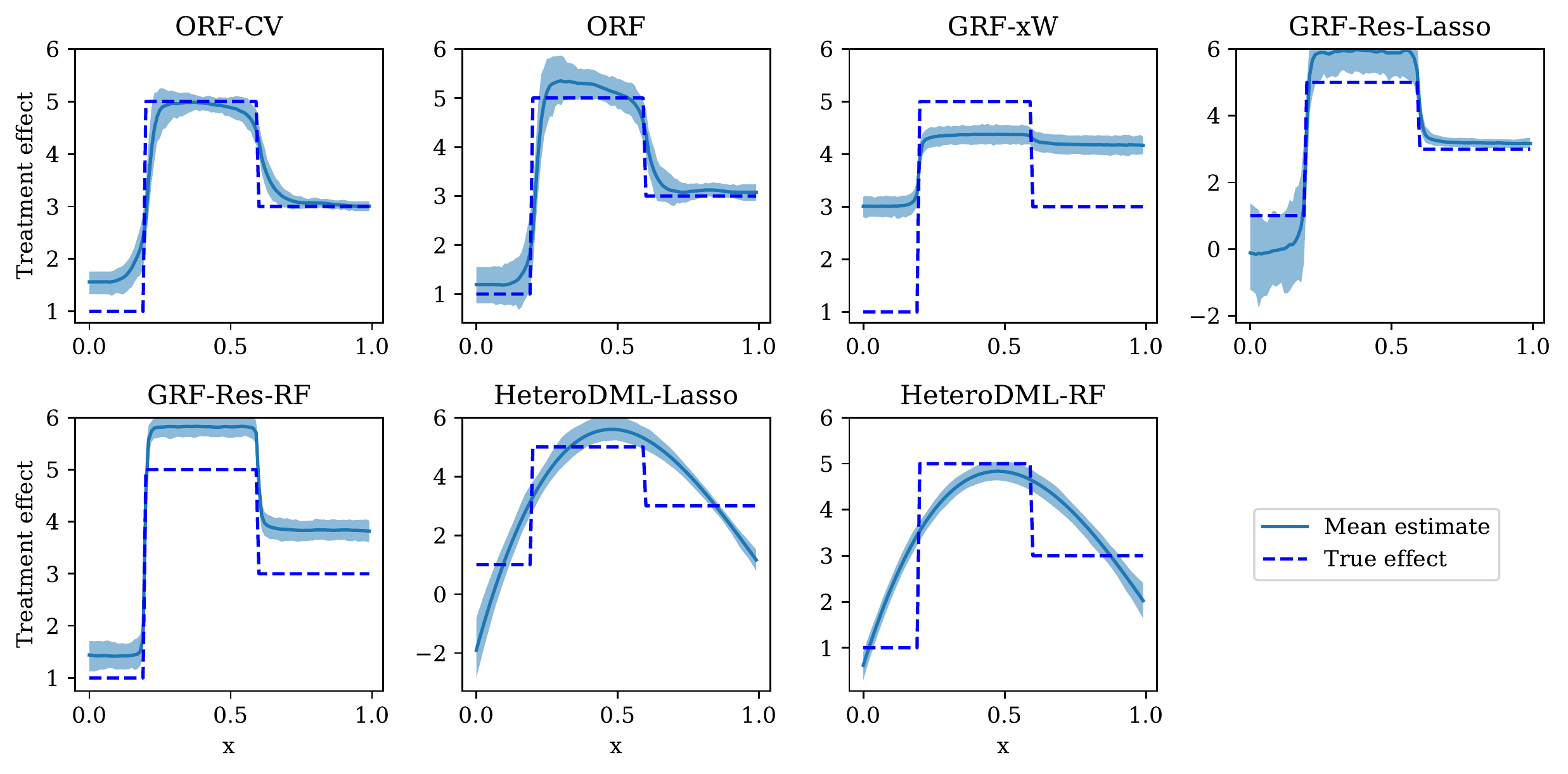}
	\end{minipage}\hfill
	\begin{minipage}[c]{0.34\textwidth}
		\caption{
			Treatment effect estimations for 100 Monte Carlo experiments with parameters $n=5000$, $p=500$, $d=2$, $\mathbf{k=25}$, and slices $\mathbf{x_2=0}$ and $\mathbf{x_2=1}$, respectively. The shaded regions depict the mean and the $5\%$-$95\%$ interval of the 100 experiments.
		} 
	\end{minipage}
\end{figure}

\begin{figure}[H]
	\begin{minipage}[c]{0.63\textwidth}
		\includegraphics[width=\textwidth]{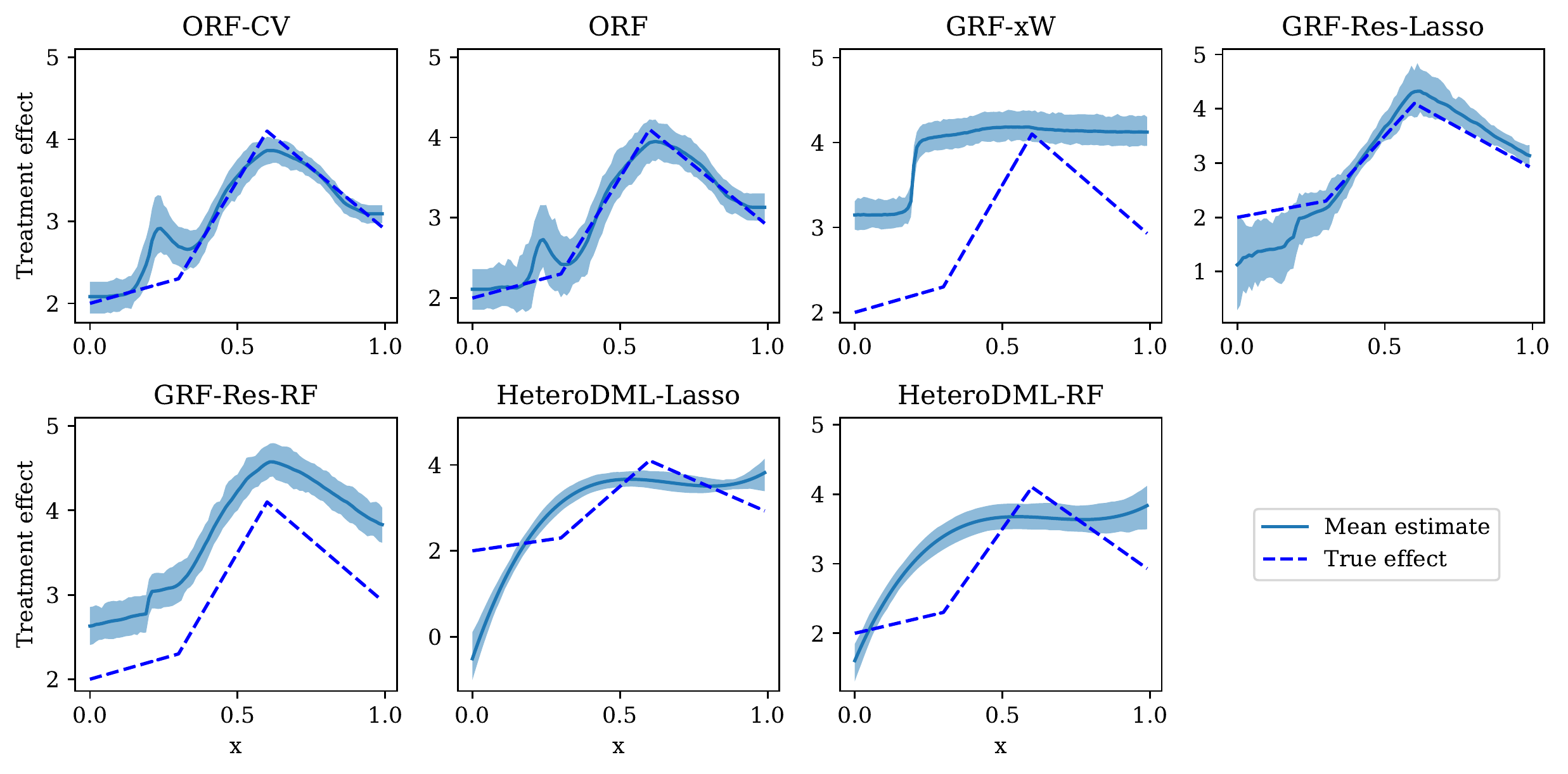}
		\includegraphics[width=\textwidth]{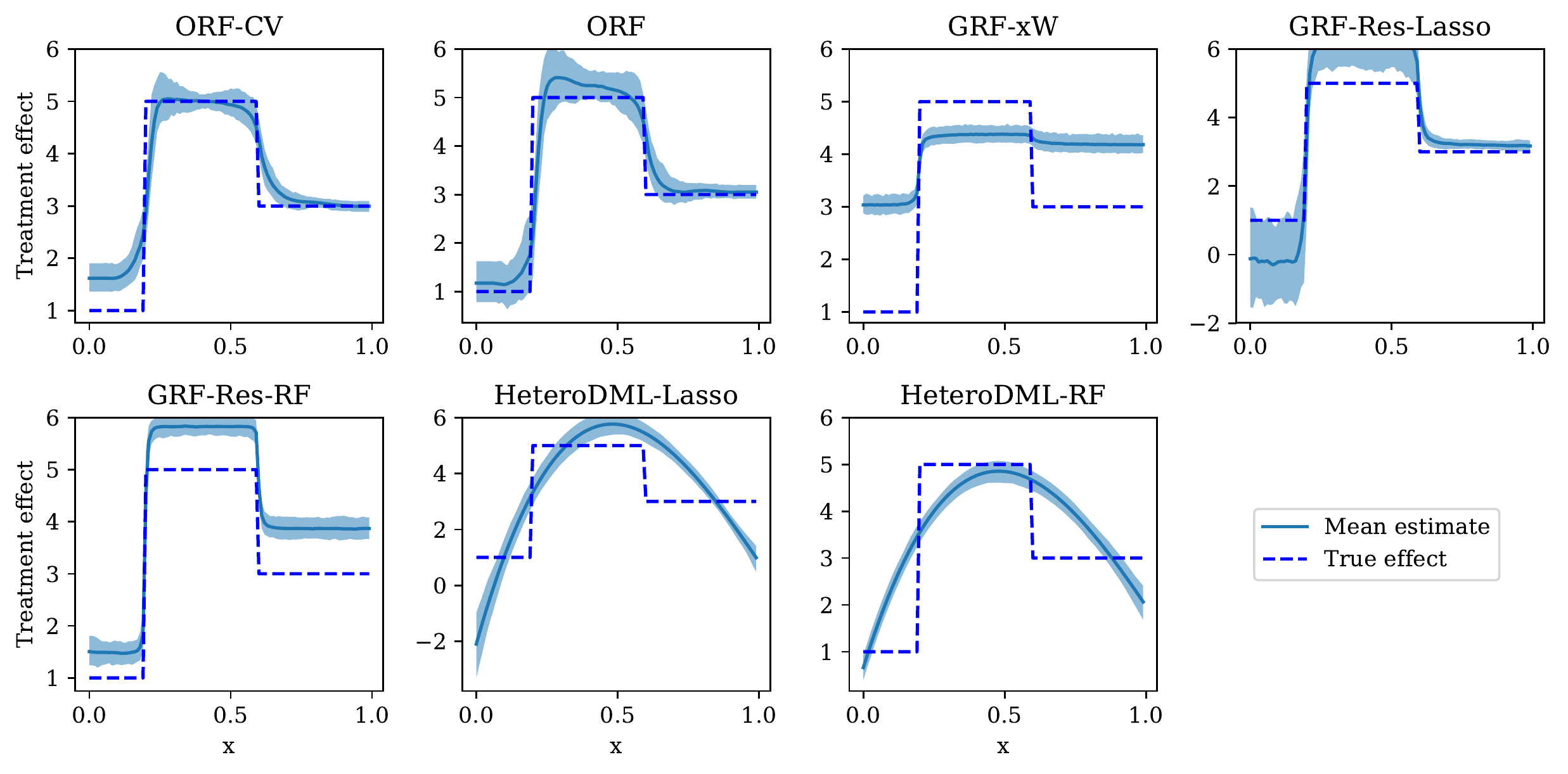}
	\end{minipage}\hfill
	\begin{minipage}[c]{0.34\textwidth}
		\caption{
			Treatment effect estimations for 100 Monte Carlo experiments with parameters $n=5000$, $p=500$, $d=2$, $\mathbf{k=30}$, and slices $\mathbf{x_2=0}$ and $\mathbf{x_2=1}$, respectively. The shaded regions depict the mean and the $5\%$-$95\%$ interval of the 100 experiments.
		}
		\label{fig:2D_2} 
	\end{minipage}
\end{figure}

\end{document}
